\newcommand{\citep}{\cite}
\newcommand{\F}{\mathcal{F}}
\newcommand{\U}{\mathcal{U}_{(a,b)}}
\newcommand{\Ltwo}{\mathcal{L}_2}
\newcommand{\Lsur}{\mathcal{L}_{\mathrm{sur}}}
\newcommand{\htLsur}{\widehat{\mathcal{L}}_{\mathrm{sur}}}
\newcommand{\htystr}{\hat{y}^*}
\newcommand{\htustrmt}{\hat{u}^{*t}}
\newcommand{\htumt}{\hat{u}^t}
\newcommand{\barw}{\bar{\w}}
\newcommand{\htw}{\widehat{\w}}
\newcommand{\frkwstr}{\beta}
\newcommand{\tv}{\tilde{\bv}}
\newcommand{\uw}{u_{\w}}
\newcommand{\ustrw}{u_{\w}^*}
\newcommand{\htuw}{\hat{u}_{\w}}
\newcommand{\htustrw}{\hat{u}_{\w}^*}
\newcommand{\htvmt}{\hat{v}^t}
\crefname{ineq}{inequality}{inequalities}
\Crefname{ineq}{Inequality}{Inequalities}
\title{Robustly Learning Single-Index Models via Alignment Sharpness}
\author{
Nikos Zarifis\thanks{Supported in part by NSF Medium Award CCF-2107079 and NSF award 2023239.} ‖\\
UW Madison\\
{\tt zarifis@wisc.edu}\\
\and Puqian Wang\thanks{Supported in part by NSF Award CCF-2007757.} ‖\\
UW Madison\\
{\tt pwang333@wisc.edu}\\
\and Ilias Diakonikolas\thanks{Supported by NSF Medium Award CCF-2107079 and
		a DARPA Learning with Less Labels (LwLL) grant.}\\
		UW Madison\\		{\tt ilias@cs.wisc.edu}\\
\and Jelena Diakonikolas\thanks{Supported by NSF Award CCF-2007757 and by the U.\ S.\ Office of
Naval Research under award number N00014-22-1-2348.}\\
UW Madison\\
{\tt jelena@cs.wisc.edu}\\
}
\date{}
\begin{document}
\maketitle
\def\thefootnote{‖}\footnotetext{Equal contribution.}
\def\thefootnote{*}
\def\thefootnote{\arabic{footnote}}

\begin{abstract}
We study the problem of learning Single-Index Models 
under the $L_2^2$ loss in the agnostic model. 
We give an efficient  learning algorithm,
achieving a constant factor approximation to the optimal loss,
that succeeds under a range of distributions 
(including log-concave distributions) and a broad class of 
monotone and Lipschitz link functions.
This is the first efficient constant factor approximate agnostic learner, even for Gaussian data 
and for any nontrivial class of link functions. Prior 
work for the case of unknown link function either works in 
the realizable setting or does not attain constant factor approximation. The main technical 
ingredient enabling our algorithm and analysis 
is a novel notion of a 
local error bound in optimization
that we term {\em alignment sharpness} 
and that may be of broader interest. 
\end{abstract}

\setcounter{page}{0}
\thispagestyle{empty}

\newpage

\section{Introduction}

Single-index models (SIMs)~\cite{ichimura1993semiparametric, hristache2001direct,
hardle2004nonparametric, dalalyan2008new, kalai2009isotron,kakade2011efficient, DudejaH18} are a classical supervised learning model extensively studied 
in statistics and machine learning. 
SIMs capture the common assumption 
that the target function $f$ depends 
on an unknown direction $\vec w$, 
i.e., $f(\x) = u(\vec w \cdot \x)$ 
for some link (a.k.a.\ activation) function $u:\R \mapsto \R$ and $\vec w \in \R^d$. 
In most settings, the link function is unknown 
and is assumed to satisfy certain regularity properties. 
Classical works~\cite{kalai2009isotron,kakade2011efficient} 
studied the efficient learnability of SIMs 
for monotone and Lipschitz link functions
and data distributed on the unit ball. 
These early algorithmic results succeed 
in the realizable setting (i.e., with clean labels) 
or in the presence of zero-mean label noise.

The focus of this work is on learning SIMs in 
the challenging agnostic (or adversarial label 
noise) model~\cite{Haussler:92, KSS:94}, where no assumptions are 
made on the labels of the examples and the goal is to compute a hypothesis 
that is competitive with the {\em best-fit} function in the class. 
Importantly, as will be formalized below, we will not assume a priori 
knowledge of the link function. In more detail, let $\D$ be a distribution 
on labeled examples 
$(\x, y) \in \R^d \times \R$ and 
$\Ltwo(h) =\Exy[(h(\x) - y)^2]$ be the squared loss 
of the hypothesis $h: \R^d \to \R$ with respect to $\D$. 
Given i.i.d.\ samples from $\D$, 
the goal of the learner is to 
output a hypothesis $h$ with squared error competitive with $\opt$, where 
$\opt = \inf_{f \in \cal{C}}\Ltwo(f)$ is the best attainable error 
by any function in the target class $\cal{C}$. 

In the context of this paper, 
the class $\cal{C}$ above 
is the class of SIMs, i.e., 
all functions of the form $f(\x) = u(\vec w \cdot \x)$ 
where both the weight vector $\w$ and the link function $u$ are unknown. 
For this task to be even information-theoretically solvable, 
one requires some assumptions on the
vector $\w$ and the link function $u$. 
We will assume, as is standard, that the $\ell_2$-norm of $\w$ 
is bounded by a parameter $W$. 
We will similarly assume that the link function lies in a family 
of well-behaved functions
that are monotone and satisfy certain Lipschitz properties 
(see \Cref{def:well-behaved-unbounded-intro}). 

For a weight vector $\w$ and link function $u$, 
the $L_2^2$ loss of the SIM hypothesis $u(\w\cdot\x)$ (defined by $u$ and $\w$)
is $\Ltwo(\w;u) = \Exy[(u(\w\cdot\x) - y)^2].$
Our problem of robustly learning SIMs is defined as follows.

\begin{problem}[Robustly Learning Single-Index Models] \label{def:agnostic-learning}
Fix a class of distributions $\mathcal{G}$ on $\R^d$ and a class of link functions\footnote{Throughout this paper, we will use the terms ``link function'' and ``activation'' interchangeably.} 
$\mathcal{F}$. Let $\D$ be a distribution of labeled examples
$(\x,y) \in \R^d \times \R$ such that its $\x$-marginal 
$\D_\x$ belongs to $\mathcal{G}$. We say that an algorithm 
is a {\em $C$-approximate proper SIM learner}, for some $C \geq 1$, if given 
$\eps>0$, $W>0$, and i.i.d.\ samples from $\D$, the algorithm outputs 
a link function $\hat{u}\in \mathcal{F}$ and a vector $\htw\in \R^d$ 
such that with high probability it holds 
\( \Ltwo(\htw;\hat{u}) \leq C \, \opt  +\eps \), where  
$\opt \triangleq \min_{\|\vec w\|_2 \leq W, u\in \mathcal{F}} \Ltwo (\vec w; u)$.  
\end{problem}

Throughout this paper, we use $u^*(\w^*\cdot\x)$ to denote 
a fixed (but arbitrary) optimal solution to the above learning problem, 
i.e., one satisfying $\Ltwo(\w^*;u^*) = \opt$.

Some comments are in order. First, \Cref{def:agnostic-learning} 
does not make realizability assumptions on the distribution $\D$. 
That is, the labels are allowed to be arbitrary and the goal
is to be competitive against the best-fit function 
in the class $\mathcal{C} = \{ u (\w\cdot\x) \mid \w \in \R^d, \|\w\|_2 \leq W, u  \in\mathcal{F} \}$\;.
Second, our focus is on obtaining efficient learners that achieve
a {\em constant factor approximation} to the optimum loss, 
i.e., where $C$ in \Cref{def:agnostic-learning} 
is a {\em universal constant} --- independent 
of the dimension $d$ and the radius $W$ of the weight space.

Ideally, one would like an efficient learner 
that succeeds for all marginal distributions 
and achieves optimal error of $\opt+\eps$ (corresponding to $C=1$). 
Unfortunately, known computational hardness results 
rule out this possibility. 
Even for the very special case that the marginal distribution is Gaussian and the link function 
is known (e.g., a ReLU), there is strong evidence that any algorithm achieving error $\opt+\eps$ 
requires $d^{\poly(1/\eps)}$ time~\cite{DKZ20, GGK20, DKPZ21, DKR23}. 
Moreover, even if we relax our goal to constant factor approximation (i.e., $C = O(1)$), 
distributional assumptions are required both  for proper~\cite{Sima02, MR18} and 
improper learning~\cite{DKMR22}. As a consequence, algorithmic 
research in this area has focused on 
constant factor approximate learners that succeed 
under mild distributional assumptions. 

Recent works~\cite{DGKKS20,DKTZ22,ATV22, WZDD2023} 
gave efficient, constant factor approximate
learners, under natural distributional assumptions, 
for the special case of \Cref{def:agnostic-learning} where the link
function is known {\em a priori} (see also~\cite{FCG20}). 
For the general setting, 
the only prior algorithmic result was recently obtained in~\cite{GGKS23}. 
Specifically, \cite{GGKS23} gave an efficient algorithm that succeeds 
for the class of monotone $1$-Lipschitz link functions 
and any marginal distribution with second moment bounded by $\lambda$. 
Their algorithm achieves $L_2^2$ error
\begin{equation} \label{eqn:omni}
O(W \sqrt{\lambda}\sqrt{\opt}) +\eps
\end{equation}
under the assumption that the labels are bounded in $[0, 1]$. 
The error guarantee \eqref{eqn:omni} is substantially weaker --- both qualitatively and quantitatively --- from the goal of this paper. Firstly, the dependence 
on $\opt$ scales with its square root, as opposed to linearly. Secondly, and arguably  
more importantly, the multiplicative factor inside the big-O scales (linearly) with the 
diameter of the space $W$.

Interestingly, \cite{GGKS23} showed --- 
via a hardness construction from~\cite{DKMR22} ---  that, 
under their distributional assumptions, 
a multiplicative dependence on $W$ (in the error guarantee) 
is inherent for efficient algorithms. 
That is, to obtain an efficient constant factor approximation, 
it is necessary to restrict ourselves to distributions 
with \emph{additional} structural properties. 
This discussion raises the following question:
\begin{center}
{\em Can we obtain efficient {\em constant factor} learners for Problem~\ref{def:agnostic-learning} 
under mild distributional assumptions?}
\end{center}
The natural goal here is to match the guarantees of known algorithmic results 
for the special case of known link function~\cite{DKTZ22,WZDD2023}.

{\em As our main contribution, we answer this question in the affirmative.}
That is, we give {\em the first} efficient  constant-factor approximate learner 
that succeeds for natural and broad families of distributions 
(including log-concave distributions) and a {broad class of link functions}.
We emphasize that this is the first polynomial-time constant factor approximate learner
even for Gaussian marginals and for any nontrivial class of link functions. 
Roughly speaking, our distributional assumptions 
require concentration and (anti)-anti-concentration (see \Cref{def:bounds}).

\subsection{Overview of Results} 
We start by stating the distributional assumptions 
and defining the family of link functions for which our algorithm succeeds.

\paragraph{Distributional Assumptions} 
Our algorithm succeeds for the following class 
of structured distributions.

\begin{definition}[Well-Behaved Distributions] \label{def:bounds}
Let $L, R >0$. Let $V$ be any subspace in $\R^d$ of dimension at most $2$.
A
distribution $\D_{\bx}$ on $\R^d$
is called $(L,R)$-well-behaved if for any projection $(\D_{\bx})_V$ of $\D_{\bx}$
onto subspace $V$, the corresponding pdf $\gamma_V$ on $\R^2$
satisfies the following:
\begin{itemize}
\item   For all $\x_V \in V$ such that $\snorm{\infty}{\x_V} \leq R$, $\gamma_V(\x_V) \geq L$  (anti-anti-concentration). \item  For all $\x_V \in V$,  $\gamma_V(\x_V) \leq (1/L)(e^{-L \| \x_V \|_2 })$ (anti-concentration and concentration).
\end{itemize}
\end{definition}

As a consequence of sub-exponential concentration, we can assume without loss of 
generality that the operator norm of $\E_{\bx \sim \D_\bx}[\bx \bx^\top]$ is bounded above 
by an absolute constant. For simplicity, we take 
$\E_{\bx \sim \D_\bx}[\bx \bx^\top] \preccurlyeq \mathbf{I}$, 
which can be ensured by simple rescaling of the data. 

The distribution class of \Cref{def:bounds} was  introduced in \cite{DKTZ20}, 
in the context of learning linear separators with noise, and 
has since been used in a number of prior works --- 
including for robustly learning SIMs with known link function~\cite{DKTZ22}. 
{The parameters $L, R$ in \Cref{def:bounds} are viewed as universal constants, i.e., $L, R = O(1)$. Indeed, it is known that many natural distributions, 
most importantly isotropic log-concave distributions, fall in this category; 
see, e.g.,~\cite{DKTZ20}.}

\paragraph{Unbounded Activations} 
{Our algorithm succeeds for a broad class of link functions
that contains many well-studied activations, including ReLUs. 
This class, defined in \cite{DKTZ22} and used in~\cite{WZDD2023}, 
 requires the link function to be monotone, Lipschitz-continuous and strictly increasing 
in the positive region.}
\begin{definition}[Unbounded Activations]
\label{def:well-behaved-unbounded-intro}
 Let $u:\R\mapsto\R$.  
 Given $a, b \in \R$ such that $0< a \leq b$, we say that $u(z)$ 
 is $(a, b)$-unbounded if $u(0) = 0$ and $u(z)$ is non-decreasing, 
 $b$-Lipschitz-continuous, and $u(z) - u(z') \geq a(z - z')$ 
 for all {$z \geq z' \geq 0$}. We denote this function class by $\U$.
\end{definition}

A simplified version of our main algorithmic result is as follows 
(see \Cref{thm:fast-converge-main} for a more detailed statement):

\begin{theorem}[Main Algorithmic Result, Informal] \label{thm:main-inf}
Given \Cref{def:agnostic-learning}, where $\mathcal{G}$ is the class of $(L, R)$-well behaved distributions with $L, R = O(1)$ and $\mathcal{F} = \U$ 
such that $(1/a), b = O(1)$, there is an algorithm that draws $N = \poly(W) \tilde{O}(d/\eps^{2})$ samples 
from $\D$, 
runs in $\poly(N, d)$ time, and outputs 
a hypothesis $\hat{u}(\htw\cdot\x)$ with 
$\hat{u} \in \U, \|\htw\|_2 \leq W$ 
such that $\Ltwo(\htw;\hat{u}) = C \, \opt + \eps$ with high probability, 
where $C>0$ is an absolute constant.
\end{theorem}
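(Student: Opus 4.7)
The plan is to recast the agnostic SIM problem as minimization of a convex surrogate loss, and to use the new alignment sharpness inequality as the local error bound that turns standard first-order iterates into a constant-factor approximate solution. For any $u\in\U$ let $U(z)=\int_0^z u(s)\,ds$, which is convex since $u$ is non-decreasing; the surrogate
\[
\Lsur(\w;u)\;=\;\E_{(\bx,y)\sim\D}\bigl[U(\w\cdot\bx)-y\,\w\cdot\bx\bigr]
\]
is then convex in $\w$ with gradient $\nabla_\w\Lsur(\w;u)=\E[(u(\w\cdot\bx)-y)\bx]$. The algorithm alternates between (i) estimating a link function $\hat u_\w\in\U$ from the projected one-dimensional sample $(\w\cdot\bx,y)$ via constrained Lipschitz-monotone regression over $\U$, and (ii) taking a projected stochastic gradient step on $\Lsur(\,\cdot\,;\hat u_\w)$ over the ball $\{\w:\|\w\|_2\le W\}$.

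The heart of the argument is the alignment sharpness inequality: for every $\w$ with $\|\w\|_2\le W$ and every $u\in\U$,
\[
\bigl\langle\,-\nabla_\w\Lsur(\w;u),\;\w^*-\w\,\bigr\rangle\;\ge\;c_1\bigl(\Ltwo(\w;u)-C\,\opt\bigr)_{+},
\]
for absolute constants $c_1,C>0$. I would prove it by restricting attention to the at most two-dimensional subspace $V=\mathrm{span}(\w,\w^*)$, on which $(\D_\bx)_V$ is $(L,R)$-well-behaved. Writing $y=u^*(\w^*\cdot\bx)+\xi$, the inner product splits into a signal term $\E[(u(\w\cdot\bx)-u^*(\w^*\cdot\bx))\,(\w-\w^*)\cdot\bx]$ and a noise term $\E[\xi\,(\w-\w^*)\cdot\bx]$. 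The anti-anti-concentration bound $\gamma_V\ge L$ on a box of radius $R$, combined with the strict slope lower bound $u(z)-u(z')\ge a(z-z')$ for $z\ge z'\ge 0$, forces the signal to grow quadratically in $(\w-\w^*)\cdot\bx$ on a constant-mass region, producing a lower bound of order $\|\w-\w^*\|_2^2$ and hence of order $\Ltwo(\w;u)$ (using Lipschitzness of $u$ to relate $\Ltwo-\opt$ to $\|\w-\w^*\|_2^2$). Cauchy–Schwarz upper-bounds the noise term by $\sqrt{\opt}\,\|\w-\w^*\|_2$, which is absorbed into the signal via AM--GM at the cost of a $C\,\opt$ slack. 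Lipschitzness of $u$ together with uniform convergence of the isotonic estimator over $\U$ then transfers the inequality from $u$ to $\hat u_\w$ with only a constant increase in $C$.

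Once alignment sharpness is in place, a standard projected SGD analysis for convex problems under a linear local error bound --- an analog of a Polyak--Łojasiewicz inequality --- produces, after $\poly(W,1/\eps)$ iterations with fresh minibatches of size $\tilde O(d/\eps^2)$ each, an iterate $\hat\w$ satisfying $\Ltwo(\hat\w;\hat u_{\hat\w})\le C\,\opt+\eps$; the total sample complexity is then $\poly(W)\,\tilde O(d/\eps^2)$ as claimed. I expect the main obstacle to be Step~2, namely establishing the alignment sharpness inequality with a \emph{linear} power of $\Ltwo(\w;u)-C\opt$ on the right-hand side, rather than the square root appearing in prior $O(\sqrt{\opt})$-type bounds such as \cite{GGKS23}. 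Achieving the linear power genuinely requires both the two-dimensional anti-anti-concentration of well-behaved distributions and the strict slope lower bound in the definition of $\U$; with only one of these ingredients one appears to recover only the square-root version, which would fail to yield a constant-factor approximation.
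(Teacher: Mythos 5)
Your central inequality is false as stated, and the paper's own toy example is a direct counterexample. You claim that for \emph{every} $\w\in\B(W)$ and \emph{every} $u\in\U$,
\[
\bigl\langle -\nabla_\w\Lsur(\w;u),\,\w^*-\w\bigr\rangle \;\ge\; c_1\bigl(\Ltwo(\w;u)-C\,\opt\bigr)_{+}.
\]
Take $\x\sim\mathcal{N}(\vec 0,\vec I)$, $\w=(1/2)\w^*$, $u(z)=bz$, $u^*(z)=az$ with $b>2a$, and no label noise ($\opt=0$). Both $u,u^*\in\U$, $\Ltwo(\w;u)=(b/2-a)^2=\Theta(1)$, yet $\nabla\Lsur(\w;u)\cdot(\w-\w^*)<0$, so the left-hand side is negative while your right-hand side is a positive constant (this is precisely \Cref{ex:toy}). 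The failure is structural, not a matter of constants: the surrogate gradient carries no useful signal for an \emph{arbitrary} pair $(u,\w)$, and moreover neither $\Ltwo$ nor $\Lsur$ can see the scale of $\w$, since $(u,\w)$ and $(u/c,c\w)$ produce the same losses. Consequently no error bound whose right-hand side is the excess $L_2^2$ risk (or $\|\w-\w^*\|_2^2$, which you invoke as an intermediate) can hold over all of $\U\times\B(W)$. Your proposed fix --- ``transfer the inequality from $u$ to $\hat u_\w$ by uniform convergence'' --- goes in the wrong direction: there is nothing valid to transfer from, and the restriction to the empirically best-fit link $\hat u_\w$ is not a routine perturbation step but the essential hypothesis under which any sharpness statement can be proved.

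The correct statement (the paper's \Cref{main:thm:sharpness}) lower-bounds $\nabla\htLsur(\w^t;\htumt)\cdot(\w^t-\w^*)$ by $\mu\|(\w^*)^{\perp_{\w^t}}\|_2^2$ minus $O(\opt+\eps)$ terms, i.e., by the \emph{misalignment} of $\w^t$ with $\w^*$, and only for the sample-optimal link $\htumt$ of \eqref{def:htumt}; its proof goes through the misalignment lemma (\Cref{main:lem:lower-bound-(f(wx)-u(w*x))^2}), the closeness of idealized and attainable activations (\Cref{main:lem:upper-bound-u_t-u_t^*}, \Cref{main:cor:||htumt-u*(w*x)||<=||w-w*||}), and a KKT argument for the empirical inverse, none of which your sketch supplies. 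Because the bound only controls alignment, your ``projected SGD over the ball plus a PL-type analysis'' does not suffice either: one must fix the unknown scale $\|\w^*\|_2$ via a grid of guesses, renormalize the iterate onto a sphere each step (Riemannian updates), initialize so that the angle to $\w^*$ is already nontrivial (\Cref{main:lem:initialization}), couple the convergence of $\w^t$ to the convergence of $\hat u^t$ via \Cref{main:cor:||htumt-u*(w*x)||<=||w-w*||}, and select among candidates with a final testing step. Finally, your sample accounting is off: $\poly(1/\eps)$ iterations each consuming fresh batches of size $\tilde O(d/\eps^2)$ does not total $\tilde O(d/\eps^2)$; the paper gets $N=\poly(W)\tilde O(d/\eps^2)$ by using per-iteration batches of size $\tilde O(d/\eps^{3/2})$ over $\tilde O(1/\sqrt{\eps})$ iterations.
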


{We reiterate that the approximation factor $C$ in 
\Cref{thm:main-inf} is a universal constant, independent 
of the dimension and the diameter of the space. That is, our main result provides the first efficient learning algorithm achieving a constant factor approximation, even for the most basic case of Gaussian data and any non-trivial class of link functions.}

\subsection{Technical Overview} \label{ssec:technical}

When it comes to learning SIMs in the agnostic model with target error 
$C \opt + \epsilon,$ to the best of our knowledge, all prior work 
that achieves such a guarantee with $C$ being an absolute constant 
only applies to the special case of known link function $u^*.$ 
Such results are established by proving growth conditions (local error bounds) 
that relate either the $L_2^2$ loss or a surrogate loss to (squared) distance 
to the set of target solutions, using assumptions about the link function 
and the data distribution, such as concentration and (anti-)anti-concentration \cite{DGKKS20, DKTZ22, WZDD2023}. Among these, most relevant to our work is \cite{WZDD2023}, which proved a ``sharpness'' property
for the convex surrogate function defined by
\begin{equation}\label{eq:cvx-surrogate}
    \Lsur(\w;u) = \Exy\bigg[\int_{0}^{\w\cdot\x}(u(r) - y) \diff{r}\bigg],
\end{equation}
based on certain assumptions about the link function (that are the same as ours) and 
 distributional assumptions (that are  somewhat weaker but comparable to ours). 
 Their sharpness result corresponds to guaranteeing that for vectors $\w$ that 
 are not already $O(\opt) + \eps$ accurate solutions, the following holds:
\begin{equation}\label{eq:prior-sharpness}
     \nabla\Lsur(\w;u^*)\cdot(\w - \wstar) \gtrsim \|\w - \wstar\|_2^2,
 \end{equation}
where $\wstar$ is a vector that achieves error $O(\opt) + \eps$ 
and $u^*$ is the (a priori known) link function.

 One may hope that the sharpness result of \cite{WZDD2023} can be generalized to the case of unknown link function and leveraged to obtain constant factor robust learners in this more general setting. However, as we discuss below, such direct generalizations are not possible and there are several technical challenges that had to be overcome in our work. To illustrate some of the intricacies, consider first the following example.

\begin{example}\label{ex:toy}
{\em Let $\x\sim\mathcal{N}(\vec 0,\vec I)$ and $\w  = (1/2)\w^*$, where $\w^*$ is an arbitrary but fixed target unit vector. Let $b>2a$. Suppose that the link function at hand is $u(z) = bz$ and the target link function is $u^*(z) = az$. Observe that both $u, u^* \in \U$, as required by our model. Furthermore, suppose there is no label noise, in which case $\opt = 0$. Note that the $L_2^2$ error of $u(\w\cdot\x)$ in this case is
\begin{align*}
    \Ltwo(\w;u) &= \E_{\x\sim\mathcal{N}(\vec 0,\vec I)}[(u(\w\cdot\x) - u^*(\w^*\cdot\x))^2]\\
    &= \E_{z\sim\mathcal{N}(0,1)}[(b/2 - a)^2z^2] = (b/2-a)^2 = \Theta(1).
\end{align*}
However, the gradient of the surrogate loss, $\nabla\Lsur(\w;u) = \E[(u(\w\cdot\x) - u^*(\w^*\cdot\x))\x]$, 
is negatively correlated with $\w - \w^*$, i.e., $\nabla\Lsur(\w;u)\cdot(\w - \w^*)<0$, contrary to what we would hope for if a sharpness property as in \cite{WZDD2023} were to hold. Thus, although $\w$ and $u$ are both still far away from the target parameters $\w^*$ and $u^*$, the gradient of the surrogate loss cannot provide useful information about the direction in which to update $\w$.} \end{example}

What \Cref{ex:toy} demonstrates is that we cannot hope for the surrogate loss to satisfy a local error bound for an {arbitrary} parameter pair $(u, \w)$ that would guide the convergence of an algorithm toward a target parameter pair $(u^*, \wstar).$ This seemingly insurmountable obstacle is surpassed by observing that we do not, in fact, need the surrogate loss to contain a ``signal'' that would guide us toward target parameters for an \emph{arbitrary} pair $(u, \w).$ Instead, we can restrict our attention to pairs $(u, \w)$ satisfying that $u$ is a ``reasonably good'' link function for the vector $\w.$ Ideally, we would like to only consider link functions $u$ that minimize the $L_2^2$ loss --- considering that $u^*$ must minimize the $L_2^2$ loss for a given, fixed $\w^*$ --- but it is unclear how to achieve that in a statistically and computationally efficient manner. As a natural approach, we consider link functions that are the best fitting functions in an empirical distribution sense. In particular, given a sample set $S = \{(\x\ith,y\ith)\}_{i=1}^m$ and a parameter $\w$, we select a function $\hat{u}_\w$ that solves the following (convex) optimization problem:\begin{equation}\label{def:htumt}\tag{P}
    \hat{u}_\w\in\argmin_{u\in\U}\frac{1}{m}\sum_{i=1}^m (u(\w\cdot\x\ith) - y\ith)^2.
\end{equation}
{For notational simplicity, we drop the parameter $\w^t$ from 
$\hat{u}_{\w^t}$ and use $\htumt$ instead.}  
It is worth pointing out here that in general the problem of finding the best 
function that minimizes the $L_2^2$ error fails under the category of non-parametric 
regression, which unfortunately requires exponentially many samples 
(namely, $\Omega(1/\eps^d$)). Fortunately, in our setting, we are looking for the 
best function that lies in a \emph{one-dimensional space}. Therefore, instead of 
looking at all possible directions, we can project all the points of the sample 
set $S$ to the direction $\vec w$ and find the best fitting link function 
efficiently. We provide the full details for efficiently solving 
the optimization problem \eqref{def:htumt}  in \Cref{app:isotonic-regression}. 

Having set on the ``best-fit'' link functions in the sense of the problem \eqref{def:htumt}, 
the next obstacle one encounters when trying to prove a  ``sharpness-like'' result is that 
\emph{neither the $L_2^2$ loss nor its surrogate convey information about the scale} of $\w$ 
and $\wstar.$ This is because models determined by $u, \w$ and $u/c, c\w$ for some 
parameter $c >0$ can have the same value of both loss functions. 
Thus, it seems unlikely that a more traditional local error bound, as in \eqref{eq:prior-sharpness}, can be established in general, for either the surrogate loss or the original $L_2^2$ loss. Instead, we prove a weaker property that establishes strong correlation between the gradient of the empirical surrogate loss $\nabla \htLsur(\w^t;\htumt)= (1/m)\sum_{i=1}^{m} (\hat{u}^t(\w^t\cdot\x\ith) - y\ith)\x\ith$ and the direction $\vec w^t-\wstar$ that holds whenever $\w^t$ is not an $O(\opt) +\eps$ error solution and which is \emph{independent of the scale} of $\w^t.$  This constitutes our key structural result, stated as \Cref{main:thm:sharpness} and discussed in detail in \Cref{sec:sharp}. We further discuss how this result relates to classical and recent local error bounds in \Cref{app:local-error-bounds}.

In addition to this weaker version of a sharpness property, we further prove in \Cref{main:cor:||htumt-u*(w*x)||<=||w-w*||} that given a parameter $\w^t$ and a dataset of $m$ samples from $\D$, the activation $\htumt(\w^t\cdot\x)$ generated by optimizing the empirical risk on the dataset as in \eqref{def:htumt} satisfies $\Ex[(\htumt(\w^t\cdot\x) - u^*(\w^*\cdot
\x))^2]\lesssim b^2\|\w^t -\w^*\|_2^2$ with high probability. As a result, we can guarantee that when $\|\w^t - \w^*\|_2$ decreases, the $L_2^2$ distance between $\htumt$ and $u^*$ diminishes as well. This is crucial, since without such a coupling we would not be able to argue about convergence over both model parameters $u, \w$.

Leveraging these results, we arrive at an algorithm that alternates between ``gradient 
descent-style'' updates for $\w$ and best-fit updates for $u$. We note in passing that 
similar alternating updates have been used in classical work on SIM learning in the less 
challenging, non-agnostic setting \cite{kakade2011efficient}. In more detail, our algorithm  
fixes the scale $\frkwstr$ of $\|\w^t\|_2$ and alternates between taking a Riemannian 
gradient descent step on a sphere for $\w^t$ with respect to the empirical surrogate loss and 
solving \eqref{def:htumt}. The unknown scale for the true parameter vector $\wstar$ is resolved by applying this 
approach using $\frkwstr$ chosen from a sufficiently fine grid of the interval $[0, W]$ and 
employing a testing procedure at the end to select the best parameter vector. Although the 
idea is simple, the proof of correctness is quite technical, as it requires ensuring that the entire 
process does not accumulate spurious errors arising from the stochastic nature of the 
problem, adversarial labels, and approximate minimization of the surrogate loss, and, as a 
result, that it converges to the target error.  
{
\paragraph{Technical Comparison to~\cite{GGKS23}} The only prior work addressing SIM 
learning (with unknown link functions) in the agnostic model is \cite{GGKS23}, thus here we 
provide a technical comparison. While both \cite{GGKS23} and our work make use of the 
surrogate loss function from \eqref{eq:cvx-surrogate}, on a technical level the two works 
are completely disjoint. \cite{GGKS23} uses a framework of omnipredictors to minimize the 
surrogate loss and then relates this result to the $L_2^2$ loss. Although they handle more 
general distributions and activations, their learner outputs a hypothesis with error that 
cannot be considered constant factor approximation (see \eqref{eqn:omni}) and is improper. 
By contrast, our work does not seek to minimize the surrogate loss. Instead, our main 
insight is that the gradient of the surrogate loss at a vector $\w$ conveys  information 
about the direction of a target vector $\wstar,$ for a \emph{fixed} link function that 
minimizes \emph{the $L_2^2$ loss}. We leverage this property to construct a proper learner 
achieving constant factor approximation.      
}

\section{Preliminaries}

\paragraph{Basic Notation}
For $n \in \Z_+$, let $[n] \eqdef \{1, \ldots, n\}$.  We use lowercase boldface characters for vectors. We use $\bx \cdot \by $ for the inner product 
of $\bx, \by \in \R^d$
and $ \theta(\bx, \by)$ for the angle between $\bx, \by$.
For $\bx \in \R^d$ and $k \in [d]$, $\x_k$ denotes the
$k^{\mathrm{th}}$ coordinate of $\bx$, and $\|\bx\|_2$ denotes the
$\ell_2$-norm of $\bx$.    We  use $\1_A = \1\{A\}$ to denote the
characteristic function of the set $A$.
For vectors $\vec v,\vec u\in \R^d$, we denote by  $\vec v^{\perp_{\vec u}}$ the projection of $\vec v$ onto the subspace orthogonal to $\vec u$, i.e., $\vec v^{\perp_{\vec u}} \eqdef \vec v - ((\vec v\cdot\vec u)\vec u)/\|\vec u\|_2^2$. 
We use $\B(r)$ to denote the $\ell_2$ ball in $\R^d$ of radius $r$,  centered at the origin.

\paragraph{Asymptotic Notation}
We use the standard $O(\cdot), \Theta(\cdot), \Omega(\cdot)$ asymptotic notation. We use
$\wt{O}(\cdot)$ to omit polylogarithmic factors in the 
argument. {We use $O_p(\cdot)$ to suppress 
polynomial dependence on $p$, i.e., $O_p(\omega) = O(\poly(p)\omega)$.} $\Theta_p(\cdot)$ and 
$\Omega_p(\cdot)$ are defined similarly. 
We write $E \gtrsim F$ for two non-negative
expressions $E$ and $F$ to denote that \emph{there exists} some positive universal constant $c > 0$
(independent of the variables or parameters on which $E$ and $F$ depend) such that $E \geq c \, F$.
The notation $\lesssim$ is defined similarly.  

\paragraph{Probability Notation}
We use $\E_{X\sim \D}[X]$ for the expectation of a random variable $X$ according to the
distribution $\D$ and $\pr[\mathcal{E}]$ for the probability of event $\mathcal{E}$. For simplicity
of notation, we  omit the distribution when it is clear from the context.  For $(\x,y)$
distributed according to $\D$, we use $\D_\x$ to denote the marginal distribution of $\x$.

\paragraph{Organization}
In \Cref{sec:sharp}, we establish our main structural result of alignment sharpness. In \Cref{sec:optimization}, we describe and analyze 
our constant factor approximate SIM learner. We conclude the paper in \Cref{sec:conclusion}. Some of the proofs and technical details are deferred to the Appendix.

\section{Main Structural Result: Alignment Sharpness of  Surrogate Loss}\label{sec:sharp}

In this section, we establish our main structural result (\Cref{main:thm:sharpness}), which 
is what crucially enables us to obtain the target $O(\opt) + \eps$ error for the studied 
problem. {\Cref{main:thm:sharpness} states that the empirical gradient of the 
surrogate loss \eqref{eq:cvx-surrogate} positively correlates with the direction 
of $\w^t - \w^*$ whenever $\w^t$ does not correspond to an $O(\opt)+\epsilon$ error solution; 
and, moreover, the correlation is proportional to the quantity 
$\|(\w^*)^{\perp_{\w^t}}\|_2^2$. 
This is a key property that is leveraged in our algorithmic result (\Cref{thm:fast-converge-main}), 
both in obtaining an $O(\opt) + \eps$ error result, and in arguing about the convergence 
and computational efficiency of our algorithm. 

Intuitively, what \Cref{main:thm:sharpness} allows us to argue 
is that as long as the angle between $\w^t$ and $\w^*$ 
is not close to zero, we can update $\w^t$  to better align it with $\wstar$  
(in the sense that we reduce the angle between these two vectors). 
} 
To understand this statement better,  note that when $\|\w^t\|_2 \approx \|\wstar\|_2$, 
we also have $\|(\w^*)^{\perp_{\w^t}}\|_2 \approx \|\w^t - \wstar\|_2$. Additionally, 
$\|\w^t - \wstar\|_2 = O(\opt + \eps)$ implies that the $L_2^2$ error of the hypothesis 
defined by $\hat{u}^t, \w^t$ is $O(\opt + \eps)$ (see \Cref{main:lem:L2-error-upbd-||w - w^*||^2}). 
Thus, for a sufficiently good guess of the value of $\|\wstar\|_2$, 
\Cref{main:thm:sharpness} provides a local error bound of the form 
$\nabla\htLsur(\w^t;\htumt)\cdot(\w^t - \w^*)\gtrsim \mu\|\w^t - \wstar\|_2^2$ 
that holds outside of the set of $O(\opt + \eps)$ error solutions, 
allowing us to contract the distance to this set. 

\begin{restatable}[{Alignment} Sharpness of the {Convex Surrogate}]{proposition}{sharpness}\label{main:thm:sharpness}
    Suppose that $\D_\x$ is $(L,R)$-well-behaved, $\U$ is as in \Cref{def:well-behaved-unbounded-intro}, and $\eps, \delta > 0.$ Let $\mu \gtrsim {a^2LR^4}/{b}$. 
Given any $\w^t\in\B(W)$, denote by $\htumt$ the optimal solution to \eqref{def:htumt} with respect to $\w^t$ and the sample set $S = \{(\x\ith,y\ith)\}_{i=1}^m$ drawn i.i.d.\ from $\D$. 
If $m$ satisfies 
    \begin{equation*}
        m \gtrsim dW^{9/2}b^4L^{-4}\log^4(d/(\eps\delta))(1/\eps^{3/2} + 1/(\eps\delta)) \;,
    \end{equation*}
    then,  with probability at least $1 - \delta$, \begin{equation*}
\nabla\htLsur(\w^t;\htumt)\cdot(\w^t - \w^*)\geq \mu\|(\w^*)^{\perp_{\w^t}}\|_2^2  - 2(\opt + \eps)/b - 2(\sqrt{\opt} + \sqrt{\eps})\|\w^t - \w^*\|_2\;.
    \end{equation*}
\end{restatable}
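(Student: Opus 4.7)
The plan is to establish the inequality at the population level and then transfer to the empirical gradient by a uniform convergence argument, with the linear-in-$\|\w^t - \w^*\|_2$ terms arising from Cauchy--Schwarz bounds on the label noise. First I would write $y = u^*(\w^*\cdot\x) + \xi$ where $\xi \eqdef y - u^*(\w^*\cdot\x)$ satisfies $\E[\xi^2] = \opt$, and expand the population surrogate gradient as
\begin{equation*}
\nabla\Lsur(\w^t;\htumt)\cdot(\w^t - \w^*) = \E\!\left[(\htumt(\w^t\cdot\x) - u^*(\w^*\cdot\x))(\w^t - \w^*)\cdot\x\right] - \E\!\left[\xi\,(\w^t - \w^*)\cdot\x\right].
\end{equation*}
Cauchy--Schwarz together with $\E[\x\x^\top]\preccurlyeq\mathbf{I}$ bounds the noise term by $\sqrt{\opt}\,\|\w^t - \w^*\|_2$, accounting for the first of the two negative terms in the statement; the $\sqrt{\eps}\,\|\w^t - \w^*\|_2$ counterpart appears analogously once the empirical-to-population step introduces an $\eps$-level error.

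The bulk of the work is to lower bound the signal quantity $T \eqdef \E[(\htumt(\w^t\cdot\x) - u^*(\w^*\cdot\x))(\w^t - \w^*)\cdot\x]$. I would follow the 2D projection strategy used in \cite{DKTZ22, WZDD2023}: since the integrand depends only on $\x_V$ for $V = \mathrm{span}(\w^t, \w^*)$, it suffices to integrate over $\R^2$ against the projected density $\gamma_V$. Restricting to the box $\{\x_V : \snorm{\infty}{\x_V} \leq R\}$ where $\gamma_V(\x_V) \geq L$, I would exploit the pointwise inequality $(u^*(z) - u^*(z'))^2 \leq b\,(u^*(z) - u^*(z'))(z-z')$ (which follows from $b$-Lipschitzness and monotonicity) together with the $a$-strict increase of $u^*$ in the positive region to argue that the integrand is pointwise signed like $(\w^*)^{\perp_{\w^t}}\cdot\x$ with magnitude at least $a\,|(\w^t - \w^*)\cdot\x|$ times the difference of arguments. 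Carrying out the 2D integral over $[-R,R]^2$ produces the target lower bound $T \gtrsim (a^2 L R^4 / b)\,\|(\w^*)^{\perp_{\w^t}}\|_2^2 = \mu\,\|(\w^*)^{\perp_{\w^t}}\|_2^2$ in the idealized case $\htumt = u^*$.

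The delicate part, which is the main novelty compared to the known-link-function setting, is propagating this bound when $\htumt$ replaces $u^*$ on the $\w^t$-axis. Here I intend to invoke the $L_2^2$ coupling $\E[(\htumt(\w^t\cdot\x) - u^*(\w^*\cdot\x))^2] \lesssim b^2\|\w^t - \w^*\|_2^2 + O(\opt + \eps)$ from \Cref{main:cor:||htumt-u*(w*x)||<=||w-w*||}, together with the triangle inequality and the Lipschitz bound $\|u^*(\w^t\cdot\x) - u^*(\w^*\cdot\x)\|_{L_2} \leq b\|\w^t - \w^*\|_2$, to control the mismatch $\|\htumt - u^*\|_{L_2}$ on the $\w^t$-axis by $b\|\w^t - \w^*\|_2 + O(\sqrt{\opt + \eps})$. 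Feeding the $\sqrt{\opt + \eps}$-fraction through the pointwise sharpness relation above (which divides by $b$ when converting $L_2^2$-magnitude into alignment) produces precisely the additive $O((\opt + \eps)/b)$ error in the statement. The residual $b\|\w^t - \w^*\|_2$ factor is absorbed into $\mu\|(\w^*)^{\perp_{\w^t}}\|_2^2$ in the regime where $\|\w^t\|_2 \approx \|\wstar\|_2$ (so that $\|\w^t - \w^*\|_2 \approx \|(\w^*)^{\perp_{\w^t}}\|_2$), or folded back into the linear error term through Cauchy--Schwarz in the large-angle regime.

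Finally, to transfer from the population gradient to $\nabla\htLsur(\w^t;\htumt)$, I would apply a uniform convergence bound over $u \in \U$. Since monotone $b$-Lipschitz univariate functions on a bounded interval admit polynomial-size $L^\infty$ covers and \Cref{def:bounds} provides sub-exponential concentration of $\|\x\|_2$, standard symmetrization and chaining yield a supremum deviation of the desired order once $m$ matches the stated bound. The hard part will be the coupling step: controlling the discrepancy between $\htumt$ and $u^*$ along the $\w^t$-direction tightly enough to preserve the sharpness lower bound up to the $(\opt + \eps)/b$ additive error is the new ingredient absent from the known-link-function analysis of \cite{WZDD2023}, and careful bookkeeping of the interplay between the optimality of $\htumt$ and the $L_2^2$ coupling is what ultimately produces the constant-factor approximation guarantee.
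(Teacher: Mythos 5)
There is a genuine gap, and it sits exactly at the step you flag as delicate. Your plan is to lower bound the signal term $\E[(\htumt(\w^t\cdot\x) - u^*(\w^*\cdot\x))(\w^t - \w^*)\cdot\x]$ by first proving the bound in the idealized case $\htumt = u^*$ and then paying for the mismatch via the coupling $\|\htumt(\w^t\cdot\,\cdot\,) - u^*(\w^*\cdot\,\cdot\,)\|_{L_2} \lesssim b\|\w^t - \w^*\|_2 + \sqrt{\opt + \eps}$. But substituting through Cauchy--Schwarz this produces an error term of order $b\|\w^t - \w^*\|_2^2$, which cannot be ``absorbed into $\mu\|(\w^*)^{\perp_{\w^t}}\|_2^2$'': since $\mu \lesssim a^2LR^4/b \leq 1 \leq b$, this error dominates the target signal even in the favorable regime $\|\w^t\|_2 \approx \|\w^*\|_2$ where $\|\w^t - \w^*\|_2 \approx \|(\w^*)^{\perp_{\w^t}}\|_2$, and the proposition claims the bound for every $\w^t \in \B(W)$ with no such scale assumption, so the ``large-angle regime'' fold-back is not available either. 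More fundamentally, \Cref{ex:toy} shows that for a mismatched link in $\U$ the signal term can be negative of constant order while $(\w^*)^{\perp_{\w^t}} = \vec 0$; hence no argument that treats $\htumt$ as ``$u^*$ plus a controlled perturbation,'' or that relies on a pointwise sign structure of the integrand (your claim that the integrand is ``pointwise signed like $(\w^*)^{\perp_{\w^t}}\cdot\x$'' is false for mismatched links), can succeed. Any correct proof must use the \emph{optimality} of $\htumt$ as a fit, which your sketch never actually invokes beyond quoting the $L_2^2$ coupling.

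The paper's proof is structured around precisely this point. It decomposes the empirical inner product into three terms using the idealized sample-optimal activation $\htustrmt$ (fit to noiseless labels $y^* = u^*(\w^*\cdot\x)$) as the intermediary: the terms $Q_1$ and $Q_3$ are small by Chebyshev plus \Cref{main:lem:upper-bound-u_t-u_t^*}, matching your noise-handling. The heart is $Q_2$: via an explicitly constructed ``empirical inverse'' $\hat f$ of $u^*$ and the KKT conditions of the isotonic-type program \eqref{def:htustrmt} (\Cref{main:lem:hty*-y*.z-f(hty*)>=0}), one shows $\frac{1}{m}\sum_i(\htustrmt(\w^t\cdot\x\ith) - y\sith)(\w^t - \w^*)\cdot\x\ith \geq \frac{1}{bm}\sum_i(\htustrmt(\w^t\cdot\x\ith) - y\sith)^2$ --- note this positivity holds only in aggregate through optimality, not pointwise --- and then the misalignment lemma (\Cref{main:lem:lower-bound-(f(wx)-u(w*x))^2}), which crucially holds for an \emph{arbitrary} square-integrable $f$ of $\w^t\cdot\x$, converts this $L_2^2$ quantity into $\gtrsim (a^2LR^4)\|(\w^*)^{\perp_{\w^t}}\|_2^2$. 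Your 2D anti-concentration ideas are in the spirit of that misalignment lemma, but without the inverse-function/KKT mechanism converting the inner product into a squared error for the fitted activation, the argument as proposed does not go through.
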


To prove \Cref{main:thm:sharpness}, we rely on the following key ingredients. 
In \Cref{subsec:misalignment-lemma}, we prove our main technical lemma 
(\Cref{main:lem:lower-bound-(f(wx)-u(w*x))^2}), which states that the $L_2^2$ distance 
between the hypothesis $u(\w\cdot\x)$ and the target $u^*(\w^*\cdot\x)$ is bounded below 
by the misalignment of $\w^t$ and $\w^*$, i.e., the squared norm of the component of $\w^*$ that is 
orthogonal to $\w^t$, $\|(\w^*)^{\perp_{\w^t}}\|_2^2$. As will become apparent in the proof of 
\Cref{main:thm:sharpness}, the inner product $\nabla\htLsur(\w^t;\htumt)\cdot(\w^t - \w^*)$ 
can be bounded below as a function of the empirical $L_2^2$ error for $\w^t$ and a different (but related) 
activation $\htustrmt$, which can in turn be argued to be close to the population $L_2^2$ error for a 
sufficiently large sample size, using concentration. Thus, \Cref{main:lem:lower-bound-(f(wx)-u(w*x))^2} can 
be leveraged to obtain a term scaling with $\|(\w^*)^{\perp_{\w^t}}\|_2^2$ in the lower bound on 
$\nabla\htLsur(\w^t;\htumt)\cdot(\w^t - \w^*)$. 

In \Cref{subsec:closeness-of-activations}, we characterize structural properties of the 
population-optimal link functions $u^t$ and $u^{*t}$ 
{(see (\ref{def:ut}) and (\ref{def:ut*}))},
which play a crucial role in the proof of 
\Cref{main:thm:sharpness}. Specifically, we show that the activation $u^t$ 
is close to the 
idealized activation $u^{*t}$ (the optimal activation without noise, 
given $\w^t$) in $L_2^2$ 
distance (\Cref{main:lem:upper-bound-u_t-u_t^*}). Since by standard uniform convergence 
results we have that $\htumt$ and $\htustrmt$ are close to their population counterparts $u^t$ and 
$u^{*t}$, respectively, \Cref{main:lem:upper-bound-u_t-u_t^*} certifies that $\htumt$ is not 
far from $\htustrmt$. This property enables us to replace $\htumt$ by (the idealized) 
$\htustrmt$ in the empirical surrogate gradient $\nabla\htLsur(\w^t;\htumt)$, which is easier 
to {analyze}, 
since $\htustrmt$ is defined with respect to the ``ideal'' dataset 
(with uncorrupted labels).

{Finally, as a simple corollary of \Cref{main:lem:upper-bound-u_t-u_t^*}, 
we obtain \Cref{main:cor:||htumt-u*(w*x)||<=||w-w*||}, which gives a clear explanation of why our algorithm, 
which alternates between updating $\w^t$ and $\htumt$, works: we show that the $L_2^2$ loss between the 
hypothesis generated by our algorithm $\htumt(\w^t\cdot\x)$ and the underlying optimal hypothesis 
$u^*(\w^*\cdot\x)$ is bounded above by the distance between $\w^t$ and $\w^*$. 
Since our structural sharpness result (\Cref{main:thm:sharpness}) enables us to 
decrease $\|\w^t - \w^*\|_2$, \Cref{main:cor:||htumt-u*(w*x)||<=||w-w*||} certifies that choosing the 
empirically-optimal activation leads to convergence of the hypothesis $\hat{u}^t(\w^t\cdot \x)$. }

Equipped with these technical lemmas, we prove our main structural result (\Cref{main:thm:sharpness}) in \Cref{subsec:proof-of-sharpness}.

\subsection{$L_2^2$ Error and Misalignment}\label{subsec:misalignment-lemma}

Our first key result is \Cref{main:lem:lower-bound-(f(wx)-u(w*x))^2} below, 
which plays a critical role in the proof of \Cref{main:thm:sharpness}. 
As discussed in \Cref{ssec:technical}, 
for two different activations $u$ and $u^*$ and parameters $\w$ and $\w^*$ 
such that $\w$ and $\w^*$ are parallel, even when the $L_2^2$ error is $\Omega(1),$ 
the gradient $\nabla\Lsur(\w;u)$ might not significantly align 
with the direction of $\w - \w^*$, and thus cannot provide sufficient information 
about the direction to decrease $\|\w - \w^*\|_2$. 
Intuitively, the following lemma shows that this is the only thing that can go wrong, and it happens when $\w$ and $\wstar$ are parallel. 
In particular, \Cref{main:lem:lower-bound-(f(wx)-u(w*x))^2} shows 
that for any {square integrable link function $f$}, 
we can relate the $L_2^2$ distance $\Ex[({f}(\w\cdot\x) - u^*(\w^*\cdot\x))^2]$ 
to the magnitude 
of the component of $\w^*$ that is \emph{orthogonal} to $\w$. Although its proof is quite technical, this lemma is the main supporting result allowing us to prove \Cref{main:thm:sharpness}, thus we provide its full proof below. It is however possible to follow the rest of this section by only relying on its statement.

\begin{restatable}[Lower Bound on $L_2^2$ Error by Misalignment]{lemma}{LowerBoundFwAndUwstr}\label{main:lem:lower-bound-(f(wx)-u(w*x))^2}
    Let $u^* \in \U$, $\D_\x$ be $(L,R)$-well-behaved, and 
    $f:\R\mapsto\R$ be square-integrable  
    with respect to the measure of the distribution $\D_\x$. 
    Then, for any $\vec w,\wstar\in \R^d$, 
    \[
    \Ex[(f(\vec w\cdot\x)-u^*(\wstar\cdot\x))^2]\gtrsim a^2 L R^4\|(\wstar)^{\perp_{\vec w}}\|_2^2\;.
    \]
\end{restatable}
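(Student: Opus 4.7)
The intuition is simple. The function $f(\vec w\cdot\x)$ is constant along every direction orthogonal to $\vec w$, whereas $u^*(\wstar\cdot\x)$ genuinely varies along $(\wstar)^{\perp_{\vec w}}$ because $u^*$ is strictly increasing at rate at least $a$ on $[0,\infty)$. The plan is therefore to lower bound the $L_2^2$ gap by a conditional variance of $u^*(\wstar\cdot\x)$ given $\vec w\cdot\x$, restricted to an $R\times R$ box on which $\wstar\cdot\x\geq 0$ so that the slope bound of $u^*$ can be invoked.

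\textbf{Reduction to two dimensions and choice of box.} I would first reduce to the plane $V:=\mathrm{span}(\vec w,\wstar)$, on which the $(L,R)$-well-behavedness of \Cref{def:bounds} applies directly. Put $\beta:=\|(\wstar)^{\perp_{\vec w}}\|_2$ (if $\beta=0$ the claim is trivial) and write $\vec u:=\vec w/\|\vec w\|_2$, $\vec v:=(\wstar)^{\perp_{\vec w}}/\beta$, $T:=\vec u\cdot\x$, $S:=\vec v\cdot\x$, and $\alpha:=\wstar\cdot\vec u$, so $\vec w\cdot\x=\|\vec w\|_2 T$ and $\wstar\cdot\x=\alpha T+\beta S$. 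The joint density $\gamma_V(t,s)$ of $(T,S)$ is then at least $L$ on $[-R,R]^2$. To ensure the argument of $u^*$ is non-negative, I choose an $R\times R$ box $B:=I_t\times I_s\subset[-R,R]^2$ with
\begin{equation*}
    I_t:=\begin{cases}[0,R] & \text{if }\alpha\geq 0,\\ [-R,0] & \text{if }\alpha<0,\end{cases}\qquad I_s:=[0,R],
\end{equation*}
possibly flipping $\vec v$ first so that $\beta>0$. On $B$ both $\alpha t$ and $\beta s$ are non-negative, so $\alpha t+\beta s\geq 0$, and $\gamma_V\geq L$.

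\textbf{Variance-style lower bound.} Let $g(t):=f(\|\vec w\|_2 t)$ and, for $t\in I_t$, $h_t(s):=u^*(\alpha t+\beta s)$. For any $s,s'\in I_s$, since both $\alpha t+\beta s$ and $\alpha t+\beta s'$ are non-negative, $(a,b)$-unboundedness yields $|h_t(s)-h_t(s')|\geq a\beta|s-s'|$. Taking $S,S'$ i.i.d.\ uniform on $I_s$ (so $\mathrm{Var}(S)=R^2/12$),
\begin{equation*}
    \mathrm{Var}_{S\sim\mathrm{U}(I_s)}(h_t(S))=\tfrac{1}{2}\E[(h_t(S)-h_t(S'))^2]\geq a^2\beta^2\,\mathrm{Var}(S)=\tfrac{a^2\beta^2 R^2}{12}\;.
\end{equation*}
Since the best constant approximation to $h_t$ on $I_s$ in $L_2$ is its mean, for any value of $g(t)$ one has $\int_{I_s}(g(t)-h_t(s))^2\,ds\geq R\cdot\mathrm{Var}(h_t(S))\geq a^2\beta^2 R^3/12$. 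Integrating over $t\in I_t$ (length $R$) and using $\gamma_V\geq L$ on $B$ yields
\begin{equation*}
    \Ex[(f(\vec w\cdot\x)-u^*(\wstar\cdot\x))^2]\geq\int_B(g(t)-h_t(s))^2\gamma_V(t,s)\,dt\,ds\geq\tfrac{a^2 L R^4 \beta^2}{12}\;,
\end{equation*}
which is the desired $\Omega(a^2 L R^4 \|(\wstar)^{\perp_{\vec w}}\|_2^2)$ bound.

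\textbf{Main obstacle.} The only delicate step is the box placement: one must certify that an $R\times R$ subset of the anti-anti-concentration region $[-R,R]^2$ lies entirely in the positive halfspace $\{\alpha t+\beta s\geq 0\}$ so that the slope-$a$ property of $u^*$ applies uniformly. This is handled by the trivial sign-based case split on $\alpha$, after orienting $\vec v$ so that $\beta\geq 0$. Apart from this, the argument is a clean combination of anti-anti-concentration, the identity $\mathrm{Var}(X)=\tfrac{1}{2}\E[(X-X')^2]$ for i.i.d.\ copies, and the one-sided sharpness of $(a,b)$-unbounded activations; the target $\gtrsim a^2 L R^4\|(\wstar)^{\perp_{\vec w}}\|_2^2$ then drops out with an absolute constant.
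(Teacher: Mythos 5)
Your proof is correct, and it takes a genuinely different (and cleaner) route than the paper's. The paper conditions on which of four intervals $f(\vec w\cdot\x)$ falls into relative to $u^*$ evaluated at shifted arguments $\alpha\vec w\cdot\x+\|\vec v\|_2 R/32,\ \|\vec v\|_2R/4,\ \|\vec v\|_2R$, shows a pointwise gap of order $a^2\|\vec v\|_2^2R^2$ on one of two strips $B$, $B'$ of the $\tilde{\bv}$-coordinate depending on which case occurs, and then lower-bounds the probability mass of the relevant union via an indicator-function computation exploiting that $I_1I_2\geq 0$ or $I_2I_3\geq 0$ always holds. You instead observe that, for each fixed value $t$ of the $\vec w$-coordinate in a sign-restricted interval, $f(\vec w\cdot\x)$ is a constant in the orthogonal coordinate $s$, while $s\mapsto u^*(\alpha t+\beta s)$ on $[0,R]$ has variance at least $a^2\beta^2 R^2/12$ by the slope-$a$ property on the nonnegative half-line; since no constant can $L_2$-approximate a function better than its variance, integrating against the density lower bound $L$ on the box finishes the proof. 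Both arguments rest on the same two structural facts (anti-anti-concentration of the two-dimensional projection and the rate-$a$ monotonicity of $u^*$ on $[0,\infty)$, with the same sign-based case split on $\alpha=\wstar\cdot\vec w/\|\vec w\|_2$ to keep the arguments of $u^*$ nonnegative), but your conditional-variance formulation absorbs the paper's entire four-case analysis into the single inequality $\E[(c-X)^2]\geq\mathrm{Var}(X)$, and it even yields a better constant ($1/12$ versus $2^{-13}$). The one point worth stating explicitly in a full write-up is the reduction $\Ex[(f(\vec w\cdot\x)-u^*(\wstar\cdot\x))^2]=\int_{\R^2}(g(t)-h_t(s))^2\gamma_V(t,s)\,dt\,ds$, which holds because both functions depend on $\x$ only through its projection onto $V=\mathrm{span}(\vec w,\wstar)$ and $(T,S)$ are orthonormal coordinates on $V$; this is the same reduction the paper performs.
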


\begin{proof}
    The statement holds trivially if $\w$ is parallel to $\wstar,$ so assume this is not the case.  Let $\vec v=(\wstar)^{\perp_{\vec w}} = \wstar - (\wstar \cdot {\vec w}) {\vec w}/\|\vec w\|_2^2$. {Suppose first that $\w\cdot\w^*\geq 0$.} Then $\wstar = \alpha\vec w+\vec v$, for some $\alpha>0$.
    Let $V$ be the subspace spanned by $\vec w,\vec v$. Then, \[ 
    \Ex[(f(\vec w\cdot\x)-u^*(\wstar\cdot\x))^2]=\Ex[(f(\vec w\cdot\x_V)-u^*(\wstar\cdot\x_V))^2]\geq \Ex[(f(\vec w\cdot\x_V)-u^*(\wstar\cdot\x_V))^2\1\{\x_V\in A\}]\;,
    \]
for any $A\subseteq \R^d$. For ease of notation, we drop the subscript $V$, and we assume that all $\x$ are projected to the subspace $V$.
We denote by $\tilde{\vec w} = \w/\|\w\|_2$ (resp. $\tv = \bv/\|\bv\|_2$) the unit vector in the direction of $\vec w$ (resp. $\vec v$). We choose $A=\{ \vec x \in \R^d: \vec w\cdot \x\geq 0,\tv\cdot \x\in (R/16,R/8)\cup(3R/8,R/2)\}$. 

The idea of the proof is to utilize the non-decreasing property of $u^*$ and the fact that the marginal 
distribution $\D_\x$ is anti-concentrated on the subspace $V$.
In short, for any {$\bx$ such that} $|\tv\cdot\x|\leq R$, by the non-decreasing property of $u^*$ 
we know that $f(\w\cdot\x)$ falls into one of the following four intervals: 
\begin{align*}
    & \left( -\infty,\; u^*(\alpha \vec w\cdot \x+\|\vec v\|_2R/32) \right], \quad && \left(u^*(\alpha \vec w\cdot \x+\|\vec v\|_2R/32),\; u^*(\alpha \vec w\cdot \x+\|\vec v\|_2R/4) \right],\\
    & \left( u^*(\alpha \vec w\cdot \x+\|\vec v\|_2R/4),\; u^*(\alpha \vec w\cdot \x+\|\vec v\|_2R) \right], \quad && \left( u^*(\alpha \vec w\cdot \x+\|\vec v\|_2R),\; +\infty \right) \;. 
\end{align*}
When $f(\w\cdot\x)$ belongs to any of the intervals above, we can show that with some constant probability, 
the difference between $\w^*\cdot\x$ and $\w\cdot\x$ is proportional to $\|\bv\|_2$, 
and hence $u^*(\w^*\cdot\x)$ is far from $f(\w\cdot\x)$ 
(due to the well-behaved property of the marginal $\D_\x$). 

To indicate that $f(\w\cdot\x)$ belongs to one of the intervals above, denote 
\begin{align*}
    I_1(\x) &= f(\vec w\cdot \x)-u^*(\alpha \vec w\cdot \x+\|\vec v\|_2R/32) \;,\\
I_2(\x) &= f(\vec w\cdot \x)-u^*(\alpha \vec w\cdot \x+\|\vec v\|_2R/4) \;,\\
I_3(\x) &= f(\vec w\cdot \x)-u^*(\alpha \vec w\cdot \x+\|\vec v\|_2R) \;. 
\end{align*}
For any $\x\in\R^d$, using the assumption that $u^*$ is 
non-decreasing, we have that $I_1(\x) \geq I_2(\x) \geq I_3(\x)$; 
as a consequence, it must be that 
$I_1(\x) I_2(\x)\geq 0$ or $I_2(\x) I_3(\x)\geq 0$. 

\begin{figure}[ht]
    \centering
    \includegraphics[width=0.8\textwidth]{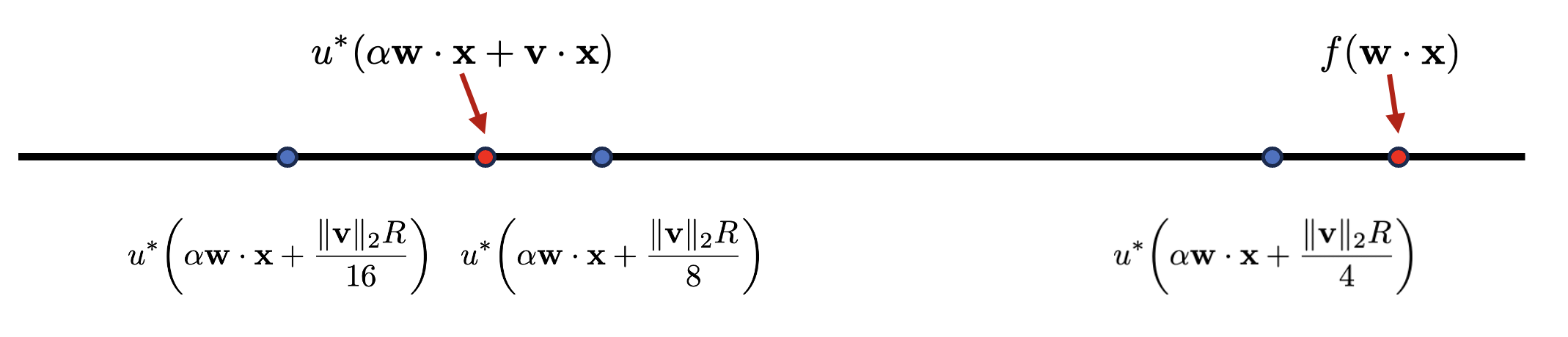}
    \caption{\textit{Under the assumption that $\tv\cdot\x\in(R/16,R/8)$, and $I_1(\x)\geq 0, I_2(\x)\geq 0$, the distance between $f(\w\cdot\x)$ and $u^*(\w^*\cdot\x)$ is at least $|u^*(\alpha \vec w\cdot \x+\|\vec v\|_2R/4)-u^*(\wstar\cdot\x)|\geq a\|\bv\|_2R/8$.}}
    \label{fig:I1I2-geq0}
\end{figure}

\vspace{5pt}

\noindent\textbf{Case 1:} $f(\w\cdot\x)\in (u^*(\alpha \vec w\cdot \x+\|\vec v\|_2R/4),\, \infty)$. Then $I_1(\x) \geq I_2(\x) \geq 0.$ 
Let $$B:=\{\x \in \R^d:  \vec w\cdot \x\geq 0,\tv\cdot\x\in (R/16,R/8)\}$$ and notice that $B\subseteq A$. 
We have that when $\x\in B$, 
$$
u^*(\w^*\cdot\x) = u^*(\alpha\w\cdot\x + \|\bv\|_2\tv\cdot\x)\in(u^*(\alpha\w\cdot\x + \|\bv\|_2R/16),\; u^*(\alpha\w\cdot\x + \|\bv\|_2R/8)),
$$
thus we can conclude that
\begin{align*}
     &\quad (f(\vec w\cdot\x)-u^*(\wstar\cdot\x))^2\1\{\x\in B\}\\
     &= \big(\{f(\vec w\cdot\x)-u^*(\alpha \vec w\cdot \x+\|\vec v\|_2R/4)\} + \{ u^*(\alpha \vec w\cdot \x+\|\vec v\|_2R/4)-u^*(\wstar\cdot\x)\}\big)^2\1\{\x\in B\}
     \\&\geq (u^*(\alpha \vec w\cdot \x+\|\vec v\|_2R/4)-u^*(\wstar\cdot\x))^2\1\{\x\in B\}\;,
\end{align*}
where in the last inequality we used that 
$I_2(\x)=f(\vec w\cdot\x)-u^*(\alpha \vec w\cdot \x+\|\vec v\|_2R/4)\geq 0$ 
and $u^*(\alpha \vec w\cdot \x+\|\vec v\|_2R/4)-u^*(\wstar\cdot\x)\geq 0$ 
by the non-decreasing property of $u^*$, and the elementary inequality 
$(a+b)^2\geq \max(a,b)^2$ for $a,b\geq 0$. 
Further, using $u^*(t) - u^*(t')\geq a(t-t')$ for $t\geq t'\geq 0$ 
(which holds by assumption) and $\wstar = \alpha \vec w + \vec v$, 
we have 
\begin{align*}
  (u^*(\alpha \vec w\cdot \x+\|\vec v\|_2R/4)-u^*(\wstar\cdot\x))^2\1\{\x\in B\}&\geq  a^2(\|\vec v\|_2R/4-\vec v\cdot \x)^2\1\{\x\in B\}
  \\&\geq a^2\|\vec v\|_2^2(R/8)^2\1\{\x\in B\}\;,
\end{align*}
where in the last inequality we used that $0\leq \tv\cdot\x\leq R/8$ 
(by the definition of the event $B$). 
A visual {illustration of the argument} above is given 
in \Cref{fig:I1I2-geq0}.

\vspace{5pt}
\noindent\textbf{Case 2:} $f(\w\cdot\x)\in(-\infty,\, u^*(\alpha \vec w\cdot \x+\|\vec v\|_2R/32))$. Then $0 \geq I_1(\x) \geq I_2(\x)$. We follow a similar argument as in the previous case. In particular, we begin with 
\begin{align}
    &\quad (f(\w\cdot\x) - u^*(\w^*\cdot\x))^2\1\{\x\in B\}\notag\\
    & = \big(\{f(\w\cdot\x) - u^*(\alpha\w\cdot\x + \|\bv\|_2R/32)\} + \{u^*(\alpha\w\cdot\x + \|\bv\|_2R/32) - u^*(\w^*\cdot\x)\}\big)^2\1\{\x\in B\}.\label{eq:case-2-aux-I_1+compl}
\end{align}
Note that $I_1(\x) \leq 0$ and 
$u^*(\w^*\cdot\x) = u^*(\alpha\w\cdot\x + \|\bv\|_2\tv\cdot\x)\geq u^*(\alpha\w\cdot\x + \|\bv\|_2R/32)$ 
since $\tv\cdot\x\geq R/16\geq R/32$ for $\x\in B$; 
thus, the two terms in curly brackets in \eqref{eq:case-2-aux-I_1+compl} 
have the same sign and we further have:
\begin{align*}
    (f(\w\cdot\x) - u^*(\w^*\cdot\x))^2\1\{\x\in B\}&\geq (u^*(\alpha\w\cdot\x + \|\bv\|_2R/32) - u^*(\w^*\cdot\x))^2\1\{\x\in B\}\\
    &\geq a^2\|\bv\|_2^2 (R/32)^2\1\{\x\in B\} \;,
\end{align*}
where in the first inequality we used the fact that $(a+b)^2\geq \max\{a^2, b^2\}$ when both $a,b\leq 0$. 

By the analysis of {Case 1 and Case 2}, we can conclude that when $I_1(\x)I_2(\x)\geq 0$, it must be:
\begin{equation}\label{ineq:I_1I_2>=0}
    (f(\w\cdot\x) - u^*(\w^*\cdot\x))^2\1\{\x\in B\}\geq a^2\|\bv\|_2^2R^2/2^{10}\1\{\x\in B\} \;.
\end{equation}

\vspace{5pt}
\noindent\textbf{Case 3:} $f(\w\cdot\x)\in (u^*(\alpha\w\cdot\x + \|\bv\|_2R),\, +\infty)$. Then $I_2(\x) \geq I_3(\x)\geq 0$ and we choose 
$$B'=\{\x \in \R^d: \vec w\cdot \x\geq 0,\tv\cdot\x \in (3R/8,R/2)\} \;.$$ 
Following the same reasoning as in the previous two cases, we have
\begin{align*}
    &\quad (f(\vec w\cdot\x)-u^*(\wstar\cdot\x))^2\1\{\x\in B'\}\\
    & = \big(\{f(\vec w\cdot\x) - u^*(\alpha\w\cdot\x + \|\bv\|_2R)\} + \{u^*(\alpha\w\cdot\x + \|\bv\|_2R) - u^*(\wstar\cdot\x)\}\big)^2\1\{\x\in B'\}\\
    &\geq (u^*(\alpha\w\cdot\x + \|\bv\|_2R) - u^*(\wstar\cdot\x))^2\1\{\x\in B'\}\\
    &\geq a^2\|\bv\|_2^2 (R/2)^2 \1\{\x\in B'\} \;.
\end{align*}

\vspace{5pt}
\noindent\textbf{Case 4:} $f(\w\cdot\x)\in (-\infty, u^*(\alpha\w\cdot\x + \|\bv\|_2R/4))$. Then $ 0 \geq I_2(\x) \geq I_3(\x).$ It follows that
\begin{align*}
    &\quad (f(\vec w\cdot\x)-u(\wstar\cdot\x))^2\1\{\x\in B'\}\\
    & = \big(\{f(\vec w\cdot\x)- u^*(\alpha\w\cdot\x + \|\bv\|_2R/4)\}  + \{u^*(\alpha\w\cdot\x + \|\bv\|_2R/4) - u(\wstar\cdot\x)\}\big)^2\1\{\x\in B'\}\\
    &\geq a^2\|\bv\|_2^2 (R/8)^2 \1\{\x\in B'\} \;.
\end{align*}
Thus, from the analysis of {Case 3 and Case 4}, we conclude that when $I_2(\x)I_3(\x)\geq 0$, we have
\begin{equation}\label{ineq:I_2I_3>=0}
    (f(\w\cdot\x) - u^*(\w^*\cdot\x))^2\1\{\x\in B'\}\geq a^2\|\bv\|_2^2(R^2/64)\1\{\x\in B'\} \;.
\end{equation}
Recall that for any $\x$, at least one of the inequalities 
$I_1(\x)I_2(\x)\geq 0$ or $I_2(\x)I_3(\x)\geq 0$ happens, thus, 
$\1\{I_1(\x)I_2(\x)\geq 0\}\geq 1 - \1\{I_2(\x)I_3(\x)\geq 0\}$. Therefore, the 
probability mass of the region 
$$(B\cap\{I_1(\x)I_2(\x)\geq 0\}) \cup (B'\cap\{I_2(\x)I_3(\x)\geq 0\})$$ can 
be bounded below by:
\begin{align}\label{ineq:lower-bound-prob-mass}
    &\quad \pr\bigg[\x\in (B\cap\{I_1(\x)I_2(\x)\geq 0\}) \cup (B'\cap\{I_2(\x)I_3(\x)\geq 0\})\bigg] \nonumber\\
    &= \int_V \bigg(\1\{\x\in B\}\1\{I_1(\x)I_2(\x)\geq 0\} + \1\{\x\in B'\}\1\{I_2(\x)I_3(\x)\geq 0\}\bigg)\gamma(\x)\diff{\x}\nonumber\\
    &\geq \int_{V, \|\x\|_\infty\leq R} \bigg(\1\{\x\in B\}\1\{I_1(\x)I_2(\x)\geq 0\} + \1\{\x\in B'\}\1\{I_2(\x)I_3(\x)\geq 0\}\bigg)L\diff{\x}\nonumber\\
    &\geq L\int_{V, \|\x\|_\infty\leq R} \bigg(\1\{\x\in B\} + (\1\{\x\in B'\} - \1\{\x\in B\}) \1\{I_2(\x)I_3(\x)\geq 0\}\bigg)\diff{\x} \;,
\end{align}
where in the first inequality we used the assumption that $\D_\x$ is $(L,R)$-
well-behaved. As a visual illustration of the lower bound argument 
above, {the reader is referred to} \Cref{fig:I1I2I3}.

\begin{figure}[h]
    \centering
    \includegraphics[width=0.8\linewidth]{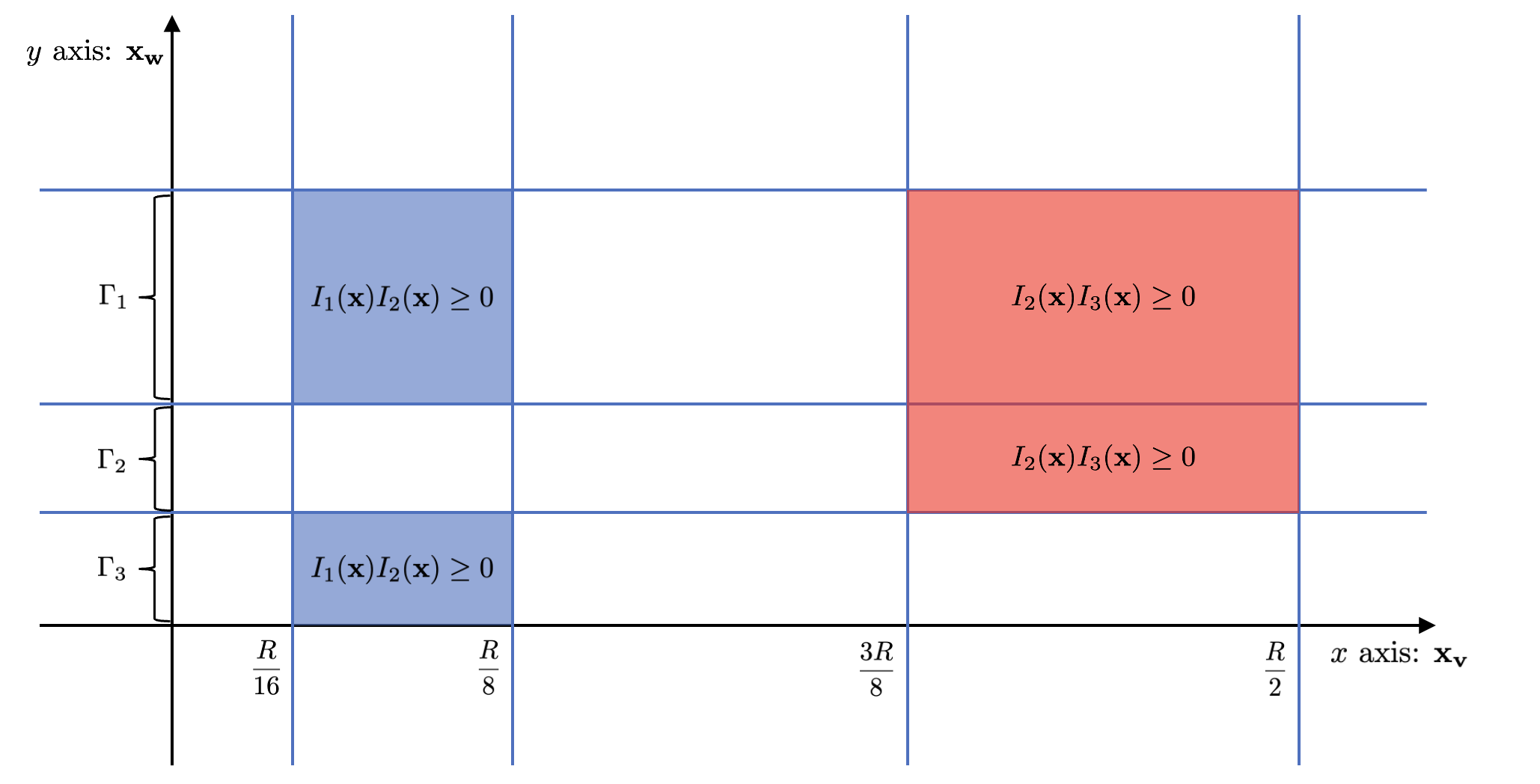}
    \caption{\textit{On the 2-dimensional space $V$ spanned by $(\x_{\bv},\x_{\w})$, at each point $\x\in B\cup B'$, it must be that $I_1(\x)I_2(\x)\geq 0$ or $I_2(\x)I_3(\x)\geq 0$. $\Gamma_1$ denotes the interval of $\x_\w = \w\cdot\x$ such that $f(\w\cdot\x)\geq u^*(\alpha\w\cdot\x + \|\bv\|_2R)$, hence both $I_1(\x)I_2(\x)\geq 0,\, I_2(\x)I_3(\x)\geq 0$; $\Gamma_2$ denotes the interval of $\x_\w$ such that $f(\w\cdot\x)\in (u^*(\alpha\w\cdot\x + \|\bv\|_2R/32), u^*(\alpha\w\cdot\x + \|\bv\|_2R/4))$, hence $I_2(\x)I_3(\x)\geq 0$; finally, $\Gamma_3$ denotes the interval of $\x_\w$ such that $f(\w\cdot\x)\in (u^*(\alpha\w\cdot\x + \|\bv\|_2R/4), u^*(\alpha\w\cdot\x + \|\bv\|_2R/))$, hence $I_1(\x)I_2(\x)\geq 0$. The area of the union of the red and blue regions is the lower bound on the probability in \eqref{ineq:lower-bound-prob-mass}. As displayed in the figure, the sum of the blue and red region is lower bounded by $\1\{\x\in B\} + (\1\{\x\in B'\} - \1\{\x\in B\}) \1\{I_2(\x)I_3(\x)\geq 0\}$.}}
    \label{fig:I1I2I3}
\end{figure}
\vspace{5pt}

To finish bounding below the probability in \eqref{ineq:lower-bound-prob-mass}, it 
remains to bound the integral from its final inequality, which now does not 
involve the probability density function anymore, as we used the anti-
concentration property of $\D_\x$ to uniformly bound below $\gamma(\x).$  
Recall that by definition, $I_1(\x),I_2(\x),I_3(\x)$ are  functions of $\w\cdot\x$ that do not depend on $\tv\cdot\x$. Denote the projection of $\x$ on the standard basis of space $V$ by $\x_{\tilde{\w}} = \tilde{\w}\cdot\x$ and $\x_{\tilde{\bv}} = \tv\cdot\x$. Then, we have:
\begin{align*}
    &\quad \int_{V, \|\x\|_\infty\leq R}\bigg(\1\{\x\in B'\} - \1\{\x\in B\}\bigg) \1\{I_2(\x)I_3(\x)\geq 0\}\diff{\x}\\
    & = \int_{|\x_{\tilde{\w}}|\leq R}\int_{|\x_{\tilde{\bv}}|\leq R} \bigg(\1\bigg\{\x_{\tilde{\bv}}\in \bigg(\frac{3R}{8}, \frac{R}{2}\bigg)\bigg\} - \1\bigg\{\x_{\tilde{\bv}}\in\bigg(\frac{R}{16}, \frac{R}{8}\bigg)\bigg\}\bigg)\diff{\x_{\tilde{\bv}}}\1\{\x_{\tilde{\w}}\geq 0, I_2(\x)I_3(\x)\geq 0\}\diff{\x_{\tilde{\w}}}\\
    &= \int_{|\x_{\tilde{\w}}|\leq R}\1\{\x_{\tilde{\w}}\geq 0, I_2(\x)I_3(\x)\geq 0\}\diff{\x_{\tilde{\w}}} \int_{|\x_{\tilde{\bv}}|\leq R} \bigg(\1\bigg\{\x_{\tilde{\bv}}\in \bigg(\frac{3R}{8}, \frac{R}{2}\bigg)\bigg\} - \1\bigg\{\x_{\tilde{\bv}}\in\bigg(\frac{R}{16}, \frac{R}{8}\bigg)\bigg\}\bigg)\diff{\x_{\tilde{\bv}}}
    \\
    &\geq 0 \;.
\end{align*}
Plugging the inequality above back into \eqref{ineq:lower-bound-prob-mass}, we get:
\begin{align}\label{ineq:lower-bound-prob-mass-2}
    &\quad \pr\bigg[\x\in \big(B\cap\{I_1(\x)I_2(\x)\geq 0\}\big) \cup \big(B'\cap\{I_2(\x)I_3(\x)\geq 0\}\big)\bigg]\nonumber\\
    &\geq L\int_{V,\|\x\|_\infty\leq R} \1\{\w\cdot\x\geq 0, \tilde{\bv}\cdot\x\in (R/16, R/8) \}\diff{\x}\nonumber\\
    &=L\iint(\1\{\x_{\tilde{\w}}\in(0, R)\}\diff{\x_{\tilde{\w}}})\1\{\x_{\tilde{\bv}}\in(R/16,R/8)\}\diff{\x_{\tilde{\bv}}} = LR^2/16 \;.
 \end{align}
We are now ready to provide a lower bound on the $L_2^2$ distance between $f(\w\cdot\x)$ and $u^*(\w^*\cdot\x)$. Combining the inequalities from \eqref{ineq:I_1I_2>=0} and \eqref{ineq:I_2I_3>=0}, we get
\begin{align*}
     &\quad \Ex[(f(\vec w\cdot\x)-u^*(\wstar\cdot\x))^2]\\
     &\geq \Ex[(f(\vec w\cdot\x_V)-u^*(\wstar\cdot\x_V))^2\1\{\x_V\in A\}]
     \\&\geq a^2(R^2/1024)\|\vec v\|_2^2\Ex[\1\big\{\{\x_V\in B\cap\{I_1(\x)I_2(\x)\geq 0\} \}\cup \{B'\cap\{I_2(\x)I_3(\x)\geq 0\}\}\big\}]\\
     &\geq a^2(R^4/2^{13})L\|\vec v\|_2^2\;,
\end{align*}
  where we used \eqref{ineq:lower-bound-prob-mass-2} in the last inequality.

  {Now for the case where $\w\cdot\w^*\leq 0$, it holds $\w^* = \alpha\w + \bv$ with $\alpha\leq 0$. Considering instead 
  $A = \{\x \in \R^d: \w\cdot\x\leq 0, \tilde{\bv}\cdot\x\in(R/16,R/8)\cup(3R/8,R/2)\}$ and similarly 
  $B = \{\x \in \R^d: \w\cdot\x\leq 0, \tilde{\bv}\cdot\x\in(R/16,R/8)\}$, $B' = \{\x \in \R^d: \w\cdot\x\leq 0, \tilde{\bv}\cdot\x\in(R/3,R/2)\}$, 
  then all the steps above remains valid without modification.}
This completes the proof of \Cref{main:lem:lower-bound-(f(wx)-u(w*x))^2}.
\end{proof}

\subsection{Closeness of Idealized and Attainable Activations}\label{subsec:closeness-of-activations}

In this section, we bound the contribution of the error incurred from working with attainable link functions  $\htumt$ in the iterations of the algorithm. 
The error incurred is due to both the arbitrary noise in the labels and 
due to using a finite sample set. In bounding the error, for analysis purposes, we introduce auxiliary population-level link functions. 

 Concretely, given  $\w\in\B(W)$, a population-optimal activation is a solution to the following stochastic  convex program: \begin{equation}\label{def:ut}\tag{EP}
\uw\in\argmin_{u\in\U}\Exy[(u(\w\cdot\x) - y)^2].
\end{equation}
 We further introduce auxiliary ``idealized, noiseless'' activations, which, given noiseless labels $y^* = u^*(\w^*\cdot\x)$ and a parameter weight vector $\w,$ are defined via 
\begin{equation}\label{def:ut*}\tag{EP*}
    \ustrw\in\argmin_{u\in\U}\Exy[(u(\w\cdot\x) - y^*)^2].
\end{equation}

Below we relate $u^t := u_{\w^t}$ and $u^{*t} := u^*_{\w_t}$ and show that their $L_2^2$ error for the parameter vector $\w^t$ is bounded by $\opt$. The proof of \Cref{main:lem:upper-bound-u_t-u_t^*} is deferred to \Cref{app:pf:main:lem:upper-bound-u_t-u_t^*}.

\begin{restatable}[Closeness of Population-Optimal Activations]{lemma}{UpperBoundUtUtstr}\label{main:lem:upper-bound-u_t-u_t^*}
    {Let $\w^t \in \B(W)$} and let $u^{*t}$, $u^t$ be defined as solutions to  \eqref{def:ut*}, \eqref{def:ut}, respectively. Then, $$\Ex[(u^t(\w^t\cdot\x) - u^{*t}(\w^t\cdot\x))^2]\leq \opt.$$ 
\end{restatable}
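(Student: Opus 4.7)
The proof I would pursue interprets both $u^t(\w^t\cdot\x)$ and $u^{*t}(\w^t\cdot\x)$ as $L^2(\D_\x)$-projections of $y$ and $y^* := u^*(\w^*\cdot\x)$, respectively, onto a common convex subset of $L^2(\D_\x)$, and then invokes the non-expansiveness of the projection onto a closed convex set in a Hilbert space. Concretely, I would first verify that the class $\U$ is convex: if $u_1, u_2 \in \U$ and $\lambda \in [0,1]$, then $u := \lambda u_1 + (1-\lambda) u_2$ satisfies $u(0)=0$, is $b$-Lipschitz, non-decreasing, and inherits the lower-bound $u(z)-u(z') \geq a(z-z')$ for $z \geq z' \geq 0$ by taking the corresponding convex combination of the inequalities for $u_1, u_2$. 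Hence the set $\mathcal{K}_{\w^t} := \{u(\w^t\cdot\x) : u \in \U\} \subseteq L^2(\D_\x)$ is convex. By the definitions of $u^t$ and $u^{*t}$ as optimizers of \eqref{def:ut} and \eqref{def:ut*}, the functions $u^t(\w^t \cdot \x)$ and $u^{*t}(\w^t\cdot\x)$ are the $L^2(\D_\x)$-nearest points in $\mathcal{K}_{\w^t}$ to $y$ and $y^*$, respectively.

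The cleanest route from here is via first-order optimality (variational inequalities for convex projections). Testing the optimality of $u^t$ against the direction $u^{*t} - u^t \in \mathcal{K}_{\w^t} - u^t$, and testing the optimality of $u^{*t}$ against $u^t - u^{*t}$, and then adding the two inequalities, I obtain
\begin{equation*}
\Exy\bigl[(u^{*t}(\w^t\cdot\x) - u^t(\w^t\cdot\x))\,(y^* - y)\bigr] \;\geq\; \Ex\bigl[(u^t(\w^t\cdot\x) - u^{*t}(\w^t\cdot\x))^2\bigr].
\end{equation*}
Applying Cauchy-Schwarz to the left-hand side and dividing, this gives
\begin{equation*}
\sqrt{\Ex[(u^t(\w^t\cdot\x) - u^{*t}(\w^t\cdot\x))^2]} \;\leq\; \sqrt{\Exy[(y - y^*)^2]} \;=\; \sqrt{\opt},
\end{equation*}
where the final equality uses $y^* = u^*(\w^*\cdot\x)$ and the definition $\opt = \Ltwo(\w^*; u^*)$. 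Squaring yields the claim.

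I do not expect a serious obstacle here; the only care point is the convexity of $\U$ (easy) and the existence/measurability of the minimizers, which the statement of the lemma takes as given. Note that this argument is essentially the Hilbert-space fact that metric projection onto a closed convex set is $1$-Lipschitz, applied with two different ``source points'' $y$ and $y^*$ whose $L^2$-distance is exactly $\sqrt{\opt}$. The whole proof should fit in a few lines once the projection interpretation is made explicit.
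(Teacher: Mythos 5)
Your proposal is correct and follows essentially the same route as the paper: the paper likewise establishes convexity of the class $\{u(\w^t\cdot\x):u\in\U\}$, derives the two first-order (projection-type) variational inequalities for $u^t$ against $y$ and $u^{*t}$ against $y^*$, sums them to get $\Ex[(u^t(\w^t\cdot\x)-u^{*t}(\w^t\cdot\x))^2]\leq \Exy[(u^t(\w^t\cdot\x)-u^{*t}(\w^t\cdot\x))(y-y^*)]$, and finishes with Cauchy--Schwarz and $\E[(y-y^*)^2]=\opt$. Your framing via non-expansiveness of metric projection is just a conceptual repackaging of the same argument.
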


As a consequence of the lemma above, we are able to relate $\htumt$ to the ``noiseless'' labels $y^* = u^*(\w^*\cdot\x)$ by showing that the $L_2^2$ distance between $u^*(\w^*\cdot\x)$ and the sample-optimal activation $\htumt(\w^t\cdot\x)$ is bounded by $\|\w^t - \w^*\|_2^2$. Although \Cref{main:cor:||htumt-u*(w*x)||<=||w-w*||} is not used in the proof of \Cref{main:thm:sharpness}, we still present it here as it justifies the mechanism of our approach alternating between updates for $\w^t$ and $\htumt$. The proof of \Cref{main:cor:||htumt-u*(w*x)||<=||w-w*||} can be found in \Cref{app:pf:main:cor:||htumt-u*(w*x)||<=||w-w*||}.

\begin{restatable}[Closeness of Idealized and Attainable Activations]{corollary}{UpperBoundLtwoDistUtUstr}\label{main:cor:||htumt-u*(w*x)||<=||w-w*||}
    Let $\eps, \delta > 0.$ Given a parameter $\w^t\in\B(W)$ and $m \gtrsim d\log^4(d/(\eps\delta))(b^2W^3/(L^2\eps))^{3/2}$ samples from $\D$, let $\htumt$ be the sample-optimal activation on these samples given $\w^t$, as defined in \eqref{def:htumt}. Then, with probability at least $1 - \delta$, 
\begin{equation*}
     \Ex [(\htumt(\w^t\cdot\x) - u^*(\w^*\cdot\x))^2]  
    \leq 3(\eps + \opt + b^2\|\w^t - \w^*\|_2^2) \;.
    \end{equation*}
\end{restatable}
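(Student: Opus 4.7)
\medskip

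\noindent\textbf{Proof proposal.} The natural plan is to split the target $L_2^2$ distance via the triangle inequality into three physically meaningful pieces and to bound each separately. Concretely, I would decompose
\[
\htumt(\w^t\cdot\x) - u^*(\w^*\cdot\x) = \underbrace{\htumt(\w^t\cdot\x)-u^t(\w^t\cdot\x)}_{\text{sampling error}} + \underbrace{u^t(\w^t\cdot\x)-u^{*t}(\w^t\cdot\x)}_{\text{noise error}} + \underbrace{u^{*t}(\w^t\cdot\x)-u^*(\w^*\cdot\x)}_{\text{parameter error}},
\]
and then apply $(a+b+c)^2\le 3(a^2+b^2+c^2)$, which is exactly what produces the factor $3$ in the statement. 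The three error terms correspond respectively to finite-sample effects, label noise, and misalignment of $\w^t$ with $\w^*$; one should expect each of them to be bounded by $\eps$, $\opt$, and $b^2\|\w^t-\w^*\|_2^2$, respectively.

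The middle term is free: \Cref{main:lem:upper-bound-u_t-u_t^*} applied at $\w^t$ directly yields $\Ex[(u^t(\w^t\cdot\x)-u^{*t}(\w^t\cdot\x))^2]\le \opt$. For the parameter-error term, I would use the fact that $u^{*t}$ is, by its very definition in \eqref{def:ut*}, the best approximator in $\U$ of $u^*(\w^*\cdot\x)$ when composed with $\w^t\cdot\x$. Since $u^*$ itself belongs to $\U$, it is a feasible competitor, giving
\[
\Ex[(u^{*t}(\w^t\cdot\x)-u^*(\w^*\cdot\x))^2] \;\le\; \Ex[(u^*(\w^t\cdot\x)-u^*(\w^*\cdot\x))^2] \;\le\; b^2\,\Ex\!\left[((\w^t-\w^*)\cdot\x)^2\right] \;\le\; b^2\|\w^t-\w^*\|_2^2,
\]
where the second step uses the $b$-Lipschitz property of $u^*$ and the last uses $\E[\x\x^\top]\preccurlyeq\vec I$.

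The main obstacle is the sampling-error term $\Ex[(\htumt(\w^t\cdot\x)-u^t(\w^t\cdot\x))^2]$, which we need to bound by $\eps$ with probability at least $1-\delta$ under the stated sample complexity $m\gtrsim d\log^4(d/(\eps\delta))(b^2W^3/(L^2\eps))^{3/2}$. Here one fixes $\w^t$ and views the problem as one-dimensional isotonic/Lipschitz regression in the scalar variable $z=\w^t\cdot\x$: both $\htumt$ and $u^t$ are minimizers of the same $L_2^2$ objective (empirical vs.\ population) over $\U$, so the standard route is a uniform convergence argument over $\U$ restricted to the relevant interval. Since $\|\w^t\|_2\le W$ and the marginal has the sub-exponential tail from \Cref{def:bounds}, one can truncate to $|z|\le O(W\log(d/(\eps\delta)))$, on which the $b$-Lipschitz, monotone functions of $\U$ are uniformly bounded; the covering/bracketing numbers of this class in $L_2^2$ are then controlled in the usual way. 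Combining uniform convergence with the fact that both $\htumt$ and $u^t$ are minimizers of their respective losses, and invoking the anti-anti-concentration bound $\gamma_V\ge L$ of \Cref{def:bounds} to transfer a squared-error gap on the empirical measure into one on the population measure, yields a bound of $\eps$ on the sampling error whenever $m$ exceeds the stated threshold. Assembling the three bounds finishes the proof; the full calculation is deferred to \Cref{app:pf:main:cor:||htumt-u*(w*x)||<=||w-w*||}.
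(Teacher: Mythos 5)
Your proposal is correct and follows essentially the same route as the paper: the same three-term decomposition with $(a+b+c)^2\le 3(a^2+b^2+c^2)$, the middle term via \Cref{main:lem:upper-bound-u_t-u_t^*}, and the last term via the optimality of $u^{*t}$ in \eqref{def:ut*} together with the $b$-Lipschitzness of $u^*$ and $\E[\x\x^\top]\preccurlyeq \mathbf{I}$. The sampling-error term is exactly the content of the paper's uniform convergence result (\Cref{main:lem:E[htutm(wt.x) - ut(wt.x))^2]<=eps}); the only small deviation is that the conversion from excess population risk to the $L_2^2$ distance $\Ex[(\htumt(\w^t\cdot\x)-u^t(\w^t\cdot\x))^2]$ is obtained there via convexity of $\U$ and the projection inequality of \Cref{app:claim:(ut-v)(y-ut)geq0}, not via the anti-anti-concentration bound you mention, which plays no role in this step.
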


\subsection{Proof of \Cref{main:thm:sharpness}}\label{subsec:proof-of-sharpness}

We are now ready to prove our main structural result. We focus here on the main 
argument, while the proofs of supporting technical claims are deferred to \Cref{app:pf:sharpness}. 

\begin{proof}[Proof of \Cref{main:thm:sharpness}]
    Given any weight parameter $\w^t \in \B(W)$ and $\htumt$ chosen as its corresponding sample-optimal solution to problem \eqref{def:htumt}, let $u^t$ be the population-optimal activation, as defined by Problem \eqref{def:ut}. 
    Given a sample set $S = \{(\x\ith,y\ith)\}_{i=1}^m$, consider an idealized, ``noise-free'' set {$S^*$} that assigns realizable labels to data vectors from $S$, i.e., $S^* = \{(\x\ith,y\sith)\}_{i=1}^m$, $y\sith = u^*(\w^*\cdot\x\ith)$.
    Further define idealized sample-optimal activations by 
\begin{equation}\tag{P*}\label{def:htustrmt}\htustrw \in\argmin_{u\in\U}\frac{1}{m}\sum_{i=1}^m(u(\w\cdot\x\ith) - y\sith)^2.
\end{equation}
For a parameter $\w^t$, denote $\htustrmt:=\hat{u}^*_{\w^t}$, for simplicity,
and recall that the population version of $\htustrmt$ was defined by \eqref{def:ut*}.
To prove \Cref{main:thm:sharpness}, we decompose $\nabla \htLsur(\w^t;\htumt)\cdot(\w^t - \w^*)$ into three summation terms: 
    \begin{align}
        &\quad \nabla \htLsur(\w^t;\htumt)\cdot(\w^t - \w^*)\nonumber\\
        & = \frac{1}{m}\sum_{i=1}^m (\htumt(\w^t\cdot\x\ith) - y\ith)(\w^t - \w^*)\cdot\x\ith\nonumber\\
        & = \underbrace{\frac{1}{m}\sum_{i=1}^m(\htumt(\w^t\cdot\x\ith - \htustrmt(\w^t\cdot\x\ith))(\w^t - \w^*)\cdot\x\ith}_{Q_1} + \underbrace{\frac{1}{m}\sum_{i=1}^m(\htustrmt(\w^t\cdot\x\ith) - y\sith)(\w^t - \w^*)\cdot\x\ith}_{Q_2} \nonumber\\
        &\quad  + \underbrace{\frac{1}{m}\sum_{i=1}^m(y\sith - y\ith)(\w^t\cdot\x\ith - \w^*\cdot\x\ith)}_{Q_3} \;. \label{eq:sharp-inter-1}
    \end{align}
    We tackle each term $Q_1$ to $Q_3$ in \eqref{eq:sharp-inter-1} separately, using the following arguments relying on three auxiliary claims. Because the proofs of these claims are technical, we defer them to \Cref{app:pf:sharpness}.
        
    The first claim states that $Q_1$ is of the order $(\sqrt{\eps} + \sqrt{\opt})\|\w^t - \w^*\|_2 + (\opt+\eps)/b$ with high probability.

    \begin{restatable}{claim}{claimsharpone}\label{claim:sharp-inter-1}
        Let $S  = \{(\x\ith,y\ith)\}_{i=1}^m$ be i.i.d.\ samples from $\D$ where $m$ is as specified in the statement of \Cref{main:thm:sharpness}.
Let $\htumt$ be the solution of optimization problem \eqref{def:htumt} given {$\w^t\in\B(W)$} and $S$. Furthermore, denote the idealized version of $S$ by $S^* = \{(\x\ith,y\sith)\}_{i=1}^m$, where $y\sith = u^*(\w^*\cdot\x\ith)$. Let $\htustrmt$ be the solution of problem \eqref{def:htustrmt}. Then, with probability at least $1 - \delta$, 
        \begin{equation*}
        Q_1 = \frac{1}{m}\sum_{i=1}^m((\htumt(\w^t\cdot\x\ith) - \htustrmt(\w^t\cdot\x\ith))(\w^t - \w^*)\cdot\x\ith\geq -(\sqrt{\eps} + \sqrt{\opt})\|\w^t - \w^*\|_2 - (\eps + \opt)/b \;.
    \end{equation*}
    \end{restatable}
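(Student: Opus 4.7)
The plan is to control $Q_1$ by first establishing that $\htumt$ and $\htustrmt$ are close in empirical $L_2$ norm, with squared distance of order $\opt + \eps$, and then applying Cauchy--Schwarz to the bilinear form defining $Q_1$. The key observation enabling a clean bound is that both $\htumt$ and $\htustrmt$ are minimizers of quadratic empirical risks over the same convex class $\U$, differing only in that one uses the (adversarial) labels $y\ith$ and the other uses the idealized labels $y\sith = u^*(\w^*\cdot\x\ith)$. Hence, comparing their first-order optimality conditions (variational inequalities) translates the noise gap $y - y^*$ directly into an empirical $L_2$ distance bound between the two fitted activations, avoiding the need for uniform convergence over $\U$ at this step.

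Concretely, the variational inequality for $\htumt$ applied with test function $\htustrmt \in \U$ gives $\frac{1}{m}\sum_i(\htumt(\w^t\cdot\x\ith) - y\ith)(\htustrmt(\w^t\cdot\x\ith) - \htumt(\w^t\cdot\x\ith)) \geq 0$, and symmetrically for $\htustrmt$ with test function $\htumt$. Adding these and rearranging yields
\[
\frac{1}{m}\sum_i\bigl(\htumt(\w^t\cdot\x\ith) - \htustrmt(\w^t\cdot\x\ith)\bigr)^2 \;\leq\; \frac{1}{m}\sum_i (y\ith - y\sith)\bigl(\htumt(\w^t\cdot\x\ith) - \htustrmt(\w^t\cdot\x\ith)\bigr),
\]
and one more application of Cauchy--Schwarz implies $\frac{1}{m}\sum_i(\htumt(\w^t\cdot\x\ith) - \htustrmt(\w^t\cdot\x\ith))^2 \leq \frac{1}{m}\sum_i (y\ith - y\sith)^2$. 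Since $\E[(y - u^*(\w^*\cdot\x))^2] = \opt$, sub-exponential concentration (afforded by well-behavedness of $\D_\x$ and the Lipschitz structure of $u^*$, together with tail control on $y$) shows this empirical second moment is at most $\opt + \eps$ with probability at least $1 - \delta$, under the stated sample bound.

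Combining with the bound $\frac{1}{m}\sum_i((\w^t - \w^*)\cdot\x\ith)^2 \lesssim \|\w^t - \w^*\|_2^2$, which follows from the operator-norm assumption $\E[\x\x^\top] \preccurlyeq \mathbf{I}$ and sub-exponential concentration for quadratic forms, a final Cauchy--Schwarz on $Q_1$ itself yields $|Q_1| \lesssim \sqrt{\opt + \eps}\,\|\w^t - \w^*\|_2 \leq (\sqrt{\opt} + \sqrt{\eps})\|\w^t - \w^*\|_2$. The residual additive slack $-(\opt + \eps)/b$ in the claimed bound absorbs the Young-type constants arising when calibrating the product against the Lipschitz scale $b$ of the activations, together with the lower-order concentration errors incurred in transferring $\frac{1}{m}\sum (y - y^*)^2$ to its expectation. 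The main obstacle I anticipate is obtaining the concentration of $\frac{1}{m}\sum_i (y\ith - y\sith)^2$ around $\opt$ at the required rate, since the labels are not a priori bounded; this calls for careful truncation combined with sub-exponential tails inherited from $\D_\x$ and $u^*$. By contrast, the variational-inequality step itself is clean and elegantly bypasses any uniform convergence argument over the infinite-dimensional class $\U$.
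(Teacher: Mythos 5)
Your proposal is correct in substance but takes a genuinely different route from the paper. The paper's proof decomposes $\htumt - \htustrmt$ through the population-optimal activations $u^t$ and $u^{*t}$, invokes the uniform-convergence results of \Cref{app:uniform-convergence} to get $\Ex[(\htumt(\w^t\cdot\x)-u^t(\w^t\cdot\x))^2]\le\eps$ and $\Ex[(\htustrmt(\w^t\cdot\x)-u^{*t}(\w^t\cdot\x))^2]\le\eps$, uses \Cref{main:lem:upper-bound-u_t-u_t^*} for the middle term, and then applies Chebyshev to each of the three resulting empirical bilinear sums. You instead compare the two empirical minimizers directly: since $\htumt$ and $\htustrmt$ minimize the same quadratic empirical risk over the convex class $\U$ with labels $y\ith$ and $y\sith$ respectively, adding their two variational inequalities gives the \emph{deterministic} bound $\frac1m\sum_i(\htumt(\w^t\cdot\x\ith)-\htustrmt(\w^t\cdot\x\ith))^2\le\frac1m\sum_i(y\ith-y\sith)^2$ (non-expansiveness of projection onto a convex set in the empirical $L_2$ geometry); this is precisely the empirical analogue of the argument the paper runs at the population level in \Cref{app:claim:(ut-v)(y-ut)geq0}/\Cref{main:lem:upper-bound-u_t-u_t^*}. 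After that you only need Cauchy--Schwarz plus concentration of $\frac1m\sum_i(y\ith-y\sith)^2$ around $\opt$ (using the truncation $|y|\le M$ of \Cref{main:lem:y-bounded-by-M}) and of $\frac1m\sum_i((\w^t-\w^*)\cdot\x\ith)^2$ around at most $\|\w^t-\w^*\|_2^2$. What your route buys: it bypasses the uniform-convergence lemmas for this step, and the quantities you concentrate do not involve the data-dependent functions $\htumt,\htustrmt$, which cleanly sidesteps the coupling between the fitted activation and the sample that the paper manages through uniform convergence. What the paper's route buys: the intermediate facts it uses are needed elsewhere anyway (e.g.\ in \Cref{main:cor:||htumt-u*(w*x)||<=||w-w*||} and \Cref{main:cor:bound-norm-empirical-grad}), so the claim follows almost for free from already-built infrastructure.

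One bookkeeping point you should carry out rather than wave into the slack term: the claim has coefficient exactly $1$ on $\sqrt{\opt}\,\|\w^t-\w^*\|_2$, so no multiplicative constant in front of $\opt$ survives rescaling of $\eps$. You therefore need the Chebyshev deviation of $\frac1m\sum_i(y\ith-y\sith)^2$ from $\opt$ to be at most roughly $2\sqrt{\opt\,\eps}+\eps$ (so that its square root is at most $\sqrt{\opt}+\sqrt{\eps}$), and the excess of $\frac1m\sum_i((\w^t-\w^*)\cdot\x\ith)^2$ over $\|\w^t-\w^*\|_2^2$ to contribute only terms absorbable into $(\opt+\eps)/b$. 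Both do follow under the stated sample size via fourth-moment bounds (truncation of $y$ plus sub-exponential tails of $\D_\x$), in the same style of calibration and final $\eps$-rescaling the paper itself uses, so this is a matter of explicit accounting rather than a gap in the approach.
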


    The proof of \Cref{claim:sharp-inter-1} is based on the following argument: first, standard concentration arguments ensure that  $\htumt$ and $\htustrmt$ are close to their population counterparts, $u^t$ and $u^{*t}$, in $L_2^2$ distance (see \Cref{app:uniform-convergence}). Therefore, applying Chebyshev's inequality, we are able to swap the sample-optimal activations in \eqref{eq:sharp-inter-1} by their population-optimal counterparts with high probability and focus on bounding 
    \begin{equation*}
         \frac{1}{m}\sum_{i=1}^m (u^t(\w^t\cdot\x\ith) - u^{*t}(\w^t\cdot\x\ith))(\w^t - \w^*)\cdot\x\ith \;.
    \end{equation*}
To bound this quantity, we leverage the result from \Cref{main:lem:upper-bound-u_t-u_t^*}, namely that $\Ex[(u^t(\w^t\cdot\x) - u^{*t}(\w^t\cdot\x))^2]\leq \opt$.

    The second claim leverages the misalignment lemma (\Cref{main:lem:lower-bound-(f(wx)-u(w*x))^2}) and shows that, up to small errors, $Q_2$ is a constant multiple of $\|(\w^*)^{\perp_{\w^t}}\|_2^2$.

    \begin{restatable}{claim}{claimsharptwo}\label{claim:sharp-inter-2}
    Let $S^* = \{(\x\ith,y\sith)\}_{i=1}^m$ be a sample set such that $\x\ith$'s are i.i.d.\ samples from $\D_\x$ and $y\sith = u^*(\w^*\cdot\x\ith)$ for each $i$. Let $m$ be the value specified in the statement of \Cref{main:thm:sharpness}.
    Then, given a parameter $\w^t\in\B(W)$, with probability at least $1 - \delta$,
\begin{equation*}
            Q_2 = \frac{1}{m} \sum_{i=1}^m (\htustrmt(\w^t\cdot\x\ith) - y\sith)(\w^t - \w^*)\cdot\x\ith \geq \frac{Ca^2LR^4}{b}\|(\w^*)^{\perp_{\w^t}}\|_2^2 - \sqrt{\eps}\|\w^t - \w^*\|_2 - \eps/b\; ,
        \end{equation*}
    where $C$ is an absolute constant.
    \end{restatable}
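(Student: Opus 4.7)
My plan is to reduce $Q_2$ to a population-level quantity using uniform convergence (where the $\eps$-dependent error terms in the claim originate) and then bound that population quantity using the misalignment lemma (\Cref{main:lem:lower-bound-(f(wx)-u(w*x))^2}). Let $u^{*t}$ be the population best-fit activation defined by \eqref{def:ut*} for the idealized labels $y^* = u^*(\w^*\cdot\x)$. Standard uniform-convergence arguments for $L_2^2$ regression over the one-dimensional monotone $b$-Lipschitz class $\U$ (projected along $\w^t$), combined with the sample size hypothesized in the claim, ensure that $\htustrmt$ is close to $u^{*t}$ in $L_2^2$ distance; Cauchy-Schwarz and the subexponential concentration $(1/m)\sum_i((\w^t-\w^*)\cdot\x\ith)^2\lesssim\|\w^t-\w^*\|_2^2+\eps$ then yield $|Q_2 - Q_2^{\mathrm{pop}}| \lesssim \sqrt{\eps}\|\w^t-\w^*\|_2 + \eps/b$ with probability at least $1-\delta$, where $Q_2^{\mathrm{pop}} := \Ex[(u^{*t}(\w^t\cdot\x)-u^*(\w^*\cdot\x))(\w^t-\w^*)\cdot\x]$.

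For the population bound, I split
\begin{equation*}
Q_2^{\mathrm{pop}} = \Ex[(u^{*t}(\w^t\cdot\x)-u^{*t}(\w^*\cdot\x))(\w^t-\w^*)\cdot\x] + \Ex[(u^{*t}(\w^*\cdot\x)-u^*(\w^*\cdot\x))(\w^t-\w^*)\cdot\x].
\end{equation*}
Since $u^{*t}\in\U$ is monotone and $b$-Lipschitz, $(u^{*t}(s)-u^{*t}(t))(s-t)\geq \frac{1}{b}(u^{*t}(s)-u^{*t}(t))^2$ pointwise, so the first expectation is at least $\frac{1}{b}\Ex[(u^{*t}(\w^t\cdot\x)-u^{*t}(\w^*\cdot\x))^2]$. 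Applying \Cref{main:lem:lower-bound-(f(wx)-u(w*x))^2} with $f=u^{*t}$ and the lemma's ``$u^*$'' also instantiated by $u^{*t}\in\U$ lower-bounds this by $\frac{Ca^2LR^4}{b}\|(\w^*)^{\perp_{\w^t}}\|_2^2$, the desired main term.

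The main obstacle is the residual cross term $\Ex[(u^{*t}(\w^*\cdot\x)-u^*(\w^*\cdot\x))(\w^t-\w^*)\cdot\x]$: since $u^{*t}$ is fit along $\w^t$ rather than $\w^*$, a naive Cauchy-Schwarz combined with the Lipschitz triangle bound $\sqrt{\Ex[(u^{*t}(\w^*\cdot\x)-u^*(\w^*\cdot\x))^2]}\leq 2b\|\w^t-\w^*\|_2$ only yields an $O(b\|\w^t-\w^*\|_2^2)$ slack, which is too coarse for the claim. To overcome this, I would exploit the first-order optimality of $u^{*t}$ as the $L^2$-projection of $u^*(\w^*\cdot\x)$ onto $\{u(\w^t\cdot\x):u\in\U\}$, which yields the variational inequality $\Ex[(u^{*t}(\w^t\cdot\x)-u^*(\w^*\cdot\x))(v(\w^t\cdot\x)-u^{*t}(\w^t\cdot\x))]\geq 0$ for all $v\in\U$. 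Instantiating this cleverly (e.g., with $v=u^*$ or with suitable piecewise-linear test perturbations) should sharpen the control on the cross term and absorb the $b\|\w^t-\w^*\|_2^2$ contribution into the concentration-type errors. Making this rigorous while tracking the uniform-convergence errors along the way is the delicate step I expect to require most of the technical work.
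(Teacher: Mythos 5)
There is a genuine gap, and it sits exactly where you flag it: the cross term $\Ex[(u^{*t}(\w^*\cdot\x)-u^*(\w^*\cdot\x))(\w^t-\w^*)\cdot\x]$ cannot be ``absorbed into concentration-type errors,'' because it is not small --- it must \emph{cancel} against the first term of your split. Take $\D_\x=\mathcal{N}(\vec 0,\vec I)$, $u^*(z)=az$, and $\w^t=\w^*/2$ with $2a\leq b$. Then $u^{*t}(z)=2az$ (it fits $y^*$ exactly along $\w^t$), so $Q_2^{\mathrm{pop}}=0$ and $\|(\w^*)^{\perp_{\w^t}}\|_2=0$; yet your first term equals $2a\|\w^t-\w^*\|_2^2=\tfrac{a}{2}\|\w^*\|_2^2$ while the cross term equals $-\tfrac{a}{2}\|\w^*\|_2^2$. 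Hence the cross term is macroscopically negative, of order $a\|\w^t-\w^*\|_2^2$, vastly exceeding the budget $\sqrt{\eps}\|\w^t-\w^*\|_2+\eps/b$ permitted by the claim, and no bound that treats it separately from the first term can succeed: the decomposition discards an essential cancellation. Moreover, the rescue you propose --- the variational inequality $\Ex[(u^{*t}(\w^t\cdot\x)-u^*(\w^*\cdot\x))(v(\w^t\cdot\x)-u^{*t}(\w^t\cdot\x))]\geq 0$ for $v\in\U$ --- only constrains $u^{*t}$ \emph{composed with} $\w^t\cdot\x$; it provides no handle on $u^{*t}(\w^*\cdot\x)$, which is the composition appearing in the cross term. (The rest of your outline is sound: reducing $Q_2$ to $Q_2^{\mathrm{pop}}$ with the stated sample size is standard, and applying \Cref{main:lem:lower-bound-(f(wx)-u(w*x))^2} with both roles played by $u^{*t}\in\U$ is legitimate; the failure is confined to the cross term, but it is fatal.)

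The paper's proof avoids ever evaluating $u^{*t}$ (or $\htustrmt$) along $\w^*$. It splits the \emph{linear} factor rather than the activation difference: $(\w^t-\w^*)\cdot\x\ith=\bigl(\w^t\cdot\x\ith-\hat{f}(\htustrmt(\w^t\cdot\x\ith))\bigr)+\bigl(\hat{f}(\htustrmt(\w^t\cdot\x\ith))-\w^*\cdot\x\ith\bigr)$, where $\hat{f}$ is an explicitly constructed ``empirical inverse'' of $u^*$. The first piece contributes nonnegatively by a KKT analysis of the empirical isotonic fit (\Cref{main:lem:hty*-y*.z-f(hty*)>=0}), and the second piece is pointwise at least $\tfrac{1}{b}(\htustrmt(\w^t\cdot\x\ith)-y\sith)^2$, giving $Q_2\geq \tfrac{1}{bm}\sum_i(\htustrmt(\w^t\cdot\x\ith)-y\sith)^2$; concentration then passes to $\Ex[(\htustrmt(\w^t\cdot\x)-u^*(\w^*\cdot\x))^2]$ and \Cref{main:lem:lower-bound-(f(wx)-u(w*x))^2} is applied with $f=\htustrmt$ against the true target. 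If you wish to salvage your route, you would have to keep the first term un-relaxed and exploit its cancellation with the cross term, which effectively forces an argument of the paper's type rather than a separate bound on each piece.
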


    The proof of \Cref{claim:sharp-inter-2} is rather technical. We first define an `empirical inverse' of the activation $u^*$, and denote it by $\hat{f}$. 
    Note that $u^*(z)\in\U$ is not necessarily strictly increasing when $z\leq 0$, therefore $(u^*)^{-1}$ is not defined everywhere on $\R$, and the introduction of this `empirical inverse' function $\hat{f}$ is needed. 
    Then, adding and subtracting $\hat{f}(\htustrmt(\w^t\cdot\x\ith))$ in the $\w^t\cdot\x\ith - \w^*\cdot\x\ith$ term, we get
    \begin{align*}
        &\quad \frac{1}{m} \sum_{i=1}^m (\htustrmt(\w^t\cdot\x\ith) - u^*(\w^*\cdot\x\ith))(\w^t - \w^*)\cdot\x\ith\nonumber\\
        & = \frac{1}{m} \sum_{i=1}^m (\htustrmt(\w^t\cdot\x\ith) - u^*(\w^*\cdot\x\ith))(\w^t\cdot\x\ith - \hat{f}(\htustrmt(\w^t\cdot\x\ith))) \nonumber\\
        & \quad + \frac{1}{m} \sum_{i=1}^m (\htustrmt(\w^t\cdot\x\ith) - u^*(\w^*\cdot\x\ith))(\hat{f}(\htustrmt(\w^t\cdot\x\ith)) - \w^*\cdot\x\ith) \;.
    \end{align*}
    Analyzing the KKT conditions of the optimization problem \eqref{def:htustrmt}, we argue that the first term in the equation above is always positive. 
    Then, we argue that our definition of the empirical inverse $\hat{f}$ ensures that the second term can be bounded below by $\frac{1}{bm}\sum_{i=1}^m (\htustrmt(\w^t\cdot\x\ith) - u^*(\w^*\cdot\x\ith))^2$. Using standard concentration arguments, the quantity above concentrates around its expectation $\Ex[(\htustrmt(\w^t\cdot\x) - u^*(\w^*\cdot\x))^2]$, 
    hence we complete the proof applying \Cref{main:lem:lower-bound-(f(wx)-u(w*x))^2}.
    
    Similar to \Cref{claim:sharp-inter-1}, the last claim shows that $Q_3$ is of the order $\sqrt{\opt}\|\w^* - \w^t\|_2$, which is  small compared to the positive term in \Cref{claim:sharp-inter-2} outside the set of $O(\opt) + \eps$ {error} solutions.
    
    \begin{restatable}{claim}{claimsharpthree}\label{claim:sharp-inter-3}
        Let $S  = \{(\x\ith,y\ith)\}_{i=1}^m$ be i.i.d.\ samples from $\D$, and
denote by $S^* = \{(\x\ith,y\sith)\}_{i=1}^m$ the idealized version of $S$, where $y\sith = u^*(\w^*\cdot\x\ith)$. Under the condition of \Cref{main:thm:sharpness}, given a parameter $\w^t\in\B(W)$, with probability at least $1 - \delta$, 
        \begin{equation*}
            Q_3 = \frac{1}{m}\sum_{i=1}^m (y\sith- y\ith)(\w^t\cdot\x\ith - \w^*\cdot\x\ith)\geq - \sqrt{\opt}\|\w^* - \w^t\|_2 - (\opt + \eps)/b \; .
        \end{equation*}
    \end{restatable}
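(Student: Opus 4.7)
Setting $\xi^{(i)} := y^{(i)} - y^{*(i)} = y^{(i)} - u^*(\w^*\cdot\x^{(i)})$, the samples $\xi^{(i)}$ are i.i.d.\ with $\E[\xi^2] = \opt$ by definition, and $Q_3$ rewrites as $-\frac{1}{m}\sum_{i=1}^m \xi^{(i)}\,((\w^t - \w^*)\cdot\x^{(i)})$. The plan is to bound $\E[Q_3]$ from below using Cauchy--Schwarz at the population level, and then control the deviation $|Q_3 - \E[Q_3]|$ by Chebyshev's inequality; combining the two delivers the claimed inequality.

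For the expectation, applying Cauchy--Schwarz in $L^2(\D)$ together with the standard normalization $\E[\x\x^\top]\preccurlyeq \mathbf{I}$ yields
\[
|\E[Q_3]| \;=\; \bigl|\E[\xi\,((\w^t-\w^*)\cdot\x)]\bigr| \;\leq\; \sqrt{\E[\xi^2]}\,\sqrt{\E[((\w^t-\w^*)\cdot\x)^2]} \;\leq\; \sqrt{\opt}\,\|\w^t - \w^*\|_2,
\]
so $\E[Q_3] \geq -\sqrt{\opt}\,\|\w^t - \w^*\|_2$, which already matches the first term on the right-hand side of the claim. It therefore suffices to show that $Q_3 \geq \E[Q_3] - (\opt+\eps)/b$ with probability at least $1-\delta$.

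For this concentration step, I would apply Chebyshev's inequality to the i.i.d.\ sum $\frac{1}{m}\sum_i Z^{(i)}$ with $Z^{(i)} := -\xi^{(i)}((\w^t-\w^*)\cdot\x^{(i)})$. A Cauchy--Schwarz split of the second moment gives $\E[(Z^{(i)})^2] \leq \sqrt{\E[\xi^4]}\,\sqrt{\E[((\w^t-\w^*)\cdot\x)^4]}$, where the second factor is $O(\|\w^t-\w^*\|_2^4/L^4)=O(W^4/L^4)$ by the sub-exponential tails implied by $(L,R)$-well-behavedness. Chebyshev then yields $|Q_3 - \E[Q_3]| \lesssim \sqrt{\E[(Z^{(1)})^2]/(m\delta)}$ with probability at least $1 - \delta$, which can be driven below $(\opt+\eps)/b$ by the sample-complexity assumption of \Cref{main:thm:sharpness}; in particular, requiring the deviation to be at most $\eps/b$ in the worst case $\opt\lesssim\eps$ is what contributes the $1/(\eps\delta)$ factor appearing in $m$.

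The main obstacle is controlling $\E[\xi^4]$, since in the agnostic setting the labels $y$ are a priori arbitrary and only $\E[\xi^2]=\opt$ is given. I would handle this via a standard truncation argument: replace each $\xi^{(i)}$ by its truncation at a polynomially large threshold $\tau$, apply Chebyshev to the truncated sum, and bound the tail contribution from $\{|\xi|>\tau\}$ via Markov's inequality using $\E[\xi^2]=\opt$. Combining the population-level Cauchy--Schwarz bound on $\E[Q_3]$ with this truncated Chebyshev bound then yields the stated inequality.
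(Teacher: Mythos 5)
Your skeleton is the same as the paper's: lower bound $\E[Q_3]$ by $-\sqrt{\opt}\,\|\w^t-\w^*\|_2$ via Cauchy--Schwarz and $\E[\x\x^\top]\preccurlyeq\mathbf{I}$, then control the deviation by Chebyshev with target $s=(\opt+\eps)/b$. The gap is in your treatment of the uncontrolled fourth moment of $\xi = y - y^*$. Truncating $\xi$ at a level $\tau$ and invoking Markov with only $\E[\xi^2]=\opt$ in hand does not close the argument, because the truncation changes the mean: writing $Z=-\xi\,((\w^t-\w^*)\cdot\x)$ and $\tilde Z = Z\1\{|\xi|\le\tau\}$, you must eventually compare $\E[\tilde Z]$ back to $\E[Z]$, and the bias is $|\E[Z\1\{|\xi|>\tau\}]| \le \sqrt{\E[\xi^2\1\{|\xi|>\tau\}]}\,\|\w^t-\w^*\|_2$. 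With only a second-moment bound on $\xi$, the factor $\E[\xi^2\1\{|\xi|>\tau\}]$ cannot be driven below $\opt$ no matter how large $\tau$ is (the adversary may place all of the label noise beyond any fixed threshold), so the best you can certify is $Q_3\ge -2\sqrt{\opt}\,\|\w^t-\w^*\|_2-\cdots$, not the claimed constant. Moreover, even setting the bias aside, the union bound forcing no sample to exceed $\tau$ requires $\tau\gtrsim\sqrt{m\opt/\delta}$, the truncated variance then scales like $\tau\sqrt{\opt}\,W^2/L^2$, and the sample size needed to push the Chebyshev deviation below $(\opt+\eps)/b$ comes out of order $1/(\delta^3\eps^2)$, exceeding the $m\gtrsim 1/\eps^{3/2}+1/(\eps\delta)$ assumed in \Cref{main:thm:sharpness}.

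The paper removes both obstacles at the source. By \Cref{main:lem:y-bounded-by-M} one may assume $|y|\le M$ with $M=O((bW/L)\log(bW/\eps))$ once and for all --- the truncation cost is absorbed into $\opt+\eps$ before any of these claims --- so that $|\xi|\le b|\w^*\cdot\x|+M$ pointwise and all higher moments of $Z$ are controlled by the sub-exponential tails of $\D_\x$. The variance is then split on the event $\{|(\w^t-\w^*)\cdot\x|\le r\}$ with $r=O((W/L)\log(1/\eps))$, not on $\{|\xi|\le\tau\}$, so that the dominant contribution is $r^2\E[\xi^2]=r^2\opt$ and the complementary event contributes only $\poly(bW/L)\sqrt{\eps'}$ via Cauchy--Schwarz. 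If you truncate the labels themselves (justified by that fact) rather than $\xi$ post hoc, your argument goes through essentially as written.
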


    The proof of \Cref{claim:sharp-inter-3} follows via similar arguments as the proof of \Cref{claim:sharp-inter-1}.

    Plugging the bounds from \Cref{claim:sharp-inter-1}, \Cref{claim:sharp-inter-2}, and \Cref{claim:sharp-inter-3} back into \eqref{eq:sharp-inter-1} and using a union bound, we get that with probability at least $1 - 3\delta$,
    \begin{equation*}
        \nabla\htLsur(\w^t;\htumt)\cdot(\w^t - \w^*)\geq \frac{Ca^2LR^4}{b}\|(\w^*)^{\perp_{\w^t}}\|_2^2 - 2(\sqrt{\opt} + \sqrt{\eps})\|\w^t - \w^*\|_2 - 2(\opt + \eps)/b,
    \end{equation*}    
    for some absolute constant $C$, completing the proof.
\end{proof}
 
\section{Robust SIM Learning via Alignment Sharpness}\label{sec:optimization}

As discussed in \Cref{ssec:technical}, our algorithm can be viewed 
as employing an alternating procedure: 
taking a Riemannian gradient descent step on a sphere with respect 
to the empirical surrogate, given an estimate of the activation, 
and optimizing the activation function on the sample set 
for a given parameter weight vector. This procedure is performed 
using a fine grid of guesses of the scale of $\|\wstar\|_2.$ 
For this process to converge with the desired linear rate (even for a known value of $\|\wstar\|_2$), the algorithm needs to be properly initialized 
to ensure that the initial weight vector has a nontrivial alignment 
with the optimal vector $\wstar$. The initialization process is handled
in the following subsection.

\subsection{Initialization}

We begin by showing that the Initialization subroutine stated in \Cref{alg:initialization} 
returns a point $\barw^0$ that has a sufficient alignment with $\wstar.$ As will become apparent later in the proof of \Cref{thm:fast-converge-main}, this property of the initial point is critical for \Cref{main:alg:optimization} to converge at a linear rate. 

\begin{algorithm}[ht]
   \caption{Initialization}
   \label{alg:initialization}
\begin{algorithmic}[1]
   \STATE {\bfseries Input:} $\w^0 = 0$; $\eps,\delta>0$; positive parameters $a, b, L, R, W$; $\mu \lesssim a^2LR^4/b$, step size $\eta = \mu^3/(2^7b^4)$, number of iterations $t_0 \lesssim (b/\mu)^6\log(b/\mu)$;\FOR{$t=0$ {\bfseries to} $t_0$}
\STATE Draw $m_0 \gtrsim W^{9/2}b^{10}d\log^4(d/(\eps\delta))/(L^4\mu^6\delta\eps^{3/2})$ i.i.d.\ samples from $\D$
\STATE $\hat{u}^t = \argmin\limits_{u\in\U}\frac{1}{m_0}\sum\limits_{i=1}^{m_0} (u(\w^t\cdot\x\ith) - y\ith)^2$.
\STATE $\nabla\htLsur(\w^t;\hat{u}^t) = \frac{1}{m_0}\sum\limits_{i=1}^{m_0} (\hat{u}^t(\w^t\cdot\x\ith) - y\ith)\x\ith$.
\STATE $\w^{t+1} = \w^t - \eta\nabla\htLsur(\w^t;\hat{u}^t)$.
\ENDFOR
\STATE {\bfseries Return:} $\{\w^0,\dots,\w^{t_0}\}$\end{algorithmic}
\end{algorithm}

\begin{lemma}[Initialization]\label{main:lem:initialization}
    Let $\mu = Ca^2LR^4/b$ for an absolute constant $C > 0$ and let $\eps,\delta>0$. 
Choose the step size $\eta = \mu^3/(2^7b^4)$ in \Cref{alg:initialization}. Then, drawing $m_0$ i.i.d.\ samples from $\D$ at each iteration such that 
    \begin{equation*}
        m_0 \gtrsim \frac{W^{9/2}b^{10}d\log^4(d/(\eps\delta))}{L^4\mu^6\delta\eps^{3/2}},
    \end{equation*}
ensures that within $t_0 \lesssim b^{6}\log(b/\mu)/\mu^6$ iterations, the initialization subroutine \Cref{alg:initialization} generates a list of size $t_0$ that contains a point $\barw^0$ such that $\|(\w^*)^{\perp_{\barw^0}}\|_2\leq \max\{\mu\|\w^*\|_2/(4b), {64b^2}{/\mu^3}(\sqrt{\opt} + \sqrt{\eps})\}$, with probability at least $1 - \delta$. The total number of samples required for \Cref{alg:initialization} is $N_0 = t_0 m_0$.
\end{lemma}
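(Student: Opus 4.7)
The plan is to argue by contradiction via a potential-function analysis on $\Phi_t := \|\w^t - \w^*\|_2^2$. Concretely, I will assume that every iterate among $\w^0, \w^1, \ldots, \w^{t_0}$ violates the target alignment condition
$\|(\w^*)^{\perp_{\w^t}}\|_2 \leq \max\{\mu\|\w^*\|_2/(4b),\; 64b^2/\mu^3(\sqrt{\opt}+\sqrt{\eps})\}$,
and show that $\Phi_t$ shrinks by a definite multiplicative factor at each step, so that after $t_0 \asymp (b/\mu)^6 \log(b/\mu)$ steps it falls below the floor forced by the violation — yielding the contradiction.

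The first step is the standard gradient descent expansion
\[
\Phi_{t+1} = \Phi_t - 2\eta\, \nabla\htLsur(\w^t;\hat{u}^t)\cdot(\w^t - \w^*) + \eta^2\, \|\nabla\htLsur(\w^t;\hat{u}^t)\|_2^2,
\]
followed by invoking \Cref{main:thm:sharpness} at each $\w^t$. I will pick $m_0$ so that a union bound over all $t_0$ iterations makes the sharpness event hold everywhere with overall failure probability at most $\delta$; the union-bound inflation of the confidence parameter by a factor of $t_0 \asymp (b/\mu)^6$ exactly accounts for the extra $b^6/\mu^6$ in the prescribed $m_0$ relative to the sample size in \Cref{main:thm:sharpness}. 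Writing $A_t := \|(\w^*)^{\perp_{\w^t}}\|_2$, sharpness gives
\[
\nabla\htLsur(\w^t;\hat{u}^t)\cdot(\w^t-\w^*) \geq \mu A_t^2 - 2(\opt+\eps)/b - 2(\sqrt{\opt}+\sqrt{\eps})\|\w^t-\w^*\|_2.
\]
Under the violation hypothesis $A_t > 64b^2(\sqrt{\opt}+\sqrt{\eps})/\mu^3$, both additive error terms can be absorbed into $(\mu/4)A_t^2$ once I maintain the inductive bound $\|\w^t-\w^*\|_2 = O(W)$ (which holds trivially since $\eta$ is minuscule and the iterates barely move from the ball of radius $W$).

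Next, I bound the gradient norm: because $\hat{u}^t$ is $b$-Lipschitz with $\hat{u}^t(0)=0$, $\|\w^t\|_2 = O(W)$, and $\E_{\bx\sim\D_\bx}[\bx\bx^\top] \preccurlyeq \mathbf{I}$ together with the sub-exponential concentration from \Cref{def:bounds}, standard concentration arguments yield $\|\nabla\htLsur(\w^t;\hat{u}^t)\|_2 = O(bW)$ with high probability. With $\eta = \mu^3/(2^7 b^4)$ and using $A_t \geq \mu\|\w^*\|_2/(4b)$, the curvature term $\eta^2\|\nabla\|^2$ is absorbed into $(\eta\mu/4)A_t^2$, yielding $\Phi_{t+1} \leq \Phi_t - (\eta\mu/2) A_t^2$. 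Substituting $A_t^2 \geq \mu^2\|\w^*\|_2^2/(16b^2)$ and $\Phi_t = O(W^2)$ produces the geometric contraction $\Phi_{t+1} \leq (1 - c\,\mu^6/b^6)\,\Phi_t$ for an absolute constant $c > 0$. Hence $t_0 = O((b/\mu)^6\log(b/\mu))$ iterations suffice to shrink $\Phi_{t_0}$ below the floor required by the lower bound on $A_{t_0}$, producing the contradiction.

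The main obstacle is that the per-step contraction rate genuinely depends on $\|\w^*\|_2/W$: if $\|\w^*\|_2$ is much smaller than $W$, the first branch $\mu\|\w^*\|_2/(4b)$ of the alignment threshold becomes very tight while simultaneously the progress per step weakens. The two-branch form of the guarantee, with the noise floor $64b^2/\mu^3(\sqrt{\opt}+\sqrt{\eps})$, is exactly what lets the argument close: once $A_t$ drops below this floor, the sharpness noise terms can no longer be absorbed, so any ``failure'' of the contradiction argument at a given step is indistinguishable from the alignment target already being attained. Beyond this, the remaining technical delicacy is the simultaneous bookkeeping of three error sources — the population noise $\opt$, the precision $\eps$, and the discretization term $\eta^2\|\nabla\|^2$ — so that each is absorbed into the progress term $\eta\mu A_t^2$ without creating a circular dependency with the inductive bound $\|\w^t-\w^*\|_2 = O(W)$.
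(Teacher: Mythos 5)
Your skeleton (gradient-descent expansion, invoking \Cref{main:thm:sharpness} at each iterate, union bound over $t_0$ iterations to set $m_0$, and stopping once the alignment threshold is met) matches the paper, but there is a genuine gap in how you control the curvature and error terms, and it is exactly the issue you flag at the end without actually resolving. You bound the empirical gradient by $O(bW)$ and maintain only $\|\w^t-\w^*\|_2 = O(W)$. With the fixed step size $\eta=\mu^3/(2^7b^4)$, absorbing $\eta^2\|\nabla\htLsur\|_2^2 = O(\eta^2 b^2W^2)$ into $(\eta\mu/4)A_t^2$ requires $\eta \lesssim \mu A_t^2/(b^2W^2)$, and with $A_t \geq \mu\|\w^*\|_2/(4b)$ this forces $\|\w^*\|_2 = \Omega(W)$; similarly, absorbing $2(\sqrt{\opt}+\sqrt{\eps})\|\w^t-\w^*\|_2$ with only $\|\w^t-\w^*\|_2=O(W)$ requires $\|\w^*\|_2 \gtrsim \mu W/b$. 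So in the regime $(b^2/\mu^3)(\sqrt{\opt}+\sqrt{\eps}) \ll \|\w^*\|_2 \ll \mu W/b$ your per-step progress $\eta\mu A_t^2 \asymp \mu^6\|\w^*\|_2^2/b^6$ is swamped by terms scaling with $W^2$ (or $W(\sqrt{\opt}+\sqrt{\eps})$), and the claimed contraction $\Phi_{t+1}\leq(1-c\mu^6/b^6)\Phi_t$ does not follow; the two-branch noise floor only rescues the case $\|\w^*\|_2 \lesssim (b^2/\mu^3)(\sqrt{\opt}+\sqrt{\eps})$ (where $\w^0=\vec 0$ already satisfies the guarantee), not this intermediate regime, and the iteration bound $t_0\lesssim (b/\mu)^6\log(b/\mu)$ — which must be independent of $W,\eps,\opt$ — would otherwise inflate by $W^2/\|\w^*\|_2^2$.

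The paper closes this with two ingredients you are missing. First, it does not use a crude $O(bW)$ gradient bound but the distance-dependent bound of \Cref{main:cor:bound-norm-empirical-grad}, $\|\nabla\htLsur(\w^t;\htumt)\|_2^2 \leq 4b^2\|\w^t-\w^*\|_2^2 + 10(\opt+\eps)$, so the curvature term becomes $4b^2\eta^2\|\w^t-\w^*\|_2^2$, directly comparable to $\Phi_t$ and dominated by the progress term once $\eta=\mu^3/(2^7b^4)$. Second, it maintains the stronger induction $\|\w^t-\w^*\|_2\leq\|\w^*\|_2$ (true at $t=0$ because $\w^0=\vec 0$, and preserved by the contraction itself), which, combined with $\sqrt{\opt}+\sqrt{\eps}\leq \mu^3\|\w^*\|_2/(64b^2)$ in the nontrivial regime and $\|\bv^t\|_2\geq\mu\|\w^*\|_2/(4b)$, lets the cross term $2(\sqrt{\opt}+\sqrt{\eps})\|\w^t-\w^*\|_2$ and the additive $(\opt+\eps)$ term be absorbed with everything proportional to $\|\w^t-\w^*\|_2^2$, yielding the rate $1-\mu^6/(2^{12}b^6)$ uniformly in $W$ and $\|\w^*\|_2$. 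To repair your argument you would need to replace your gradient bound with the one from \Cref{main:cor:bound-norm-empirical-grad} and strengthen your inductive hypothesis from $O(W)$ to $\|\w^t-\w^*\|_2\leq\|\w^*\|_2$; as written, the proof does not go through.
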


\begin{proof}
     Consider first the case that $\|\w^*\|_2\leq 64b^2/\mu^3(\sqrt{\opt} + \sqrt{\eps})$. Then, for the parameter vector $\w^0 = 0$, we have  
     $$
     \|(\w^*)^{\perp_{\w^0}}\|_2 = \|\w^*\|_2 \leq 64b^2/\mu^3(\sqrt{\opt} + \sqrt{\eps})
     $$ 
     and the claimed statement holds trivially.

     Thus, in the rest of the proof we assume $\|\w^*\|_2\geq 64b^2/\mu^3(\sqrt{\opt} + \sqrt{\eps})$. Let $\bv^t$ denote the component of $\w^*$ that is orthogonal to $\bw^t$; i.e., $\bv^t = \w^* - (\w^*\cdot\w^t)\w^t/\|\w^t\|_2^2 = (\w^*)^{\perp_{\w^t}}$, where $\w^t$ is defined in \Cref{alg:initialization}. 
     Our goal is to show that when $\|\bv^t\|_2\geq \mu\|\w^*\|_2/(4b)$ at iteration $t$, the distance between $\w^{t+1}$ and $\w^*$ contracts by a constant factor $1 - c$ for some $c<1$, i.e., $\|\w^{t+1} - \w^*\|_2\leq (1 - c)\|\w^t - \w^*\|_2$. This implies that when $\|\bv^t\|_2$ is greater than $\mu\|\w^*\|_2/(4b)$, $\|\w^{t+1} - \w^t\|_2$ contracts until $\|\bv^t\|_2\geq \mu\|\w^*\|_2/(4b)$ is violated at step $t_0$; this $\w^{t_0}$ is exactly the initial point we are seeking to initialize the optimization subroutine.  
     
    Applying \Cref{main:thm:sharpness}, we get that under our choice of batch size $m$, with probability at least $1 - \delta$, at each iteration it holds
    \begin{equation*}
        \nabla\htLsur(\w^t;\htumt)\cdot(\w^t - \w^*)\geq \frac{Ca^2LR^4}{b}\|(\w^*)^{\perp_{\w^t}}\|_2^2 - 2(\sqrt{\opt} + \sqrt{\eps})\|\w^t - \w^*\|_2 - 2(\opt + \eps)/b \;.
    \end{equation*}
    We now study the distance between $\w^{t+1}$ and $\w^*$, where $\w^{t+1}$ is updated from $\w^t$ according to \Cref{alg:initialization}. 
    \begin{align}\label{initial-eq:||w^t+1 - w^*||-bound-1}
        \|\w^{t+1} - \w^*\|_2^2 &= \|\w^t - \eta\nabla\htLsur(\w^t;\htumt) - \w^*\|_2^2 \nonumber\\
        &= \|\w^t - \w^*\|_2^2 + \eta^2\|\nabla\htLsur(\w^t;\htumt)\|_2^2 - 2\eta \nabla\htLsur(\w^t;\htumt)\cdot(\w^t - \w^*) \;.
    \end{align}
    Applying \Cref{main:cor:bound-norm-empirical-grad} to \eqref{initial-eq:||w^t+1 - w^*||-bound-1}, and plugging in \Cref{main:thm:sharpness}, we get that under our choice of batch size $m$ it holds that with probability at least $1 - \delta$,
    \begin{align}\label{initial-eq:||w^t+1 - w^*||-bound-2}
        \|\w^{t+1} - \w^*\|_2^2 &\leq \|\w^t - \w^*\|_2^2 + \eta^2(10(\opt + \eps) + 4b^2\|\w^t - \w^*\|_2^2)  \nonumber\\
        &\quad + 2\eta(2(\opt + \eps)/b + 2(\sqrt{\opt} + \sqrt{\eps})\|\w^t - \w^*\|_2 - \mu\|\bv^t\|_2^2)\nonumber\\
        &\leq (1 + 4b^2\eta^2)\|\w^t - \w^*\|_2^2 + 2\eta(2(\sqrt{\opt} + \sqrt{\eps})\|\w^t - \w^*\|_2 - \mu\|\bv^t\|_2^2) \nonumber\\
        &\quad + 5\eta(\opt + \eps) \;,
    \end{align}
    where $\mu = Ca^2LR^4/b$ and $C$ is an absolute constant. Note that in the last inequality we used that $\eta\leq 1/10$, hence $10\eta^2\leq \eta$, and that $b\geq 1$.

    When $t=0$, $\bv^0 = \w^*$, hence we  have $\|\bv^0\|_2\geq \mu\|\w^*\|_2/(4b)$. Suppose that at iteration $t$, $\|\bv^t\|_2\geq \mu\|\w^*\|_2/(4b)$ is still valid. Then, \eqref{initial-eq:||w^t+1 - w^*||-bound-2} is transformed to: 
\begin{align}\label{initial-eq:||w^t+1 - w^*||-bound-3}
        \|\w^{t+1} - \w^*\|_2^2 &\leq (1 + 4b^2\eta^2)\|\w^t - \w^*\|_2^2  + 5\eta(\opt + \eps)\nonumber\\
        &\quad + 2\eta((\mu^3/(32b^2))\|\w^t - \w^*\|_2\|\w^*\|_2 - (\mu^3/(16b^2))\|\w^*\|_2^2) \;.
    \end{align}
    We use an inductive argument to show that at iteration $t$, $\|\w^{t} - \w^*\|_2\leq \|\w^*\|_2$, which must eventually yield a contraction $\|\w^{t+1} - \w^*\|_2^2\leq (1 - c)\|\w^t - \w^*\|_2^2$ for some constant $c<1$. This condition $\|\w^{t} - \w^*\|_2\leq \|\w^*\|_2$ certainly holds for the base case $t=0$ 
as $\w^0 = 0$, hence $\|\w^0 - \w^*\|_2 = \|\w^*\|_2$.
Now, suppose $\|\w^t - \w^*\|_2\leq \|\w^*\|_2$ holds for all the iterations from $0$ to $t$. Then, plugging  $\eta = \mu^3/(2^7b^4)$ into \eqref{initial-eq:||w^t+1 - w^*||-bound-3}, we get:
    \begin{align*}
        \|\w^{t+1} - \w^*\|_2^2 &\leq (1 + 4b^2\eta^2)\|\w^t - \w^*\|_2^2  + 2\eta((\mu^3/(32b^2)) - (\mu^3/(16b^2)))\|\w^t - \w^*\|_2\|\w^*\|_2 + 5\eta(\opt + \eps)\\
&\leq (1 + 4\eta^2b^2 - 2\eta\mu^3/(32b^2))\|\w^t - \w^*\|_2^2 + 5\mu^3/(2^7b^4)(\opt + \eps)\\
        &\leq (1 - \mu^6/(2^{11}b^{6}))\|\w^t - \w^*\|_2^2 + 5\mu^3/(2^7b^4)(\opt + \eps)\;.
    \end{align*}
    Since we have assumed $\sqrt{\opt} + \sqrt{\eps}\leq \mu^3/(64b^2)\|\w^*\|_2$, it holds $\|\w^t - \w^*\|_2\geq \|\bv^t\|_2\geq \mu\|\w^*\|_2/(4b)\geq (16b/\mu^2)(\sqrt{\opt} + \sqrt{\eps})$, thus, we have (noting that $\mu\leq 1$):
    \begin{equation*}
        5\mu^3/(2^7b^4)(\opt + \eps)\leq 5\mu^3/(2^7b^4)(\sqrt{\opt} + \sqrt{\eps})^2\leq \mu^6/(2^{12}b^{6})\|\w^t - \w^*\|_2 \;.
    \end{equation*}
    Therefore, combining the results above, we get:
    \begin{equation*}
        \|\w^{t+1} - \w^*\|_2^2\leq (1 - \mu^6/(2^{12}b^{6}))\|\w^t - \w^*\|_2^2 \;,
    \end{equation*}
    for any iteration $t$ such that $\|\bv^t\|_2\geq \mu\|\w^*\|_2/(4b)$ holds. This validates the induction argument that $\|\w^t - \w^*\|_2\leq \|\w^*\|_2$ for every $t = 0,\dots,t_0$ and at the same time yields the desired contraction property of the sequence $\|\w^t - \w^*\|_2$, $t = 0,\dots, t_0$. Now, since $\|\w^0 -\w^*\|_2 = \|\w^*\|_2$ and $\|\w^t - \w^*\|_2\geq \|\bv^t\|_2$, we have
    \begin{equation*}
        \|\bv^{t+1}\|_2^2\leq (1 - \mu^6/(2^{12}b^{6}))^t\|\w^*\|_2^2\leq \exp(-t\mu^6/(2^{12}b^{6}))\|\w^*\|_2^2 \;.
    \end{equation*}
    Thus, after at most $t_0 = 2^{12}b^{6}\log(4b/\mu)/\mu^6$ iterations, it must hold that among all those vectors $\bv^1,\dots,\bv^{t_0}$, there exists a vector $\bv^{t^*_0}$ such that $\|\bv^{t_0^*}\|_2\leq \mu\|\w^*\|_2/(4b)$.
Since there are only a constant number of candidates, we can feed each one as the initialized input to the optimization subroutine \Cref{main:alg:optimization}. This will only result in a constant {factor} increase in the runtime and sample complexity.

    Finally, recall that we need to draw 
    \begin{equation*}
        m \gtrsim \frac{W^{9/2}b^4\log^4(d/(\eps\delta))}{L^4}\bigg(\frac{1}{\eps^{3/2}} + \frac{1}{\eps\delta}\bigg) 
    \end{equation*}
    new samples at each iteration for \eqref{initial-eq:||w^t+1 - w^*||-bound-2} to hold with probability $1 -\delta$, and the total number of iterations is $t_0$. Thus, applying a union bound, we know that the probability that \eqref{initial-eq:||w^t+1 - w^*||-bound-2} holds for all $t_0$ is $1 - t_0\delta$. Hence, choosing $\delta\gets \delta t_0$, and {noting} that $t_0 \approx b^6/\mu^6\log(b/\mu)$, it follows that setting the batch size to be
    \begin{equation*}
        m_0 = \Theta\bigg(\frac{W^{9/2}b^4\log^4(d/(\eps\delta))}{L^4}\bigg(\frac{1}{\eps^{3/2}} + \frac{b^6\log(b/\mu)}{\mu^6\eps\delta}\bigg)\bigg) = \Theta\bigg(\frac{W^{9/2}b^{10}d\log^4(d/(\eps\delta))}{L^4\mu^6\delta\eps^{3/2}}\bigg) \;,
    \end{equation*}
    suffices and the total number of samples required for the initialization process is $t_0 m_0$.
\end{proof}

\subsection{Optimization}

Our main optimization algorithm is summarized in \Cref{main:alg:optimization} ({see \Cref{alg:optimization} for a more detailed version}). We now provide intuition for how guessing the value of $\|\wstar\|_2$ is used in the convergence analysis. Let $\w^t = \|\w^*\|_2 \barw^t/\|\barw^t\|_2$  so that $\|\w^t\|_2 = \|\wstar\|_2$ 
and let  
$\bv^t := (\w^*)^{\perp_{\w^t}}$. 
Observe that $\|\bv^t\|_2 = \|\w^t - \w^*\|_2\cos(\theta(\w^t,\w^*)/2)$.
Applying \Cref{main:thm:sharpness}, it can be shown that  $\|\barw^{t+1} - \w^*\|_2^2\leq \|\w^t - \w^*\|_2^2 - C\|\bv^t\|_2^2$ for some constant $C$. 
Thus, as long as the angle between $\w^t$ and $\w^*$ is not too large (ensured by initialization), $\|\w^t - \w^*\|_2 \approx \|\bv^t\|_2$. Hence, we can argue that $\|\w^t - \w^*\|_2$ contracts in each iteration, by observing that $\|\w^t - \w^*\|_2^2 \approx \|\bv^{t+1}\|_2^2\leq\|\barw^{t+1} - \w^*\|_2^2$.

\begin{algorithm}[ht]
   \caption{Optimization}
   \label{main:alg:optimization}
\begin{algorithmic}[1]
   \STATE {\bfseries Input:} {$\w^{\mathrm{ini}} = \vec 0$; $\eps>0$; positive parameters: $a$, $b$, $L$, $R$, $W$, $\mu$; step size $\eta$} \STATE $\{\w^\mathrm{ini}_0,\dots,\w^\mathrm{ini}_{t_0}\} = \text{Initialization}[\w^{\mathrm{ini}}]$ (\Cref{alg:initialization})
\STATE $\mathcal{P} = \{(\w = 0; u(z) = 0)\}$ 
   \FOR{$k = 0$ {\bfseries to} $t_0 \lesssim (b/\mu)^6\log(b/\mu)$}
\FOR{$j = 1$ {\bfseries to} $J = W/(\eta\sqrt{\eps})$}
\STATE $\barw^{0}_{j,k} = \w^\mathrm{ini}_k$, $\frkwstr_j = j\eta\sqrt{\eps}$
\FOR{$t=0$ {\bfseries to} {$T = O((b/\mu)^2\log(1/\eps))$}}
\STATE $\htw^t_{j,k} = \frkwstr_{j}(\barw^t_{j,k}/\|\barw^t_{j,k}\|_2)$ \STATE Draw {$m  = \Tilde{\Theta}_{W,b,1/L,1/\mu} ({d}/{\eps^{3/2}})$} new samples
\STATE $\hat{u}^t_{j,k} = \argmin\limits_{u\in\U}\frac{1}{m}\sum\limits_{i=1}^m (u(\htw^t_{j,k}\cdot\x\ith) - y\ith)^2$
\STATE $\barw^{t+1}_{j,k} = \htw^t_{j,k} - \eta\nabla\htLsur(\htw^t_{j,k};\hat{u}^t_{j,k})$
\ENDFOR
\STATE $\mathcal{P}\gets \mathcal{P}\cup \{(\htw^{T}_{j,k}; \hat{u}^T_{j,k})\}$ \ENDFOR
\ENDFOR
\STATE $(\htw; \hat{u}) = \text{Test}[(\w;u)\in\mathcal{P}]$ (\Cref{alg:testing}) \label{main:line:testing} \STATE {\bfseries Return:} $(\htw; \hat{u})$ \end{algorithmic}
\end{algorithm}

Our main result is the following theorem {(see 
\Cref{app:thm:fast-converge-main} for a more detailed statement and proof in  \Cref{app:pf:thm:fast-converge-main})}: 
{
\begin{theorem}[Main Result]\label{thm:fast-converge-main}
    Let $\D$ be a distribution in $\R^d\times \R$ and suppose that $\D_\x$ is $(L,R)$-well-behaved. Let $\U$ be as in \Cref{def:well-behaved-unbounded-intro} and let $\eps> 0$. 
    Then, \Cref{main:alg:optimization} uses $N = \Tilde{O}_{W,b,1/L,1/\mu}(d/\eps^2)$ samples, it runs for $\Tilde{O}_{W,b,1/\mu}(1/\sqrt{\eps})$ iterations, and, with probability at least $2/3$,  returns a hypothesis $(\hat{u},\htw)$, where $\hat{u}\in\U$ and $\htw\in\B(W)$,
 such that
$  \Ltwo(\htw;\hat{u}) = O_{1/L,1/R,b/a}(\opt) + \eps\;.$
\end{theorem}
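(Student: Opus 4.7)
The plan is to analyze a single ``good'' pass of the doubly-nested optimization loop, namely the pass corresponding to (i) the initial point $\barw^0$ guaranteed by \Cref{main:lem:initialization}, which has $\|(\w^*)^{\perp_{\barw^0}}\|_2\leq \max\{\mu\|\w^*\|_2/(4b),\,64b^2(\sqrt\opt+\sqrt\eps)/\mu^3\}$, and (ii) the grid index $j^*$ such that $|\frkwstr_{j^*}-\|\w^*\|_2|\leq \eta\sqrt{\eps}$. Such a $j^*$ exists since the grid is $\eta\sqrt\eps$-fine on $[0,W]$, and $J=W/(\eta\sqrt\eps)$ bounds the outer loop. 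Because the testing procedure selects the best empirical hypothesis among all candidates in $\mathcal{P}$, it suffices to show that the pass indexed by $(k^*,j^*)$ returns $(\htw,\hat u)$ with $\Ltwo(\htw;\hat u)=O(\opt)+\eps$.

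For that pass, write $\w^t=\htw^t_{j^*,k^*}$, fix $\bv^t=(\w^*)^{\perp_{\w^t}}$, and note that by construction $\|\w^t\|_2=\frkwstr_{j^*}$. I will first show the inductive contraction
\[
\|\w^{t+1}-\w^*\|_2^2\;\leq\;(1-c\mu\eta)\,\|\w^t-\w^*\|_2^2\;+\;O(\eta(\opt+\eps)/b)\;+\;O(\eta\sqrt\eps\,W)
\]
for an absolute constant $c>0$, as long as the angle $\theta(\w^t,\w^*)$ is bounded away from $\pi$. The starting ingredient is the identity
\[
\|\barw^{t+1}-\w^*\|_2^2=\|\w^t-\w^*\|_2^2+\eta^2\|\nabla\htLsur(\w^t;\hat u^t)\|_2^2-2\eta\,\nabla\htLsur(\w^t;\hat u^t)\cdot(\w^t-\w^*),
\]
into which I plug \Cref{main:thm:sharpness} for the inner product and (the yet-to-be-invoked) \Cref{main:cor:bound-norm-empirical-grad} for $\|\nabla\htLsur\|_2^2$. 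The crucial geometric step is to convert the lower bound $\mu\|(\w^*)^{\perp_{\w^t}}\|_2^2$ produced by alignment sharpness into a bound in terms of $\|\w^t-\w^*\|_2^2$: since $\|\w^t\|_2=\|\w^*\|_2\pm O(\eta\sqrt\eps)$ after the renormalization $\w^{t+1}=\frkwstr_{j^*}\barw^{t+1}/\|\barw^{t+1}\|_2$, elementary trigonometry gives $\|\bv^t\|_2=\|\w^t-\w^*\|_2\cos(\theta/2)$, so whenever the angle is at most, say, $\pi/2$ (which initialization ensures at $t=0$ and which I will maintain inductively via the contraction itself), we get $\|\bv^t\|_2^2\geq \tfrac12\|\w^t-\w^*\|_2^2$. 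The projection to the sphere of radius $\frkwstr_{j^*}$ is a non-expansion toward $\w^*$ up to an additive $O(\eta\sqrt\eps W)$ error absorbed by the grid spacing.

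Iterating the contraction for $T=\tilde O((b/\mu)^2\log(1/\eps))$ rounds and absorbing the accumulated error terms into a geometric series, I obtain $\|\w^T-\w^*\|_2^2\;\lesssim\;(\opt+\eps)/(\mu b)+O(W\sqrt\eps/\mu)$. Choosing the step size $\eta$ as in \Cref{main:alg:optimization} so that the $\eta\sqrt\eps W$ leakage is $O(\eps)$, this yields $\|\w^T-\w^*\|_2^2=O((\opt+\eps)/b)$. Now I apply \Cref{main:cor:||htumt-u*(w*x)||<=||w-w*||} to translate this weight-space guarantee into a loss guarantee:
\[
\Ex[(\hat u^T(\w^T\cdot\x)-u^*(\w^*\cdot\x))^2]\;\leq\;3(\eps+\opt+b^2\|\w^T-\w^*\|_2^2)\;=\;O_{1/L,1/R,b/a}(\opt)+\eps,
\]
and then the standard inequality $\Ltwo(\w^T;\hat u^T)\leq 2\,\Ex[(\hat u^T(\w^T\cdot\x)-u^*(\w^*\cdot\x))^2]+2\,\opt$ finishes the loss bound. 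The testing procedure (\Cref{alg:testing}) compares empirical $L_2^2$ losses of all candidates in $\mathcal{P}$ and selects the minimizer; a Hoeffding-style uniform concentration over the polynomially many candidates ensures with probability $2/3$ that its selection has population loss within $O(\eps)$ of the best candidate, which in turn is $(\w^T,\hat u^T)$.

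The main obstacle, in my view, is the careful coupling of the four error sources in the contraction step: the adversarial-noise error $\sqrt\opt\,\|\w^t-\w^*\|_2$ from alignment sharpness, the discretization error $\eta\sqrt\eps\,W$ from the grid and the sphere projection, the finite-sample error inside each iteration (requiring fresh samples per step and a union bound over all $T\cdot J\cdot t_0$ iterations), and the fact that ``sharpness'' is stated in $\|(\w^*)^{\perp_{\w^t}}\|_2^2$ rather than $\|\w^t-\w^*\|_2^2$. The coupling must simultaneously maintain the inductive invariant $\|\w^t-\w^*\|_2\leq \|\w^*\|_2$ (so that the angle stays small) and produce a genuine constant-factor contraction independent of $d$ and $W$. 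The sample-complexity and iteration counts then follow mechanically: $T\cdot J\cdot t_0=\tilde O(1/\sqrt\eps)$ outer steps times $m=\tilde O_{W,b,1/L,1/\mu}(d/\eps^{3/2})$ samples per step gives $N=\tilde O_{W,b,1/L,1/\mu}(d/\eps^2)$, and a final union bound over all iterations with $\delta=\Theta(1)$ yields the claimed $2/3$ success probability.
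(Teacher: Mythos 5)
Your plan follows the paper's proof skeleton (initialization, the grid index $j^*$ with $|\frkwstr_{j^*}-\|\w^*\|_2|\le\eta\sqrt{\eps}$, a contraction driven by \Cref{main:thm:sharpness} and \Cref{main:cor:bound-norm-empirical-grad}, translation to loss, and testing), but three pieces of your error bookkeeping are genuinely off, and as written they would not deliver $O(\opt)+\eps$. First, the grid/renormalization error: you charge an additive $O(\eta\sqrt{\eps}\,W)$ per step and end with a residual $O(W\sqrt{\eps}/\mu)$ in $\|\w^T-\w^*\|_2^2$, claiming it can be made $O(\eps)$ "by choosing $\eta$". With the algorithm's fixed $\eta=\mu/(4b^2)$ this is impossible, since $W\sqrt{\eps}\gg\eps$ for small $\eps$. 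The correct accounting (the paper's) uses $\|\htw^t-\w^t\|_2\le\eta\sqrt{\eps}$ to bound the substitution cost by $\eta^2\eps+2\eta\sqrt{\eps}\,\|\w^t-\w^*\|_2$ — a cross term in $\|\w^t-\w^*\|_2$, not a $W$-scaled additive term — which is absorbed, together with the $(\sqrt{\opt}+\sqrt{\eps})\|\w^t-\w^*\|_2$ slack, by the $\mu\|\bv^t\|_2^2$ term as long as $\|\bv^t\|_2\gtrsim(\sqrt{\opt}+\sqrt{\eps})/\mu$. Relatedly, your claim that the renormalization onto the sphere is "a non-expansion toward $\w^*$ up to $O(\eta\sqrt{\eps}W)$" is false in general (the sphere is not convex; a point inside it at a large angle moves away under radial projection). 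The paper sidesteps this entirely by tracking the scale-invariant quantity $\|\bv^{t+1}\|_2=\|(\w^*)^{\perp_{\barw^{t+1}}}\|_2\le\|\barw^{t+1}-\w^*\|_2$ and converting back to $\|\w^{t+1}-\w^*\|_2$ via the identity for two vectors of equal norm, maintaining $\|\bv^t\|_2/\|\w^*\|_2\le\mu/(4b)$ inductively.

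Second, your geometric-series summation assumes the contraction holds for all $t\le T$, but it is only valid while $\|\bv^t\|_2\gtrsim(\sqrt{\opt}+\sqrt{\eps})/\mu$; if that threshold is first crossed at some $t^*<T$, the next iterate can move away from $\w^*$, and you must bound the overshoot. The paper does this explicitly, showing $\|\bv^{t^*+1}\|_2^2\le 12\|\bv^{t^*}\|_2^2+2(\opt+\eps)$, so $\|\bv^T\|_2\lesssim(\sqrt{\opt}+\sqrt{\eps})/\mu$ at termination regardless of when the threshold is hit. Third, \Cref{main:lem:initialization} only guarantees $\|(\w^*)^{\perp_{\barw^0}}\|_2\le\max\{\mu\|\w^*\|_2/(4b),\,64b^2(\sqrt{\opt}+\sqrt{\eps})/\mu^3\}$; when the second branch is active, the initial angle need not be at most $\pi/2$, so your inductive invariant never starts. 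The paper handles this regime separately: there $\|\w^*\|_2\lesssim(b^3/\mu^4)(\sqrt{\opt}+\sqrt{\eps})$, so the trivial hypothesis $(\w=0,\,u\equiv 0)$, deliberately included in $\mathcal{P}$, already has loss $O_{b/\mu}(\opt)+\eps$ by \Cref{main:lem:L2-error-upbd-||w - w^*||^2}, and the test selects a hypothesis at least as good. The remaining elements of your plan (fresh samples per step, union bound over the $t_0JT$ iterations with rescaled $\delta$, testing via \Cref{main:lem:testing}, and the final loss translation) do match the paper.
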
}

To prove \Cref{thm:fast-converge-main}, we make use of two technical results  
stated below. First, \Cref{main:cor:bound-norm-empirical-grad} provides an 
upper bound on the norm of the empirical gradient of the surrogate loss. The proof of the lemma relies on  concentration 
properties of  $(L,R)$-well behaved distributions $\D_\x$, and leverages the 
uniform convergence 
of the empirically-optimal activations $\htumt$. A more detailed statement (\Cref{app:cor:bound-norm-empirical-grad}) and the proof of \Cref{main:cor:bound-norm-empirical-grad} is deferred to \Cref{app:pf:main:cor:bound-norm-empirical-grad}.

\begin{lemma}[Bound on Empirical Gradient Norm]\label{main:cor:bound-norm-empirical-grad}
    Let $S$ be a set of i.i.d.\ samples {from $\D$} of size $m = \Tilde{\Theta}_{W,b,1/L}(d/\eps^{3/2} + d/(\eps\delta))$. 
Given any $\w^t\in\B(W)$, let $\htumt\in\U$ be the solution of optimization problem \eqref{def:htumt} with respect to $\w^t$ and sample set $S$. Then, with probability at least $1 - \delta$, $$        \|\nabla\htLsur(\w^t;\htumt)\|_2^2\leq 4b^2\|\w^t - \w^*\|_2^2 + 10 (\opt + \eps) \;.$$
\end{lemma}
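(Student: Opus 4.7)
The overall plan is to apply a Cauchy--Schwarz factorization in $\R^m$ that reduces $\|\nabla\htLsur(\w^t;\htumt)\|_2^2$ to the product of an ``empirical $L_2^2$ loss'' scalar and an ``empirical covariance spectral norm,'' then control each factor via the ERM optimality of $\htumt$ and standard concentration. Concretely, writing $\nabla\htLsur(\w^t;\htumt) = \frac{1}{m}\sum_{i=1}^m a_i \x\ith$ with $a_i = \htumt(\w^t\cdot\x\ith) - y\ith$, and using $\|v\|_2 = \sup_{\|\u\|_2 = 1} \u \cdot v$ together with Cauchy--Schwarz in $\R^m$, one obtains
\[
\|\nabla\htLsur(\w^t;\htumt)\|_2^2 \;\leq\; \Bigl(\tfrac{1}{m}\sum_{i=1}^m a_i^2\Bigr) \cdot \lambda_{\max}\!\Bigl(\tfrac{1}{m}\sum_{i=1}^m \x\ith (\x\ith)^\top\Bigr).
\]

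For the scalar factor, I would invoke the defining optimality of $\htumt$ in \eqref{def:htumt} and the fact that $u^* \in \U$ to upper bound $\frac{1}{m}\sum_i a_i^2 \leq \frac{1}{m}\sum_i (u^*(\w^t\cdot\x\ith) - y\ith)^2$. Splitting $u^*(\w^t\cdot\x\ith) - y\ith = [u^*(\w^t\cdot\x\ith) - u^*(\w^*\cdot\x\ith)] + [u^*(\w^*\cdot\x\ith) - y\ith]$, applying $(p+q)^2 \leq 2p^2 + 2q^2$, and using that $u^*$ is $b$-Lipschitz gives
\[
\tfrac{1}{m}\sum_{i=1}^m a_i^2 \;\leq\; 2b^2 \cdot \tfrac{1}{m}\sum_{i=1}^m ((\w^t - \w^*)\cdot\x\ith)^2 \;+\; 2 \cdot \tfrac{1}{m}\sum_{i=1}^m (u^*(\w^*\cdot\x\ith) - y\ith)^2.
\]
Sub-exponential Bernstein-type concentration for $(L,R)$-well-behaved data (and matrix Bernstein for the covariance) then implies that at the stated sample size $m = \Tilde{\Theta}_{W,b,1/L}(d/\eps^{3/2} + d/(\eps\delta))$, each of the three empirical quantities $\frac{1}{m}\sum_i \x\ith(\x\ith)^\top$, $\frac{1}{m}\sum_i ((\w^t-\w^*)\cdot\x\ith)^2$, and $\frac{1}{m}\sum_i (u^*(\w^*\cdot\x\ith) - y\ith)^2$ lies within additive $O(\eps)$ of its population counterpart $\E[\x\x^\top] \preceq \mathbf{I}$, $\leq \|\w^t-\w^*\|_2^2$, and $\opt$, respectively; a union bound makes all three hold simultaneously with probability at least $1 - \delta$.

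Combining the two factors gives
\[
\|\nabla\htLsur(\w^t;\htumt)\|_2^2 \;\leq\; \bigl(2b^2\|\w^t - \w^*\|_2^2 + 2\opt + O(\eps)\bigr) \cdot \bigl(1 + O(\eps)\bigr),
\]
and absorbing the small multiplicative slack into constants (by choosing the concentration tolerance smaller than a universal constant) yields the claimed $4b^2\|\w^t - \w^*\|_2^2 + 10(\opt + \eps)$. The main obstacle is the concentration analysis itself: the random variables $(\w^t - \w^*)\cdot\x$ and $u^*(\w^*\cdot\x) - y$ are only sub-exponential with scales growing polynomially in $W$ and $b$, so simple Hoeffding bounds do not suffice and one must invoke Bernstein-type inequalities (plus matrix Bernstein for the covariance) while carefully tracking the $W$, $b$, and $1/L$ dependence; this is what drives both the $d/\eps^{3/2}$ and $d/(\eps\delta)$ contributions to $m$. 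A comparatively minor issue is that the optimality step used above is sample-path based and hence does not require the uniform convergence of $\htumt$ to $u^t$, which keeps the argument self-contained.
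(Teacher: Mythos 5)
Your route is genuinely different from the paper's, and the core of it is sound. The paper proves this lemma in two stages: it first bounds the \emph{population} gradient norm $\|\nabla\Lsur(\w^t;\htumt)\|_2$ via the four-term decomposition $\htumt-u^t$, $u^t-u^{*t}$, $u^{*t}-u^*$, $u^*-y$ (which requires the uniform-convergence result $\Ex[(\htumt(\w^t\cdot\x)-u^t(\w^t\cdot\x))^2]\le\eps$, i.e.\ \Cref{main:lem:E[htutm(wt.x) - ut(wt.x))^2]<=eps}, together with \Cref{main:lem:upper-bound-u_t-u_t^*}), and then controls $\|\nabla\htLsur-\nabla\Lsur\|_2$ by Chebyshev. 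Your Cauchy--Schwarz factorization combined with the sample-path ERM comparison $\frac1m\sum_i(\htumt(\w^t\cdot\x\ith)-y\ith)^2\le\frac1m\sum_i(u^*(\w^t\cdot\x\ith)-y\ith)^2$ bypasses the uniform-convergence machinery entirely, and since the remaining empirical quantities involve only the fixed $\w^t$, $\w^*$, $u^*$ (not the data-dependent $\htumt$), only pointwise concentration is needed. That is a real simplification, and it is where the $d/\eps^{3/2}$ term in the paper's sample size actually comes from, so your argument could in principle get by with fewer samples for this particular lemma.

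There is, however, one genuine gap: you assert that $u^*(\w^*\cdot\x)-y$ is sub-exponential and invoke Bernstein for $\frac1m\sum_i(u^*(\w^*\cdot\x\ith)-y\ith)^2$. In the agnostic model the labels are arbitrary; the only control on the residual is its second moment $\Exy[(u^*(\w^*\cdot\x)-y)^2]=\opt$, and its fourth moment may be unbounded. Without more, the empirical mean of these nonnegative variables need \emph{not} stay within a constant factor of $\opt$ with probability $1-\delta$ at any sample size (a distribution placing mass $\approx 1/m$ on a residual of size $\approx \sqrt{m\,\opt}$ defeats any such claim), so this step fails as written. The repair is the label-truncation step the paper uses elsewhere (\Cref{main:lem:y-bounded-by-M}): truncate $|y|\le M$ with $M=\Theta((bW/L)\log(bW/\eps))$ at the cost of an additive $\eps$ in $\opt$, after which the summands are dominated by $2(b^2(\w^*\cdot\x)^2+M^2)$ and Chebyshev (splitting on $\{|\w^*\cdot\x|\le r\}$ as in \Cref{claim:sharp-inter-3}) gives the needed bound within the stated $d/(\eps\delta)$ budget. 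A secondary calibration issue: demanding \emph{additive} $O(\eps)$ accuracy for $\lambda_{\max}(\frac1m\sum_i\x\ith(\x\ith)^\top)$ and for $\frac1m\sum_i((\w^t-\w^*)\cdot\x\ith)^2$ would force $m\gtrsim d/\eps^2$, exceeding the stated sample size; instead you should exploit the multiplicative slack in the target (the coefficient grows from $2b^2$ to $4b^2$ and from $2\opt$ to $10\opt$), e.g.\ use a deviation threshold of the form $\tfrac12\|\w^t-\w^*\|_2^2+\eps/b^2$ and only a constant-factor bound such as $\lambda_{\max}\le 4/3$, which Chebyshev/matrix concentration deliver cheaply. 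With those two adjustments your argument goes through and matches the claimed constants.
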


The following claim bounds the $L_2^2$ error of a hypothesis 
$\hat{u}_\w(\w\cdot \x)$ by the distance between $\w$ and $\w^*$. 
We defer {a more detailed statement (\Cref{app:lem:L2-error-upbd-||w - w^*||^2})} 
and the proof to \Cref{app:pf:main:lem:L2-error-upbd-||w - w^*||^2}.

\begin{claim}\label{main:lem:L2-error-upbd-||w - w^*||^2}
    Let $\w\in \B(W)$ be any fixed vector. Let $\htuw$ be defined by \eqref{def:htumt} given $\w$ and a sample set of size $m = \Tilde{\Theta}_{W,b,1/L}(d/\eps^{3/2})$. Then, $\Exy[(\htuw(\w\cdot\x) - y)^2]\leq 8(\opt + \eps) + 4b^2\|\w - \w^*\|_2^2.$
\end{claim}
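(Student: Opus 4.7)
\textbf{Proof plan for Claim~\ref{main:lem:L2-error-upbd-||w - w^*||^2}.}
The plan is to leverage the empirical optimality of $\htuw$ in \eqref{def:htumt}, compare it against the idealized competitor $u^{*}\in\U$, and then pass from empirical to population losses via uniform convergence. Concretely, since $u^{*}\in\U$ is a feasible point for the program defining $\htuw$,
\[
\frac{1}{m}\sum_{i=1}^{m}(\htuw(\w\cdot\x\ith) - y\ith)^{2}\;\leq\;\frac{1}{m}\sum_{i=1}^{m}(u^{*}(\w\cdot\x\ith) - y\ith)^{2}.
\]
The right-hand side involves only the fixed function $u^{*}$, so a direct Bernstein-type concentration (using that $u^{*}$ is $b$-Lipschitz with $u^{*}(0)=0$ and that $\w\cdot\x$ is sub-exponential under $\D_\x$) controls it by its expectation $\Exy[(u^{*}(\w\cdot\x) - y)^{2}]$ up to $\eps/4$, with high probability on $m=\tilde{\Theta}_{W,b,1/L}(d/\eps^{3/2})$ samples.

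Next, I would relate the left-hand empirical loss to the population loss of $\htuw$ via uniform convergence over the class $\mathcal{H}_{\w}=\{(\x,y)\mapsto (u(\w\cdot\x) - y)^{2}:u\in\U\}$. The function $u\mapsto u(\w\cdot\x)$ is $b$-Lipschitz with $u(0)=0$, so $|u(\w\cdot\x)|\leq b|\w\cdot\x|$, and sub-exponential tails of $\w\cdot\x$ allow truncating at a threshold $T=O_{W,1/L}(\log(m/\delta))$ while losing only $O(\eps)$ in probability mass. On $[-T,T]$, the restriction of $\U$ is a class of monotone, $b$-Lipschitz functions, whose $L^{\infty}$-covering number scales polynomially in $bT/\eps$; this yields
\[
\Exy[(\htuw(\w\cdot\x) - y)^{2}]\;\leq\;\frac{1}{m}\sum_{i=1}^{m}(\htuw(\w\cdot\x\ith) - y\ith)^{2}\;+\;\eps/4,
\]
with high probability, for the stated value of $m$. (This is essentially the same uniform-convergence ingredient that the paper uses in its analysis of $\htumt$ and is formalized in Appendix~\ref{app:uniform-convergence}.)

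Chaining the two previous displays gives $\Exy[(\htuw(\w\cdot\x) - y)^{2}]\leq \Exy[(u^{*}(\w\cdot\x) - y)^{2}]+\eps/2$. It remains to bound the population loss of $u^{*}$ at $\w$, which I would do by adding and subtracting $u^{*}(\w^{*}\cdot\x)$ and applying $(a+b)^{2}\leq 2a^{2}+2b^{2}$:
\[
\Exy[(u^{*}(\w\cdot\x) - y)^{2}]\;\leq\;2\Ex[(u^{*}(\w\cdot\x) - u^{*}(\w^{*}\cdot\x))^{2}]\;+\;2\opt.
\]
The $b$-Lipschitzness of $u^{*}$ together with $\E[\x\x^{\top}]\preccurlyeq\mathbf{I}$ bounds the first term by $b^{2}\|\w-\w^{*}\|_{2}^{2}$. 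Putting the pieces together yields $\Exy[(\htuw(\w\cdot\x) - y)^{2}]\leq 4\opt + 2b^{2}\|\w-\w^{*}\|_{2}^{2}+\eps$, which is stronger than the stated bound $8(\opt+\eps)+4b^{2}\|\w-\w^{*}\|_{2}^{2}$, the looser constants accommodating a generous union bound across the two concentration events.

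The main technical obstacle is the uniform-convergence step: the outputs $u(\w\cdot\x)$ are unbounded because $\D_\x$ is only sub-exponential, so standard bounded Rademacher bounds do not apply directly. The remedy is truncation of $|\w\cdot\x|$ at a logarithmic threshold followed by a covering-number argument over monotone Lipschitz functions on an interval, which is exactly the route that explains the $d/\eps^{3/2}$ scaling of $m$ (with $d$ entering through the way the same machinery is deployed uniformly across $\w\in\B(W)$ in the rest of the analysis).
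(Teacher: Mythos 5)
Your argument is correct but takes a genuinely different route from the paper's. The paper proves this claim by a four-way triangle inequality: it writes $\htuw(\w\cdot\x)-y$ as $(\htuw(\w\cdot\x)-\uw(\w\cdot\x))+(\uw(\w\cdot\x)-\ustrw(\w\cdot\x))+(\ustrw(\w\cdot\x)-u^*(\w^*\cdot\x))+(u^*(\w^*\cdot\x)-y)$, where $\uw$ and $\ustrw$ are the population-optimal activations of \eqref{def:ut} and \eqref{def:ut*}, and bounds the four squared terms by $\eps$ (uniform convergence, \Cref{main:lem:E[htutm(wt.x) - ut(wt.x))^2]<=eps}), $\opt$ (the projection inequality \Cref{main:lem:upper-bound-u_t-u_t^*}), $b^2\|\w-\w^*\|_2^2$ (minimality of $\ustrw$ over $\U$ plus $b$-Lipschitzness of $u^*$), and $\opt$; the factor $4$ from $(\sum_{i=1}^4 a_i)^2\le 4\sum_i a_i^2$ is exactly where the constants $8$ and $4$ come from. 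You instead compare $\htuw$ directly to the feasible competitor $u^*$ through the empirical objective, which bypasses the idealized noiseless activation $\ustrw$ and the projection lemma entirely and, if both concentration steps cost only additive $\eps$, yields the sharper constants $4\opt+2b^2\|\w-\w^*\|_2^2+\eps$. The price is that you need concentration in both directions (one-sided uniform convergence relating $\Exy[(\htuw(\w\cdot\x)-y)^2]$ to its empirical counterpart, plus concentration of the empirical loss of the fixed function $u^*$), whereas the paper needs only the excess-risk form of its uniform convergence lemma.

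Two points should be made explicit to close the argument. First, in the agnostic model the labels are arbitrary, so the summands $(u^*(\w\cdot\x\ith)-y\ith)^2$ are not sub-exponential and a plain Bernstein bound does not apply; you must first truncate $|y|$ at $M=O((bW/L)\log(bW/\eps))$ as in \Cref{main:lem:y-bounded-by-M} (at an $\eps$ additive cost in $\opt$) and then run the truncation-plus-Chebyshev argument the paper uses in its other concentration claims. Second, obtaining additive accuracy $\eps/4$ for the fixed-function concentration via Chebyshev requires $m\gtrsim 1/(\eps^2\delta)$, which is not dominated by the stated $\tilde\Theta(d/\eps^{3/2})$ for all regimes of $d$ and $\eps$; it suffices, however, to prove the weaker relative-error bound $\frac{1}{m}\sum_{i=1}^m(u^*(\w\cdot\x\ith)-y\ith)^2\le 2\,\Exy[(u^*(\w\cdot\x)-y)^2]+\eps$, which the stated sample size supports and which still lands within the claimed constants $8$ and $4$.
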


\begin{proof}[Proof Sketch of \Cref{thm:fast-converge-main}]
{For this sketch, we consider the case $\|\w^*\|_2\gtrsim b^3/\mu^4(\sqrt{\opt} + \sqrt{\eps})$ so that the initialization subroutine generates a point }$\w^{\mathrm{ini}}_{k^*}\in\{\w^{\mathrm{ini}}_{k}\}_{k=1}^{t_0}$ such that $\|(\w^*)^{\perp_{\w^{\mathrm{ini}}_{k^*}}}\|_2\leq \mu\|\w^*\|_2/(4b)$, by \Cref{main:lem:initialization}. Fix this initialized parameter $\barw^0_{j,k^*} = \w^{\mathrm{ini}}_{k^*}$ at step $k^*$  and drop the subscript $k^*$ for simplicity. Since we constructed a grid with width $\eta\sqrt{\eps}$, there {exists} an index $j^*$ such that $|\frkwstr_{j^*} - \|\w^*\|_2|\leq \eta\sqrt{\eps}$. We consider the intermediate for-loop at this iteration $j^*$, and show that the inner loop with {normalization factor} $\frkwstr_{j^*}$ outputs a solution with error $O(\opt) + \eps$. This solution can be {selected} using standard testing procedures. We now focus on the iteration $j^*$, and drop the subscript $j^*$ for notational simplicity.

    Let $\w^t = \|\w^*\|_2(\barw^t/\|\barw^t\|_2)$ and denote $\bv^t := (\w^*)^{\perp_{\htw^t}}$. Expanding $\|\barw^{t+1} - \w^*\|_2^2$ and applying \Cref{main:thm:sharpness} and \Cref{main:cor:bound-norm-empirical-grad}, we get
    \begin{align}
        \|\barw^{t+1} - \w^*\|_2^2 &= \|\htw^t - \eta\nabla\htLsur(\htw^t;\htumt) - \w^*\|_2^2 \nonumber\\
        &= \|\htw^t - \w^*\|_2^2 + \eta^2\|\nabla\htLsur(\htw^t;\htumt)\|_2^2 - 2\eta \nabla\htLsur(\htw^t;\htumt)\cdot(\htw^t - \w^*) \nonumber\\
        &\leq \|\htw^t - \w^*\|_2^2 + \eta^2(10(\opt + \eps) + 4b^2\|\htw^t - \w^*\|_2^2) \nonumber\\
        & \quad +2\eta( 2(\sqrt{\opt} + \sqrt{\eps})\|\htw^t - \w^*\|_2 - \mu\|\bv^t\|_2^2)  + 4\eta(\opt + \eps)/b \nonumber\\
        &\leq (1 + 4\eta^2b^2)\|\w^t - \w^*\|_2^2 + (24\eta^2 + {4\eta}/{b})(\opt + \eps) \nonumber\\
        & \quad + 2\eta(2(\sqrt{\opt} + \sqrt{\eps})\|\w^t - \w^*\|_2 - \mu\|\bv^t\|_2^2)  , \label{eq:main-thm-sketch-decrease-of-distance-1}
    \end{align}
    where in the last inequality we used  $\|\htw^t - \w^t\|_2 = |\beta_{j^*} - \|\wstar\|_2|\leq \eta\sqrt{\eps}$.
    
    Since $\w^t$ and $\w^*$ are on the same sphere, $\|\w^t - \w^*\|_2 \leq \|\bv^t\|_2$. In particular, letting $\rho_t = \|\bv^t\|_2/\|\w^*\|_2$, we have $\|\w^t - \w^*\|_2^2\leq (1 + \rho_t^2)\|\bv^t\|_2^2\leq 2\|\bv^t\|_2^2$. Recall that the algorithm is {initialized} from $\barw^0$ that satisfies $\rho_0\leq \mu/(4b)$. If $\rho_t\leq \mu/(4b)$, then $\|\w^t - \w^*\|_2^2\leq (1 + (\mu/(4b))^2)\|\bv^t\|_2^2$. Assuming in addition that $\|\bv^t\|_2\gtrsim (1/\mu)(\sqrt{\opt} + \sqrt{\eps})$, and choosing the step-size $\eta = \mu/(4b^2)$, \eqref{eq:main-thm-sketch-decrease-of-distance-1} implies that $$
    \|\bv^{t+1}\|_2^2\leq \|\barw^{t+1} - \w^*\|_2^2\leq (1 - \mu^2/(32b^2))\|\bv^t\|_2^2 \;,
    $$ 
    and thus, in addition, $\rho_{t+1}\leq \mu/(4b)$. 
    Therefore, by an inductive argument, we show that as long as $\w^t$ is still far from $\w^*$, i.e., $\|\bv^t\|_2\gtrsim (1/\mu)(\sqrt{\opt} + \sqrt{\eps})$, we have 
    $$
    \|\bv^{t+1}\|_2^2\leq (1 - \mu^2/(32b^2))\|\bv^t\|_2^2 \;\text{ and }\; \rho_{t+1}\leq \mu/(4b) \;.$$ 
Hence,  after $T=O((b^2/\mu^2)\log(1/\eps))$ iterations, it must be $\|\bv^T\|_2\lesssim (1/\mu)(\sqrt{\opt} + \sqrt{\eps})$, which implies 
    $$
    \|\w^T - \w^*\|_2^2\leq 2\|\bv^T\|_2^2 = O(\opt) + \eps \;.
    $$
    Finally, by \Cref{main:lem:L2-error-upbd-||w - w^*||^2},  hypothesis $\hat{u}^T(\htw^T\cdot\x)$ achieves $L_2^2$-error $O(\opt) + \eps$, which completes the proof. \end{proof}

\subsection{Testing}\label{subsec:testing}

We now briefly discuss the testing procedure, which allows our algorithm to select a hypothesis with minimum empirical error while maintaining validity of the claims. This part  relies on standard arguments and is provided for completeness. Concretely, {we} rely on the following claim, whose proof can be found in \Cref{app:pf:main:lem:testing}.

\begin{algorithm}[ht]
   \caption{Testing}
   \label{alg:testing}
\begin{algorithmic}[1]
\STATE {\bfseries Input:} $\eps>0$; positive parameters: $a$, $b$, $L$, $R$, $W$; list of solutions $\mathcal{P}$; let $r \gtrsim \frac{1}{L}\log({bW}/{(L\eps)}\log^2(1/\eps))$ 
\STATE Draw $m' \gtrsim (bW/L)^4\log^5(1/\eps)/\eps^2$ new i.i.d. samples from $\D$.
\STATE $(\htw; \hat{u}) = \argmin_{(\w;u)\in\mathcal{P}} \{\frac{1}{m'}\sum_{i=1}^{m'} (u(\w\cdot\x\ith) - y\ith)^2\1\{|\w\cdot\x\ith|\leq Wr\}\, \}$. 

\STATE {\bfseries Return:} $(\htw; \hat{u})$
\end{algorithmic}
\end{algorithm}
\begin{restatable}{claim}{TestingLemma}\label{main:lem:testing}
    Let $\mu$, $\eps_1$, $\delta \in (0, 1)$ be fixed. Let $r = \frac{1}{L}\log(\frac{Cb^4W^4}{L^6\eps_1^2}\log^2(\frac{bW}{\eps_1}))$, where $C$ is a sufficiently large absolute constant. Given a set of parameter-activation pairs $\mathcal{P}= \{(\w_j; u_j)\}_{j=1}^{t_0J}$ such that $\w_j\in\B(W)$ and $u_j\in\U$ for $j\in[t_0J]$, where $t_0J = 4b^9W/(\mu^8\sqrt{\eps_1})$, we have that using 
    $$m' = {\Theta}\bigg(\frac{b^4W^4\log(1/\delta)}{L^4\eps_1^2}\log^5\bigg(\frac{bW}{L\mu\eps_1}\bigg)\bigg),$$
    i.i.d.\ samples from $\D$, for any $(\w_j;u_j)\in\mathcal{P}$ it holds with probability at least $1 - \delta$,
    \begin{equation*}
        \bigg| \frac{1}{m'}\sum_{i=1}^{m'} (u_j(\w_j\cdot\x\ith) - y\ith)^2\1\{|\w_j\cdot\x\ith|\leq Wr\} - \Exy[(u_j(\w_j\cdot\x) - y)^2] \bigg|\leq 2\eps_1.
    \end{equation*}    
\end{restatable}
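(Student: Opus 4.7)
The plan is to fix a single pair $(\w_j,u_j)\in\mathcal{P}$, control both the truncation bias and the empirical fluctuation of its truncated loss by $\eps_1$ each with probability at least $1-\delta/|\mathcal{P}|$, and then union bound over $|\mathcal{P}|=t_0 J$. Setting $Z_i:=(u_j(\w_j\cdot\x\ith)-y\ith)^2\,\1\{|\w_j\cdot\x\ith|\leq Wr\}$, the decomposition I work with is
\[
\Big|\tfrac{1}{m'}\sum_{i=1}^{m'} Z_i - \Exy[(u_j(\w_j\cdot\x)-y)^2]\Big| \leq A_j + B_j,
\]
where $A_j := \E[(u_j(\w_j\cdot\x)-y)^2\,\1\{|\w_j\cdot\x|>Wr\}]$ is the deterministic truncation bias and $B_j := |\tfrac{1}{m'}\sum_i Z_i - \E[Z_i]|$ is the empirical fluctuation. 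It suffices to show $A_j,B_j\leq \eps_1$.

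For $A_j$, the second bullet of \Cref{def:bounds} gives $\pr[|\w_j\cdot\x|>Wr]\lesssim (1/L^2) e^{-\Omega(Lr)}$ for any $\w_j\in\B(W)$, since the density of $\w_j\cdot\x$ decays like $(1/L)e^{-Lz/\|\w_j\|_2}$. Using the elementary bound $(u_j(\w_j\cdot\x)-y)^2\leq 2b^2(\w_j\cdot\x)^2+2y^2$ and Cauchy-Schwarz, I reduce $A_j$ to tail integrals of fourth moments. Here $\E[(\w_j\cdot\x)^4]=O(W^4)$ follows from sub-exponential concentration, and $\E[y^2]\leq 2\opt+2\E[u^*(\w^*\cdot\x)^2]\lesssim b^2W^2+\opt$ follows from the triangle inequality together with $u^*$ being $b$-Lipschitz, $\|\w^*\|_2\leq W$, and $\E[\x\x^\top]\preceq \vec I$. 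This yields $A_j\lesssim (bW)^2 e^{-\Omega(Lr)}$, and choosing $r$ as in the statement makes $A_j\leq \eps_1$.

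For $B_j$, the event $\{|\w_j\cdot\x|\leq Wr\}$ combined with $u_j\in\U$ (so $u_j(0)=0$ and $u_j$ is $b$-Lipschitz) gives $|u_j(\w_j\cdot\x)|\leq bWr$. Since $y$ need not be bounded, I introduce a secondary cutoff $\{|y|\leq T\}$ with $T=\Theta(\sqrt{(b^2W^2+\opt)/\eps_1})$; by Chebyshev the contribution of $\{|y|>T\}$ to both $\E[Z_i]$ and the empirical mean is at most $\eps_1$. After this secondary truncation, each $Z_i$ is bounded in $[0,(bWr+T)^2]=\tilde O((bW/L)^2/\eps_1)$ with variance of the same order. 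A Bernstein bound then yields $B_j\leq \eps_1$ with probability at least $1-\delta/|\mathcal{P}|$ provided $m'\gtrsim (bW/L)^4\log(|\mathcal{P}|/\delta)\log^5(bW/(L\mu\eps_1))/\eps_1^2$, which matches the stated sample size once one plugs in $|\mathcal{P}|=t_0 J=O(b^9 W/(\mu^8\sqrt{\eps_1}))$.

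A union bound over the $t_0 J$ pairs in $\mathcal{P}$ completes the proof. The main technical subtlety is handling potentially unbounded labels: I sidestep this using the $\opt$-derived moment bound on $y$ to perform a secondary truncation at a polynomial threshold without incurring bias above $\eps_1$; everything else is standard truncation-plus-concentration bookkeeping.
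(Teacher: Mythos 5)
Your overall architecture (truncation bias plus concentration plus union bound over the $t_0J$ pairs) matches the paper's, but your handling of the unbounded labels --- which you correctly identify as the main subtlety --- does not go through, and this is precisely where the paper relies on an ingredient you omit. Two steps fail. First, for the bias term $A_j=\Exy[(u_j(\w_j\cdot\x)-y)^2\1\{|\w_j\cdot\x|>Wr\}]$, Cauchy--Schwarz on the $y^2$ contribution requires a bound on $\Exy[y^4]$, which is not available: the agnostic model only controls $\Exy[(y-u^*(\w^*\cdot\x))^2]=\opt$, hence at best $\Exy[y^2]\lesssim b^2W^2+\opt$. A second moment does not control the restriction of $y^2$ to a rare event; an adversary can place all label mass on the tail event $\{|\w_j\cdot\x|>Wr\}$ (e.g.\ $y=u^*(\w^*\cdot\x)$ off the tail and $y=K$ on it), giving $A_j=\Omega(\opt)$, which is not $\leq\eps_1$ and certainly not $\lesssim (bW)^2e^{-\Omega(Lr)}$ as you assert (that bound also silently drops $\opt$). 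Second, your secondary truncation at $T=\Theta(\sqrt{(b^2W^2+\opt)/\eps_1})$ has the same flaw: Chebyshev bounds $\pr[|y|>T]$, but the quantity you need, $\Exy[y^2\1\{|y|>T\}]$, is not controlled by that probability and for a heavy-tailed adversarial label distribution can equal essentially all of $\Exy[y^2]\approx\opt+b^2W^2\gg\eps_1$ (and, relatedly, $\opt$ is not a priori $O(b^2W^2)$, so your claimed range $\tilde O((bW/L)^2/\eps_1)$ for $Z_i$ after the secondary cutoff is also unjustified). Indeed, with genuinely arbitrary heavy-tailed labels the claim as stated cannot be proved from second moments alone.

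The paper closes exactly this gap by invoking \Cref{main:lem:y-bounded-by-M} (Lemma D.8 of \cite{WZDD2023}): truncating the labels at $M=O\big(\frac{bW}{L}\log(bW/\eps_1)\big)$ changes $\opt$ by at most $\eps$, so one may assume $|y|\leq M$ without loss of generality. With bounded labels, each $Z_{i,j}$ is bounded by $2(b^2W^2r^2+M^2)$, Hoeffding plus a union bound over the $t_0J$ hypotheses gives the $\eps_1$ concentration with the stated $m'$, and the truncation bias is handled by Cauchy--Schwarz using only the fourth moment of $\w_j\cdot\x$ (which is $\lesssim W^4/L^4$, not $O(W^4)$) together with $|y|\leq M$ and the choice of $r$. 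To repair your write-up, replace the moment-based treatment of $y$ by this label-truncation reduction (or prove an equivalent statement); the rest of your bookkeeping then goes through essentially as in the paper.
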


Therefore, \Cref{main:lem:testing} guarantees that selecting a hypothesis using the provided testing procedure introduces an error at most 
$2\eps_1$, with high probability.

\section{Conclusion}\label{sec:conclusion}

We presented the first constant-factor approximate SIM learner in the agnostic model, 
for the class of $(a, b)$-unbounded link functions under mild distributional 
assumptions. Immediate questions for future research involve extending these results to 
other classes of link functions. More specifically, our results require that $b/a$ is 
bounded by a constant. It is an open question whether the constant-factor approximation 
result in the agnostic model can be extended to all $b$-Lipschitz functions (with $a = 
0$). This question is open in full generality, even when the link function is known to the learner.  

\bibliographystyle{alpha}
\bibliography{mydb}

\newcommand{\etalchar}[1]{$^{#1}$}
\begin{thebibliography}{DKMR22}

\bibitem[ATV23]{ATV22}
P.~Awasthi, A.~Tang, and A.~Vijayaraghavan.
\newblock Agnostic learning of general {ReLU} activation using gradient
  descent.
\newblock In {\em The Eleventh International Conference on Learning
  Representations, {ICLR}}, 2023.

\bibitem[BNPS17]{bolte2017error}
J.~Bolte, T.~P. Nguyen, J.~Peypouquet, and B.~W. Suter.
\newblock From error bounds to the complexity of first-order descent methods
  for convex functions.
\newblock {\em Mathematical Programming}, 165(2):471--507, 2017.

\bibitem[BNS16]{bhojanapalli2016global}
S.~Bhojanapalli, B.~Neyshabur, and N.~Srebro.
\newblock Global optimality of local search for low rank matrix recovery.
\newblock {\em Advances in Neural Information Processing Systems}, 29, 2016.

\bibitem[DGK{\etalchar{+}}20]{DGKKS20}
I.~Diakonikolas, S.~Goel, S.~Karmalkar, A.~R. Klivans, and M.~Soltanolkotabi.
\newblock Approximation schemes for {ReLU} regression.
\newblock In {\em Conference on Learning Theory, {COLT}}, volume 125 of {\em
  Proceedings of Machine Learning Research}, pages 1452--1485. {PMLR}, 2020.

\bibitem[DH18]{DudejaH18}
R.~Dudeja and D.~Hsu.
\newblock Learning single-index models in {Gaussian} space.
\newblock In {\em Conference on Learning Theory, {COLT}}, volume~75 of {\em
  Proceedings of Machine Learning Research}, pages 1887--1930. {PMLR}, 2018.

\bibitem[DJS08]{dalalyan2008new}
A.~S. Dalalyan, A.~Juditsky, and V.~Spokoiny.
\newblock A new algorithm for estimating the effective dimension-reduction
  subspace.
\newblock {\em The Journal of Machine Learning Research}, 9:1647--1678, 2008.

\bibitem[DKMR22]{DKMR22}
I.~Diakonikolas, D.~Kane, P.~Manurangsi, and L.~Ren.
\newblock Hardness of learning a single neuron with adversarial label noise.
\newblock In {\em Proceedings of the 25th International Conference on
  Artificial Intelligence and Statistics (AISTATS)}, 2022.

\bibitem[DKPZ21]{DKPZ21}
I.~Diakonikolas, D.~M. Kane, T.~Pittas, and N.~Zarifis.
\newblock The optimality of polynomial regression for agnostic learning under
  {Gaussian} marginals in the {SQ} model.
\newblock In {\em Proceedings of The 34\textsuperscript{th} Conference on
  Learning Theory, {COLT}}, 2021.

\bibitem[DKR23]{DKR23}
I.~Diakonikolas, D.~M. Kane, and L.~Ren.
\newblock Near-optimal cryptographic hardness of agnostically learning
  halfspaces and {ReLU} regression under {Gaussian} marginals.
\newblock In {\em ICML}, 2023.

\bibitem[DKTZ20]{DKTZ20}
I.~Diakonikolas, V.~Kontonis, C.~Tzamos, and N.~Zarifis.
\newblock Learning halfspaces with massart noise under structured
  distributions.
\newblock In {\em Conference on Learning Theory, {COLT}}, 2020.

\bibitem[DKTZ22]{DKTZ22}
I.~Diakonikolas, V.~Kontonis, C.~Tzamos, and N.~Zarifis.
\newblock Learning a single neuron with adversarial label noise via gradient
  descent.
\newblock In {\em Conference on Learning Theory (COLT)}, pages 4313--4361,
  2022.

\bibitem[DKZ20]{DKZ20}
I.~Diakonikolas, D.~M. Kane, and N.~Zarifis.
\newblock Near-optimal {SQ} lower bounds for agnostically learning halfspaces
  and {ReLUs} under {G}aussian marginals.
\newblock In {\em Advances in Neural Information Processing Systems,
  {NeurIPS}}, 2020.

\bibitem[FCG20]{FCG20}
S.~Frei, Y.~Cao, and Q.~Gu.
\newblock Agnostic learning of a single neuron with gradient descent.
\newblock In {\em Advances in Neural Information Processing Systems,
  {NeurIPS}}, 2020.

\bibitem[FP03]{facchinei2003finite}
F.~Facchinei and J-S. Pang.
\newblock {\em Finite-dimensional variational inequalities and complementarity
  problems}.
\newblock Springer, 2003.

\bibitem[GGK20]{GGK20}
S.~Goel, A.~Gollakota, and A.~R. Klivans.
\newblock Statistical-query lower bounds via functional gradients.
\newblock In {\em Advances in Neural Information Processing Systems,
  {NeurIPS}}, 2020.

\bibitem[GGKS23]{GGKS23}
A.~Gollakota, P.~Gopalan, A.~R. Klivans, and K.~Stavropoulos.
\newblock Agnostically learning single-index models using omnipredictors.
\newblock In {\em Thirty-seventh Conference on Neural Information Processing
  Systems}, 2023.

\bibitem[Hau92]{Haussler:92}
D.~Haussler.
\newblock {Decision theoretic generalizations of the PAC model for neural net
  and other learning applications}.
\newblock {\em Information and Computation}, 100:78--150, 1992.

\bibitem[HJS01]{hristache2001direct}
M.~Hristache, A.~Juditsky, and V.~Spokoiny.
\newblock Direct estimation of the index coefficient in a single-index model.
\newblock {\em Annals of Statistics}, pages 595--623, 2001.

\bibitem[HMS{\etalchar{+}}04]{hardle2004nonparametric}
W.~H{\"a}rdle, M.~M{\"u}ller, S.~Sperlich, A.~Werwatz, et~al.
\newblock {\em Nonparametric and semiparametric models}, volume~1.
\newblock Springer, 2004.

\bibitem[Hof52]{hoffman2003approximate}
A.~J. Hoffman.
\newblock On approximate solutions of systems of linear inequalities.
\newblock {\em Journal of Research of the National Bureau of Standards},
  49:263--265, 1952.

\bibitem[Ich93]{ichimura1993semiparametric}
H.~Ichimura.
\newblock Semiparametric least squares {(SLS)} and weighted {SLS} estimation of
  single-index models.
\newblock {\em Journal of econometrics}, 58(1-2):71--120, 1993.

\bibitem[JGN{\etalchar{+}}17]{jin2017escape}
C.~Jin, R.~Ge, P.~Netrapalli, S.~Kakade, and M.~Jordan.
\newblock How to escape saddle points efficiently.
\newblock In {\em International conference on machine learning}, pages
  1724--1732. PMLR, 2017.

\bibitem[KKSK11]{kakade2011efficient}
S.~M Kakade, V.~Kanade, O.~Shamir, and A.~Kalai.
\newblock Efficient learning of generalized linear and single index models with
  isotonic regression.
\newblock {\em Advances in Neural Information Processing Systems}, 24, 2011.

\bibitem[KNS16]{karimi2016linear}
H.~Karimi, J.~Nutini, and M.~Schmidt.
\newblock Linear convergence of gradient and proximal-gradient methods under
  the {P}olyak-{\l}ojasiewicz condition.
\newblock In {\em Joint European conference on machine learning and knowledge
  discovery in databases}, pages 795--811, 2016.

\bibitem[KS09]{kalai2009isotron}
A.~T. Kalai and R.~Sastry.
\newblock The isotron algorithm: High-dimensional isotonic regression.
\newblock In {\em COLT}, 2009.

\bibitem[KSS94]{KSS:94}
M.~Kearns, R.~Schapire, and L.~Sellie.
\newblock Toward efficient agnostic learning.
\newblock {\em Machine Learning}, 17(2/3):115--141, 1994.

\bibitem[LCP22]{liu2022solving}
J.~Liu, Y.~Cui, and J-S. Pang.
\newblock Solving nonsmooth and nonconvex compound stochastic programs with
  applications to risk measure minimization.
\newblock {\em Mathematics of Operations Research}, 2022.

\bibitem[LH22]{LH2022}
C.~Lu and D.~S. Hochbaum.
\newblock A unified approach for a {1D} generalized total variation problem.
\newblock {\em Mathematical Programming}, 194(1-2):415--442, 2022.

\bibitem[{\L}oj63]{lojasiewicz1963propriete}
S.~{\L}ojasiewicz.
\newblock Une propri{\'e}t{\'e} topologique des sous-ensembles analytiques
  r{\'e}els.
\newblock {\em Les {\'e}quations aux d{\'e}riv{\'e}es partielles}, 117:87--89,
  1963.

\bibitem[{\L}oj93]{lojasiewicz1993geometrie}
S.~{\L}ojasiewicz.
\newblock Sur la g{\'e}om{\'e}trie semi-et sous-analytique.
\newblock In {\em Annales de l'institut Fourier}, volume~43, pages 1575--1595,
  1993.

\bibitem[MR18]{MR18}
P.~Manurangsi and D.~Reichman.
\newblock The computational complexity of training {ReLU}(s).
\newblock {\em arXiv preprint arXiv:1810.04207}, 2018.

\bibitem[Rd17]{roulet2017sharpness}
V.~Roulet and A.~d'Aspremont.
\newblock Sharpness, restart and acceleration.
\newblock {\em Advances in Neural Information Processing Systems}, 30, 2017.

\bibitem[S{\'{\i}}m02]{Sima02}
J.~S{\'{\i}}ma.
\newblock Training a single sigmoidal neuron is hard.
\newblock {\em Neural Computation}, 14(11):2709--2728, 2002.

\bibitem[WZDD23]{WZDD2023}
P.~Wang, N.~Zarifis, I.~Diakonikolas, and J.~Diakonikolas.
\newblock Robustly learning a single neuron via sharpness.
\newblock {\em 40th International Conference on Machine Learning}, 2023.

\bibitem[ZL16]{zheng2016convergence}
Q.~Zheng and J.~Lafferty.
\newblock Convergence analysis for rectangular matrix completion using
  {Burer-Monteiro} factorization and gradient descent.
\newblock {\em arXiv preprint arXiv:1605.07051}, 2016.

\bibitem[ZY13]{ZY2013}
H.~Zhang and W.~Yin.
\newblock Gradient methods for convex minimization: better rates under weaker
  conditions.
\newblock {\em arXiv preprint arXiv:1303.4645}, 2013.

\end{thebibliography}
\newpage

\appendix
\section*{Appendix}\label{sec:app}

\paragraph{Organization}
The appendix is organized as follows. 
In \Cref{app:remaks}, we highlight some useful properties about the distribution class and the activation class. \Cref{app:local-error-bounds} reviews local error bounds and discussed their relation to our alignment sharpness structural result. In \Cref{app:pf:sharpness}, we provide detailed proofs omitted from \Cref{sec:sharp}, and in \Cref{app:optimization} we complete the proofs omitted from  \Cref{sec:optimization}. In \Cref{app:isotonic-regression}, we provide a detailed discussion about computing the sample-optimal activation. Finally, in \Cref{app:uniform-convergence} we state and prove  standard uniform convergence results that are used throughout the paper.

\section{Remarks about the Distribution Class and the Activation Class}\label{app:remaks}
In this section, we show that without the loss of generality we can assume that the parameters $L,R$ in the distributional assumptions (\Cref{def:bounds}) can be taken less than 1, while the parameters $a,b$ of the activations functions (see \Cref{def:well-behaved-unbounded-intro}) can be taken as $a,1/b\leq 1$.

\begin{remark}[Distribution/Activation Parameters, (\Cref{def:bounds} \& \Cref{def:well-behaved-unbounded-intro})]\label{assumption-on-parameters}
We observe that if a distribution $\D_\x$ is $(L,R)$-well-behaved, then it is also $(L',R')$-well-behaved for any $0<L'\leq L, 0<R'\leq R$. Hence, it is without loss of generality to assume that $L,R\in(0,1]$. Similarly, if an activation is $(a,b)$-unbounded, it is also an $(a,b')$-unbounded activation with $b'\geq b$. Thus, we assume that $b\geq 1$. We can similarly assume $a\leq 1$. \end{remark}

In addition, we remark that the $(L,R)$-well behaved distributions are sub-exponential.
\begin{remark}[Sub-exponential Tails of Well-Behaved Distributions, \Cref{def:bounds}]\Cref{def:bounds} might seem abstract, but to put it plain it implies that the random variable $\x$ has a $(1/L)$-sub-exponential tail, and that the pdf of the projected random variable $\x_V$ onto the space $V$ is lower bounded by $L$. To see the first statement, given any unit vector $\vec p$, let $\x_{\vec p}$ be the projection of $\x$ onto the one-dimensional linear space $V_{\vec p} = \{\vec z\in\R^d: \vec z = t\vec p, t\in\R \}$, i.e., $\x_{\vec p} = \vec p\cdot\x \in V_{\vec p}$. Then, by the anti-concentration and concentration property, we have
    \begin{equation*}
        \pr[|\vec p\cdot\x|\geq r] = \pr[|\x_{\vec p}|\geq r] \leq \int_{|x|\geq r}\gamma(x)\diff{x}\leq 2\int_r^\infty \frac{1}{L}\exp(-Lx)\diff{x} = \frac{2}{L^2}\exp(-Lr),
    \end{equation*}
    which implies that $\x$ possesses a sub-exponential tail.
\end{remark}

\section{Local Error Bounds and Alignment Sharpness}\label{app:local-error-bounds}

Given a generic optimization problem $\min_{\w} f(\w)$ and a non-negative residual function $r(\w)$ measuring the approximation error of the optimization problem, we say that the problem satisfies a local error bound if in some neighborhood of ``test'' (typically optimal) solutions $\mathcal{W}^*$ we have that
\begin{equation}\label{eq:generic-local-err-bnd}
    r(\w) \geq (\mu/\nu)\, \dist(\w, \mathcal{W}^*)^\nu.
\end{equation}
In other words, low value of the residual function implies that $\w$ must be close to the test set $\mathcal{W}^*.$ 

Local error bounds have been studied in the optimization literature for decades, starting with the seminal works of \cite{hoffman2003approximate,lojasiewicz1963propriete}; see, e.g., Chapter 6 in \cite{facchinei2003finite} for an overview of classical results and \cite{bolte2017error,karimi2016linear,roulet2017sharpness,liu2022solving} and references therein for a more cotemporary overview. While local error bounds can be shown to hold generically under fairly minimal assumptions on $f$ and for $r(\w) = f(\w) - \min_{\w'}f(\w')$ \cite{lojasiewicz1963propriete,lojasiewicz1993geometrie}, it is rarely the case that they can be ensured to hold with a parameter $\mu$ that is not trivially small. 

On the other hand, learning problems often possess very strong structural properties that can lead to stronger local error bounds. There are two main such examples we are aware of, where local error bounds can be shown to hold with $\nu = 2$ and an absolute constant $\mu >0$. The first example are low-rank matrix problems such as matrix completion and matrix sensing, which are unrelated to our work \cite{bhojanapalli2016global,zheng2016convergence,jin2017escape}. More relevant to our work is the recent result in \cite{WZDD2023}, which proved a local error bound of the form 
\begin{equation}
    r(\w) \geq \frac{\mu}{2}\dist(\w, \mathcal{W}^*)^2
\end{equation}
for the more restricted problem than ours (with a known activation function) but under somewhat more general distributional assumptions. In \cite{WZDD2023}, the residual function was defined by $r(\w^t) = \nabla\htLsur(\w^t; u^*)\cdot(\w^t - \w^*),$ where $\nabla\htLsur(\w^t; u^*)$ is the gradient of an empirical surrogate loss, and the resulting local error bound referred to as ``sharpness.''\footnote{A local error utilizing the same type of a residual was introduced in \cite{ZY2013} under the name ``restricted secant inequality.''} 

Our structural result can be seen as a weak notion of a local error bound, where the residual function for the empirical surrogate loss expressed as $r(\w^t, \htumt) = \nabla\htLsur(\w^t;\htumt)\cdot(\w^t - \w^*)$ is bounded below as a function of the magnitude of the component of $\wstar$ that is orthogonal to $\w^t.$ Compared to more traditional local error bounds and the bound from \cite{WZDD2023}, which bound below the residual error function as a function of the distance to $\mathcal{W}^*$, this is a much weaker local error bound since it does not distinguish between vectors of varying magnitudes along the direction of $\wstar.$ Since our lower bound is related to the ``sharpness'' notion studied in \cite{WZDD2023}, we refer to it as the ``alignment sharpness'' to emphasize that it only relates the misalignment (as opposed to the distance) of vectors $\w^t$ and $\wstar$ to the residual error. To the best of our knowledge, such a form of a local error bound, which only bounds the alignment of vectors as opposed to their distance, is novel. We expect it to find a more broader use in learning theory and optimization.

\section{Omitted Proofs from \Cref{sec:sharp}}\label{app:pf:sharpness}

This section provides full technical details for results omitted from \Cref{sec:sharp}.

\subsection{Proof of \Cref{main:lem:upper-bound-u_t-u_t^*}}\label{app:pf:main:lem:upper-bound-u_t-u_t^*}

To prove \Cref{main:lem:upper-bound-u_t-u_t^*}, we first prove the following auxiliary claim, which is inspired by  \cite[Lemma~9]{kakade2011efficient}.

\begin{claim}\label{app:claim:(ut-v)(y-ut)geq0}
    Let $\w^t\in\B(W)$ and let $u^{*t},u^t$ be defined as solutions to \eqref{def:ut*}, \eqref{def:ut}, respectively. Then, 
    $$\Exy[(u^t(\w^t\cdot\x) - v(\w^t\cdot\x))(y - u^t(\w^t\cdot\x))]\geq 0, \quad \forall \in\U. $$ 
    Similarly, 
    $$\Exy[(u^{*t}(\w^t\cdot\x) - v'(\w^t\cdot\x))(y^* - u^{*t}(\w^t\cdot\x))]\geq 0, \quad \forall v'\in\U.$$
\end{claim}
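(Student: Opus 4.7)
The plan is to recognize this claim as the first-order (variational) optimality condition for the convex optimization problems \eqref{def:ut} and \eqref{def:ut*}. Since $u^t$ and $u^{*t}$ are defined as minimizers of squared-loss objectives over the function class $\U$, the inequality is essentially what one obtains by differentiating the objective along a feasible direction into $\U$ and using that the derivative must be nonnegative at the optimum.

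The first step is to verify that $\U$ is a convex set. Given $u_1, u_2 \in \U$ and $\lambda \in [0,1]$, the function $u_\lambda := (1-\lambda)u_1 + \lambda u_2$ satisfies $u_\lambda(0) = 0$, is non-decreasing, is $b$-Lipschitz (as a convex combination of $b$-Lipschitz functions), and satisfies $u_\lambda(z) - u_\lambda(z') = (1-\lambda)(u_1(z)-u_1(z')) + \lambda(u_2(z)-u_2(z')) \geq a(z-z')$ for $z \geq z' \geq 0$. Hence $u_\lambda \in \U$. This convexity of $\U$ is what allows us to consider perturbations $(1-\lambda)u^t + \lambda v$ for arbitrary $v \in \U$.

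Next, for any fixed $v \in \U$, I would consider the one-parameter family $u_\lambda = (1-\lambda)u^t + \lambda v = u^t + \lambda(v - u^t)$, which lies in $\U$ for $\lambda \in [0,1]$. Define $\varphi(\lambda) := \Exy[(u_\lambda(\w^t\cdot\x) - y)^2]$. By the optimality of $u^t$ in \eqref{def:ut}, the function $\varphi$ attains its minimum over $[0,1]$ at $\lambda = 0$, so its right derivative at $0$ must satisfy $\varphi'(0^+) \geq 0$. Computing,
\[
\varphi'(\lambda) = 2\,\Exy\bigl[(u_\lambda(\w^t\cdot\x) - y)(v(\w^t\cdot\x) - u^t(\w^t\cdot\x))\bigr],
\]
and evaluating at $\lambda = 0$ yields
\[
\Exy\bigl[(u^t(\w^t\cdot\x) - y)(v(\w^t\cdot\x) - u^t(\w^t\cdot\x))\bigr] \geq 0,
\]
which, after rearranging signs, is exactly the desired inequality. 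The identical argument applied to the noiseless problem \eqref{def:ut*}, replacing $y$ by $y^* = u^*(\w^*\cdot\x)$ and $u^t$ by $u^{*t}$, gives the second assertion.

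The only potentially delicate step is justifying the interchange of differentiation and expectation in computing $\varphi'(\lambda)$. This is routine and follows from dominated convergence: the difference quotient $\bigl((u_{\lambda+h}(\w^t\cdot\x) - y)^2 - (u_\lambda(\w^t\cdot\x) - y)^2\bigr)/h$ is uniformly bounded (in $\lambda, h$) by an integrable majorant, using that $|u(\w^t\cdot\x)| \leq b\,|\w^t\cdot\x|$ for every $u \in \U$ (since $u(0) = 0$ and $u$ is $b$-Lipschitz), together with the sub-exponential concentration of $\w^t \cdot \x$ implied by \Cref{def:bounds} and the assumption that $y$ has a finite second moment (which is implicit from $\opt < \infty$ together with the bound on $u^*(\w^*\cdot\x)$). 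Thus the exchange is valid and the proof concludes.
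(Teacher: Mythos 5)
Your proposal is correct and follows essentially the same route as the paper: both establish convexity of the feasible set and then invoke the first-order optimality (variational) condition along the segment from $u^t$ to $v$. The paper sidesteps your dominated-convergence worry by expanding the quadratic exactly, so that $\varphi(\lambda)-\varphi(0) = 2\lambda\,\Exy[(u^t(\w^t\cdot\x)-y)(v(\w^t\cdot\x)-u^t(\w^t\cdot\x))] + \lambda^2\,\Ex[(v(\w^t\cdot\x)-u^t(\w^t\cdot\x))^2]$ and the limit $\lambda\downarrow 0$ requires only finiteness of the second moment, but this is a cosmetic difference.
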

\begin{proof}[Proof of \Cref{app:claim:(ut-v)(y-ut)geq0}]
   Denote by $\F_t$ the set of functions of the form $f(\x) = u(\w^t\cdot\x)$, where $u\in\U$ and $\w^t$ is a fixed vector in $\B(W)$. We first argue that $\F_t$ is a convex set, using the definition of convexity. In particular, for any $\alpha\in (0,1)$ and   any $f_1,f_2\in\F_t$ such that $f_1(\x) = u_1(\w^t\cdot\x), f_2 = u_2(\w^t\cdot\x)$, let $u_3(\cdot) = \alpha u_1(\cdot) + (1 - \alpha) u_2(\cdot)$. Then:
    \begin{equation*}
        \alpha f_1(\x) + (1 - \alpha) f_2(\x) = \alpha u_1(\w^t\cdot\x) + (1 - \alpha) u_2(\w^t\cdot\x) = u_3(\w^t\cdot\x).
    \end{equation*}
    It is immediate that $u_3$ is also $(a,b)$-bounded, non-decreasing, and $u_3(0) = 0$, hence $u_3\in\U$ and $f_3(\x) = u_3(\w^t\cdot\x)\in\F_t$. Thus, $\F_t$ is convex.

    Since $\F_t$ is a convex set of functions, we can regard $u^t(\w^t\cdot\x)$ as the orthogonal projection of $y$ (which is a function of $\x$) onto the convex set $\F_t$. Classic inequalities for orthogonal projections can then be applied to our case. In particular, below we prove that
    \begin{equation}\label{eq:projection-ineq1}
        \Exy[(u^t(\w^t\cdot\x) - v(\w^t\cdot\x))(y - u^t(\w^t\cdot\x))]\geq 0,\quad \forall v\in\U.
    \end{equation}
    To prove \eqref{eq:projection-ineq1}, note first that $f_u(\x) = u^t(\w^t\cdot\x)\in \F_t$ and $f_v(\x) = v(\w^t\cdot\x)\in\F_t$ since $u^t,v\in\U$. Thus, for any $\alpha\in(0,1)$, we have $\alpha f_v(\x) + (1 - \alpha) f_u(\x)\in\F_t$. Furthermore, by definition of $u^t$, $\forall f\in\F_t$ we have $\Exy[(u^t(\w^t\cdot\x) - y)^2]\leq \Exy[(f(\x) - y)^2]$, therefore, it holds:
    \begin{align*}
        0&\leq \frac{1}{\alpha}\Exy[(\alpha f_v(\x) + (1 - \alpha) f_u(\x) - y)^2] - \frac{1}{\alpha}\Exy[(u^t(\w^t\cdot\x) - y)^2]\\
        &= \frac{1}{\alpha}\Exy[(u^t(\w^t\cdot\x) - y + \alpha(v(\w^t\cdot\x) - u^t(\w^t\cdot\x)))^2 - (u^t(\w^t\cdot\x) - y)^2]\\
        &= \Exy[2(u^t(\w^t\cdot\x) - y)(v(\w^t\cdot\x) - u^t(\w^t\cdot\x)) + \alpha(v(\w^t\cdot\x) - u^t(\w^t\cdot\x))^2].
    \end{align*}
    Let $\alpha\downarrow 0$, and note that $\Ex[(v(\w^t\cdot\x) - u^t(\w^t\cdot\x))^2]<+\infty$, we thus have
    \begin{equation*}
        \Exy[(u^t(\w^t\cdot\x) - v(\w^t\cdot\x))(y - u^t(\w^t\cdot\x))]\geq 0,
    \end{equation*}
    proving the claim.

The second claim can be proved following the same argument and is omitted for brevity. 
\end{proof}

We now proceed to the proof of \Cref{main:lem:upper-bound-u_t-u_t^*}.
\UpperBoundUtUtstr*

\begin{proof}
    Summing up the first and second statement of \Cref{app:claim:(ut-v)(y-ut)geq0} with $v = u^{*t}\in\U$ in \eqref{eq:projection-ineq1} and $v' = u^t\in\U$, we get:
    \begin{align*}
        0&\leq \Exy[(u^t(\w^t\cdot\x) - u^{*t}(\w^t\cdot\x))(y - u^t(\w^t\cdot\x)) + (u^{*t}(\w^t\cdot\x) - u^t(\w^t\cdot\x))(y^* - u^{*t}(\w^t\cdot\x))]\\
        &= \Exy[(u^t(\w^t\cdot\x) - u^{*t}(\w^t\cdot\x))(y - y^* + u^{*t}(\w^t\cdot\x) - u^t(\w^t\cdot\x))]\\
        &= \Exy[(u^t(\w^t\cdot\x) - u^{*t}(\w^t\cdot\x))(y - y^*)] - \Ex[(u^t(\w^t\cdot\x) - u^{*t}(\w^t\cdot\x))^2]
    \end{align*}
    Rearranging and applying the Cauchy-Schwarz inequality, we have
    \begin{align*}
        \Ex[(u^t(\w^t\cdot\x) - u^{*t}(\w^t\cdot\x))^2]&\leq \Exy[(u^t(\w^t\cdot\x) - u^{*t}(\w^t\cdot\x))(y - y^*)]\\
        &\leq \sqrt{\Ex[(u^t(\w^t\cdot\x) - u^{*t}(\w^t\cdot\x))^2]\E[(y - y^*)^2]}.
    \end{align*}
    To complete the proof, it remains to recall that $\E[(y - y^*)^2] = \opt$ and rearrange the last inequality.
\end{proof}

\subsection{Proof of \Cref{main:cor:||htumt-u*(w*x)||<=||w-w*||}}\label{app:pf:main:cor:||htumt-u*(w*x)||<=||w-w*||}
\UpperBoundLtwoDistUtUstr*
\begin{proof}
    The corollary follows directly from the combination of \Cref{main:lem:E[htutm(wt.x) - ut(wt.x))^2]<=eps} and \Cref{main:lem:upper-bound-u_t-u_t^*}, as we have:
    \begin{align*}
        &\quad \Ex[(\htumt(\w^t\cdot\x) - u^*(\w^*\cdot\x))^2]\\
        &=\Ex[(\htumt(\w^t\cdot\x) - u^t(\w^t\cdot\x) + u^t(\w^t\cdot\x) - u^{*t}(\w^t\cdot\x) + u^{*t}(\w^t\cdot\x) - u^*(\w^*\cdot\x))^2]\\
        &\leq 3(\Ex[(\htumt(\w^t\cdot\x) - u^t(\w^t\cdot\x))^2] + \Ex[(u^t(\w^t\cdot\x) - u^{*t}(\w^t\cdot\x))^2])\\
        &\quad + 3\Ex[(u^{*t}(\w^t\cdot\x) - u^*(\w^*\cdot\x))^2]\\
        &\leq 3(\eps + \opt + b^2\|\w^t - \w^*\|_2^2),
    \end{align*}
    where we used that because $u^{*t} \in\argmin_{u\in\U}\Ex[(u(\w^t\cdot\x) - u^*(\w^*\cdot\x))^2]$, we have $\Ex[(u^{*t}(\w^t\cdot\x) - u^*(\w^*\cdot\x))^2]\leq \Ex[(u^*(\w^t\cdot\x) - u^*(\w^*\cdot\x))^2]\leq b^2\|\w^t - \w^*\|_2^2$, with the last inequality following from the fact that {$u^*\in\U$}.
\end{proof}

\subsection{Proof of \Cref{claim:sharp-inter-1}}

In this subsection, we prove \Cref{claim:sharp-inter-1} that appeared in \Cref{subsec:proof-of-sharpness}, the proof of \Cref{main:thm:sharpness}.

\claimsharpone*
\begin{proof}
    Adding and subtracting $u^t(\w^t\cdot\x\ith)$ and $u^{*t}(\w^t\cdot\x\ith)$, we have
    \begin{align}\label{eq:decompose-(htustrmt(w.x)-htumt(w.x))(wt-ws).x)}
        &\quad \frac{1}{m}\sum_{i=1}^m((\htumt(\w^t\cdot\x\ith) - \htustrmt(\w^t\cdot\x\ith))(\w^t - \w^*)\cdot\x\ith\nonumber\\
        & = \frac{1}{m}\sum_{i=1}^m(\htumt(\w^t\cdot\x\ith) - u^t(\w^t\cdot\x\ith))(\w^t - \w^*)\cdot\x\ith  + \frac{1}{m}\sum_{i=1}^m(u^{*t}(\w^t\cdot\x\ith) - \htustrmt(\w^t\cdot\x\ith))(\w^t - \w^*)\cdot\x\ith\nonumber\\
        &\quad + \frac{1}{m}\sum_{i=1}^m(u^t(\w^t\cdot\x\ith) - u^{*t}(\w^t\cdot\x\ith))(\w^t - \w^*)\cdot\x\ith.
    \end{align}
    To proceed, we use that both $\htustrmt(z)$ and $\htumt(z)$ are close to their population counterparts $u^t(z)$ and $u^{*t}(z)$, respectively. In particular, in \Cref{main:lem:E[htutm(wt.x) - ut(wt.x))^2]<=eps} and \Cref{main:lem:E[htustrm(wt.x) - u*t(wt.x))^2]<=eps}, we show that using a dataset $S$ of $m$ samples such that
    \begin{equation*}
        m \gtrsim d\log^4(d/(\eps\delta))\bigg(\frac{b^2W^3}{L^2\eps}\bigg)^{3/2},
    \end{equation*}
    we have that with probability at least $1 - \delta$, for all $\w^t,\w^*\in\B(W)$ it holds
    \begin{equation}\label{eq:sharp-interm-0}
        \Ex[(\htumt(\w^t\cdot\x) - u^t(\w^t\cdot\x))^2]\leq \eps, \; \Ex[(\htustrmt(\w^t\cdot\x) - u^{*t}(\w^t\cdot\x))^2]\leq \eps.
    \end{equation}
    Now suppose that the inequalities in \eqref{eq:sharp-interm-0} hold for the given $\w^t\in\B(W)$ (which happens with probability at least $1 - \delta$). Applying Chebyshev's inequality to the first summation term in \eqref{eq:decompose-(htustrmt(w.x)-htumt(w.x))(wt-ws).x)}, we get:
    \begin{align}\label{eq:cheby-(htumt(wt.x)-ut(wt.x))}
        \pr\bigg[\bigg|\frac{1}{m} \sum_{i=1}^m (\htumt(\w^t\cdot\x\ith) &- u^t(\w^t\cdot\x\ith))(\w^t - \w^*)\cdot\x\ith - \Ex[(\htumt(\w^t\cdot\x) - u^t(\w^t\cdot\x))(\w^t - \w^*)\cdot\x]\bigg|\geq s\bigg]\nonumber\\
        &\leq \frac{1}{ms^2}\Ex[(\htumt(\w^t\cdot\x) - u^t(\w^t\cdot\x))^2(\w^t\cdot\x - \w^*\cdot\x)^2],
    \end{align}
    since $\x\ith$ are i.i.d.\ random variables. The next step is to bound the variance. Note that $\D_\x$ possesses a $1/L$-sub-exponential tail, thus we have $\pr[|(\w^t - \w^*)\cdot\x|\geq \|\w^t - \w^*\|_2 r]\leq (2/L^2)\exp(-Lr)$. Choose $r = \frac{2W}{L}\log(2/(L^2\eps'))$; then, we have $\pr[|(\w^t - \w^*)\cdot\x|\geq r]\leq \eps'$. Now we separate the variance under the event $A = \{\x:|(\w^t - \w^*)\cdot\x|\leq r\}$ and its complement.
    \begin{equation}\label{eq:claim-inside-sharpness-sep}
    \begin{aligned}
        &\quad \Ex[(\htumt(\w^t\cdot\x) - u^t(\w^t\cdot\x))^2(\w^t\cdot\x - \w^*\cdot\x)^2]\\
        & = \Ex[(\htumt(\w^t\cdot\x) - u^t(\w^t\cdot\x))^2(\w^t\cdot\x - \w^*\cdot\x)^2\1\{A\}]\\
        &\quad + \Ex[(\htumt(\w^t\cdot\x) - u^t(\w^t\cdot\x))^2(\w^t\cdot\x - \w^*\cdot\x)^2(1 - \1\{A\})].
    \end{aligned}
    \end{equation}
Using that $\Ex[(\htumt(\w^t\cdot\x) - u^t(\w^t\cdot\x))^2]\leq \eps$, the first term in \eqref{eq:claim-inside-sharpness-sep} can be bounded as follows:
\begin{align}\label{ineq:(htumt(wt.x) - ut(wt.x))^2(wtx-w*x)^2.1(A)}
        \Ex[(\htumt(\w^t\cdot\x) - u^t(\w^t\cdot\x))^2(\w^t\cdot\x - \w^*\cdot\x)^2\1\{A\}]&\leq r^2\Ex[(\htumt(\w^t\cdot\x) - u^t(\w^t\cdot\x))^2] \nonumber\\
        &\leq r^2\eps = \frac{4W^2\eps}{L^2}\log^2(2/(L^2\eps')).
    \end{align}
The second term in \eqref{eq:claim-inside-sharpness-sep} can be bounded using that both $\htumt$ and $u^t$ are non-decreasing $b$-Lipschitz and vanish at zero (thus $|\htumt(\w^t\cdot\x)| \leq b |\w^t\cdot\x|$ and $|u^t(\w^t\cdot\x)| \leq b |\w^t\cdot\x|$, with their signs determined by the sign of $\w^t \cdot \x$), and then applying Young's inequality:
    \begin{align*}
        &\quad \Ex[(\htumt(\w^t\cdot\x) - u^t(\w^t\cdot\x))^2(\w^t\cdot\x - \w^*\cdot\x)^2(1 - \1\{A\})]\\
        &\leq b^2 \Ex[(\w^t\cdot\x)^2(\w^t\cdot\x - \w^*\cdot\x)^2(1 - \1\{A\})]\\
        &\leq 2 b^2\Ex[((\w^t\cdot\x)^4 + (\w^t\cdot\x)^2(\w^*\cdot\x)^2)(1 - \1\{A\})] \;.
    \end{align*}
    Since $\D_\x$ is sub-exponential, we have $\E[(\bv\cdot\x)^8]\leq c^2/L^8$ for some absolute constant $c$, hence 
    $$\Ex[(\w^t\cdot\x)^4(1 - \1\{A\})]\leq \sqrt{\Ex[W^8((\w^t/\|\w^t\|_2)\cdot\x)^8]\pr[|\w^t\cdot\x|\geq r]}\leq cW^4\sqrt{\eps'}/L^4.$$
    Similarly, for $\E[(\w^t\cdot\x)^2(\w^*\cdot\x)^2(1 - \1\{A\})]$, we have:
    \begin{align*}
        \Ex[(\w^t\cdot\x)^2(\w^*\cdot\x)^2(1 - \1\{A\})]&\leq 2\Ex[((\w^t\cdot\x)^4+ (\w^*\cdot\x)^4)(1 - \1\{A\})]\leq 2c(W/L)^4\sqrt{\eps'}.
    \end{align*}
    Combining the inequalities above with \eqref{ineq:(htumt(wt.x) - ut(wt.x))^2(wtx-w*x)^2.1(A)}, we get the final upper bound on the variance in \eqref{eq:claim-inside-sharpness-sep}:
    \begin{align*}
        \Ex[(\htumt(\w^t\cdot\x) - u^t(\w^t\cdot\x))^2(\w^t\cdot\x - \w^*\cdot\x)^2]&\leq \frac{4W^2\eps}{L^2}\log^2(2/(L^2\eps')) + 6 cb^2(W/L)^4\sqrt{\eps'}.
    \end{align*}
    Thus, choosing $s = \eps/b$ in \eqref{eq:cheby-(htumt(wt.x)-ut(wt.x))}, $\eps' = \eps^2$, and using $m \gtrsim  W^4b^4\log^2(1/\eps)/(\eps\delta L^4)$ samples
we get 
    \begin{equation*}
        \frac{1}{ms^2}\bigg(\frac{4W^2\eps}{L^2}\log^2(2/(L\eps')) + \frac{12cb^2W^4\sqrt{\eps'}}{L^4}\bigg) \lesssim \frac{b^2L^4\eps\delta}{\eps^2W^4b^4\log^2(1/\eps)}\bigg(\frac{W^2\eps}{L^2}\log^2\bigg(\frac{1}{L\eps}\bigg) + \frac{b^2W^4\eps}{L^4}\bigg)\leq \delta \;.
    \end{equation*}
Plugging the inequality above back into \eqref{eq:cheby-(htumt(wt.x)-ut(wt.x))} and recalling that $\Ex[(\htumt(\w^t\cdot\x) - u^t(\w^t\cdot\x))^2]\leq \eps$ (from \eqref{eq:sharp-interm-0}), we finally have with probability at least $1 - \delta$,
    \begin{align*}
        &\quad \frac{1}{m} \sum_{i=1}^m (\htumt(\w^t\cdot\x\ith) - u^t(\w^t\cdot\x\ith))(\w^t - \w^*)\cdot\x\ith \\
        &\geq \Ex[(\htumt(\w^t\cdot\x) - u^t(\w^t\cdot\x))(\w^t - \w^*)\cdot\x] - \eps/b\\
        &\geq -\sqrt{\Ex[(\htumt(\w^t\cdot\x) - u^t(\w^t\cdot\x))^2]\Ex[(\w^t\cdot\x - \w^*\cdot\x)^2]} - \eps/b\\
        &\geq -\sqrt{\eps}\|\w^t - \w^*\|_2 - \eps/b,
    \end{align*}
    where in the second inequality we used the Cauchy-Schwarz inequality and in the last inequality we used the assumption that $\Ex[\x\x^\top] \preccurlyeq \mathbf{I}$. Finally, noting that \eqref{eq:sharp-interm-0} holds with probability at least $1 - \delta$, applying a union bound we get that with probability at least $1 - 2\delta$, we have
    \begin{equation*}
        \frac{1}{m} \sum_{i=1}^m (\htumt(\w^t\cdot\x\ith) - u^t(\w^t\cdot\x\ith))(\w^t - \w^*)\cdot\x\ith\geq -\sqrt{\eps}\|\w^t - \w^*\|_2 - \eps/b \;.
    \end{equation*}    
    In summary, to guarantee that the inequality above remains valid, we need the batch size to be:
    \begin{equation}\label{eq:sharp-choice-of-m}
        m \gtrsim \frac{dW^{9/2}b^4\log^4(d/(\eps\delta)}{L^4}\bigg(\frac{1}{\eps^{3/2}} + \frac{1}{\eps\delta}\bigg).
    \end{equation}
    We finished bounding the first term in \eqref{eq:decompose-(htustrmt(w.x)-htumt(w.x))(wt-ws).x)}.

    Since the same statements hold for the relationship between  $\htustrmt$ and $u^{*t}$ as they do for $\htumt$ and $u^t$, using the same argument we also get that with probability at least $1 - 2\delta$,
    \begin{equation*}
        \frac{1}{m} \sum_{i=1}^m (\htustrmt(\w^t\cdot\x\ith) - u^{*t}(\w^t\cdot\x\ith))(\w^t - \w^*)\cdot\x\ith\geq -\sqrt{\eps}\|\w^t - \w^*\|_2 - \eps/b,
    \end{equation*}
    which is the lower bound for the second term in \eqref{eq:decompose-(htustrmt(w.x)-htumt(w.x))(wt-ws).x)}.

    Lastly, for the third term in \eqref{eq:decompose-(htustrmt(w.x)-htumt(w.x))(wt-ws).x)}, since in \Cref{main:lem:upper-bound-u_t-u_t^*} we showed that for any $\w^t$ it always holds: 
    \begin{equation*}
        \Ex[(u^t(\w^t\cdot\x) - u^{*t}(\w^t\cdot\x))^2]\leq \opt,
    \end{equation*}
     the only change of the previous steps is at the right-hand side of \eqref{ineq:(htumt(wt.x) - ut(wt.x))^2(wtx-w*x)^2.1(A)}, where instead of having the upper bound of $r^2\eps$,  we have
    \begin{equation*}
        \Ex[(u^t(\w^t\cdot\x) - u^{*t}(\w^t\cdot\x))^2(\w^t\cdot\x - \w^*\cdot\x)^2\1\{A\}]\leq r^2\opt = \frac{4W^2\opt}{L^2}\log^2(2/(L^2\eps')).
    \end{equation*}
    By the same token, we have
    \begin{equation*}
        \Ex[(u^t(\w^t\cdot\x) - u^{*t}(\w^t\cdot\x))^2(\w^t\cdot\x - \w^*\cdot\x)^2(1 - \1\{A\})]\leq 6cb^2(W/L)^4\sqrt{\eps'}.
    \end{equation*}
    As a result, Chebyshev's inequality yields:
    \begin{align*}
        \pr\bigg[\bigg|&\frac{1}{m} \sum_{i=1}^m (u^t(\w^t\cdot\x\ith) - u^{*t}(\w^t\cdot\x\ith))(\w^t - \w^*)\cdot\x\ith - \Ex[(u^t(\w^t\cdot\x) - u^{*t}(\w^t\cdot\x))(\w^t - \w^*)\cdot\x]\bigg|\geq s\bigg]\nonumber\\
        &\leq \frac{1}{ms^2}\Ex[(u^t(\w^t\cdot\x) - u^{*t}(\w^t\cdot\x))^2(\w^t\cdot\x - \w^*\cdot\x)^2]\\
        &\leq \frac{1}{ms^2}\bigg(\frac{4W^2\opt}{L^2}\log^2(2/(L^2\eps')) + \frac{6cb^2W^4\sqrt{\eps'}}{L^4}\bigg).
    \end{align*}
    Now instead of choosing $s = \eps$, we let $s = (\opt + \eps)/b$ and keep $\eps'$ as $\eps^2$ to get
    \begin{align*}
        &\; \frac{1}{ms^2}\bigg(\frac{4W^2\opt}{L^2}\log^2\bigg(\frac{2}{L^2\eps'}\bigg) + \frac{12cb^2W\sqrt{\eps'}}{L^4}\bigg)\\
        \lesssim\; & \frac{b^2L^4\eps\delta}{dW^{9/2}b^4\log^4(d/(\eps\delta))(\opt + \eps)^2}\bigg(\frac{W^2\opt}{L^2}\log^2\bigg(\frac{1}{L\eps}\bigg) + \frac{b^2W^4\eps}{L^4}\bigg)\\
        \leq\; & \delta, \end{align*}
    under our choice of $m$ as specified in \eqref{eq:sharp-choice-of-m}.
Thus, we have that with probability at least $1 - \delta$, it holds
    \begin{align*}
        \frac{1}{m} \sum_{i=1}^m (u^t(\w^t\cdot\x\ith) - u^{*t}(\w^t\cdot\x\ith))(\w^t - \w^*)\cdot\x\ith &\geq  \Ex[(u^t(\w^t\cdot\x) - u^{*t}(\w^t\cdot\x))(\w^t - \w^*)\cdot\x] - (\opt + \eps)/b\\
        &\geq -\sqrt{\opt}\|\w^t - \w^*\|_2 - (\opt + \eps)/b,
    \end{align*}
    where in the last inequality we used the fact that
    \begin{align*}
        |\Ex[(u^t(\w^t\cdot\x) - u^{*t}(\w^t\cdot\x))(\w^t - \w^*)\cdot\x]|&\leq \sqrt{\Ex[(u^t(\w^t\cdot\x) - u^{*t}(\w^t\cdot\x))^2]\Ex[((\w^t - \w^*)\cdot\x)^2]}\\
        &\leq \sqrt{\opt}\|\w^t - \w^*\|_2,
    \end{align*}
    since $\Ex[(u^t(\w^t\cdot\x) - u^{*t}(\w^t\cdot\x))^2]\leq \opt$ by \Cref{main:lem:upper-bound-u_t-u_t^*}.

    Therefore, combining the upper bounds on the three terms in \eqref{eq:decompose-(htustrmt(w.x)-htumt(w.x))(wt-ws).x)}, we get that with probability at least $1 - 5\delta$, it holds:
    \begin{equation}\label{eq:claim-inside-sharp-1}
        \frac{1}{m}\sum_{i=1}^m((\htumt(\w^t\cdot\x\ith) - \htustrmt(\w^t\cdot\x\ith))(\w^t - \w^*)\cdot\x\ith\geq -(2\sqrt{\eps} + \sqrt{\opt})\|\w^t - \w^*\|_2 - (3\eps + \opt)/b.
    \end{equation}
Since \eqref{eq:claim-inside-sharp-1} was proved using arbitrary $\eps, \delta > 0,$ it remains to 
    replace $\delta \gets \delta/5$ and $\eps \gets \eps/4$ to complete the proof of \Cref{claim:sharp-inter-1}.
    \end{proof}

\subsection{Proof of \Cref{claim:sharp-inter-2}}

In this subsection, we prove \Cref{claim:sharp-inter-2} that appeared in the proof of \Cref{main:thm:sharpness} in \Cref{subsec:proof-of-sharpness}.

\claimsharptwo*
\begin{proof}
    Before we proceed to the proof of the claim, let us consider first the inverse of $u^*$. Since $u^*(z)\in\U$ is strictly increasing when $z\geq 0$, $(u^*)^{-1}(\alpha)$ exists for $\alpha\geq 0$. However, when $z\leq 0$, $u^*(z)$ could be  constant on some intervals, hence $(u^*)^{-1}(\alpha)$ might not exist for every $\alpha\leq 0$. We consider instead an `empirical' version of $(u^*)^{-1}(\alpha)$ based on $S^*$, which is defined on every $\alpha\in\R$. Given a sample set $S^* = \{(\x\ith,y\sith)\}$ where $y\sith = u^*(\w^*\cdot\x\ith)$, let us sort the index $i$ in the increasing order of $\w^*\cdot\x\ith$, i.e., $\w^*\cdot\x^{(1)}\leq \dots\leq \w^*\cdot\x^{(m)}$. Since $u^*$ is a monotone function, this implies $y\sith$'s are also in increasing order, i.e., we have $y^{*(1)}\leq \dots\leq y^{*(m)}$. We then partition the set $\{y\sith\}_{i=1}^m$ into blocks 
    \begin{equation*}
        \Delta_s = \{y^{*(k_{s-1}+1)},\dots,y^{*(k_s)}\},\;\mathrm{s.t.}\; y^{*(k_{s-1} + 1)} = \dots = y^{*(k_s)} = \tau_s,
    \end{equation*}
    for $s = 1,\dots, s'$. Since $\{y\sith\}$ is sorted in increasing order, we have $\tau_{s-1} < \tau_s$ for $s = 2,\dots,s'$. Note that since $u^*(z)$ is strictly increasing when $z \geq 0$ and as $u^*(0) = 0$, $\Delta_s$ is a singleton set whenever $\tau_s > 0$. Furthermore, let us denote by $s^*$ the largest index among $1,\dots,s'$ such that $\tau_{s^*}\leq 0$.
    
    Suppose first that $\tau_{s^*}<0$ and define a function $\hat{f}:\R\to\R$ in the following way: 
    \begin{equation}
        \hat{f}(\alpha) = \begin{cases}
            (u^*)^{-1}(\alpha), & \alpha > 0 \\
            \w^*\cdot\x^{(k_{s^*})} + \frac{\alpha - \tau_{s^*}}{\tau_{s^*}}(\w^*\cdot\x^{(k_{s^*})}), &\alpha\in [\tau_{s^*}, 0]\\
            \w^*\cdot\x^{(k_s)}, & \alpha = \tau_s, s = 1,\dots, s^* - 1\\
            \w^*\cdot\x^{(k_{s-1})} + \frac{\alpha - \tau_{s-1}}{\tau_s - \tau_{s-1}} (\w^*\cdot\x^{(k_{s-1} + 1)} - \w^*\cdot\x^{(k_{s-1})}), &\alpha\in(\tau_{s-1},\tau_s), s = 2,\dots,s^*\\
            \w^*\cdot\x^{(1)} + \frac{1}{b}(\alpha - \tau_{1})\;.  &\alpha\in (-\infty,\tau_{1})
        \end{cases}
    \end{equation}
    When $\tau_{s^*} = 0$, we define $(0 - \tau_{s^*})/\tau_{s^*} = -1$, and hence $\hat{f}(0) = 0$. The rest remains unchanged. A visualization of $\hat{f}$ with respect to the ReLU activation is presented in \Cref{fig:inver-relu}.

    \begin{figure}[h]
    \centering
    \includegraphics[width=0.5\textwidth]{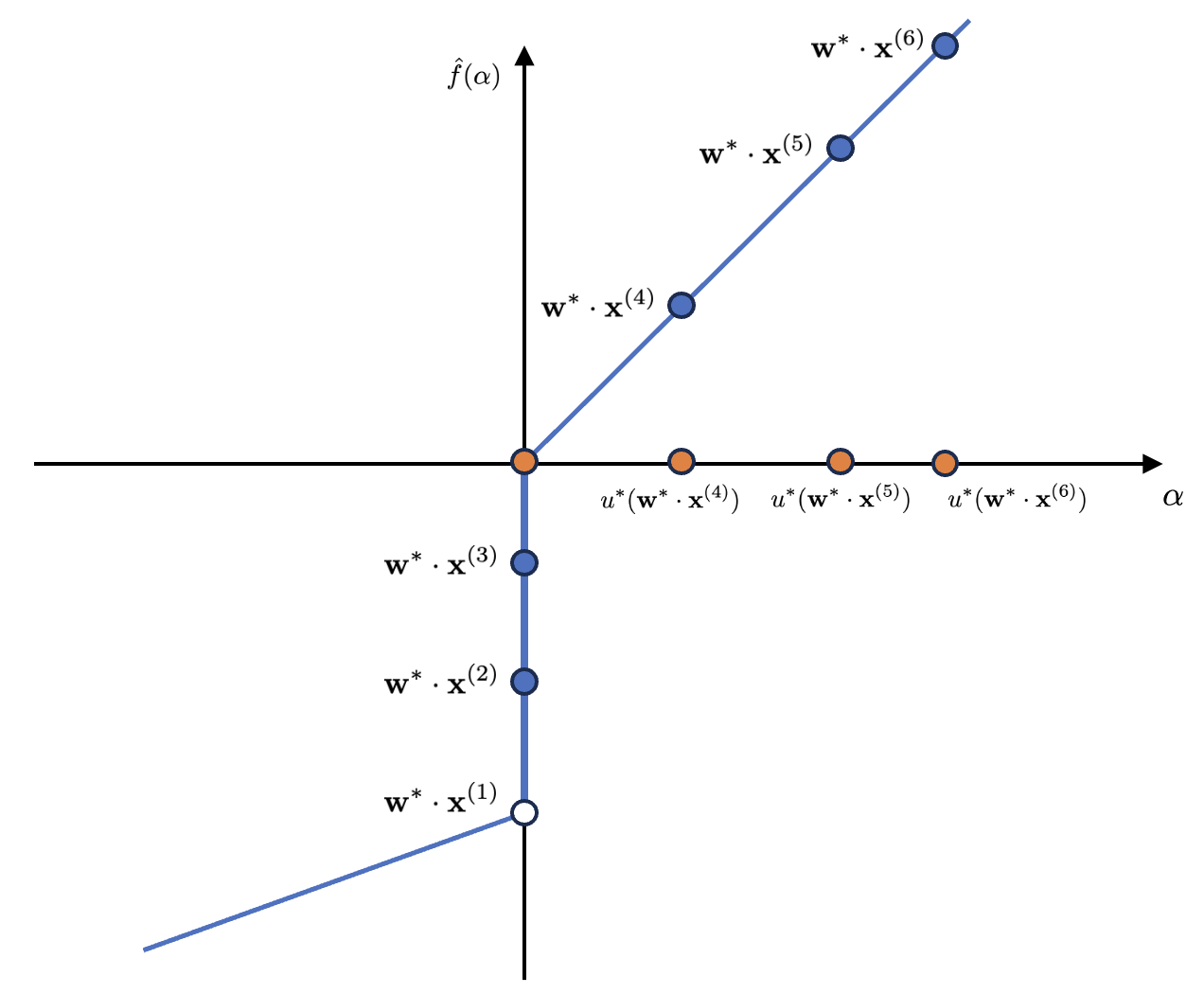}
    \caption{\textit{An illustration of $\hat{f}$ for $u^*(z) = \max\{0,z\}$ and a dataset $S^* =\{(\x^{(1)},u^*(\w^*\cdot\x^{(1)})),\dots,(\x^{(6)},u^*(\w^*\cdot\x^{(6)}))\}$ where $\w^*\cdot\x^{(1)}<\w^*\cdot\x^{(2)}<\w^*\cdot\x^{(3)}<0$.}}
    \label{fig:inver-relu}
\end{figure}

The function $\hat{f}$ has the following properties. First, $\hat{f}(\alpha)$ satisfies $\hat{f}(0) = 0$, $(\alpha_1 - \alpha_2)/a \geq \hat{f}(\alpha_1) - \hat{f}(\alpha_2)$, for all $\alpha_1\geq \alpha_2\geq 0$, since $\hat{f}(\alpha) = (u^*)^{-1}(\alpha)$ when $\alpha > 0$ and $u^*\in\U$. 
Second, $\hat{f}(\alpha_1) - \hat{f}(\alpha_2)\geq (\alpha_1 - \alpha_2)/b$ for all $\alpha_1,\alpha_2\in\R$, $\alpha_1\geq \alpha_2$. This is because each segment of $\hat{f}$ has slope at least $1/b$. 
Third, for any $\alpha \geq u^*(\w^*\cdot\x\ith)$, it holds that $\hat{f}(\alpha) - \w^*\cdot\x\ith\geq (\alpha - u^*(\w^*\cdot\x\ith))/b$. To see this, suppose $u^*(\w^*\cdot\x\ith)\in\Delta_s$. Then for any $\alpha \geq \tau_s$, we have 
$$
\hat{f}(\alpha) - \w^*\cdot\x\ith\geq \hat{f}(\alpha) - \w^*\cdot\x^{(k_s)} = \hat{f}(\alpha) - \hat{f}(\tau_s) = \hat{f}(\alpha) - \hat{f}(u^*(\w^*\cdot\x\ith))\geq (\alpha - u^*(\w^*\cdot\x\ith))/b, 
$$ 
using that the slope of $\hat{f}(\alpha)$ is at least $1/b$. 
On the other hand, when $\alpha < u^*(\w^*\cdot\x\ith)$, we have $\w^*\cdot\x\ith - \hat{f}(\alpha)\geq (u^*(\w^*\cdot\x\ith) - \alpha)/b$. This can be seen similarly from the construction of $\hat{f}$.

Finally, suppose $u^*(\w^*\cdot\x\ith)\in\Delta_s$. Then for any $\alpha < \tau_s$, we have  
$$\w^*\cdot\x\ith - \hat{f}(\alpha)\geq \w^*\cdot\x^{(k_{s-1}+1)} - \hat{f}(\alpha) = \hat{f}(\tau_s) -\hat{f}(\alpha) = \hat{f}(u^*(\w^*\cdot\x\ith)) -\hat{f}(\alpha) \geq (u^*(\w^*\cdot\x\ith) - \alpha)/b.$$ 
Again, we used the fact that $\hat{f}(\alpha_1) - \hat{f}(\alpha_2)\geq (\alpha_1 - \alpha_2)/b$ for all $\alpha_1,\alpha_2\in\R$, $\alpha_1\geq \alpha_2$ in the last inequality.
    
Now we turn to the summation displayed in the statement of the claim. To proceed, we add and subtract $\hat{f}(\htustrmt(\w^t\cdot\x))$ in the second component in the inner product, which yields:
    \begin{align}
        &\quad \frac{1}{m} \sum_{i=1}^m (\htustrmt(\w^t\cdot\x\ith) - u^*(\w^*\cdot\x\ith))(\w^t - \w^*)\cdot\x\ith\nonumber\\
        & = \frac{1}{m} \sum_{i=1}^m (\htustrmt(\w^t\cdot\x\ith) - u^*(\w^*\cdot\x\ith))(\w^t\cdot\x\ith - \hat{f}(\htustrmt(\w^t\cdot\x\ith))) \nonumber\\
        & \quad + \frac{1}{m} \sum_{i=1}^m (\htustrmt(\w^t\cdot\x\ith) - u^*(\w^*\cdot\x\ith))(\hat{f}(\htustrmt(\w^t\cdot\x\ith)) - \w^*\cdot\x\ith). \label{eq:sharp:u^*t-u*.w-w*}
    \end{align}
    To bound below the first term in \eqref{eq:sharp:u^*t-u*.w-w*}, we make use of the following fact, whose proof can be found in \Cref{app:pf:main:lem:hty*-y*.z-f(hty*)>=0}.
\begin{restatable}{fact}{positiveSum}\label{main:lem:hty*-y*.z-f(hty*)>=0}
    Let $\w^t\in\B(W)$. Given $m$ samples $S = \{(\x^{(1)},y^{*(1)}),\cdots,(\x^{(m)},y^{*(m)})\}$, let $\htustrmt$ be one of the solutions to the optimization problem $\eqref{def:htustrmt}$ , i.e., $\htustrmt\in\argmin_{u\in\U} (1/m)\sum_{i=1}^m (u(\w^t\cdot\x\ith) - y\sith)^2.$
Then 
    \begin{equation*}
        \sum_{i=1}^m (\htustrmt(\w^t\cdot\x\ith) - y\sith)(\w^t\cdot\x\ith - f(\htustrmt(\w^t\cdot\x\ith)))\geq 0,
    \end{equation*}
    for any function $f:\R\to\R$ such that $f(0) = 0$, $(\alpha_1 - \alpha_2)/a \geq f(\alpha_1) - f(\alpha_2)$ for all $\alpha_1\geq \alpha_2\geq 0$, and $f(\alpha_1) - f(\alpha_2)\geq (\alpha_1 - \alpha_2)/b$, $\forall \alpha_1,\alpha_2\in\R$, $\alpha_1\geq \alpha_2$.
\end{restatable}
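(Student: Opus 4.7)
The plan is to derive the inequality from first-order optimality of $\htustrmt$ on the convex set $\U$. Since the empirical squared loss is convex in $u$ and $\U$ is convex (as verified in \Cref{app:claim:(ut-v)(y-ut)geq0}), the minimizer $\htustrmt$ satisfies the projection-style variational inequality
\begin{equation*}
\sum_{i=1}^m (\htustrmt(\w^t\cdot\x\ith)-y\sith)\big(u(\w^t\cdot\x\ith)-\htustrmt(\w^t\cdot\x\ith)\big) \geq 0 \qquad \text{for every } u \in \U.
\end{equation*}
The strategy is to construct, for each sufficiently small $\epsilon>0$, a function $u^\epsilon \in \U$ whose values at the sample inputs match $\htustrmt(\w^t\cdot\x\ith)+\epsilon\bigl(\w^t\cdot\x\ith-f(\htustrmt(\w^t\cdot\x\ith))\bigr)$; substituting such $u^\epsilon$ into the inequality above and dividing by $\epsilon$ will then yield the claim.

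\paragraph{Feasibility of the perturbation.} Writing $z_i:=\w^t\cdot\x\ith$ and $v_i:=\htustrmt(z_i)+\epsilon(z_i-f(\htustrmt(z_i)))$, I would verify that $\{(z_i,v_i)\}$ is compatible with the pairwise constraints defining $\U$, and hence extends to some $u^\epsilon \in \U$ by piecewise-linear interpolation between consecutive samples (introducing the virtual sample $(0,0)$ if needed, with admissible tail slopes). The value-at-zero requirement $v_i=0$ when $z_i=0$ is immediate from $f(0)=0$. For any $z_i \leq z_j$, combining the one-sided slope lower bound $f(\htustrmt(z_j))-f(\htustrmt(z_i))\geq (\htustrmt(z_j)-\htustrmt(z_i))/b$ from the third hypothesis on $f$ with the $b$-Lipschitz property of $\htustrmt$ yields $v_j-v_i \leq b(z_j-z_i)$ whenever $\epsilon \leq b$. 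When in addition $z_i,z_j \geq 0$, combining the slope upper bound $f(\htustrmt(z_j))-f(\htustrmt(z_i))\leq (\htustrmt(z_j)-\htustrmt(z_i))/a$ (second hypothesis on $f$) with the strong monotonicity of $\htustrmt$ on non-negatives yields $v_j-v_i \geq a(z_j-z_i)$ whenever $\epsilon \leq a$; this simultaneously covers the non-decreasing and strong-monotonicity constraints. The remaining check is non-decreasing at pairs not both non-negative; here the upper slope bound on $f$ is unavailable, but condition~(c) forces $f$ to be continuous and strictly monotone, so its secant slopes on the bounded range $[\htustrmt(z_{\min}),\htustrmt(z_{\max})]$ are uniformly bounded by some finite constant $M$, and then
\begin{equation*}
v_j-v_i = (\htustrmt(z_j)-\htustrmt(z_i))\bigl(1-\epsilon M_{ij}\bigr) + \epsilon(z_j-z_i) \geq 0
\end{equation*}
for every $\epsilon \leq 1/M$, where $M_{ij}\leq M$ denotes the secant slope of $f$ between $\htustrmt(z_i)$ and $\htustrmt(z_j)$.

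\paragraph{Conclusion and main obstacle.} Choosing $\epsilon \in (0,\min\{a,b,1/M\}]$ then gives a valid $u^\epsilon\in\U$ realizing the prescribed sample values; plugging this into the variational inequality above gives $\epsilon \sum_i(\htustrmt(z_i)-y\sith)(z_i-f(\htustrmt(z_i))) \geq 0$, and cancelling $\epsilon>0$ yields the Fact. The main obstacle I anticipate is the non-decreasing constraint in the sign-straddling case, because the upper slope bound on $f$ holds only on the non-negative region and $f$ is otherwise permitted to be arbitrarily steep on the negative side; the key observation that circumvents this is that finiteness of the sample set bounds the relevant range of $\htustrmt$ values, so the secant slopes of $f$ that actually enter the argument are finite and a single small $\epsilon$ makes the perturbation feasible.
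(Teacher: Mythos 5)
Your proposal is correct, but it takes a genuinely different route from the paper's. The paper rewrites \eqref{def:htustrmt} as a finite-dimensional quadratic program with linear constraints (after adding the anchor point $(\vec 0,0)$), writes out the KKT conditions, applies an Abel-type summation by parts, and then argues term by term via complementary slackness that every summand is nonnegative. You instead invoke the first-order variational inequality for the minimizer over the convex class $\U$ and exhibit $z\mapsto z-f(\htustrmt(z))$, evaluated at the samples, as a feasible perturbation direction. Your pairwise checks do go through: the $b$-Lipschitz check uses only the lower slope bound on $f$ and works for $\eps\le b$; the $a$-monotonicity check on nonnegative pairs uses the upper slope bound together with $\htustrmt(z)\ge az\ge 0$ and works for $\eps\le a$; monotonicity on sign-straddling pairs holds once $\eps$ is below the reciprocal of the largest relevant secant slope; and the same computations cover pairs involving the virtual node $(0,0)$, whose perturbation is $\eps\,(0-f(0))=0$, so piecewise-linear interpolation with admissible tail slopes indeed produces $u^\eps\in\U$. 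Dividing the variational inequality by $\eps>0$ then yields the Fact. What your approach buys is a shorter, more conceptual argument that makes transparent why the hypotheses on $f$ are exactly the slope conditions needed for feasibility of the perturbation, at the price of an $\eps$ that depends on the sample (harmless, since it cancels); the paper's KKT route is longer but stays entirely within the finite-dimensional program and gives per-summand sign information without any interpolation or extension step. One small correction: the third hypothesis on $f$ forces strict monotonicity but \emph{not} continuity (upward jumps on the negative axis are allowed), so you should not justify the bound $M$ by continuity of $f$ on a bounded range; what you actually use, and all you need, is that the finitely many secant slopes $M_{ij}$ between the sampled values $\htustrmt(\w^t\cdot\x\ith)$ are finite real numbers, so their maximum is finite.
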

    As we have already argued, $\hat{f}$ satisfies the assumptions of \Cref{main:lem:hty*-y*.z-f(hty*)>=0},  
    hence 
    \begin{equation}\label{eq:sum-hatf(htumt)-wstrx}
        \frac{1}{m} \sum_{i=1}^m (\htustrmt(\w^t\cdot\x\ith) - y\sith)(\w^t - \w^*)\cdot\x\ith\geq \frac{1}{m} \sum_{i=1}^m (\htustrmt(\w^t\cdot\x\ith) - y\sith)(\hat{f}(\htustrmt(\w^t\cdot\x\ith)) - \w^*\cdot\x\ith).
    \end{equation}

    Recall that we have shown the function $\hat{f}$ satisfies $\hat{f}(\alpha) - \w^*\cdot\x\ith\geq (\alpha - u^*(\w^*\cdot\x\ith))/b\geq 0$ whenever $\alpha \geq u^*(\w^*\cdot\x\ith)$, and moreover, $\w^*\cdot\x\ith - \hat{f}(\alpha)\geq (u^*(\w^*\cdot\x\ith) - \alpha)/b\geq 0$ when $\alpha < u^*(\w^*\cdot\x\ith)$. Therefore, letting $\alpha = \htustrmt(\w^t\cdot\x\ith)$ and combining these results we get
$$
    (\htustrmt(\w^t\cdot\x\ith) - u^*(\w^*\cdot\x\ith))(f(\htustrmt(\w^t\cdot\x\ith)) - \w^*\cdot\x\ith)\geq \frac{1}{b}(\htustrmt(\w^t\cdot\x\ith) - u^*(\w^*\cdot\x\ith))^2.
    $$ 
    Plugging the inequality above back into \eqref{eq:sum-hatf(htumt)-wstrx} we then get
    \begin{equation}\label{eq:sharp-inter-2}
        \frac{1}{m} \sum_{i=1}^m (\htustrmt(\w^t\cdot\x\ith) - y\sith)(\w^t - \w^*)\cdot\x\ith\geq \frac{1}{mb} \sum_{i=1}^m (\htustrmt(\w^t\cdot\x\ith) - y\sith)^2 \;.
    \end{equation}
        The goal now is to bound below the right-hand side of \eqref{eq:sharp-inter-2} by $\E[(\htustrmt(\w^t\cdot\x) - y^*)^2]$ and some small error terms using Chebyshev inequality as we did in \Cref{claim:sharp-inter-1}. Plugging in \Cref{main:lem:lower-bound-(f(wx)-u(w*x))^2}, we can further lower bound $\E[(\htustrmt(\w^t\cdot\x) - y^*)^2]$ by $\|(\w^*)^{\perp_{\w^t}}\|_2^2$ and then we are done with the proof of this claim. Note that Chebyshev's inequality yields
    \begin{align}\label{eq:cheby-2}
        \pr\bigg[\bigg|\frac{1}{m} \sum_{i=1}^m (\htustrmt(\w^t\cdot\x\ith) - y\sith)^2 - \Ex[(\htustrmt(\w^t\cdot\x) - y^*)^2]\bigg|\geq s\bigg]\leq \frac{1}{ms^2}\Ex[(\htustrmt(\w^t\cdot\x) - y^*)^4].
    \end{align}
    
    We now bound $\Ex[(\htustrmt(\w^t\cdot\x) - y^*)^4]$. Observe that
    \begin{align}
        \Ex[(\htustrmt(\w^t\cdot\x) - y^*)^4]& = \Ex[(\htustrmt(\w^t\cdot\x) - u^{*t}(\w^t\cdot\x) + u^{*t}(\w^t\cdot\x) - y^*)^2(\htustrmt(\w^t\cdot\x) - y^*)^2]\nonumber\\
        &\leq 4\Ex[(\htustrmt(\w^t\cdot\x) - u^{*t}(\w^t\cdot\x))^2((\htustrmt(\w^t\cdot\x))^2 + (y^*)^2)]\nonumber\\
        &\quad + 4\Ex[(u^{*t}(\w^t\cdot\x) - y^*)^2((\htustrmt(\w^t\cdot\x))^2 + (y^*)^2)].\label{eq:sharp-inter-3}
    \end{align}
    We focus on the two terms in \eqref{eq:sharp-inter-3} separately. Again, choosing $r = \frac{2W}{L}\log(2/(L^2\eps'))$, then by the $L$-sub-exponential tail bound of $\D_\x$, it holds $\pr[|\w^t\cdot\x|\geq r]\leq \eps', \pr[|\w^*\cdot\x|\geq r]\leq \eps'$. Since $y^* = u^*(\w^*\cdot\x)$ and both $u^*$ and $\htustrmt$ are non-decreasing $b$-Lipschitz, it holds:
    \begin{align}\label{eq:sharp-inter-4}
        &\quad \Ex[(\htustrmt(\w^t\cdot\x) - u^{*t}(\w^t\cdot\x))^2((\htustrmt(\w^t\cdot\x))^2 + (y^*)^2)]\nonumber\\
        &\leq b^2\Ex[(\htustrmt(\w^t\cdot\x) - u^{*t}(\w^t\cdot\x))^2((\w^t\cdot\x)^2 + (\w^*\cdot\x)^2)]\nonumber\\
        & = b^2\Ex[(\htustrmt(\w^t\cdot\x) - u^{*t}(\w^t\cdot\x))^2((\w^t\cdot\x)^2 + (\w^*\cdot\x)^2)\1\{|\w^t\cdot\x|\leq r, |\w^*\cdot\x|\leq r\}]\nonumber\\
        &\quad + b^2\Ex[(\htustrmt(\w^t\cdot\x) - u^{*t}(\w^t\cdot\x))^2((\w^t\cdot\x)^2 + (\w^*\cdot\x)^2)\1\{|\w^t\cdot\x|\geq r \;\mathrm{ or }\; |\w^*\cdot\x|\geq r\}]\nonumber\\
        &\leq 2b^2r^2\Ex[(\htustrmt(\w^t\cdot\x) - u^{*t}(\w^t\cdot\x))^2]\nonumber\\
        &\quad + 2b^4\Ex[2(\w^t\cdot\x)^2((\w^t\cdot\x)^2 + (\w^*\cdot\x)^2)\1\{|\w^t\cdot\x|\geq r\; \mathrm{or}\; |\w^*\cdot\x|\geq r\}].
    \end{align}
The first term in \eqref{eq:sharp-inter-4} can be upper bounded using \Cref{main:lem:E[htustrm(wt.x) - u*t(wt.x))^2]<=eps}, which states that when 
    $$m\gtrsim d\log(1/\delta)(b^2W^3\log^2(d/\eps)/(L^2\eps))^{3/2}),$$ 
    with probability at least $1 - \delta$ it holds $\Ex[(\htustrmt(\w^t\cdot\x) - u^{*t}(\w^t\cdot\x))^2]\leq \eps$ for all $\w^t\in\B(W)$. Now suppose this inequality is valid  given $\w^t\in\B(W)$ (which happens with probability at least $1 - \delta$). For the second term in \eqref{eq:sharp-inter-4}, note that for any unit vector $\ba$ it holds $\Ex[(\ba\cdot\x)^8]\leq c^2/L^8$ for some absolute constant $c > 0$, and furthermore, the magnitude of $r$ ensures that $\pr[|\w^t\cdot\x|\geq r\; \mathrm{or}\; |\w^*\cdot\x|\geq r]\leq 2\eps'$; therefore, combining these bounds, we get:
    \begin{align*}
        &\quad \Ex[2(\w^t\cdot\x)^2((\w^t\cdot\x)^2 + (\w^*\cdot\x)^2)\1\{|\w^t\cdot\x|\geq r\; \mathrm{or}\;  |\w^*\cdot\x|\geq r\}]\\
        &\leq 2\sqrt{\Ex[(\w^t\cdot\x)^8]\pr[|\w^t\cdot\x|\geq r\; \mathrm{or}\;  |\w^*\cdot\x|\geq r]}\\
        &\quad + 2\sqrt{2(\Ex[(\w^t\cdot\x)^8] + \Ex[(\w^*\cdot\x)^8])\pr[|\w^t\cdot\x|\geq r\; \mathrm{or}\;  |\w^*\cdot\x|\geq r]}\\
        &\leq 24c(W/L)^4\sqrt{\eps'}.
    \end{align*}
Plugging back into \eqref{eq:sharp-inter-4}, we have
    \begin{equation*}
        \Ex[(\htustrmt(\w^t\cdot\x) - u^{*t}(\w^t\cdot\x))^2((\htustrmt(\w^t\cdot\x))^2 + (y^*)^2)]\leq 2b^2r^2\eps + 48c(bW/L)^4\sqrt{\eps'},
    \end{equation*}
    which is the upper bound on the first term of \eqref{eq:sharp-inter-3}.
    
    For the second term in \eqref{eq:sharp-inter-3}, since by  definition we have $u^{*t}\in\argmin_{u\in\U}\Ex[(u(\w^t\cdot\x) - y^*)^2]$, it holds that
    \begin{equation*}
        \Ex[(u^{*t}(\w^t\cdot\x) - y^*)^2]\leq \Ex[(u^*(\w^t\cdot\x) - u^*(\w^*\cdot\x))^2]\leq b^2\Ex[((\w^t - \w^*)\cdot\x)^2]\leq b^2\|\w^t - \w^*\|_2^2,
    \end{equation*}
   noting in addition that $\Ex[\x\x^\top] \preccurlyeq \mathbf{I}$. Thus, using similar steps as in \eqref{eq:sharp-inter-4}, we have
    \begin{align*}
        &\quad \Ex[(u^{*t}(\w^t\cdot\x) - y^*)^2((\htustrmt(\w^t\cdot\x))^2 + (y^*)^2)]\\
        &\leq 2b^2r^2\Ex[(u(\w^t\cdot\x) - y^*)^2]\\
        &\quad + 2b^4\Ex[2((\w^t\cdot\x)^2 + (\w^*\cdot\x)^2)^2\1\{|\w^t\cdot\x|\geq r\; \mathrm{or}\;  |\w^*\cdot\x|\geq r\}]\\
        &\leq 2b^4r^2\|\w^t - \w^*\|_2^2 + 48c(bW/L)^4\sqrt{\eps}.
    \end{align*}
    In summary, combining all the results and plugging them back into \eqref{eq:sharp-inter-3}, we finally get the upper bound for the variance:
    \begin{equation*}
        \Ex[(\htustrmt(\w^t\cdot\x) - y^*)^4]\leq \frac{32b^2W^2}{L^2}\log^2(2/(L^2\eps'))(b^2\|\w^t - \w^*\|_2^2 + \eps) + 384c(bW/L)^4\sqrt{\eps'}.
    \end{equation*}
    Let $s = b\sqrt{\eps}\|\w^t - \w^*\|_2 + \eps/b$ and plug the last inequality back into \eqref{eq:cheby-2} to get:
    \begin{align*}
        \pr\bigg[\bigg|\frac{1}{m} &\sum_{i=1}^m (\htustrmt(\w^t\cdot\x\ith) - y\sith)^2 - \Ex[(\htustrmt(\w^t\cdot\x) - y^*)^2]\bigg|\geq b\sqrt{\eps}\|\w^t - \w^*\|_2 + \eps/b\bigg]\nonumber\\
        &\leq \frac{1}{m(\eps b^2\|\w^t - \w^*\|_2^2 + \eps^2/b^2)}\bigg(\frac{32b^2W^2}{L^2}\log^2\bigg(\frac{2}{L^2\eps'}\bigg)(b^2\|\w^t - \w^*\|_2^2 + \eps) + 384c(bW/L)^4\sqrt{\eps'}\bigg).
    \end{align*}
    Choosing $\eps' = \eps^2/b^4$ and using similar arguments as in \Cref{claim:sharp-inter-1}, we get that the right-hand side of the inequality above is bounded by $\delta$, given our choice of $m \gtrsim  db^4W^{9/2}\log^4(d/(\eps\delta))(1/\eps^{3/2} + 1/(\eps\delta))$ as specified in the statement of \Cref{main:thm:sharpness}. In summary, after a union bound on the probability above and the event that $\Ex[(\htustrmt(\w^t\cdot\x) - u^{*t}(\w^t\cdot\x))^2]\leq \eps$, we have with probability at least $1 - 2\delta$, 
\begin{equation*}
        \frac{1}{m} \sum_{i=1}^m (\htustrmt(\w^t\cdot\x\ith) - y\sith)^2\geq \Ex[(\htustrmt(\w^t\cdot\x) - y^*)^2] - \sqrt{\eps}b\|\w^t - \w^*\|_2 - \eps/b.
    \end{equation*}
Recall that in \Cref{main:lem:lower-bound-(f(wx)-u(w*x))^2} we showed that $\Ex[(\htustrmt(\w^t\cdot\x) - u^*(\w^*\cdot\x))^2]\geq Ca^2LR^4\|(\w^*)^{\perp_{\w^t}}\|_2^2$ for an absolute constant $C$; thus, our final result is that with probability at least $1 - \delta$,
    \begin{align*}
        \frac{1}{m} \sum_{i=1}^m (\htustrmt(\w^t\cdot\x\ith) - y\sith)(\w^t - \w^*)\cdot\x\ith&\geq \frac{1}{mb} \sum_{i=1}^m (\htustrmt(\w^t\cdot\x\ith) - y\sith)^2\\
        &\geq \frac{Ca^2LR^4}{b}\|(\w^*)^{\perp_{\w^t}}\|_2^2 - \sqrt{\eps}\|\w^t - \w^*\|_2 - \eps/b.
    \end{align*}
    This completes the proof of \Cref{claim:sharp-inter-2}.
    \end{proof}

\subsection{Proof of \Cref{main:lem:hty*-y*.z-f(hty*)>=0}}\label{app:pf:main:lem:hty*-y*.z-f(hty*)>=0}

We prove a modified version of Lemma 1~\cite{kakade2011efficient},
presented as the statement below. The statement considers a smaller activation class and a function $f$ with different properties compared to~\cite{kakade2011efficient}, and the proof is based on a rigorous KKT argument.

\positiveSum*

\begin{proof}
    We transform the optimization problem \eqref{def:htustrmt} to a quadratic optimization problem with linear constraints. To guarantee that the solution of this quadratic problem corresponds to a function that is $(a,b)$-unbounded, we add a sample $(\x^{(k)},y^{*(k)}) = (\vec 0, 0)$ to the sample set. Let $z_i = \w^t\cdot\x\ith$ such that (after sorting the indices) $z_1\leq z_2\leq\cdots\leq z_m$ and $z_k = 0$. We solve the following optimization problem: 
    \begin{equation}\label{eq:opt-def-of-hat-y*}
    \begin{aligned}
        \min_{\ty\ith, i\in [m]} &\; \sum_{i = 1}^m (\ty\ith - y\sith)^2\\
        \mathrm{s.t.}\;\; &\; 0\leq \ty^{(i+1)} - \ty\ith ,\; &&1\leq i\leq k - 1 \; ,\\
        &\; a(z_{i+1} - z_i)\leq \ty^{(i+1)} - \ty\ith ,\; &&k\leq i\leq m-1 \; ,\\
        &\;\ty^{(i+1)} - \ty\ith \leq b(z_{i+1} - z_i), \; &&1\leq i\leq m-1 \;,\\
        &\; \ty^{(k)} = 0\;.
    \end{aligned}
    \end{equation}
    Denote the solution of \eqref{eq:opt-def-of-hat-y*} as $\hat{y}\sith$, $i = 1,\cdots,m$. Let $\htustrmt(z)$ be the linear interpolation function of $(z_i,\hat{y}\sith)$, then $\htustrmt\in\U$ since $\htustrmt(0) = \htustrmt(z_k) = \hat{y}^{*(k)} = 0$, $\htustrmt$ is $b$-Lipschitz and $\htustrmt(z) - \htustrmt(z')\geq a(z - z')$ for all $z \geq z'\geq 0$. In other words, finding a solution of \eqref{def:htustrmt} is equivalent to solving \eqref{eq:opt-def-of-hat-y*}.
    
    Now observe that the summation $\sum_{i=1}^m (\hat{y}\sith - y\sith)(z_i - f(\hat{y}\sith))$ can be transformed into the following:
    \begin{equation}\label{eq:sum-(haty-y*i)(z-f(haty))}
        \sum_{i=1}^m (\hat{y}\sith - y\sith)(z_i - f(\hat{y}\sith)) = \sum_{i=1}^m \bigg(\sum_{j=1}^i (\hat{y}^{*(j)} - y^{*(j)})\bigg)(z_i - f(\hat{y}\sith) - (z_{i+1} - f(\hat{y}^{*(i+1)}))),
    \end{equation}
    where we let $z_{m+1} = 0$, $\htystr_{m+1} = 0$ (and hence $f(\hat{y}_{m+1}^*) = 0$ as $f(0)=0$).
    
    To utilize the information that $\hat{y}\sith$ is the minimizer of the optimization problem \eqref{eq:opt-def-of-hat-y*}, we write down the KKT conditions for the optimization problem \eqref{eq:opt-def-of-hat-y*} described above:
    \begin{align}
        &\hat{y}\sith = y\sith + (\lambda'_i - \lambda'_{i-1})/2 - (\lambda_i - \lambda_{i-1})/2 - (\nu_k/2)\1\{i = k\}, \; && i = 1,\cdots, m; \label{eq:kkt-1}\\
        &-\lambda_i(\hat{y}^{*(i+1)} - \hat{y}\sith) = 0,\; && i=1,\cdots,k-1;\label{eq:csc-0.5}\\
        &\lambda_i(a(z_{i+1} - z_i) - (\hat{y}^{*(i+1)} - \hat{y}\sith)) = 0, \; && i = k,\cdots, m-1; \label{eq:csc-1}\\
        &\lambda_i'((\hat{y}^{*(i+1)} - \hat{y}^{*(i)}) - b(z_{i+1} - z_i)) = 0, \; && i = 1,\cdots, m-1; \label{eq:csc-2}\\
        &\nu_k\hat{y}^{*(k)} = 0 \;,
    \end{align}
    where $\lambda_i,\lambda_i'\geq 0,$ for $i = 1,\dots,m-1,$ and $\nu_k\in\R$ are dual variables, and we let $\lambda_{0} = \lambda'_0= 0$ for the convenience of presenting \eqref{eq:kkt-1}. 
    
    Summing up \eqref{eq:kkt-1} recursively, we immediately get that
    \begin{equation*}
        \sum_{j=1}^i (\hat{y}\sith - y\sith) =\frac{1}{2}((\lambda_i' - \lambda_i) - \nu_k\1\{i\geq k\}).
    \end{equation*}
    Plugging the equality above back into \eqref{eq:sum-(haty-y*i)(z-f(haty))}, we have
    \begin{align}\label{eq:after-kkt-1}
        &\quad\sum_{i=1}^m (\hat{y}\sith - y\sith)(z_i - f(\hat{y}\sith))\nonumber\\
        &= \frac{1}{2}\sum_{i=1}^{m} (\lambda'_i - \lambda_i) (z_i - f(\hat{y}\sith) - (z_{i+1} - f(\hat{y}^{*(i+1)}))) + \frac{1}{2}\sum_{i=k}^m \nu_k(z_i - f(\hat{y}\sith) - (z_{i+1} - f(\hat{y}^{*(i+1)})))\nonumber\\
        & = \frac{1}{2}\sum_{i=1}^{m} (\lambda'_i - \lambda_i) (z_i - f(\hat{y}\sith) - (z_{i+1} - f(\hat{y}^{*(i+1)}))) + \nu_k(z_k - f(\hat{y}^{*(k)}) - (z_{m+1} - f(\hat{y}^{*(m+1)}))).
    \end{align}
    Since by definition, $z_{m+1} = f(\hat{y}^{*(m+1)}) = 0$, $z_k = 0$, and as $\hat{y}\sith$, $i\in[m]$, is a feasible solution of \eqref{eq:opt-def-of-hat-y*}, it holds $\hat{y}^{*(k)} = 0$, we thus have
    \begin{equation*}
        \nu_k(z_k - f(\hat{y}^{*(k)}) - (z_{m+1} - f(\hat{y}^{*(m+1)}))) = 0.
    \end{equation*}
    Plugging this back into \eqref{eq:after-kkt-1}, we get
    \begin{equation}\label{eq:after-kkt-2}
    \begin{aligned}
        \sum_{i=1}^m (\hat{y}\sith - y\sith)(z_i - f(\hat{y}\sith))  &= \frac{1}{2}\sum_{i=1}^m (\lambda'_i - \lambda_i)(z_i - f(\hat{y}\sith) - (z_{i+1} - f(\hat{y}^{*(i+1)})))\\
        &= \underbrace{\frac{1}{2}\sum_{i=1}^{k-1} (\lambda'_i - \lambda_i)(z_i - f(\hat{y}\sith) - (z_{i+1} - f(\hat{y}^{*(i+1)})))}_{S_1}\\
        &\quad + \underbrace{\frac{1}{2}\sum_{i=k}^m (\lambda'_i - \lambda_i)(z_i - f(\hat{y}\sith) - (z_{i+1} - f(\hat{y}^{*(i+1)})))}_{S_2}.
        \end{aligned}
    \end{equation}
    
    Consider first $S_1$. Suppose that for some $i\in\{1,\dots,k-1\}$ we have $\lambda_i',\lambda_i>0$. Then, according to the complementary slackness condition \eqref{eq:csc-0.5} and \eqref{eq:csc-1}, it holds that $0 = \hat{y}^{*(i+1)} - \hat{y}\sith = b(z_{i+1} - z_i)$. Therefore, 
    $$
    (\lambda'_i - \lambda_i)(z_i - f(\hat{y}\sith) - (z_{i+1} - f(\hat{y}^{*(i+1)}))) \geq 0.
    $$ 
    Suppose now that for some $i\in\{1,\dots, k-1\}$, it holds $\lambda_i'>0, \lambda = 0$. Then, it must be the case that $$\hat{y}^{*(i+1)} - \hat{y}\sith = b(z_{i+1} - z_i)\geq 0,$$ according to the KKT condition \eqref{eq:csc-2}. Since $$f(\hat{y}^{*(i+1)}) - f(\hat{y}\sith)\geq (\hat{y}^{*(i+1)} - \hat{y}\sith)/b$$ by assumption on $f$, we thus have $$(z_i - f(\hat{y}\sith) - (z_{i+1} - f(\hat{y}^{*(i+1)})))\geq 0.$$ 
    Finally, if $\lambda_i>0, \lambda' = 0$, then \eqref{eq:csc-0.5} indicates that $0 = \hat{y}^{*(i+1)} - \hat{y}\sith$. Therefore, as $z_{i+1}\geq z_i$, the $i^{\mathrm{th}}$ summand is also positive. In summary, $S_1 \geq 0$.
    
    Now consider $S_2$. Observe that if for some $i\in\{k,\dots,m\}$ it holds $\lambda_i>0$ and $\lambda_i'>0$ at the same time, then KKT conditions \eqref{eq:csc-1} and \eqref{eq:csc-2} imply that $$a(z_{i+1} - z_i) = \hat{y}^{*(i+1)}- \hat{y}\sith = b(z_{i+1} - z_i),$$
    as $a<b$ and it has to be $z_{i+1} - z_i = \hat{y}^{*(i+1)}- \hat{y}\sith = 0$, which indicates that the $i^\mathrm{th}$ summand in the second term must be 0, i.e., $$(\lambda'_i - \lambda_i)(z_i - f(\hat{y}\sith) - (z_{i+1} - f(\hat{y}^{*(i+1)}))) = 0.$$
Now suppose for some $i\in\{1,\dots, m\}$, $\lambda'_i>0$ and $\lambda_i = 0$. Then by the complementary slackness conditions \eqref{eq:csc-1} and \eqref{eq:csc-2}, it must be that $$\hat{y}^{*(i+1)} - \hat{y}\sith = b(z_{i+1} - z_i)\geq 0.$$ 
    Again, since $f$ satisfies $$f(\hat{y}^{*(i+1)}) - f(\hat{y}\sith)\geq (\hat{y}^{*(i+1)} - \hat{y}\sith)/b$$ 
    for any $\hat{y}^{*(i+1)} \geq \hat{y}\sith$, we thus have $$z_i - z_{i+1} + (f(\hat{y}^{*(i+1)}) - f(\hat{y}\sith))\geq 0.$$ 
    Thus, it holds that $$(\lambda'_i - \lambda_i)(z_i - f(\hat{y}\sith) - (z_{i+1} - f(\hat{y}^{*(i+1)})))\geq 0.$$
On the other hand, if $\lambda'_i = 0$ and $\lambda_i > 0$, then complementary slackness implies that $$\hat{y}^{*(i+1)} - \hat{y}\sith = a(z_{i+1} - z_i)\geq 0.$$ 
    Furthermore, since $\hat{y}\sith\geq \hat{y}^{*(k)} \geq 0$ when $i\geq k$, using the assumption that $$(\alpha_1 - \alpha_2)/a \geq f(\alpha_1) - f(\alpha_2)$$ 
    when $\alpha_1 \geq \alpha_2\geq 0$, we get $$z_i - z_{i+1} + (f(\hat{y}^{*(i+1)}) - f(\hat{y}\sith))\leq 0,$$ and hence $$(\lambda'_i - \lambda_i)(z_i - f(\hat{y}\sith) - (z_{i+1} - f(\hat{y}^{*(i+1)})))\geq 0$$ holds as well. Thus we conclude that $S_2 \geq 0.$
    
    In summary, since each summand in \eqref{eq:after-kkt-2} is non-negative, we finally get that
    \begin{equation*}
        \sum_{i=1}^m (\hat{y}\sith - y\sith)(z_i - f(\hat{y}\sith)) = \sum_{i=1}^m \bigg(\sum_{j=1}^i (\hat{y}^{*(j)} - y^{*(j)})\bigg)(z_i - f(\hat{y}\sith) - (z_{i+1} - f(\hat{y}^{*(i+1)})))\geq 0.
    \end{equation*}
    This completes the proof of \Cref{main:lem:hty*-y*.z-f(hty*)>=0}.
\end{proof}

\subsection{Proof of \Cref{claim:sharp-inter-3}}

We restate and prove \Cref{claim:sharp-inter-3} that appeared in the proof of \Cref{main:thm:sharpness} in \Cref{subsec:proof-of-sharpness}.

\claimsharpthree*
\begin{proof}
    By Chebyshev's inequality, we can write
    \begin{align*}
        \pr\bigg[\bigg|\frac{1}{m}\sum_{i=1}^m (y\sith- y\ith)(\w^t\cdot\x\ith - \w^*\cdot\x\ith) &- \Exy[(y^* - y)(\w^t - \w^*)\cdot\x]\bigg|\geq s\bigg]\\
        &\leq \frac{\Exy[(y^* - y)^2(\w^t\cdot\x - \w^*\cdot\x)^2]}{ms^2}.
    \end{align*}
    Let $r = \frac{2W}{L}\log(2/(L^2\eps'))$, then by the fact that $\D_\x$ is sub-exponential, we have $\pr[|(\w^t - \w^*)\cdot\x|\geq r]\leq \eps'$. Furthermore, since $|y|\leq M$ where $M = \frac{bW}{L}\log(16b^4W^4/\eps^2)$, as stated in \Cref{main:lem:y-bounded-by-M}, the variance can be bounded as follows:
    \begin{align*}
        &\quad \Exy[(y^* - y)^2(\w^t\cdot\x - \w^*\cdot\x)^2]\\
        &\leq \Exy[(y^* - y)^2(\w^t\cdot\x - \w^*\cdot\x)^2\1\{|(\w^t - \w^*)\cdot\x|\leq r\}]\\
        &\quad + \Exy[(y^* - y)^2(\w^t\cdot\x - \w^*\cdot\x)^2\1\{|(\w^t - \w^*)\cdot\x|\geq r\}]\\
        &\leq r^2\Exy[(u^*(\w^*\cdot\x) - y)^2]\\
        &\quad + \Exy[(2(u^*(\w^*\cdot\x))^2 + y^2)(\w^t\cdot\x - \w^*\cdot\x)^2\1\{|(\w^t - \w^*)\cdot\x|\geq r\}]\\
        &\leq r^2\opt + \Ex[2(b^2(\w^t\cdot\x)^2 + M^2)(\w^t\cdot\x - \w^*\cdot\x)^2\1\{|(\w^t - \w^*)\cdot\x|\geq r\}].
    \end{align*}
    Since for any unit vectors $\ba,\bb$ we have $\Ex[(\ba\cdot\x)^4]\leq c^2/L^4$ and $\Ex[(\ba\cdot\x)^4(\bb\cdot\x)^4]\leq c^2/L^8$, we have:
    \begin{align*}
        &\quad 2b^2\Ex[(\w^t\cdot\x)^2(\w^t\cdot\x - \w^*\cdot\x^2)^2\1\{|(\w^t - \w^*)\cdot\x|\geq r\}]\\
        &\leq 4b^2(W/L)^4\sqrt{\Ex[((\w^t/\|\w^t\|_2)\cdot\x)^4(((\w^t - \w^*)/\|\w^t - \w^*\|_2)\cdot\x)^4]\pr[|(\w^t - \w^*)\cdot\x|\geq r]}\\
        &\leq 4cb^2(W/L)^4\sqrt{\eps'},
    \end{align*}
    and in addition,
    \begin{align*}
        &\quad \Ex[M^2((\w^t - \w^*)\cdot\x)^2\1\{|(\w^t - \w^*)\cdot\x|\geq r\}]\\
        &\leq 2M^2W^2\sqrt{\Ex[((\w^t - \w^*)\cdot\x)^4]\pr[|(\w^t - \w^*)\cdot\x|\geq r]} \leq cM^2(W/L)^2\sqrt{\eps'}.
    \end{align*}
    Let $s = (\opt + \eps)/b$, $\eps' = \eps^2$, under our choice of $m \gtrsim  db^4W^{9/2}\log^4(d/(\eps\delta))(1/\eps^{3/2} + 1/(\eps\delta))$, it holds that
    \begin{equation*}
        \frac{1}{ms^2}\bigg(\frac{4W^2\log^2(1/(L^2\eps'))\opt}{L^2} + (4cb^2(W/L)^4 + cM^2(W/L)^2)\sqrt{\eps'}\bigg)\leq \delta. \end{equation*}
    Thus, with probability at least $1 - \delta$ it holds that
    \begin{equation*}
        \frac{1}{m}\sum_{i=1}^m (y\sith- y\ith)(\w^t\cdot\x\ith - \w^*\cdot\x\ith)\geq \Exy[(y-y^*)(\w^t - \w^*)\cdot\x] - (\opt + \eps)/b.
    \end{equation*}
    Since 
    \begin{equation*}
        \bigg|\Exy[(y-y^*)(\w^t - \w^*)\cdot\x]\bigg|\leq \sqrt{\Exy[(y-y^*)^2]\Ex[((\w^t - \w^*)\cdot\x)^2]}\leq \sqrt{\opt}\|\w^* - \w^t\|_2,
    \end{equation*}
    we finally have
    \begin{equation*}\frac{1}{m}\sum_{i=1}^m (y\sith- y\ith)(\w^t\cdot\x\ith - \w^*\cdot\x\ith)\geq - \sqrt{\opt}\|\w^* - \w^t\|_2 - (\opt + \eps)/b,
    \end{equation*}
    completing the proof of \Cref{claim:sharp-inter-3}.
    \end{proof}

\section{Omitted Proofs from \Cref{sec:optimization}}\label{app:optimization}

\subsection{Proof of \Cref{thm:fast-converge-main}}\label{app:pf:thm:fast-converge-main}
In this subsection, we restate and prove our main theorem \Cref{thm:fast-converge-main}. The full version of the optimization algorithm as well as the main theorem \Cref{thm:fast-converge-main} is displayed below:
\begin{algorithm}[ht]
   \caption{Optimization}
   \label{alg:optimization}
\begin{algorithmic}[1]
   \STATE {\bfseries Input:} $\w^{\mathrm{ini}} = \vec 0$; $\eps>0$; positive parameters: $a$, $b$, $L$, $R$, $W$; let $\mu \lesssim a^2LR^4/b$; step size $\eta = \mu/(4b^2)$, number of iterations $T = O((b/\mu)^2\log(1/\eps))$. \STATE $\{\w^\mathrm{ini}_0,\dots,\w^\mathrm{ini}_{t_0}\} = \text{Initialization}[\w^{\mathrm{ini}}]$ (\Cref{alg:initialization})
\FOR{$k = 0$ {\bfseries to} $t_0 \lesssim (b/\mu)^6\log(b/\mu)$} \label{line:outestloop}
\STATE $\mathcal{P}_k = \{\}$
\FOR{$j = 1$ {\bfseries to} $J = W/(\eta\sqrt{\eps})$}
\STATE $\barw^{0}_{j,k} = \w^\mathrm{ini}_k$.
\STATE $\frkwstr_j = j\eta\sqrt{\eps}$. \hfill {\tt $\triangleright$ find an $\eta\sqrt{\eps}$ approximation of $\|\w^*\|_2$}
\FOR{$t=0$ {\bfseries to} $T-1$}
\STATE $\htw^t_{j,k} = \frkwstr_{j}(\barw^t_{j,k}/\|\barw^t_{j,k}\|_2)$. \hfill {\tt $\triangleright$ normalize $\barw$}\label{line:initialize-w-update}
\STATE Draw $m \gtrsim W^{11/2}b^{17} \log^5(d/\eps) d/(L^4\mu^{12}\eps^{3/2})$ new i.i.d. samples from $\D$
\STATE $\hat{u}^t_{j,k} = \argmin_{u\in\U}(1/m)\sum_{i=1}^m (u(\htw^t_{j,k}\cdot\x\ith) - y\ith)^2$.
\STATE $\nabla\htLsur(\htw^t_{j,k};\hat{u}^t_{j,k}) = (1/m)\sum_{i=1}^m (\hat{u}^t_{j,k}(\htw^t_{j,k}\cdot\x\ith) - y\ith)\x\ith$. \label{line:empirical-grad}
\STATE $\barw^{t+1}_{j,k} = \htw^t_{j,k} - \eta\nabla\htLsur(\htw^t_{j,k};\hat{u}^t_{j,k})$
\ENDFOR
\STATE $\mathcal{P}_k\gets \mathcal{P}_k\cup \{(\htw^{T}_{j,k}; \hat{u}^T_{j,k})\}$. \ENDFOR
\STATE $\mathcal{P} = \cup_{k=1}^{t_0} \mathcal{P}_k\cup\{(\w = 0; u(z) = 0)\}$ \label{line:P_k} 
\ENDFOR
\STATE $(\htw; \hat{u}) = \text{Test}[(\w;u)\in\mathcal{P}]$ (\Cref{alg:testing})\label{line:testing} \hfill {\tt $\triangleright$ testing}\STATE {\bfseries Return:} $(\htw; \hat{u})$ \end{algorithmic}
\end{algorithm}

\begin{theorem}[Main Result]\label{app:thm:fast-converge-main}
Let $\D$ be a distribution in $\R^d\times \R$ and suppose that $\D_\x$ is $(L,R)$-well-behaved. Furthermore, let $\U$ be as in \Cref{def:well-behaved-unbounded-intro}, and $\eps> 0$. 
    Let $\mu = Ca^2LR^4/b$, where $C$ is an absolute constant. Running \Cref{alg:optimization} with the following parameters: step size $\eta = \mu/(4b^2)$, batch size to be
        $m \gtrsim dW^{11/2}b^{17}\log^5(d/\eps)/(L^4\mu^{12}\eps^{3/2})$ and the total number of iterations to be $T' = t_0JT = O({Wb^{11}}/{(\mu^{10}\sqrt{\eps})}\log(1/\eps))$, where $T = O((b/\mu)^2\log(1/\eps))$, then with probability at least $2/3$,  \Cref{alg:optimization} returns a hypothesis $(\hat{u},\htw)$ where $\hat{u}\in\U$ and $\htw\in\B(W)$
 such that
    \begin{equation*}
        \Ltwo(\htw;\hat{u}) = O\bigg(\frac{b^4}{a^4L^2R^{8}}\bigg) \opt + \eps\;,
    \end{equation*}
   using $N = O(T'm) = \Tilde{O}(dW^{13/2}b^{28}/(L^4\mu^{22}\eps^2))$ samples.
\end{theorem}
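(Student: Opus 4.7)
The overall plan is to combine the alignment sharpness guarantee (\Cref{main:thm:sharpness}) with a telescoping analysis of the normalized gradient-descent update, using the initialization (\Cref{main:lem:initialization}) to bootstrap geometric (rather than $1/T$) contraction, and finally invoking the testing procedure (\Cref{main:lem:testing}) to identify a good candidate from the list $\mathcal P$. Before entering the main argument, I would dispose of a degenerate case: when $\|\w^*\|_2 \lesssim b^3(\sqrt{\opt}+\sqrt{\eps})/\mu^4$, the trivial hypothesis $(\vec 0, u\equiv 0)$ already achieves $\Ltwo(\vec 0;0) = \E[y^2] \leq 2\opt + 2\,\E[u^*(\w^*\cdot\x)^2] \lesssim \opt + b^2\|\w^*\|_2^2 = O(\opt) + \eps$. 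Since this pair is explicitly inserted into $\mathcal P$ on line~\ref{line:P_k} of \Cref{alg:optimization}, the testing step will return a hypothesis of the required quality.

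For the complementary regime, \Cref{main:lem:initialization} provides some outer-loop index $k^*$ with $\|(\w^*)^{\perp_{\w^{\mathrm{ini}}_{k^*}}}\|_2 \leq \mu\|\w^*\|_2/(4b)$, and the grid $\{\frkwstr_j = j\eta\sqrt{\eps}\}$ of width $\eta\sqrt{\eps}$ contains some $j^*$ with $|\frkwstr_{j^*} - \|\w^*\|_2| \leq \eta\sqrt{\eps}$. I would fix this pair $(k^*,j^*)$, drop the subscripts, and study the inner loop. Introduce the auxiliary iterate $\w^t := \|\w^*\|_2\,\barw^t/\|\barw^t\|_2$ that lives on the true sphere, and set $\bv^t := (\w^*)^{\perp_{\htw^t}}$; note that $\htw^t$ differs from $\w^t$ only in its norm, by at most $\eta\sqrt{\eps}$.

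The heart of the proof is an inductive contraction claim: whenever $\rho_t := \|\bv^t\|_2/\|\w^*\|_2 \leq \mu/(4b)$ and $\|\bv^t\|_2 \gtrsim (\sqrt{\opt}+\sqrt{\eps})/\mu$, I would show
\[
\|\barw^{t+1}-\w^*\|_2^2 \leq (1-c\mu^2/b^2)\|\bv^t\|_2^2 \quad\text{and}\quad \rho_{t+1}\leq \mu/(4b).
\]
The proof would expand
\[
\|\barw^{t+1}-\w^*\|_2^2 = \|\htw^t-\w^*\|_2^2 + \eta^2\|\nabla\htLsur(\htw^t;\htumt)\|_2^2 - 2\eta\,\nabla\htLsur(\htw^t;\htumt)\cdot(\htw^t-\w^*),
\]
bound the middle term by \Cref{main:cor:bound-norm-empirical-grad}, and lower bound the inner product by \Cref{main:thm:sharpness}, which contributes the crucial term $-2\eta\mu\|\bv^t\|_2^2$. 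Using that $\htw^t$ and $\w^*$ lie on (nearly) the same sphere, $\|\htw^t-\w^*\|_2^2 \leq (1+\rho_t^2)\|\bv^t\|_2^2 + O(\eta\sqrt{\eps}\|\bv^t\|_2)$, and substituting $\eta = \mu/(4b^2)$, all error terms are strictly dominated by the sharpness gain so long as $\|\bv^t\|_2$ remains above the floor $(\sqrt{\opt}+\sqrt{\eps})/\mu$. The main obstacle I expect is the bookkeeping required to maintain the invariant $\rho_{t+1}\leq \mu/(4b)$ across the re-normalization step $\htw^{t+1} = \frkwstr_{j^*}(\barw^{t+1}/\|\barw^{t+1}\|_2)$; this requires checking via elementary spherical geometry that the decrease in $\|\barw^{t+1}-\w^*\|_2$ dominates the rotational drift, since $\|\bv^{t+1}\|_2 \leq \|\barw^{t+1}-\w^*\|_2$ when $\rho_{t+1}\leq 1$.

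Iterating the contraction for $T = O((b/\mu)^2\log(1/\eps))$ steps drives $\|\bv^T\|_2$ to the noise floor $O((\sqrt{\opt}+\sqrt{\eps})/\mu)$, so $\|\htw^T-\w^*\|_2^2 \leq (1+\rho_T^2)\|\bv^T\|_2^2 + O(\eta^2\eps) = O(\opt)+\eps$. \Cref{main:lem:L2-error-upbd-||w - w^*||^2} then bounds the population $L_2^2$ error of the pair $(\htw^T,\hat u^T) \in \mathcal P$ by $O(\opt)+\eps$. Applying the testing step (\Cref{main:lem:testing}), which with high probability selects a hypothesis from $\mathcal P$ whose empirical truncated risk is within $2\eps$ of its true risk uniformly over the $O(t_0 J)$ candidates, returns $(\htw,\hat u)$ with the claimed error. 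A union bound across the at most $T' = t_0 J T = \tilde O(Wb^{11}/(\mu^{10}\sqrt{\eps}))$ sharpness/gradient-norm events (each tuned with failure probability $\delta/T'$ by inflating $m$ logarithmically) and the testing step yields the desired $2/3$ success probability, and the total sample complexity $mT' = \tilde O(d/\eps^2)$ follows.
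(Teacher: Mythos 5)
Your proposal follows essentially the same route as the paper: dispose of the small-$\|\w^*\|_2$ regime via the trivial hypothesis in $\mathcal P$, fix the good initialization index $k^*$ and grid index $j^*$, expand $\|\barw^{t+1}-\w^*\|_2^2$, combine \Cref{main:thm:sharpness} with \Cref{main:cor:bound-norm-empirical-grad}, exploit the spherical normalization to convert $\|\w^t-\w^*\|_2$ into $\|\bv^t\|_2$ under the invariant $\rho_t\leq\mu/(4b)$, and finish with \Cref{main:lem:L2-error-upbd-||w - w^*||^2}, the testing step, and a union bound. The decomposition, the key lemmas, and the induction are all the ones the paper uses.

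There is, however, one step where your argument as written would not go through: you assert that iterating the contraction for $T$ steps ``drives $\|\bv^T\|_2$ to the noise floor.'' The contraction claim only applies while $\|\bv^t\|_2\gtrsim(\sqrt{\opt}+\sqrt{\eps})/\mu$; once the iterate first dips below that threshold at some $t^*<T$, the sharpness lower bound no longer dominates the error terms and $\|\bv^{t}\|_2$ can increase again on subsequent steps. Since \Cref{alg:optimization} stores only the \emph{final} iterate $(\htw^T,\hat u^T)$ in $\mathcal P$ (not the intermediate ones), establishing that $\min_{t\le T}\|\bv^t\|_2$ is small is not enough — you must bound $\|\bv^T\|_2$ itself. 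The paper closes this with a separate one-step stability argument: using the gradient-norm bound and the grid error, $\|\bv^{t^*+1}\|_2^2\leq 12\|\bv^{t^*}\|_2^2+2(\opt+\eps)$, so an iterate below the floor can only overshoot it by an absolute constant factor before the contraction re-engages, whence $\|\bv^T\|_2\lesssim(\sqrt{\opt}+\sqrt{\eps})/\mu$. You should add this (or, alternatively, modify the argument to test all $T$ inner iterates); otherwise the rest of your outline, including the probability and sample-complexity accounting, matches the paper's proof.
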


\begin{proof}As proved in \Cref{main:lem:initialization}, the initialization subroutine \Cref{alg:initialization} outputs a {list of points $\{\w^{\mathrm{ini}}_{k}\}_{k=1}^{t_0}$ that contains a point $\w^{\mathrm{ini}}_{k^*}$} such that $$\|(\w^*)^{\perp_{\w^{\mathrm{ini}}_{k^*}}}\|_2\leq \max\{\mu\|\w^*\|_2/(4b), {64b^2}{/\mu^3}(\sqrt{\opt} + \sqrt{\eps})\}.$$ 
Suppose first that $\mu\|\w^*\|_2/(4b)\leq {64b^2}{/\mu^3}(\sqrt{\opt} + \sqrt{\eps})$. Then this implies that $\|\w^*\|_2\leq 256b^3/\mu^4(\sqrt{\opt} + \sqrt{\eps})$. Therefore, applying \Cref{main:lem:L2-error-upbd-||w - w^*||^2} we immediately get that the trivial hypothesis $(\w = 0, u(z) = 0)$ works as a constant approximate solution, as in this case $$\Ltwo(\w;u)\leq 8(\opt + \eps) + 4b^2\|\w^*\|_2 = O((b/\mu)^8)\opt + \eps.$$ 
This hypothesis $(\w = 0, u(z) = 0)$ is contained in our solution set $\mathcal{P}$ (see \Cref{line:P_k}) and tested in \Cref{alg:testing}.

Thus, in the rest of the proof we assume that $\w^{\mathrm{ini}}_{k^*}$ satisfies $$\|(\w^*)^{\perp_{\w^{\mathrm{ini}}_{k^*}}}\|_2\leq \mu\|\w^*\|_2/(4b).$$ 
{Let us consider this initialized parameter at $k^*$ step in the outer loop (line \ref{line:outestloop}), $\barw^0_{j,k^*} = \w^{\mathrm{ini}}_{k^*}$. In the rest of the proof we drop  the subscript $k^*$ since the context is clear. }

Since we constructed a grid with grid width  $\eta\sqrt{\eps}$ from $0$ to $W$ to find the (approximate) value of $\|\w^*\|_2$, there must exist an index $j^*$ such that the value of  $\frkwstr_{j^*}$ is $\eta\sqrt{\eps}$ close to $\|\w^*\|_2$, i.e., $|\frkwstr_{j^*} - \|\w^*\|_2|\leq \eta\sqrt{\eps}$. We now consider this $j^{*\mathrm{th}}$ outer loop and ignore the subscript $j^*$ for simplicity. Let $\w^t = \|\w^*\|_2(\barw^t/\|\barw^t\|_2)$, which is the true normalized vector of $\barw^t$ that has no error.
    
    We study the squared distance between $\barw^{t+1}$ and $\w^*$:
    \begin{align}
        \|\barw^{t+1} - \w^*\|_2^2 &= \|\htw^t - \eta\nabla\htLsur(\htw^t;\htumt) - \w^*\|_2^2 \nonumber\\
        &= \|\htw^t - \w^*\|_2^2 + \eta^2\|\nabla\htLsur(\htw^t;\htumt)\|_2^2 - 2\eta \nabla\htLsur(\htw^t;\htumt)\cdot(\htw^t - \w^*).\label{eq:||w^t+1 - w^*||-bound-1}
    \end{align}
    Applying \Cref{main:cor:bound-norm-empirical-grad} to \eqref{eq:||w^t+1 - w^*||-bound-1}, and plugging in \Cref{main:thm:sharpness}, we get that when drawing 
    \begin{equation}\label{eq:batch-size-1}
        m \gtrsim \frac{dW^{9/2}b^4\log^4(d/(\eps\delta))}{L^4}\bigg(\frac{1}{\eps^{3/2}} + \frac{1}{\eps\delta}\bigg),\end{equation}
    samples from the distribution, it holds with probability at least $1 - \delta$ that:
    \begin{equation}\label{eq:||w^t+1 - w^*||-bound-1.5}
    \begin{aligned}
        \|\barw^{t+1} - \w^*\|_2^2 &\leq \|\htw^t - \w^*\|_2^2 + \eta^2(10(\opt + \eps) + 4b^2\|\htw^t - \w^*\|_2^2)  \\
        &\quad + 2\eta(2(\opt + \eps)/b + 2(\sqrt{\opt} + \sqrt{\eps})\|\htw^t - \w^*\|_2 - \mu\|\bv^t\|_2^2),
    \end{aligned}
    \end{equation}
    where $\mu = Ca^2LR^4/b$ with $C$ being an absolute constant, and where $\bv^t$ is the component of $\w^*$ that is orthogonal to $\htw^t$, i.e., 
    $$\bv^t = \w^* - (\w^*\cdot\htw^t)\htw^t/\|\htw^t\|_2^2 = (\w^*)^{\perp_{\htw^t}}. $$
    Note that $\|\bv^t\|_2$ is invariant to the rescaling of $\htw^t$, in other words, $\w^*$ has the same orthogonal component $\bv^t$ for all $\barw^t$, $\w^t$ and $\htw^t$.

    Since $\|\htw^t - \w^t\|_2 \leq \eta\sqrt{\eps}$, we have 
    \begin{equation}\label{ineq:||hwt-w*||-upbd}
    \|\htw^t - \w^*\|_2^2 = \|\htw^t - \w^t + \w^t - \w^*\|_2^2 \leq \|\w^t - \w^*\|_2^2 + \eta^2\eps + 2\eta\sqrt{\eps}\|\w^t - \w^*\|_2.
    \end{equation}
    In addition, by triangle inequality we have $\|\htw^t - \w^*\|_2\leq \|\w^t - \w^*\|_2 + \eta\sqrt{\eps}$. Therefore, substituting $\w^t$ with $\htw^t$ in \eqref{eq:||w^t+1 - w^*||-bound-1.5}, we get:
    \begin{align}\label{eq:||w^t+1 - w^*||-bound-2}
        \|\barw^{t+1} - \w^*\|_2^2 &\leq \|\w^t - \w^*\|_2^2 + \eta^2\eps + 2\eta\sqrt{\eps}\|\w^t - \w^*\|_2 \nonumber\\
        &\quad + \eta^2(10(\opt + \eps) + 4b^2\|\w^t - \w^*\|_2^2 + 4b^2\eta^2\eps + 8b^2\eta\sqrt{\eps}\|\w^t - \w^*\|_2)  \nonumber\\
        &\quad + 2\eta(2(\opt + \eps)/b + 2(\sqrt{\opt} + \sqrt{\eps})(\|\w^t - \w^*\|_2+\eta\sqrt{\eps}) - \mu\|\bv^t\|_2^2)\nonumber\\
        &\leq \|\w^t - \w^*\|_2^2 + \eta^2(24(\opt + \eps) + 4b^2\|\w^t - \w^*\|_2^2)\nonumber\\
        &\quad + 2\eta(2(\opt + \eps)/b + 4(\sqrt{\opt} + \sqrt{\eps})\|\w^t - \w^*\|_2 - \mu\|\bv^t\|_2^2),
    \end{align}
    where we used $4b^2\eta^2\leq 1$, which holds because $\eta = \mu/(4b^2)$. 

    Our goal is to show that $\|\bv^{t+1}\|_2^2 \leq \|\barw^{t+1} - \w^*\|_2^2\leq (1 - c)\|\bv^t\|_2^2 + \eps$, where $c\in(0,1)$ is a constant and $\eps$ is a small error parameter. However, this linear contraction can only be obtained when $\|\bv^t\|_2$ is relatively small compared to $\|\w^*\|_2$. Specifically, as will be manifested in \Cref{main:claim:||w^t+1 - w^*||-up-bound} and the proceeding proof, the linear contraction is achieved only when $\|\bv^t\|_2\leq \mu\|\w^*\|_2/(4b)$. Luckily, we can start with a $\bv^0$ such that this condition is satisfied, due to the initialization subroutine \Cref{alg:initialization}, as proved in \Cref{main:lem:initialization}. We prove the following claim.

    \begin{claim}\label{main:claim:||w^t+1 - w^*||-up-bound}
        Let $\eta = \mu/(4b^2)$. Then, under the assumptions of \Cref{thm:fast-converge-main}, with probability at least $1 - \delta,$ we have
        \begin{equation*}
            \|\barw^{t+1} - \w^*\|_2^2\leq \bigg(1 - \frac{\mu^2}{32b^2}\bigg)\|\bv^t\|_2^2,
        \end{equation*}
        whenever $\|\bv^t\|_2\geq (96/\mu)(\sqrt{\opt} + \sqrt{\eps}).$
    \end{claim}
    \begin{proof}[Proof of~\Cref{main:claim:||w^t+1 - w^*||-up-bound}]
    Since the norm of $\w^t$ is normalized to $\w^*$, the quantity $\|\w^t - \w^*\|^2$ is controlled by $\|\bv^t\|_2^2$. In particular, let $\w^* = \alpha_t\w^t + \bv^t$. Then, since $\bv^t\perp\w^t$, we have $\|\w^*\|_2^2 = \alpha_t^2\|\w^t\|_2^2 + \|\bv^t\|_2^2 = \alpha_t^2\|\w^*\|_2^2 + \|\bv^t\|_2^2,$
    thus, $\alpha_t^2 = 1 - \|\bv^t\|_2^2/\|\w^*\|_2^2$, and $\|\bv^t\|_2^2 = (1 - \alpha_t^2)\|\w^*\|_2^2$. In addition, $\|\w^t - \w^*\|_2^2$ can be expressed as a function of $\alpha_t$ and $\w^*$, as
    \begin{equation}\label{eq:||wt - w*||-as-alpha_t&||w*||}
        \|\w^t - \w^*\|_2^2 = (1 - \alpha_t)^2\|\w^*\|_2^2 + \|\bv^t\|_2^2 = 2(1 - \alpha_t)\|\w^*\|_2^2.
    \end{equation} 
    Note that since $\alpha_t = \sqrt{1 - \|\bv^t\|_2^2/\|\w^*\|_2^2}$, denoting $\rho_t = \|\bv^t\|_2/\|\w^*\|_2$, we further have:
    \begin{equation}\label{eq:alpha_t}
        1 - \alpha_t = 1 - \sqrt{1 - \|\bv^t\|_2^2/\|\w^*\|_2^2} = 1 - \sqrt{1 - \rho_t^2} \leq \frac{1}{2}\rho_t^2 + \frac{1}{2}\rho_t^4\leq \rho_t^2, \;\forall\rho_t\in[0,1].
\end{equation}
    
    Therefore, plugging \eqref{eq:||wt - w*||-as-alpha_t&||w*||} and \eqref{eq:alpha_t} back into \eqref{eq:||w^t+1 - w^*||-bound-2}, we get:
    \begin{align}
        \|\barw^{t+1} - \w^*\|_2^2&\leq 2(1 - \alpha_t)\|\w^*\|_2^2 + 4b^2\eta^2(2(1 - \alpha_t)\|\w^*\|_2^2) + 8\eta(\sqrt{\opt} + \sqrt{\eps})\sqrt{2(1 - \alpha_t)}\|\w^*\|_2 \nonumber\\
        &\quad - 2\eta\mu\|\bv^t\|_2^2 + 24\eta^2(\opt+\eps) + 4\eta(\opt + \eps)/b \nonumber\\
        &\leq (\rho_t^2 + \rho_t^4)\|\w^*\|_2^2 + 4b^2\eta^2(\rho_t^2 + \rho_t^4)\|\w^*\|_2^2 + 8\sqrt{2}\eta(\sqrt{\opt} + \sqrt{\eps})\rho_t\|\w^*\|_2 \nonumber\\
        &\quad - 2\eta\mu\|\bv^t\|_2^2  + 24\eta^2(\opt+\eps) + 4\eta(\opt + \eps)/b \nonumber\\
        & = (1 + \rho_t^2 + 4b^2\eta^2(1 + \rho_t^2))\|\bv^t\|_2^2 + 12\eta(\sqrt{\opt} + \sqrt{\eps})\|\bv^t\|_2 - 2\eta\mu\|\bv^t\|_2^2 \nonumber\\
        &\quad + 4(6\eta^2 + \eta/b)(\opt+\eps)\nonumber\\
        &\leq (1 + \rho_t^2 + 4b^2\eta^2(1 + \rho_t^2))\|\bv^t\|_2^2 + 12\eta(\sqrt{\opt} + \sqrt{\eps})\|\bv^t\|_2 - 2\eta\mu\|\bv^t\|_2^2 + 5\eta(\opt + \eps),\label{eq:||w^t+1 - w^*||-bound-3}
    \end{align}
    where in the last inequality we observed that since $\eta = \frac{\mu}{4b^2}$, it holds that $24\eta\leq 1$, as $\mu$ is small and $b\geq 1$.
    
Note that we have assumed that $\|\bv^t\|_2\geq (96/\mu)(\sqrt{\opt} + \sqrt{\eps})$, which indicates
    \begin{equation*}
        12\eta(\sqrt{\opt} + \sqrt{\eps})\|\bv^t\|_2\leq \frac{1}{8}\eta\mu\|\bv^t\|_2^2,
    \end{equation*}
    since $b\geq 1$ was assumed without loss of generality. Furthermore, when $\|\bv^t\|_2\geq (96/\mu)(\sqrt{\opt} + \sqrt{\eps})$, it also holds that
    \begin{equation*}
         \frac{1}{8}\eta\mu\|\bv^t\|_2^2\geq \frac{(96)^2}{8\mu^2}\eta\mu(\opt + \eps)\geq 5\eta(\opt + \eps),
    \end{equation*}
    since we have assumed $\mu = Ca^2LR^4/b \leq 1$ without loss of generality.
    Finally, as we will show in the rest of the proof, it holds that $\|\bv^{t+1}\|_2\leq \|\bv^t\|_2$ for $t = 0,1,\dots, T$, thus as $\eta = \mu/(4b^2)$, we have $\|\bv^t\|_2\leq \sqrt{\eta\mu}\|\w^*\|_2/2 = \mu\|\w^*\|_2/(4b)$, since $ \|\bv^0\|_2\leq\sqrt{\eta\mu}\|\w^*\|_2/2$. This condition guarantees that
    \begin{equation*}
        \rho_t^2 = \|\bv^t\|_2^2/\|\w^*\|_2^2\leq \frac{1}{4}\eta\mu.
    \end{equation*}
    Plugging these conditions back into \eqref{eq:||w^t+1 - w^*||-bound-3}, it is then simplified as (note that $1 + \rho_t^2\leq 1+ (1/4)\eta\mu\leq 9/8$ for $\eta\mu\leq 1/2$):
    \begin{equation*}
        \|\barw^{t+1} - \w^*\|_2^2\leq \bigg(1 + \frac{9}{2}b^2\eta^2 - \frac{3}{2}\eta\mu\bigg)\|\bv^t\|_2^2.
    \end{equation*}
    Therefore, when $\eta = \mu/(4b^2)$ we have
    \begin{equation*}
        \|\barw^{t+1} - \w^*\|_2^2\leq \bigg(1 - \frac{\mu^2}{32b^2}\bigg)\|\bv^t\|_2^2,
    \end{equation*}
    completing the proof.
    \end{proof}
    
    We proceed first under the condition that $\|\bv^t\|_2\geq (96/\mu)(\sqrt{\opt} + \sqrt{\eps})$ holds for $t = 0,\dots,T$ and show that after some certain number of iterations $T$ this condition must be violated. Observe that if $\|\bv^t\|_2\leq (96/\mu)(\sqrt{\opt} + \sqrt{\eps})$, then it holds $\|\w^t - \w^*\|_2^2\lesssim (1/\mu^2)(\opt + \eps)$, implying that $\htumt(\w^t\cdot\x)$ is a hypothesis achieving constant approximation error according to \Cref{main:lem:L2-error-upbd-||w - w^*||^2}, hence the algorithm can be terminated. However, note that $T$ only works as an upper bound for the iteration complexity of our algorithm, and it is possible that the condition $\|\bv^t\|_2\geq (96/\mu)(\sqrt{\opt} + \sqrt{\eps})$ is violated at some step $t^*<T$. However, as we show later,  the value of $\|\bv^T\|_2$ cannot be larger than $c\|\bv^{t^*}\|_2$,  where $c$ is an absolute constant.
We observe that:
    \begin{align*}
        \bv^{t+1} = \w^* - (\w^*\cdot\w^{t+1})\w^{t+1}/\|\w^{t+1}\|_2^2 = \w^* - (\w^*\cdot\barw^{t+1})\barw^{t+1}/\|\barw^{t+1}\|_2^2 = (\w^*)^{\perp_{\barw^{t+1}}},
    \end{align*}
    therefore, $\|\bv^{t+1}\|_2^2\leq \|\barw^{t+1} - \w^*\|_2^2$, which, combined with \Cref{main:claim:||w^t+1 - w^*||-up-bound}, yields
    \begin{equation*}
        \|\bv^{t+1}\|_2^2\leq \bigg(1 - \frac{\mu^2}{32b^2}\bigg)\|\bv^t\|_2^2\leq \bigg(1 - \frac{\mu^2}{32b^2}\bigg)^t\|\bv^0\|_2^2\leq \exp\bigg(-\frac{\mu^2t}{32b^2}\bigg)2W^2.
    \end{equation*}
    The above contraction only holds when $\|\bv^t\|_2\geq (96/\mu)(\sqrt{\opt} + \sqrt{\eps})$. Hence, after at most
    \begin{equation*}
        T = O\bigg(\frac{b^2}{\mu^2}\log\bigg(\frac{\mu W}{\eps}\bigg)\bigg)
    \end{equation*}
    inner iterations, the algorithm outputs a vector $\w^{t^*}$ with $\|\bv^{t^*}\|_2\leq \frac{96}{\mu}(\sqrt{\opt} + \sqrt{\eps})$, where $t^*\in[T]$.

    Now suppose that at step $t^* < T$ it holds that $\|\bv^{t^*}\|_2\leq 96(\sqrt{\opt} + \sqrt{\eps})/\mu$ but at the next iteration $\|\bv^{t^* + 1}\|_2\geq 96(\sqrt{\opt} + \sqrt{\eps})/\mu$. Recall first that in \Cref{main:cor:bound-norm-empirical-grad} we showed that $\|\nabla\htLsur(\htw^t;\htumt)\|_2^2\leq 4b^2\|\htw^t - \w^*\|_2^2 + 10 (\opt + \eps)$. Therefore, revisiting the updating scheme of the algorithm we have
    \begin{align*}
        \|\bv^{t^*+1}\|_2^2&\leq \|\barw^{t^*+1} -\w^*\|_2^2 = \|\htw^{t^*} - \eta\nabla\htLsur(\htw^{t^*};\hat{u}^{t^*}) - \w^*\|_2^2\\
        &\leq 2\|\htw^{t^*} -\w^*\|_2^2 + 2\eta^2\|\nabla\htLsur(\htw^{t^*};\hat{u}^{t^*})\|_2^2\\
        &\leq (2 + 8b^2\eta^2)\|\htw^{t^*} -\w^*\|_2^2 + 20\eta^2(\opt + \eps)\\
        &\leq 3\|\htw^{t^*} -\w^*\|_2^2 + (\opt + \eps),
    \end{align*}
    where in the last inequality we plugged in the value of $\eta = \mu/(4b^2)$, and used the assumption that $\mu\leq 1$ and $b\geq 1$, hence $20\eta^2\leq 1$ and $8b^2\eta^2\leq 1$. Furthermore, recall that by the construction of the grid, $\|\htw^{t^*} - \w^t\|_2\leq \eta\sqrt{\eps}$, implying that $\|\htw^{t^*} - \w^*\|_2^2\leq 2\|\w^{t^*} - \w^*\|_2^2 + 2\eta^2\eps$ by triangle inequality. Therefore, going back to the inequality of $\|\bv^{t^*+1}\|_2^2$ above, we get
    \begin{align*}
        \|\bv^{t^*+1}\|_2^2 &\leq 6\|\w^{t^*} - \w^*\|_2^2 + 6\eta^2\eps + \opt + \eps\leq 6\|\w^{t^*} - \w^*\|_2^2 + 2(\opt + \eps).
    \end{align*}
    Finally, observe that since $\|\w^{t^*}\|_2 = \|\w^*\|_2$, it holds $\|\w^{t^*} - \w^*\|_2\leq \sqrt{2}\|\bv^{t^*}\|_2$, hence, we get
    \begin{equation*}
        \|\bv^{t^*+1}\|_2^2\leq 12\|\bv^{t^*}\|_2^2 + 2(\opt + \eps).
    \end{equation*}
    Now since $\|\bv^{t^* + 1}\|_2\geq 96(\sqrt{\opt} + \sqrt{\eps})/\mu$, the value of $\|\bv^t\|_2^2$ will start to decrease again for $t\geq t^*+1$. This implies that the value of $\|\bv^T\|_2$ satisfies
    \begin{equation*}
        \|\bv^T\|_2\leq \sqrt{12}\|\bv^{t^*}\|_2 + \sqrt{2}(\sqrt{\opt} + \sqrt{\eps})\leq \frac{384}{\mu}(\sqrt{\opt} + \sqrt{\eps}).
    \end{equation*}

    Combining \Cref{main:lem:L2-error-upbd-||w - w^*||^2} and \Cref{main:lem:E[htutm(wt.x) - ut(wt.x))^2]<=eps}, as we have guaranteed that $\|\bv^{T}\|_2\leq (384/\mu)(\sqrt{\opt} + \sqrt{\eps})$, the hypothesis $\hat{u}^T(\htw^T\cdot\x)$ has the $L_2^2$ error that can be bounded as:
    \begin{equation*}
        \Ltwo(\htw^T;\hat{u}^T)\leq 6\opt + 3b^2(4\|\bv^T\|_2^2 + \eta^2\eps) + \eps = O\bigg(\frac{b^2}{\mu^2}(\opt + \eps)\bigg).
    \end{equation*}
    {For any $\eps_1>0$, setting $\eps = C'(\mu^2/b^2)\eps_1$ with $C'$ being some small universal absolute constant,} we finally get $\Ltwo(\htw^T;\hat{u}^T)\leq O((b^2/\mu^2)\opt) + \eps_1$.

    It still remains to determine the batch size as drawing a sample set of size $m$ as displayed in \eqref{eq:batch-size-1} only guarantees that the contraction of $\|\bv^t\|_2$ at step $t$ holds with probability $1 - \delta$. Applying a union bound on all {$t_0JT = O(\frac{Wb^{10}}{\mu^{9}\sqrt{\eps}}\log(1/\eps)) = O(\frac{Wb^{11}}{\mu^{10}\sqrt{\eps_1}}\log(1/\eps_1))$} iterations yields that the contraction holds at every step with probability at least $1 - t_0JT\delta$. Therefore, setting $\delta \gets \delta(t_0JT)$ and bringing the value of $\delta$ back to \eqref{eq:batch-size-1}, we get that it suffices to choose the batch size as:
    \begin{equation*}
        m = \Theta\bigg(\frac{dW^{9/2}b^4\log^4(d/(\eps\delta))}{L^4}\bigg(\frac{1}{\eps^{3/2}} + \frac{Wb^{10}}{\mu^9\eps^{3/2}\delta}\bigg)\bigg) = \Theta\bigg(\frac{dW^{11/2}b^{17}\log^5(d/(\eps_1\delta))}{L^4\mu^{12}\delta\eps_1^{3/2}}\bigg),
    \end{equation*}
    to guarantee that we get an {$O(\opt) + \eps_1$}-solution with probability at least $1 - \delta$. {Note that we have set $\eps = C'(\mu^2/b^2)\eps_1$ in the last equality above.}
    
    The argument above justifies the claim that among all $t_0J = Wb^7\log(b/\mu)/(\eta\mu^7\sqrt{\eps_1})$ hypotheses in $\mathcal{P} = \{(\htw^T_j; \hat{u}^T_{j})\}_{j=1}^{t_0 J}$, there exists at least one hypothesis that achieves $L_2^2$ error {$O(\opt) + \eps_1$}. To select the correct hypothesis from the set $\mathcal{P}$, one only needs to draw a new batch of $m' = \tilde{\Theta}(b^4W^4\log(1/\delta)/(L^4\eps_1^2))$ i.i.d.\ samples from $\D$, and choose the hypothesis from $\mathcal{P}$ that achieves the minimal empirical error defined in \Cref{line:testing}. As discussed in \Cref{subsec:testing}, this procedure introduces an error at most $\epsilon_1.$

In conclusion, it holds by a union bound that \Cref{alg:optimization} delivers a solution with $O(\opt) + \eps_1$ error with probability at least $1 - 2\delta$. The total sample complexity of our algorithm is 
$$N = t_0JTm + m' = \Theta\bigg(\frac{W^{13/2}b^{28}d\log^5(d/(\eps_1\delta))}{L^4\mu^{22}\delta\eps_1^2} + \frac{b^4W^4\log(1/\delta)\log^5(1/\eps_1)}{L^4\eps_1^2}\bigg) = \Theta\bigg(\frac{W^{13/2}b^{28}d\log^6(d/(\eps_1\delta))}{L^4\mu^{22}\delta\eps_1^2}\bigg).$$

Choosing $\delta = 1/6$ above we get that the \Cref{alg:optimization} {succeeds to generate an $O(\opt) + \eps_1$-solution for any $\eps_1>0$ with probability at least $1 - 2\delta = 2/3$, hence replacing $\eps_1$ with $\eps$} completes the proof of \Cref{app:thm:fast-converge-main}.
\end{proof}

\subsection{Proof of \Cref{main:cor:bound-norm-empirical-grad}}\label{app:pf:main:cor:bound-norm-empirical-grad}
This subsection is devoted to the proof of \Cref{main:cor:bound-norm-empirical-grad}. To this aim, we first show the following lemmas that bound from above the norm of the population gradient $\nabla\Lsur(\w^t;\htumt)$ and the difference between the population gradient and the empirical gradient $\nabla\htLsur(\w^t;\htumt)$.

\begin{restatable}{lemma}{normPopulationGrad}\label{lem:upbd-||nabla Lusr(wt)||}
    Let $S$ be a sample set of $m$ i.i.d.\ samples of size at least $m \gtrsim d\log^4(d/(\eps\delta))(b^2W^3/L^2\eps)^{3/2}$. 
Furthermore, given $\w^t\in\B(W)$, let $\htumt$ be defined as in \eqref{def:htumt}. Then, it holds that with probability at least $1 - \delta$,\begin{equation*}
    \|\nabla\Lsur(\w^t;\htumt)\|_2^2\leq 8(\opt + \eps) + 2b^2\|\w^t - \w^*\|_2^2.
    \end{equation*}
\end{restatable}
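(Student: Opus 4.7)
The plan is to reduce the bound on $\|\nabla\Lsur(\w^t;\htumt)\|_2^2$ to a bound on the population $L_2^2$ loss $\Ltwo(\w^t;\htumt) = \E[(\htumt(\w^t\cdot\x) - y)^2]$, and then to control the latter via the empirical optimality of $\htumt$ combined with the triangle inequality. Recall that $\nabla\Lsur(\w;u) = \E[(u(\w\cdot\x) - y)\x]$. For any unit vector $\bv$, Cauchy--Schwarz gives
\[
|\bv \cdot \nabla\Lsur(\w^t;\htumt)| \le \sqrt{\E[(\htumt(\w^t\cdot\x)-y)^2]\,\E[(\bv\cdot\x)^2]}.
\]
Since $\E[\x\x^\top] \preccurlyeq \mathbf{I}$ by our normalization, $\E[(\bv\cdot\x)^2] \le 1$ uniformly over unit $\bv$; taking the supremum over $\|\bv\|_2 = 1$ yields the deterministic reduction $\|\nabla\Lsur(\w^t;\htumt)\|_2^2 \le \Ltwo(\w^t;\htumt)$, so it suffices to bound the population $L_2^2$ loss.

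To control $\Ltwo(\w^t;\htumt)$, I would use that $\htumt$ minimizes the empirical $L_2^2$ loss over $\U$ for the fixed $\w^t$, so in particular its empirical risk is no larger than that of the comparator link $u^*$ evaluated along the same direction $\w^t$. A uniform convergence statement over the class $\{u(\w\cdot\x) : u \in \U,\, \w \in \B(W)\}$ --- which is the source of the stated sample-complexity requirement and is established in the appendix's uniform convergence toolkit --- then transfers this sample inequality to the population with an $O(\eps)$ additive error: with probability at least $1 - \delta$,
\[
\E[(\htumt(\w^t\cdot\x) - y)^2] \le \E[(u^*(\w^t\cdot\x) - y)^2] + O(\eps).
\]

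To finish, I would apply the elementary inequality $(\alpha + \beta)^2 \le 2\alpha^2 + 2\beta^2$ after adding and subtracting $u^*(\w^*\cdot\x)$:
\[
\E[(u^*(\w^t\cdot\x) - y)^2] \le 2\,\E[(u^*(\w^t\cdot\x) - u^*(\w^*\cdot\x))^2] + 2\,\E[(u^*(\w^*\cdot\x) - y)^2].
\]
The second expectation equals $\opt$ by definition, and for the first, $b$-Lipschitzness of $u^*$ together with $\E[\x\x^\top] \preccurlyeq \mathbf{I}$ gives $\E[(u^*(\w^t\cdot\x) - u^*(\w^*\cdot\x))^2] \le b^2 \|\w^t - \w^*\|_2^2$. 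Chaining the three bounds yields $\|\nabla\Lsur(\w^t;\htumt)\|_2^2 \lesssim \opt + \eps + b^2\|\w^t - \w^*\|_2^2$, matching the stated form up to absolute constants. The main technical obstacle is the uniform convergence step at the specified sample rate: because the labels $y$ need not be bounded, it requires a careful truncation argument (using the sub-exponential concentration guaranteed by \Cref{def:bounds}) on top of a covering/Rademacher analysis for the class of $b$-Lipschitz monotone links composed with linear predictors on $\B(W)$, and this is precisely what the appendix's uniform convergence results supply.
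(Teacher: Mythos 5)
Your proof is correct, and it takes a cleaner route than the paper's. The paper pushes a four-way decomposition \emph{inside} the gradient inner product, writing $\htumt(\w^t\cdot\x)-y$ as $(\htumt - u^t)+(u^t-u^{*t})+(u^{*t}-u^*(\w^*\cdot\x))+(u^*(\w^*\cdot\x)-y)$ and applying Cauchy--Schwarz to each piece; this requires not only the uniform convergence bound $\Ex[(\htumt-u^t)^2]\le\eps$ but also the separate structural lemma that $\Ex[(u^t-u^{*t})^2]\le\opt$ (\Cref{main:lem:upper-bound-u_t-u_t^*}), and it introduces the idealized noiseless activation $u^{*t}$. You instead apply Cauchy--Schwarz once to get the deterministic reduction $\|\nabla\Lsur(\w^t;\htumt)\|_2^2\le \Ltwo(\w^t;\htumt)$ and then bound the population loss by the ERM-plus-comparator argument: uniform convergence (\Cref{main:lem:E[htutm(wt.x) - ut(wt.x))^2]<=eps}) gives $\Ltwo(\w^t;\htumt)\le \Exy[(u^t(\w^t\cdot\x)-y)^2]+\eps\le \Exy[(u^*(\w^t\cdot\x)-y)^2]+\eps$, and the final step is Lipschitzness plus $\Ex[\x\x^\top]\preccurlyeq \mathbf{I}$. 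This is essentially \Cref{main:lem:L2-error-upbd-||w - w^*||^2} prepended with the gradient-to-loss reduction; it bypasses $u^{*t}$ and \Cref{main:lem:upper-bound-u_t-u_t^*} entirely, relies on the same uniform convergence lemma at the same sample size, and in fact yields slightly sharper constants ($2\opt+2b^2\|\w^t-\w^*\|_2^2+O(\eps)$ versus the paper's $8\opt$), so it implies the stated bound.
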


\begin{proof}
    By the definition of $\ell_2$ norms, we have:
    \begin{align*}
        \|\nabla\Lsur(\w^t;\htumt)\|_2 &= \max_{\|\bv\|_2 = 1}\nabla\Lsur(\w^t;\htumt)\cdot\bv\\
        &=\max_{\|\bv\|_2 = 1} \Exy[(\htumt(\w^t\cdot\x) - y)\bv\cdot\x]\\
        &=\max_{\|\bv\|_2 = 1}\bigg\{\Exy[(\htumt(\w^t\cdot\x) - u^t(\w^t\cdot\x) + u^t(\w^t\cdot\x) - u^{*t}(\w^t\cdot\x))(\bv\cdot\x)]\\
        &\quad +\Exy[(u^{*t}(\w^t\cdot\x) - u^*(\w^*\cdot\x) + u^*(\w^*\cdot\x)-y)(\bv\cdot\x)]\bigg\}.
    \end{align*}
    By the Cauchy-Schwarz inequality, we further have:
    \begin{align*}
     &\quad \|\nabla\Lsur(\w^t;\htumt)\|_2\\
     &\leq \max_{\|\bv\|_2 = 1}\bigg\{\sqrt{\Ex[(\htumt(\w^t\cdot\x) - u^t(\w^t\cdot\x))^2]\Ex[(\bv\cdot\x)^2]} + \sqrt{\Ex[(u^t(\w^t\cdot\x) - u^{*t}(\w^t\cdot\x))^2]\Ex[(\bv\cdot\x)^2]} \\
        &\quad \quad  + \sqrt{\Ex[(u^{*t}(\w^t\cdot\x) - u^*(\w^*\cdot\x))^2]\Ex[(\bv\cdot\x)^2]}  + \sqrt{\Ex[(u^*(\w^*\cdot\x)-y)^2]\Ex[(\bv\cdot\x)^2]}\bigg\}\\
        &\leq \underbrace{\sqrt{\Ex[(\htumt(\w^t\cdot\x) - u^t(\w^t\cdot\x))^2]}}_{\mathcal{T}_1} + \underbrace{\sqrt{\Ex[(u^t(\w^t\cdot\x) - u^{*t}(\w^t\cdot\x))^2]}}_{\mathcal{T}_2}  \\
        &\quad + \underbrace{\sqrt{\Ex[(u^{*t}(\w^t\cdot\x) - u^*(\w^*\cdot\x))^2]}}_{\mathcal{T}_3} + \underbrace{\sqrt{\Ex[(u^*(\w^*\cdot\x)-y)^2]}}_{\mathcal{T}_4},
    \end{align*}
    where in the last inequality we used the assumption that $\Ex[\x\x^\top] \preccurlyeq \mathbf{I}$, hence $\Ex[(\bv\cdot\x)^2] \leq 1$. It remains to bound $\mathcal{T}_1 - \mathcal{T}_4$. Observe first that  $\mathcal{T}_1\leq \sqrt{\eps}$ for every $\w^t\in\B(W)$, with probability at least $1 - \delta$, due to \Cref{main:lem:E[htutm(wt.x) - ut(wt.x))^2]<=eps}. By definition, $\mathcal{T}_4 = \sqrt{\opt}$. Recall that in \Cref{main:lem:upper-bound-u_t-u_t^*} we showed the following $\mathcal{T}_2^2 = \Ex[(u^t(\w^t\cdot\x) - u^{*t}(\w^t\cdot\x))^2]\leq \opt$. For $\mathcal{T}_3$, note that $u^{*t}\in\argmin_{u\in\U}\Ex[(u(\w^t\cdot\x) - u^*(\w^*\cdot\x))^2]$, therefore, since $u^*\in\U$, we have 
    \begin{equation*}
        \mathcal{T}_3^2 = \Ex[(u^{*t}(\w^t\cdot\x) - u^*(\w^*\cdot\x))^2]\leq \Ex[(u^*(\w^t\cdot\x) - u^*(\w^*\cdot\x))^2]\leq b^2\|\w^t - \w^*\|_2^2,
    \end{equation*}
    after applying the assumption that $u^*$ is $b$-Lipschitz. Thus, in conclusion, we have $$\|\nabla\Lsur(\w^t;\htumt)\|_2\leq 2\sqrt{\opt} + \sqrt{\eps} + b\|\w^t - \w^*\|_2.$$ 
    Furthermore, since $(a+b)^2\leq 2a^2+2b^2$ for any $a, b\in\R$, we get with probability at least $1 - \delta$:
    \begin{equation*}
        \|\nabla\Lsur(\w^t;\htumt)\|_2^2\leq 8\opt + 8\eps + 2b^2\|\w^t - \w^*\|_2^2,
    \end{equation*}
    completing the proof of \Cref{lem:upbd-||nabla Lusr(wt)||}.
\end{proof}

We now prove that the distance between $\nabla\Lsur(\w^t;\htumt)$ and $\nabla\htLsur(\w^t;\htumt)$ is bounded by $b^2\|\w^t - \w^*\|_2^2 + \opt + \eps$ with high probability.

\begin{restatable}{lemma}{normDiffEmpiricalPopulationGrad}\label{lem:empirical-grad-close-to-expectation}
Let $S$ be a sample set of $m \gtrsim  (dW^{9/2}b^4\log^4(d/(\eps\delta))/L^4)(1/\eps^{3/2} + 1/(\eps\delta))$ i.i.d.\ samples. 
Given a vector $\w^t\in\B(W)$, it holds that with probability at least $1 - \delta$,\begin{equation*}
        \|\nabla\htLsur(\w^t;\htumt) - \nabla\Lsur(\w^t;\htumt)\|_2\leq \sqrt{b^2\|\w^t - \w^*\|_2^2 + \opt + \eps}.
    \end{equation*}
\end{restatable}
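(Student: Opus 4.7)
\textbf{Proof plan for \Cref{lem:empirical-grad-close-to-expectation}.}
The main difficulty is that $\htumt$ is itself a function of the sample $S$, so one cannot apply concentration to $\nabla\htLsur(\w^t;\htumt)$ directly as a sum of i.i.d.\ vectors. My plan is to route through the population-optimal activation $u^t$ defined in \eqref{def:ut} via the triangle decomposition
\begin{align*}
    \nabla\htLsur(\w^t;\htumt) - \nabla\Lsur(\w^t;\htumt)
    &= \underbrace{\nabla\htLsur(\w^t;\htumt) - \nabla\htLsur(\w^t;u^t)}_{T_1}\\
    &\quad + \underbrace{\nabla\htLsur(\w^t;u^t) - \nabla\Lsur(\w^t;u^t)}_{T_2}
    + \underbrace{\nabla\Lsur(\w^t;u^t) - \nabla\Lsur(\w^t;\htumt)}_{T_3},
\end{align*}
and bound $\|T_i\|_2$ separately.

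For $T_1$ and $T_3$, I would write each norm as $\max_{\|\bv\|_2=1}\bv\cdot T_i$ and apply Cauchy--Schwarz. Both reduce to controlling the $L_2^2$ distance between $\htumt$ and $u^t$, either in the empirical or population sense, multiplied by $\Ex[(\bv\cdot\x)^2]\leq 1$ (recall $\E[\x\x^\top]\preceq\mathbf{I}$). For the population piece, \Cref{main:lem:E[htutm(wt.x) - ut(wt.x))^2]<=eps} in the appendix gives $\Ex[(\htumt(\w^t\cdot\x) - u^t(\w^t\cdot\x))^2]\leq \eps$ with high probability under the stated sample size. For the empirical piece, a uniform convergence argument over the parametric class $\U$ of monotone $b$-Lipschitz activations (standard covering of the one-dimensional class obtained by projecting data onto $\w^t$, combined with truncation at radius $r = O((W/L)\log(1/\eps))$ to handle the $(1/L)$-sub-exponential tail of $\D_\x$ coming from \Cref{def:bounds}) shows that the empirical $L_2^2$ distance is also $\leq \eps$ up to constants. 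This yields $\|T_1\|_2, \|T_3\|_2 \lesssim \sqrt{\eps}$ with high probability.

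For $T_2$, the activation $u^t$ is now fixed and independent of the sample, so $T_2$ is an i.i.d.\ average. I would bound its $\ell_2$ norm via a covering of the unit sphere combined with a pointwise Chebyshev/Bernstein argument: for each fixed $\bv$, $\bv\cdot T_2$ is an average of $m$ bounded (after truncation) scalar variables whose variance is at most $\Exy[(u^t(\w^t\cdot\x) - y)^2(\bv\cdot\x)^2]$, which, after using Cauchy--Schwarz along the $\x$-direction and the sub-exponential concentration to handle the tail of $(\bv\cdot\x)^2$, is dominated by $\Exy[(u^t(\w^t\cdot\x) - y)^2]$ up to log factors in $1/L$. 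Now by triangle inequality in $L_2$ applied to the chain $u^t(\w^t\cdot\x), u^{*t}(\w^t\cdot\x), u^*(\w^*\cdot\x), y$, together with \Cref{main:lem:upper-bound-u_t-u_t^*} ($\Ex[(u^t-u^{*t})^2]\leq \opt$), the $L_2^2$-projection property of $u^{*t}$ ($\Ex[(u^{*t}(\w^t\cdot\x)-u^*(\w^*\cdot\x))^2]\leq b^2\|\w^t-\w^*\|_2^2$ using that $u^*\in\U$ is $b$-Lipschitz), and $\Exy[(u^*(\w^*\cdot\x)-y)^2] = \opt$, one obtains $\Exy[(u^t(\w^t\cdot\x) - y)^2]\lesssim \opt + b^2\|\w^t - \w^*\|_2^2$. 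Union-bounding over an $\eps$-net of the unit sphere in $\R^d$ and choosing $m$ as in the statement then gives $\|T_2\|_2 \lesssim \sqrt{(\opt + b^2\|\w^t-\w^*\|_2^2 + \eps)}$ with probability at least $1-\delta$.

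Summing the three bounds and squaring yields the stated inequality. The main technical obstacle is the sample dependence of $\htumt$, which forces the uniform-convergence detour over $\U$ in controlling $T_1$, and the fact that the $(1/L)$-sub-exponential rather than sub-Gaussian tails of $\D_\x$ require careful truncation at radius $r = \widetilde{O}(W/L)$ inside the covering argument for $T_2$; these are exactly what produces the $\poly(W, 1/L, b)$ factors and the $d/\eps^{3/2} + d/(\eps\delta)$ sample size in the hypothesis.
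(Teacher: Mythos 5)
Your proposal is essentially correct, but it takes a genuinely different route from the paper. The paper's proof applies Chebyshev's inequality \emph{directly} to the vector deviation $\nabla\htLsur(\w^t;\htumt) - \nabla\Lsur(\w^t;\htumt)$, using $\E[\|\sum_j \bz_j\|_2^2] = \sum_j\E[\|\bz_j\|_2^2]$ for centered independent summands and then bounding the per-sample variance $\Exy[\|(\htumt(\w^t\cdot\x)-y)\x\|_2^2]$ coordinate-by-coordinate via truncation at radius $r$, the label bound $|y|\leq M$, and the same chain $\htumt\to u^t\to u^{*t}\to u^*(\w^*\cdot\x)\to y$ that you use (invoking \Cref{main:lem:E[htutm(wt.x) - ut(wt.x))^2]<=eps}, \Cref{main:lem:upper-bound-u_t-u_t^*}, and the projection property of $u^{*t}$); this yields the $1/(\eps\delta)$ term in the sample size. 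Your three-term decomposition through the fixed population activation $u^t$ buys something the paper's argument quietly skips: since $\htumt$ depends on the entire sample $S$, the summands in the paper's Chebyshev step are not actually independent, whereas your $T_2$ is a genuine i.i.d.\ average and the sample-dependence of $\htumt$ is isolated in $T_1$ and $T_3$, where it is handled by Cauchy--Schwarz plus uniform convergence over $\U$. The price is extra machinery: for $T_1$ you additionally need the empirical second-moment matrix to satisfy $\frac{1}{m}\sum_i \x\ith(\x\ith)^\top \preccurlyeq O(1)\cdot\mathbf{I}$ uniformly over directions (standard for sub-exponential marginals at $m\gtrsim d\,\mathrm{polylog}$, but worth stating), and an empirical-$L_2^2$ analogue of \Cref{main:lem:E[htutm(wt.x) - ut(wt.x))^2]<=eps}, which the paper only proves at the population level; and for $T_2$ your net over the sphere has $e^{\Theta(d)}$ points, so Chebyshev alone cannot survive the union bound --- you must use the Bernstein branch after truncation, as you indicate. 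Finally, summing the three bounds produces a constant in front of the square root, which you absorb by rescaling $\eps$; this is harmless for the downstream use in \Cref{main:cor:bound-norm-empirical-grad}.
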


\begin{proof}
Since for any zero-mean independent random variable $\bz_j$, we have $\E[||\littlesum_{j} \bz_j||_2^2] = \littlesum_{j}\E[\|\bz_j\|_2^2]$, by Chebyshev's inequality:
    \begin{equation}\label{eq:markov-||htlsur - lsur||<=eps}
        \pr[\|\nabla \htLsur(\w^t;\htumt) - \nabla\Lsur(\w^t;\htumt)\|_2\geq s]\leq \frac{1}{ms^2}\Exy[\|(\htumt(\w^t\cdot\x) - y)\x\|_2^2] \;.
    \end{equation}
   By linearity of expectation,  we have:
    \begin{align*}
        \Exy[\|(\htumt(\w^t\cdot\x) - y)\x\|_2^2]= \sum_{k=1}^d \Exy[(\htumt(\w^t\cdot\x) - y)^2(\x_k)^2], 
    \end{align*}
    where $\x_k = \e_k\cdot\x$ and $\e_k$ is the $k^\mathrm{th}$ unit basis of $\R^d$. Let $r = O(W/L\log(1/(L\eps')))$, then it holds $\pr[|\x_k|\geq r]\leq \eps'$. Then, the variance above can be decomposed into the following parts:
    \begin{align*}
        \Exy[(\htumt(\w^t\cdot\x) - y)^2\x_k^2] &= \Exy[(\htumt(\w^t\cdot\x) - y)^2\x_k^2\1\{|\x_k|\geq r\}]\\
        &\quad + \Exy[(\htumt(\w^t\cdot\x) - y)^2\x_k^2\1\{|\x_k|\leq r\}].
    \end{align*}
    Since $|y|\leq M = O(bW/L\log(bW/\eps))$, and $\Ex[(\w^t\cdot\x)^4\x_k^4]\leq W^4c^2/L^8$, $\Ex[\x_k^4]\leq c^2/L^4$ for $\D_\x$ is $L$-sub-exponential, we have
    \begin{align}\label{eq:||htlsur - lsur||<=eps-inter-1}
        \Exy[(\htumt(\w^t\cdot\x) - y)^2\x_k^2\1\{|\x_k|\geq r\}]&\leq 2 \Exy[(\htumt(\w^t\cdot\x))^2 + y^2)\x_k^2\1\{|\x_k|\geq r\}]\nonumber\\
        &\leq 2 \Exy[(b(\w^t\cdot\x))^2 + y^2)\x_k^2\1\{|\x_k|\geq r\}] \nonumber\\
        &\leq 2b^2\sqrt{\Ex[((\w^t\cdot\x)^4\x_k^4]\pr[|\x_k|\geq r]} \nonumber\\
        &\quad + 2M^2\sqrt{\Ex[\x_k^4]\pr[|\x_k|\geq r]}\nonumber\\
        &\leq (2cb^2W^2/L^4)\sqrt{\eps'} + (2cM^2/L^2)\sqrt{\eps'}\leq (4cM^2/L^2)\sqrt{\eps'}.
    \end{align}
    In addition, $(\htumt(\w^t\cdot\x) - y)^2$ can be decomposed as the following:
    \begin{align*}
        \Exy[(\htumt(\w^t\cdot\x) - y)^2]&\leq 4\Ex[(\htumt(\w^t\cdot\x) - u^t(\w^t\cdot\x))^2] + 4\Ex[(u^t(\w^t\cdot\x) - u^{*t}(\w^t\cdot\x))^2]\\
        &\quad + 4\Ex[(u^{*t}(\w^t\cdot\x) - u^*(\w^*\cdot\x))^2] + 4\Exy[(u^*(\w^*\cdot\x) - y)^2].
    \end{align*}
    The first term is bounded above by $4\eps$ with probability at least $1 - \delta$ for every $\w^t\in\B(W)$ whenever $m \gtrsim  d\log^4(d/(\eps\delta))(b^2W^3/L^2\eps)^{3/2}$, as proved in \Cref{main:lem:E[htutm(wt.x) - ut(wt.x))^2]<=eps}. The second term is smaller than $4\opt$, which is shown in \Cref{main:lem:upper-bound-u_t-u_t^*}. The third term can  be  bounded above using again the definition of $u^{*t} = \argmin_{u\in\U}\Ex[(u(\w^t\cdot\x) - y^*)^2]$, as
    \begin{equation*}
        4\Ex[(u^{*t}(\w^t\cdot\x) - u^*(\w^*\cdot\x))^2]\leq 4\Ex[(u^*(\w^t\cdot\x) -u^*(\w^*\cdot\x))^2]\leq 4b^2\|\w^t - \w^*\|_2^2,
    \end{equation*}
    using the fact that $u^*$  is $b$-Lipschitz and $\Ex[\x\x^\top] \preccurlyeq \mathbf{I}$. Lastly, the fourth term is bounded by $4\opt$ by the definition of $u^*(\w^*\cdot\x)$. In summary, we have
    \begin{align*}
        \Exy[(\htumt(\w^t\cdot\x) - y)^2\x_k^2\1\{|\x_k|\leq r\}]&\leq r^2\Exy[(\htumt(\w^t\cdot\x) - y)^2]\\
        &\leq 4r^2(b^2\|\w^t - \w^*\|_2^2 + 2\opt + \eps),
    \end{align*}
    which, combining with \eqref{eq:||htlsur - lsur||<=eps-inter-1}, implies that the expectation $\Exy[(\htumt(\w^t\cdot\x) - y)^2\x_k^2]$ is bounded by:
    \begin{align*}
       \Exy[(\htumt(\w^t\cdot\x) - y)^2\x_k^2]&\leq 4r^2b^2\|\w^t - \w^*\|_2^2 + 4r^2(2\opt + 2\eps) \\
        &\leq \frac{CW^2}{L^2}\log^2\bigg(\frac{b}{L\eps}\bigg)(b^2\|\w^t - \w^*\|_2^2 + \opt + \eps),
\end{align*}
    where $C$ is a large absolute constant.
    Note to get the inequality above we chose $\eps' = C\eps^2(L/b)^4$, which then indicates that $4c(M/L)^2\sqrt{\eps'}\leq r^2\eps$. Summing the inequality above from $k=1$ to $d$ delivers the final upper bound on the variance:
    \begin{equation*}
        \Exy[\|(\htumt(\w^t\cdot\x) - y)\x\|_2^2]\leq \frac{dCW^2}{L^2}\log^2\bigg(\frac{b}{L\eps}\bigg)(b^2\|\w^t - \w^*\|_2^2 + \opt + \eps).
    \end{equation*}
    
    Thus, plugging the upper bound on the variance above back to \eqref{eq:markov-||htlsur - lsur||<=eps}, as long as $m\gtrsim (dW^2/L^2)\log^2(b/(L\eps))/\delta$, we get with probability at least $1 - \delta$, 
    \begin{equation*}
        \|\nabla\htLsur(\w^t;\htumt) - \nabla\Lsur(\w^t;\htumt)\|_2\leq \sqrt{b^2\|\w^t - \w^*\|_2^2 + \opt + \eps}.
    \end{equation*}
    Noting that $m \gtrsim  (dW^{9/2}b^4\log^4(d/(\eps\delta))/L^4)(1/\eps^{3/2} + 1/(\eps\delta))$ certainly satisfies the condition on $m$ above as $m\gtrsim (dW^2/L^2)\log^2(b/(L\eps))/\delta$, thus, we completed the proof of \Cref{lem:empirical-grad-close-to-expectation}
\end{proof}

We can now proceed to the proof of \Cref{main:cor:bound-norm-empirical-grad} {(detailed statement in \Cref{app:cor:bound-norm-empirical-grad} below)}, which can be derived directly from the preceding lemmas.

\begin{restatable}[Upper Bound on Empirical Gradient Norm]{lemma}{normEmpiricalGrad}\label{app:cor:bound-norm-empirical-grad}
    Let $S$ be a set of i.i.d.\ samples of size $m \gtrsim  (dW^{9/2}b^4\log^4(d/(\eps\delta))/L^4)(1/\eps^{3/2} + 1/(\eps\delta))$. 
Given any $\w^t\in\B(W)$, let $\htumt\in\U$ be the solution of optimization problem \eqref{def:htumt} with respect to $\w^t$ and sample set $S$. Then, with probability at least $1 - \delta$, we have that
    $        \|\nabla\htLsur(\w^t;\htumt)\|_2^2\leq 4b^2\|\w^t - \w^*\|_2^2 + 10 (\opt + \eps)$.
\end{restatable}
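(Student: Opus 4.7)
The plan is to derive this bound as an immediate consequence of the two preceding technical lemmas, namely \Cref{lem:upbd-||nabla Lusr(wt)||} (which bounds the norm of the population surrogate gradient) and \Cref{lem:empirical-grad-close-to-expectation} (which bounds the deviation between the empirical and population gradients). The key observation is that both lemmas are established under the same sample complexity condition $m \gtrsim (dW^{9/2}b^4\log^4(d/(\eps\delta))/L^4)(1/\eps^{3/2} + 1/(\eps\delta))$, and each holds with probability at least $1-\delta$, so after a union bound (and rescaling $\delta \gets \delta/2$), both events occur simultaneously with probability at least $1-\delta$.

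Assuming both events hold, the proof proceeds by writing $\nabla\htLsur(\w^t;\htumt) = \nabla\Lsur(\w^t;\htumt) + (\nabla\htLsur(\w^t;\htumt) - \nabla\Lsur(\w^t;\htumt))$ and applying the elementary inequality $(a+b)^2\leq 2a^2 + 2b^2$ to the triangle-inequality bound on $\|\nabla\htLsur(\w^t;\htumt)\|_2$. This yields
\[
\|\nabla\htLsur(\w^t;\htumt)\|_2^2 \leq 2\|\nabla\Lsur(\w^t;\htumt)\|_2^2 + 2\|\nabla\htLsur(\w^t;\htumt)-\nabla\Lsur(\w^t;\htumt)\|_2^2.
\]
Plugging in the upper bounds from \Cref{lem:upbd-||nabla Lusr(wt)||} and \Cref{lem:empirical-grad-close-to-expectation} gives a bound of the form $C_1 b^2\|\w^t - \w^*\|_2^2 + C_2(\opt + \eps)$ for absolute constants $C_1, C_2$, matching the form of the claimed inequality up to constants.

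To obtain the precise constants $4$ and $10$ stated in the claim, one may need to be slightly more careful: instead of the loose split $(a+b)^2 \leq 2a^2 + 2b^2$, apply Young's inequality $2\|\nabla\Lsur\|_2 \cdot \|\nabla\htLsur - \nabla\Lsur\|_2 \leq \alpha\|\nabla\Lsur\|_2^2 + (1/\alpha)\|\nabla\htLsur - \nabla\Lsur\|_2^2$ with a tuned $\alpha$, or absorb the small cross term arising from $(\sqrt{A}+\sqrt{B})^2$ into the $(\opt+\eps)$ slack. I expect the main (minor) obstacle to be this constant-matching bookkeeping; there is no technical difficulty beyond it. A cleaner alternative is simply to prove the bound with slightly worse absolute constants, which still suffices for every downstream application (the convergence analysis in \Cref{thm:fast-converge-main} only uses that $\|\nabla\htLsur\|_2^2 \lesssim b^2\|\w^t-\w^*\|_2^2 + (\opt + \eps)$).
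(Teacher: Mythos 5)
Your proposal matches the paper's own proof, which is exactly the one-line combination of \Cref{lem:upbd-||nabla Lusr(wt)||}, \Cref{lem:empirical-grad-close-to-expectation}, and the triangle inequality (with a union bound over the two events). Your observation that the naive $(a+b)^2\leq 2a^2+2b^2$ split yields slightly larger absolute constants than the stated $4$ and $10$ is fair — the paper glosses over this bookkeeping — but, as you note, only the form $\lesssim b^2\|\w^t-\w^*\|_2^2 + (\opt+\eps)$ is used downstream, so this is immaterial.
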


\begin{proof}The lemma follows directly by combining \Cref{lem:upbd-||nabla Lusr(wt)||}, \Cref{lem:empirical-grad-close-to-expectation} and the triangle inequality. 
\end{proof}

\subsection{Proof of \Cref{main:lem:L2-error-upbd-||w - w^*||^2}}\label{app:pf:main:lem:L2-error-upbd-||w - w^*||^2}
We restate (providing a more detailed statement for the sample size) and prove \Cref{main:lem:L2-error-upbd-||w - w^*||^2}.

\begin{claim}\label{app:lem:L2-error-upbd-||w - w^*||^2}
    Let $\w$ be any vector from $\B$. Let $\htuw$ be a solution to  \eqref{def:htumt} for a fixed parameter vector $\w\in\R^d$ with sample size $m \gtrsim d\log^4(d/(\eps\delta))(b^2W^3/(L^2\eps))^{3/2}$. Then 
    $$\Exy[(\htuw(\w\cdot\x) - y)^2]\leq 8(\opt + \eps) + 4b^2\|\w - \w^*\|_2^2.$$
\end{claim}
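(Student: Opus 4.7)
The plan is to reduce the claim to bounds that have already been established in the paper by inserting three auxiliary activations between $\htuw(\w\cdot\x)$ and $y$, and then applying the elementary inequality $(a+b+c+d)^2 \le 4(a^2+b^2+c^2+d^2)$. Specifically, I would write
\[
\htuw(\w\cdot\x) - y = \bigl(\htuw(\w\cdot\x) - \uw(\w\cdot\x)\bigr) + \bigl(\uw(\w\cdot\x) - \ustrw(\w\cdot\x)\bigr) + \bigl(\ustrw(\w\cdot\x) - u^*(\w^*\cdot\x)\bigr) + \bigl(u^*(\w^*\cdot\x) - y\bigr),
\]
where $\uw$ and $\ustrw$ are the population-optimal activations defined by \eqref{def:ut} and \eqref{def:ut*} with $\w$ in place of $\w^t$.

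Taking expectation over $(\x,y)\sim\D$ and using $(a+b+c+d)^2\le 4(a^2+b^2+c^2+d^2)$, the claim reduces to bounding each of the four resulting squared-error terms. The first term $\Ex[(\htuw(\w\cdot\x) - \uw(\w\cdot\x))^2]$ is bounded by $\eps$ with probability at least $1-\delta$ by the uniform-convergence result for sample-optimal activations (as used in the proof of \Cref{claim:sharp-inter-1}; this is exactly the content of the lemma in \Cref{app:uniform-convergence} that governs the choice $m\gtrsim d\log^4(d/(\eps\delta))(b^2W^3/(L^2\eps))^{3/2}$). The second term $\Ex[(\uw(\w\cdot\x) - \ustrw(\w\cdot\x))^2] \le \opt$ is precisely \Cref{main:lem:upper-bound-u_t-u_t^*} (applied with $\w^t=\w$). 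The fourth term equals $\opt$ by definition.

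The third term is where one uses the optimality of $\ustrw$: since $\ustrw\in\argmin_{u\in\U}\Ex[(u(\w\cdot\x)-u^*(\w^*\cdot\x))^2]$ and $u^*\in\U$ is a valid competitor, we have
\[
\Ex[(\ustrw(\w\cdot\x) - u^*(\w^*\cdot\x))^2]\le \Ex[(u^*(\w\cdot\x) - u^*(\w^*\cdot\x))^2]\le b^2\Ex[((\w-\w^*)\cdot\x)^2]\le b^2\|\w-\w^*\|_2^2,
\]
using that $u^*$ is $b$-Lipschitz and $\Ex[\x\x^\top]\preccurlyeq\mathbf{I}$. Summing the four bounds yields $4(\eps + \opt + b^2\|\w-\w^*\|_2^2 + \opt) = 8\opt + 4\eps + 4b^2\|\w-\w^*\|_2^2 \le 8(\opt+\eps) + 4b^2\|\w-\w^*\|_2^2$, completing the proof. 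There is no real obstacle here: the step that carries the essential content is the uniform-convergence bound on the first term, which is precisely the statistical guarantee controlling the sample size $m$, and the rest is a triangle-inequality-style decomposition.
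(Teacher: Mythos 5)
Your proposal is correct and follows essentially the same argument as the paper: the identical four-term decomposition through $\uw$, $\ustrw$, and $u^*(\w^*\cdot\x)$, bounded term-by-term via the uniform convergence of $\htuw$ to $\uw$, \Cref{main:lem:upper-bound-u_t-u_t^*}, the optimality of $\ustrw$ combined with the $b$-Lipschitzness of $u^*$, and the definition of $\opt$. The only cosmetic difference is that the paper keeps the probability-$1-\delta$ qualifier implicit in citing the uniform-convergence lemma, exactly as you do.
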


\begin{proof}
    Let $\ustrw$, $\uw$ be the optimal activations for problems \eqref{def:ut*} and \eqref{def:ut} under parameter $\w$,  respectively. Then, a direct calculation gives:
\begin{align}
        &\quad \Exy[(\htuw - y)^2]\nonumber\\
        & = \Exy[(\htuw(\w\cdot\x) -\uw(\w\cdot\x) + \uw(\w\cdot\x) - u^*_\w(\w\cdot\x) + u^*_\w(\w\cdot\x) - u^*(\w^*\cdot\x) + u^*(\w^*\cdot\x)-y)^2]\nonumber\\
        &\leq 4\Ex[(\htuw(\w\cdot\x) - \uw(\w\cdot\x))^2] + 4\Ex[(u_\w(\w\cdot\x) - u^*_\w(\w\cdot\x))^2] \nonumber\\
        &\quad + 4\Ex[(u^*_\w(\w\cdot\x) - u^*(\w^*\cdot\x))^2] + 4\opt\nonumber\\
        &\leq 8(\opt + \eps) + 4b^2\|\w - \w^*\|_2^2,
    \end{align}
    where in the second inequality we used the results from \Cref{main:lem:upper-bound-u_t-u_t^*}, \Cref{main:lem:E[htutm(wt.x) - ut(wt.x))^2]<=eps} and  we applied the observation that:
    \begin{equation*}
        \Ex[(u^*_\w(\w\cdot\x) - u^*(\w^*\cdot\x))^2]\leq \Ex[(u^*(\w\cdot\x) - u^*(\w^*\cdot\x))^2]\leq b^2\|\w - \w^*\|_2^2,
    \end{equation*}
    by the definition of $u^*_\w$.
\end{proof}

\subsection{Proof of \Cref{main:lem:testing}}\label{app:pf:main:lem:testing}
We restate \Cref{main:lem:testing} and show the number of samples needed for the testing subroutine \Cref{alg:testing}.

\TestingLemma*
\begin{proof}
    Fix $(\w_j,u_j)\in\mathcal{P}$. Since $\D_\x$ is sub-exponential, we have $\pr[|\w_j\cdot\x|\geq \|\w_j\|_2r]\leq \frac{1}{L^2}\exp(-Lr)$. Consider random variables $Z_{i,j} = (u_j(\w_j\cdot\x\ith) - y\ith)^2\1\{|\w\cdot\x\ith|\leq r\}$, $i = 1,\cdots,m$, $j = 1,\cdots, t_0J$, where $(\x\ith,y\ith)$ are independent random variables drawn from $\D$. Using \Cref{main:lem:y-bounded-by-M} (\Cref{app:uniform-convergence}), we can truncate the labels $y$ such that $|y|\leq M$, where $M = C(bW/L)\log(bW/\eps_1)$ for some large absolute constant $C$. Hence, $|Z_{i,j}|\leq 2(u_j^2(\w_j\cdot\x\ith) + (y\ith)^2)\1\{|\w_j\cdot\x\ith|\leq Wr\}\leq 2(b^2W^2r^2 + M^2)$, where we used the assumption that $u$ is $b$-Lipschitz in the last inequality. Therefore, applying Hoeffding's inequality to $Z_{i,j}$ we get:
    \begin{align*}
        \pr\bigg[\bigg|\sum_{i = 1}^{m'} (Z_{i,j} - E[Z_{i,j}])\bigg|\geq {m'}t\bigg]\leq 2\exp\bigg(-\frac{m't^2}{8(b^2W^2r^2 + M^2)^2}\bigg).
    \end{align*}
Since there are $t_0J = Wb^7/(\mu^7\eta\sqrt{\eps_1}) = 4b^9W/(\mu^8\sqrt{\eps_1})$ elements in the set $\mathcal{P}$,  applying a union bound leads to:
    \begin{equation*}
        \pr\bigg[\bigg|\sum_{i = 1}^{m'} Z_{i,j} - E[Z_{i,j}]\bigg|\geq {m'}t, \forall j\in[J]\bigg]\leq 2\exp\bigg(-\frac{{m'}t^2}{8(b^2W^2r^2 + M^2)^2} + \log(4b^9W/(\mu^8\sqrt{\eps_1}))\bigg).
    \end{equation*}
    
    Therefore, when 
    \begin{equation}\label{eq:testing-m}
        {m'} \geq \frac{8(b^2W^2r^2 + M^2)^2}{\eps_1^2}\bigg(\log\bigg(\frac{4b^9W}{\mu^8\sqrt{\eps_1}}\bigg) + \log(2/\delta)\bigg),
    \end{equation}
    we have that with probability at least $1 - \delta$:
    \begin{equation}\label{eq:testing-empirical-close-to-population}
        \bigg| \frac{1}{{m'}}\sum_{i=1}^{m'} (u_j(\w_j\cdot\x\ith) - y\ith)^2\1\{|\w_j\cdot\x\ith|\leq Wr\} - \Exy[(u_j(\w_j\cdot\x) - y)^2\1\{|\w_j\cdot\x|\leq Wr\}] \bigg| \leq  \eps_1,
    \end{equation}
    for any $(\w_j,u_j)\in\mathcal{P}$. In addition, as $\pr[|\w_j\cdot\x|\geq Wr]\leq \pr[|\w_j\cdot\x|\geq \|\w_j\|_2r]\leq \frac{2}{L^2}\exp(-Lr)$, letting $\eps' = \frac{2}{L^2}\exp(-Lr)$, we further have:
    \begin{align*}
        &\quad \Exy[(u_j(\w_j\cdot\x) - y)^2\1\{|\w_j\cdot\x|\geq Wr\}]\\
        &\leq 2\Ex[((u_j(\w_j\cdot\x))^2 + M^2)\1\{|\w_j\cdot\x|\geq Wr\}]\\
        &\leq 2b^2\sqrt{\Ex[(\w_j\cdot\x)^4]\pr[|\w_j\cdot\x|\geq Wr]} + M^2\pr[|\w_j\cdot\x|\geq Wr]\\
        &\leq 2cb^2(W/L)^2\sqrt{\eps'} + M^2\eps'\leq (2cb^2(W/L)^2 + M^2)\sqrt{\eps'},
    \end{align*}
    where in the second inequality we used Cauchy-Schwarz inequality and in the last inequality we used the property that for any unit vector $\vec a$ it holds $\E[(\vec a\cdot\x)^4]\leq c^2/L^4$ for some absolute constant $c$  as $\x$ possesses a $\frac{1}{L}$-sub-exponential tail. Therefore, choosing $r = \frac{1}{L}\log(\frac{C^2b^4W^4}{L^6\eps_1^2}\log^2(\frac{bW}{\eps_1})) = \tilde{O}(\frac{1}{L}\log(\frac{bW}{L\eps_1}))$ for some large absolute constant $C$ renders $\sqrt{\eps'} \leq \eps_1/(2Cb^2(W/L)^2\log^2(bW/\eps_1))$, and we have
    \begin{equation*}
        \Exy[(u_j(\w_j\cdot\x) - y)^2\1\{|\w_j\cdot\x|\geq Wr\}]\leq \eps_1.
    \end{equation*}
    Observe that as $\Exy[(u_j(\w_j\cdot\x) - y)^2]$ is the sum of $\Exy[(u_j(\w_j\cdot\x) - y)^2\1\{|\w_j\cdot\x|\geq Wr\}]$ and $\Exy[(u_j(\w_j\cdot\x) - y)^2\1\{|\w_j\cdot\x|\leq Wr\}]$, we have
    \begin{align*}
        0 &\leq  \Exy[(u_j(\w_j\cdot\x) - y)^2] - \Exy[(u_j(\w_j\cdot\x) - y)^2\1\{|\w_j\cdot\x|\leq Wr\}]\\
        &\leq \Exy[(u_j(\w_j\cdot\x) - y)^2\1\{|\w_j\cdot\x|\geq Wr\}] \leq \eps_1.
    \end{align*}
    Plugging the choice of $r$ back into \eqref{eq:testing-m}, we get that it is sufficient to choose $m'$ as
    \begin{equation*}
        m' = \frac{C\log(\log(1/\eps_1))}{\eps_1^2}\bigg(b^2\bigg(\frac{W}{L}\bigg)^2\log^2\bigg(\frac{bW}{L\eps_1^2}\bigg)\bigg)^2\bigg(\log\bigg(\frac{4b^9W}{\mu^8\sqrt{\eps_1}}\bigg) + \log(1/\delta)\bigg) = \tilde{\Theta}\bigg(\frac{b^4W^4\log(1/\delta)}{L^4\eps_1^2}\log^5\bigg(\frac{bW}{L\mu\eps_1}\bigg)\bigg).
    \end{equation*}
    Therefore, using $m' = \tilde{\Omega}(b^4W^4/(L^4\eps_1^2))$ samples, \eqref{eq:testing-empirical-close-to-population} indicates that with probability at least $1 - \delta$, for any $(\w_j,u_j)\in\mathcal{P}$ it holds
    \begin{align*}
        &\quad \bigg| \frac{1}{m'}\sum_{i=1}^{m'} (u_j(\w_j\cdot\x\ith) - y\ith)^2\1\{|\w_j\cdot\x\ith|\leq Wr\} - \Exy[(u_j(\w_j\cdot\x) - y)^2] \bigg|\\
        &\leq \bigg| \frac{1}{m'}\sum_{i=1}^{m'} (u_j(\w_j\cdot\x\ith) - y\ith)^2\1\{|\w_j\cdot\x\ith|\leq Wr\} - \Exy[(u_j(\w_j\cdot\x) - y)^2\1\{|\w_j\cdot\x|\leq Wr\}] \bigg| \\
        &\quad + \bigg| \Exy[(u_j(\w_j\cdot\x) - y)^2] - \Exy[(u_j(\w_j\cdot\x) - y)^2\1\{|\w_j\cdot\x|\leq Wr\}]\bigg|\\
        &\leq 2\eps_1,
    \end{align*}
    thus completing the proof of \Cref{main:lem:testing}.
\end{proof} 

\section{Efficiently Computing the Optimal Empirical Activation}\label{app:isotonic-regression}

In this section, we show that the optimization problem \eqref{def:htumt} can be solved efficiently, 
following the framework from~\cite{LH2022} with minor modifications.
We show that, for any $\eps>0$, there is an efficient algorithm 
that runs in $\tilde{O}(m^2\log(1/\eps))$ time and 
outputs a solution $\htvmt(z)$ such that 
$\|\htvmt(z) - \htumt(z)\|_\infty\leq \eps$. {We then argue that using such approximate solutions to the optimization problem \eqref{def:htumt} does not negatively impact our error guarantee, sample complexity, or runtime (up to constant factors).}

\begin{proposition}[Approximating the Optimal Empirical Activation]\label{app:prop:approx-empirical-optimal-activation}
    Let $\eps>0$, and $\D_\x$ be $(L,R)$-well behaved. Let $\htumt\in\U$ be the optimal solution of the optimization problem \eqref{def:htumt} given a sample set $S$  of size $m$ drawn from $\D$ and a parameter $\w^t\in\B(W)$. There exists an algorithm that produces an activation $\htvmt\in\U$ such that $\|\htvmt- \htumt\|_\infty\leq \eps$, with runtime $\tilde{O}(m^2\log(bW/(L\eps)))$.
\end{proposition}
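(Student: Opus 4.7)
The plan is to reduce the infinite-dimensional optimization \eqref{def:htumt} over $u \in \U$ to a finite-dimensional convex quadratic program (QP) whose variables are the function values at the projected sample points, solve this QP with a structure-aware convex solver, and extend the discrete minimizer to a function in $\U$ by piecewise-linear interpolation. The reduction itself is essentially already present inside the proof of \Cref{main:lem:hty*-y*.z-f(hty*)>=0}; the new ingredient is the runtime analysis.

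First, I would project the samples onto $\w^t$ to obtain scalars $z_i := \w^t\cdot\x\ith$, add an anchor $z_0 := 0$ to encode $u(0)=0$, sort the resulting $m+1$ points, and introduce variables $v_i := u(z_{(i)})$. Membership $u \in \U$ translates into a product of box constraints on consecutive differences $v_{(j+1)} - v_{(j)}$: slope range $[0, b]$ between points with $z_{(j+1)} \leq 0$, slope range $[a, b]$ between points with $z_{(j)} \geq 0$, and the natural hybrid constraint straddling the anchor. The objective $\frac{1}{m}\sum_i (v_i - y\ith)^2$ is strongly convex and smooth, so we obtain a QP with $m+1$ variables and $O(m)$ linear constraints; its minimizer, linearly interpolated between the $z_{(j)}$'s and extrapolated with slope $b$ on the tails, lies in $\U$ and equals $\htumt$.

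To solve the QP, I would invoke the solver framework of \cite{LH2022} with minor modifications to accommodate the asymmetric slope bounds that switch at the anchor. The key structural property is that the feasible set is a banded polytope defined by interval constraints on consecutive differences, so Euclidean projection admits an $\tilde{O}(m)$-time pool-adjacent-violators--style implementation. Combining this with an accelerated first-order method on the smooth strongly convex objective, and using that by the sub-exponential tail of $\D_\x$ and $\|\w^t\|_2 \leq W$ the effective range of each $v_i$ is $\tilde{O}(bW/L)$, gives the claimed total runtime $\tilde{O}(m^2 \log(bW/(L\eps)))$. Finally, I would define $\htvmt$ by the same piecewise-linear interpolation applied to the approximate QP minimizer returned by the solver; since $\htumt, \htvmt \in \U$ are both $b$-Lipschitz and agree with the solver output at the anchor, per-coordinate accuracy $\eps$ propagates to $\|\htvmt - \htumt\|_\infty \leq \eps$ by an elementary interpolation argument, after a harmless rescaling of $\eps$ by an absolute constant absorbed into $\tilde{O}(\cdot)$.

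The main obstacle is the runtime: a black-box interior-point method on this QP would give $\tilde{O}(m^{3.5})$, and obtaining $\tilde{O}(m^2)$ hinges on exploiting the banded difference-box structure of the constraints. The technical point one must verify in adapting \cite{LH2022} is that the transition between the $[0, b]$ and $[a, b]$ slope regimes at the anchor does not break the linear-time projection oracle; once this is established, the remaining steps---reducing to the QP, bounding the effective domain, and transferring per-coordinate accuracy to $L_\infty$ accuracy---are routine.
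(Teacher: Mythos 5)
Your overall route coincides with the paper's: project onto $\w^t$, sort, add the anchor at zero, encode membership in $\U$ as interval constraints on consecutive differences (slopes in $[0,b]$ left of the anchor, $[a,b]$ right of it), bound the variable range by $\tilde{O}(bW/L)$ via the sub-exponential tail, solve the resulting chain-structured QP approximately, and interpolate. The discrepancy is in how you claim the QP is solved, and there is a genuine gap there. You assert that Euclidean projection onto the ``banded'' polytope of two-sided difference constraints admits an $\tilde{O}(m)$ pool-adjacent-violators-style implementation and then wrap an accelerated first-order method around it. Standard PAV handles monotonicity (one-sided chain) constraints, not two-sided per-edge box constraints on differences, so the near-linear projection oracle is exactly the nontrivial algorithmic content and cannot be asserted without proof; moreover, since the objective $\frac1m\sum_i(v_i-y\ith)^2$ together with the anchor constraint makes the whole problem literally a Euclidean projection of the label vector onto that polytope, a correct $\tilde{O}(m)$ projection oracle would already solve the problem outright (and in time better than the $\tilde{O}(m^2)$ you claim), which signals that the projection-plus-FOM architecture is not what carries the argument.

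The paper avoids this entirely by casting \eqref{def:htumt} directly into the template of \Cref{app:prop:genreal-algo-for-general-isotonic-problem}: take $f_i(y)=\frac12(y-y\ith)^2$ for $i\neq k$, $f_k=\mathcal{I}_{\{0\}}$ for the anchor, and let $h_i$ be the indicator of the appropriate difference interval ($[-b(z_{i+1}-z_i),0]$ below the anchor, $[-b(z_{i+1}-z_i),-a(z_{i+1}-z_i)]$ above it). Because the $h_i$ are allowed to differ per index, the slope-regime switch at the anchor--which you flag as the point needing verification--requires no modification at all; the cc-algorithm of \cite{LH2022} then yields per-coordinate accuracy $\eps$ in $O(m^2\log(U/\eps))$ time with $U=\tilde{O}(bW/L)$, giving the stated runtime. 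To repair your write-up, replace the projection-oracle/accelerated-FOM mechanism with this direct instantiation of the \cite{LH2022} guarantee (or else actually prove a near-linear projection routine for two-sided difference boxes, which is a stronger claim than the proposition needs). Your final interpolation step transferring per-node accuracy to $\|\htvmt-\htumt\|_\infty\leq\eps$ is fine, with the same caveat the paper leaves implicit, namely that the approximate node values should be rounded onto the constraint polytope so that the interpolant genuinely lies in $\U$.
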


To prove \Cref{app:prop:approx-empirical-optimal-activation}, we leverage the following result:

\begin{lemma}[Section 5~\citep{LH2022}]\label{app:prop:genreal-algo-for-general-isotonic-problem}
    Let $f_i(y)$ and $h_i(y)$ be any convex lower semi-continuous functions for $i=1,\dots,m$. Consider the following convex optimization problem
    \begin{equation}\label{general-isotonic-problem}
        (\hat{y}_1,\dots,\hat{y}_m) = \argmin_{y_1,\dots,y_m} \sum_{i=1}^m f_i(y_i) + \sum_{i=1}^{m-1} h_i(y_i - y_{i+1}),
    \end{equation}
    where $y_i\in[-U,U]$ for all $i=1,\dots,m$ for some positive constant $U$. Then, for any $\eps>0$, there exists an algorithm (the cc-algorithm~\citep{LH2022}) that outputs an $\eps$-close solution $\{y_1,\dots,y_m\}$ such that $|y_i - \hat{y}_i|\leq \eps$ for all $i\in [m]$ with runtime $O(m^2\log(U/\eps))$.
\end{lemma}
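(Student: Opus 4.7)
The plan is to prove the lemma by dynamic programming over the partial value function, using an efficient piecewise representation whose per-step update is driven by a binary search on the subdifferential. Concretely, I would define $V_1(y) = f_1(y)$ and, for $i \geq 1$,
\begin{equation*}
V_{i+1}(y) \;=\; f_{i+1}(y) \;+\; \inf_{y'\in[-U,U]}\Bigl[V_i(y') + h_i(y'-y)\Bigr],
\end{equation*}
so that $\min_{y \in [-U,U]} V_m(y)$ equals the optimum of \eqref{general-isotonic-problem}. Each $V_i$ is convex and lower semicontinuous by closure of these properties under pointwise addition and infimal convolution, and the arguments $y_i^\star$ attaining the infima at each stage reconstruct the optimal chain by standard DP backtracking.

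Next I would describe the representation: store each $V_i$ as a piecewise-linear (or piecewise-quadratic) convex function determined by $O(i)$ breakpoints on $[-U,U]$, which is the maximum number of slope changes that can be introduced by $i$ compositions of the operations $f+\cdot$ and infimal convolution with a convex $h$. Since $V_i$ is convex, evaluating it at any $y \in [-U,U]$ or locating a subgradient takes $O(\log m)$ time via a sorted breakpoint list, and computing $V_{i+1}$ at a new query point reduces to a single 1D minimization $\inf_{y'} [V_i(y') + h_i(y' - y)]$. Because this objective is convex in $y'$, the optimizer is characterized by $0 \in \partial_{y'} V_i(y') + \partial h_i(y' - y)$, and a standard bisection on $y' \in [-U,U]$ using subgradients from the stored representations finds it to additive accuracy $\eps'$ in $O(\log(U/\eps'))$ subgradient queries, each costing $O(\log m)$. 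Producing the $O(i)$ breakpoints of $V_{i+1}$ therefore costs $O(m \log m \log(U/\eps'))$ per stage, and summing over $i = 1, \ldots, m$ gives the advertised $\widetilde{O}(m^2 \log(U/\eps))$ bound (with a $\log m$ absorbed into the $\widetilde{O}$).

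The main obstacle is the error analysis, because a naive accounting of the $\eps'$-errors in the per-step bisection could compound exponentially in the number of DP stages. I would handle this by exploiting the key nonexpansiveness property: if $\|\tilde V_i - V_i\|_\infty \le \delta$, then $\|\tilde V_{i+1} - V_{i+1}\|_\infty \le \delta + \eps'$, since both $f_{i+1}(y) + (\,\cdot\,)$ and $\inf_{y'}[\,\cdot\, + h_i(y'-y)]$ are $1$-Lipschitz in the $L_\infty$ norm on their inputs. Thus errors accumulate only additively, and choosing the per-step bisection tolerance $\eps' = \eps/(2m)$ yields $\|\tilde V_m - V_m\|_\infty \le \eps/2$. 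A matching stability argument at the backtracking step then converts $L_\infty$-closeness of $\tilde V_m$ into componentwise closeness $|\tilde y_i - \hat y_i| \le \eps$ of the recovered minimizer, since the bisection at each backtrack stage likewise locates $\hat y_i$ to accuracy $\eps$ once the value-function errors are absorbed.

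Finally, the total runtime is the sum of the forward DP cost and the backward recovery cost, both of which are $O(m^2 \log(U/\eps))$ under the above scheduling of tolerances; this yields the claim. The substantive content of the proof is the combination of (i) the piecewise-convex breakpoint representation that keeps per-stage infimal convolutions tractable, (ii) the bisection-based single-step solver whose complexity is logarithmic in $U/\eps$, and (iii) the additive (non-amplifying) error propagation through the chain.
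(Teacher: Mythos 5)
First, a framing point: the paper does not prove this lemma at all --- it is imported as a black-box result from \cite{LH2022} (the ``cc-algorithm''), so there is no internal proof to compare your argument against. What you have written is an attempted reconstruction of the cited result, and as such it has two genuine gaps at exactly its load-bearing points.

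The first gap is the representation claim. The lemma is stated for \emph{arbitrary} convex lower semi-continuous $f_i,h_i$, and for such functions the value functions $V_i$ admit no exact finite breakpoint description: ``$O(i)$ slope changes'' is only meaningful when the $f_i$ and $h_i$ are themselves piecewise linear or piecewise quadratic with $O(1)$ pieces (as they happen to be in the paper's application, where $f_i$ is a quadratic and $h_i$ an interval indicator). Even in that special case, your per-stage accounting conflates evaluating $V_{i+1}$ at $O(i)$ query points via bisection with actually recovering the breakpoint structure of $V_{i+1}$, which is what the next stage needs. The second and more fundamental gap is the conversion of value-function error into argument error. Sup-norm closeness of convex functions does \emph{not} imply closeness of minimizers: if $V_m$ is nearly flat, e.g.\ has slope $\eps/(4U)$ on $[-U,U]$, then a perturbation of size $\eps/2$ can move the (unique) minimizer by $\Theta(U)$, not $\eps$. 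So your ``matching stability argument at the backtracking step'' does not exist in the generality you invoke it, and the same issue recurs at every backtracking stage. The actual content of results of this type (and of the cc-algorithm) is a \emph{proximity/scaling theorem} specific to the chain-structured problem --- the exact optimizer of the problem restricted to an $\eps$-grid (or at a coarser scale) is guaranteed to lie within a bounded distance of an optimizer of the finer problem --- and it is this structural proximity statement, not generic $L_\infty$-stability of argmins, that yields $|y_i-\hat y_i|\le\eps$ with only $O(\log(U/\eps))$ refinement rounds. Without supplying such a proximity theorem, your error analysis establishes at best that the returned point is $\eps$-optimal in objective value, which is weaker than the claim being proved.
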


\begin{proof}[Proof of \Cref{app:prop:approx-empirical-optimal-activation}]
We first reformulate problem \eqref{def:htumt} as a quadratic optimization problem with linear constraints. To guarantee that $\htumt$ is an element in $\U$ that satisfies $\htumt(0) = 0$, we add a zero point $(\x^{(0)},y^{(0)}) = (\vec 0,0)$ to the data set $S$ if $S$ does not contain $(\vec 0,0)$ in the first place. We can thus assume without loss of generality that the data set contains $(\vec 0, 0)$. Denote $z_i = \w\cdot\x\ith$ such that $z_1\leq z_2\leq \cdots\leq z_m$ after rearranging the order of $(\x\ith,y\ith)$'s, and suppose $z_k = \w\cdot\x_0 = 0$ for a $k\in[m]$. Then \eqref{def:htumt} is equivalent to the following optimization problem: 
\begin{equation*}
    \begin{aligned}
        (\hat{y}^{(1)},\cdots,\hat{y}^{(m)}) = \argmin_{\ty^{(i)}, i\in [m]} \; &\sum_{i = 1}^{m} (\ty^{(i)} - y\ith)^2\\
        \mathrm{s.t.}\;\; &\;0\leq \ty^{(i+1)} - \ty^{(i)},\; &&1\leq i\leq k-1,\\
        &\; a(z_{i+1} - z_i)\leq \ty^{(i+1)} - \ty^{(i)},\; &&1\leq i\leq k-1,\\
        &\; \ty^{(i+1)} - \ty^{(i)} \leq b(z_{i+1} - z_i),\; &&1\leq i\leq m-1,\\
        &\; \ty^{(k)} = 0.
    \end{aligned}
\end{equation*}

Define $h_i(y) = \mathcal{I}_{[-b(z_{i+1} - z_i), 0]}(y)$ for $i = 1\dots, k-1$, $h_i(y) = \mathcal{I}_{[-b(z_{i+1} - z_i), -a(z_{i+1} - z_i)]}(y)$ for $i = k \dots, m-1$, where $\mathcal{I}_{\mathcal{Y}}(y)$ is the indicator function of a convex set ${\mathcal{Y}}$, i.e., $\mathcal{I}_{\mathcal{Y}}(y) = 0$ if $y\in{\mathcal{Y}}$ and $\mathcal{I}_{\mathcal{Y}}(y) = +\infty$ otherwise. It is known that $h_i$'s are convex and sub-differentiable on their domain $\mathcal{Y}_i$. In addition, let $f_i(y) = \frac{1}{2}(y - y\ith)^2$ for $i\neq k$ and $f_k(y) = \mathcal{I}_{\{0\}}(y)$. Then, we have the following formulation for problem \eqref{def:htumt}:
\begin{equation*}
    (\hat{y}^{(1)},\cdots,\hat{y}^{(m)}) = \argmin_{\ty\ith, i=1,\dots,m} \sum_{i=1}^{m} f_i(\ty\ith) + \sum_{i=1}^{m-1} h_i(\ty\ith - \ty^{(i+1)}) \label{solve-ut-subprob1}\tag{P1}
\end{equation*}

Note that the functions $f_i$ and $h_i$ we defined above satisfy the conditions of \Cref{app:prop:genreal-algo-for-general-isotonic-problem}. Thus, it only remains to find the bounds on the variables $\ty^{(i)}$. This is easy to achieve as all $\ty^{(i)}$ must satisfy $|\ty\ith|\leq b|z_i| = b|\w\cdot\x\ith|$ and we know that $\x\ith$ are sub-exponential random variables. Therefore, following the same idea from the proof of \Cref{main:lem:E[htustrm(wt.x) - u*t(wt.x))^2]<=eps}, we know that for $U = \frac{2W}{L}\log(m/(L\delta))$, it holds that with probability at least $1 - \delta$, $|\ty^{(i)}|\leq b|\w\cdot\x\ith|\leq bU$ for all $i\in[m]$. Hence, applying \Cref{app:prop:genreal-algo-for-general-isotonic-problem} to problem \eqref{solve-ut-subprob1}, we get that it can be solved within $\eps$-error in runtime $\tilde{O}(m^2\log(bW/(L\eps)))$. 
\end{proof}

\paragraph{The effect of approximation error in \eqref{def:htumt}} 
Since the solution $\htvmt$ is $\eps$-close to $\htumt$, this approximated solution will only result in an $\eps$-additive error in the sharpness result \Cref{main:thm:sharpness} and the gradient norm concentration \Cref{main:cor:bound-norm-empirical-grad}. In more detail, for the result of \Cref{main:thm:sharpness}, we have
\begin{align*}
    \bigg|(\nabla\htLsur(\w^t;\htvmt) - \nabla\htLsur(\w^t;\htumt))\cdot(\w^t - \w^*)\bigg|& = \bigg|\frac{1}{m}\sum_{i=1}^m (\htvmt(\w^t\cdot\x\ith) - \htumt(\w^t\cdot\x\ith))(\w^t - \w^*)\cdot\x\ith\bigg|\\
    &\leq \frac{\eps}{m}\sum_{i=1}^m\big|(\w^t - \w^*)\cdot\x\ith\big|\leq 2 \eps U,
\end{align*}
since $|\w^t\cdot\x\ith|\leq U$ and $|\w^*\cdot\x\ith|\leq U$ with probability at least $1 - \delta$. Therefore, choosing $\eps' = \eps/U$ we have that \Cref{main:thm:sharpness} holds for approximate activations $\htvmt$ with an additional $\eps$ error. Observe that this does not affect the approximation factor in our final $O(\opt) + \eps$ result, while the value of $\eps$ only needs to be rescaled by a constant factor, effectively increasing the sample size and the runtime by constant factors. 

Let us denote the unit ball by $\B$. For the gradient norm concentration lemma \Cref{main:cor:bound-norm-empirical-grad}, note that at any iteration $t$, it always holds that
\begin{align*}
    \|\nabla\htLsur(\w^t;\htvmt) - \nabla\htLsur(\w^t;\htumt)\|_2& = \max_{\bv\in\B}\frac{1}{m}\sum_{i=1}^m (\htvmt(\w^t\cdot\x\ith) - \htumt(\w^t\cdot\x\ith))\x\ith\cdot\bv \leq \max_{\bv\in\B}\frac{\eps}{m}\sum_{i=1}^m|\x\ith\cdot\bv|.
\end{align*}
Since $\Ex[\x\x^\top] \preccurlyeq \mathbf{I}$  and $\bv\in\B$, we have $\Ex[|\x\cdot\bv|]\leq \sqrt{\E[(\x\cdot\bv)^2]}\leq 1$. Now since $|\x\ith\cdot\bv|$ are independent $1/L$-sub-exponential random variables, applying Bernstein's inequality it holds that for any $\bv\in\B$ and an absolute constant $c$,
\begin{align*}
    \pr\bigg[\bigg|\frac{1}{m}\sum_{i=1}^m |\x\ith\cdot\bv| - \Ex[|\x\cdot\bv|]\bigg|\geq s\bigg]\leq 2\exp\bigg(-c\min\bigg\{\frac{m^2s^2}{m/L^2}, \frac{ms}{1/L}\bigg\}\bigg) = 2\exp(-cmL^2s^2).
\end{align*}
Let $N(\B,\eps;\ell_2)$ be the $\eps$-net of the unit ball $\B$. Note that the cover number of these $\bv\in\B$ is of order $(1/\eps)^{O(d)}$, therefore, applying a union bound on $N(\B,\eps;\ell_2)$ and for all $t_0JT = O(\log(1/\eps)/\sqrt{\eps})$ iterations, and setting $s = 1$, it holds
\begin{equation*}
    \pr\bigg[\forall\bv\in N(\B,\eps;\ell_2), \, \bigg|\frac{1}{m}\sum_{i=1}^m |\x\ith\cdot\bv| - \Ex[|\x\cdot\bv|]\bigg|\geq 1\bigg]\leq 2\exp(-cmL^2 + c'd\log(1/\eps)) \leq \delta,
\end{equation*}
where the last inequality comes from the fact that we have $m \gtrsim W^{9/2}b^{14}d\log(1/\delta)\log^4(d/\eps)/(L^4\mu^9\delta\eps^{3/2})$ as the batch size. Let $\bv^* = \argmax_{\bv\in\B}\sum_{i=1}^m |\x\ith\cdot\bv|$. Then there exists a $\bv'\in N(\B,\eps;\ell_2)$ such that $\|\bv' - \bv^*\|_2\leq \eps$ and hence, 
\begin{align*}
    \frac{1}{m}\sum_{i=1}^m |\x\ith\cdot\bv^*|&\leq \frac{1}{m}\sum_{i=1}^m |\x\ith\cdot(\bv^* -\bv')| + \frac{1}{m}\sum_{i=1}^m |\x\ith\cdot\bv'|\\
    & = \frac{\eps}{m}\sum_{i=1}^m |\x\ith\cdot\frac{\bv^* - \bv'}{\eps}| + \frac{1}{m}\sum_{i=1}^m |\x\ith\cdot\bv'|\\
    &\leq \frac{\eps}{m}\sum_{i=1}^m |\x\ith\cdot\bv^*| + \frac{1}{m}\sum_{i=1}^m |\x\ith\cdot\bv'|,
\end{align*}
where the last inequality comes from the observation that as $(\bv^* - \bv')/\eps\leq\B$, it holds $\sum_{i=1}^m|\x\ith\cdot((\bv^* - \bv')/\eps)|\leq \sum_{i=1}^m |\x\ith\cdot\bv^*|$, by the definition of $\bv^*$.
Therefore, with probability at least $1 - \delta$ we have
$$\frac{1}{m}\sum_{i=1}^m |\x\ith\cdot\bv^*|\leq \frac{1}{1-\eps}\frac{1}{m}\sum_{i=1}^m |\x\ith\cdot\bv'|\leq 2(1+\Ex[|\bv\cdot\x|])\leq 4.$$
This implies that $\|\nabla\htLsur(\w^t;\htvmt)\|_2\leq \|\nabla\htLsur(\w^t;\htumt)\|_2 + 4\eps$ for all iterations with probability at least $1 - \delta$. Therefore, \Cref{main:cor:bound-norm-empirical-grad} continues to hold for the $\eps$-approximate activation $\htvmt$.

Thus, we have that the inequalities \eqref{eq:||w^t+1 - w^*||-bound-1} and \eqref{eq:||w^t+1 - w^*||-bound-1.5} in the proof of \Cref{thm:fast-converge-main} remain valid for $\eps$-approximate $\htvmt$, and hence the results in \Cref{thm:fast-converge-main} are unchanged.

\section{Uniform Convergence of Activations}\label{app:uniform-convergence}

In this section, we review and provide standard uniform convergence results showing that the  sample-optimal activations concentrate around their population-optimal counterparts.
We first bound the $L_2^2$ distance between the sample-optimal and population-optimal activations under $\w^t$. To do so, we build on Lemma 8 in \cite{kakade2011efficient}. Note that Lemma 8 from \cite{kakade2011efficient} only works for bounded $1$-Lipschitz activations $u:\R\mapsto[0,1]$, hence it is not directly applicable to our case. Fortunately, since $\D_\x$ has a sub-exponential tail (see \Cref{def:bounds}), we are able to bound the range of $u(\w\cdot\x)$ for $u\in\U$ and $\w\in\B(W)$ with high probability. Concretely, we prove the following lemma. Note that in the lemma statement, $\htustrmt$ is a random variable defined w.r.t.\ the (random) dataset $S^*,$ and thus the probabilistic statement is for this random variable.

We make use of the following fact from \cite{kakade2011efficient}:
\begin{fact}[Lemma 8~\citep{kakade2011efficient}]\label{lem:kakade-lem-8}
    Let $\mathcal{V}$ be the set of non-deceasing 1-Lipschitz functions such that $v: \R\to [0,1]$, $\forall v\in\mathcal{V}$. Given  $S_m = \{(\x\ith,y\ith)\}_{i=1}^m$, where $(\x\ith,y\ith)$ are sampled i.i.d. from some distribution $\D'$, let \begin{equation*}
        \hat{v}_{\w} \in \argmin_{v\in\mathcal{V}}\frac{1}{m}\sum_{i=1}^m (v(\w\cdot\x\ith) - y\ith)^2.
    \end{equation*}
    Then, with probability at least $1 - \delta$ over the random dataset $S_m$, for any $\w\in\B(W)$ it holds uniformly that
    \begin{equation*}
        \E_{(\x,y)\sim\D'}[(\hat{v}_{\w}(\w\cdot\x) - y)^2] - \inf_{v\in \mathcal{V}} \E_{(\x,y)\sim\D'}[(v(\w\cdot\x) - y)^2] = O\bigg(W\bigg(\frac{d \log(Wm/\delta)}{m}\bigg)^{2/3}\bigg).
    \end{equation*}
\end{fact}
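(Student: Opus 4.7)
The plan is to invoke standard empirical-process theory for ERM with bounded squared loss over the composite class
$$ \mathcal{H} \;=\; \{ (\x,y) \mapsto (v(\w\cdot\x) - y)^2 : v \in \mathcal{V}, \|\w\|_2 \leq W \}. $$
The argument has three main ingredients: controlling the metric entropy of $\mathcal{V}$ composed with linear functionals, standard symmetrization to reduce to Rademacher averages, and localization to upgrade the symmetrization bound from $m^{-1/2}$ to $m^{-2/3}$.

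\textbf{Step 1 (metric entropy).} On any interval $[-B,B]$, a monotone $1$-Lipschitz function $v : \R \to [0,1]$ is determined up to additive error $\epsilon$ in $L_\infty$ by its values on a uniform grid of width $\epsilon$. The grid has $O(B/\epsilon)$ points, and on each step the value can increase by at most $\epsilon$, so the number of admissible non-decreasing sequences is bounded by $\exp(O((B/\epsilon) \log(1/\epsilon)))$. Hence $\log N_\infty(\mathcal{V}, \epsilon) = \widetilde{O}(B/\epsilon)$. For the linear class $\{\x \mapsto \w\cdot\x : \|\w\|_2 \leq W\}$, standard Euclidean covers on the unit ball contribute $O(d \log(W/\epsilon))$ to the log covering number. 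Composing with the $2$-Lipschitz squared loss on a bounded range yields $\log N(\mathcal{H}, \epsilon) \leq \widetilde{O}(d \log(W/\epsilon) + 1/\epsilon)$ with respect to either $L_\infty$ or the empirical $L_2$ metric.

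\textbf{Step 2 (boundedness and uniform convergence).} I would first truncate to the event $|\w\cdot\x| \leq B$ for some $B = \Theta(W \log(m/\delta))$ (trivial under the Kakade--Kalai--Kanade compactness assumption, and achievable under sub-exponential tails with an extra logarithmic factor), so that $(v(\w\cdot\x) - y)^2$ is bounded. Then standard symmetrization together with Dudley's entropy integral over the covering numbers from Step~1 gives a worst-case Rademacher complexity $\mathcal{R}_m(\mathcal{H}) = \widetilde{O}(\sqrt{d/m})$ and hence a uniform bound of order $\widetilde{O}(W \sqrt{d/m})$ on $|\hat{\E} - \E|$ over $\mathcal{H}$, which already yields an excess risk of $m^{-1/2}$ for $\hat v_\w$.

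\textbf{Step 3 (localization for the $m^{-2/3}$ rate).} To upgrade to $m^{-2/3}$, I would apply the local Rademacher / peeling machinery (Bartlett--Bousquet--Mendelson). The squared loss satisfies the Bernstein variance condition $\mathrm{Var}(h) \lesssim \E[h]$ on the shifted class $\{(v(\w\cdot\x) - y)^2 - (v^*(\w\cdot\x) - y)^2\}$, where $v^*$ is the population risk minimizer for the fixed $\w$. Combined with the $\widetilde{O}(1/\epsilon)$ entropy estimate from Step~1 (which is the dominant term here, since the $d\log(W/\epsilon)$ part only enters parametrically), the fixed point of the local Rademacher inequality is at the level $r_m \asymp W (d \log(Wm/\delta)/m)^{2/3}$, which is exactly the claimed rate. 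A final union bound over the $(W/\epsilon)^d$ cover of $\{\w : \|\w\|_2 \leq W\}$ transfers this pointwise-in-$\w$ bound to a uniform one in $\w$, absorbing the $d \log(Wm/\delta)$ factor into the stated expression.

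The main obstacle is Step~3: a naive symmetrization only gives $m^{-1/2}$, and the improvement to $m^{-2/3}$ requires recognizing that $\mathcal{V}$ has metric entropy growing as $1/\epsilon$ rather than $\log(1/\epsilon)$, so the fixed-point equation $r \asymp \sqrt{r \cdot (1/\sqrt{r})/m}$ yields $r \asymp m^{-2/3}$. All other steps are routine concentration, and since this is Lemma~8 of \cite{kakade2011efficient} quoted verbatim, one can alternatively invoke it as a black box.
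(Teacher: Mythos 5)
The paper does not prove this statement at all: it is imported verbatim as a Fact, with the proof deferred entirely to Lemma~8 of \cite{kakade2011efficient}. Your sketch is therefore doing more work than the paper, and it is essentially a correct reconstruction of the source argument. The original proof proceeds by bounding the covering number of the composite class (an $\epsilon$-net of size $\exp(O(W/\epsilon))$ for bounded monotone Lipschitz functions times $(W/\epsilon)^{O(d)}$ for the linear part) and then applying a uniform \emph{relative} deviation bound of Bernstein type per cover element, balancing $\epsilon$ against $\log N(\epsilon)/m$; your Steps~1 and~3 recast exactly this in local-Rademacher language, and your fixed-point computation $r \asymp r^{1/4}m^{-1/2}$ correctly isolates why the $1/\epsilon$ entropy of the monotone class is what produces the $m^{-2/3}$ rate. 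Two small points are worth making explicit. First, the Fact as stated here drops the boundedness hypotheses ($\|\x\|_2\leq 1$, $y\in[0,1]$) that the original lemma requires; your truncation in Step~2 correctly reinstates them, and this matches how the paper actually applies the Fact (only after conditioning on $|\w^t\cdot\x|\leq r$ and rescaling the activations into $[0,1]$). Second, the Bernstein condition $\mathrm{Var}(h)\lesssim \E[h]$ for the excess-loss class relies on convexity of the hypothesis class, which holds for $\mathcal{V}$ at fixed $\w$ but not for the joint class over $(v,\w)$; your plan of localizing pointwise in $\w$ and union-bounding over a Euclidean net is the right fix, though it implicitly needs the excess risk to be Lipschitz in $\w$ on the truncated event to transfer between net points. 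With those caveats acknowledged, the proposal is sound, and citing the lemma as a black box --- as the paper does --- is of course also legitimate.
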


The first lemma states that with sufficient many of samples, the idealized sample-optimal activation $\htustrmt$ defined as the optimal solution of \eqref{def:htustrmt} is close to its population counterpart $u^{*t}$, the optimal solution of \eqref{def:ut*}.

\begin{restatable}[Approximating Population-Optimal Noiseless Activation by Sample-Optimal]{lemma}{empPopStarActivationDistance}\label{main:lem:E[htustrm(wt.x) - u*t(wt.x))^2]<=eps}
    Let $\D_\x$ be $(L,R)$-well behaved and let $\w^t\in\B(W)$. Provided a dataset $S^* = \{(\x\ith,y\sith)\}$, where $\x\ith$ are i.i.d.\ samples from $\D_\x$ and $y\sith = u^*(\w^*\cdot\x\ith)$, let $\htustrmt$ be the sample-optimal activation on $S^*$ as defined in \eqref{def:htustrmt}.
In addition, let $u^{*t}$ be the corresponding population-optimal activation, following the definition in \eqref{def:ut*}. 
Then, for any $\eps, \delta > 0,$ if the size $m$ of the dataset $S^*$ is  sufficiently large 
$$m \gtrsim d\log^4(d/(\eps\delta))\bigg(\frac{b^2W^3}{L^2\eps}\bigg)^{3/2},$$ 
we have that with probability at least $1 - \delta$, for any $\w^t\in\B(W)$:
\begin{equation*}
    \Ex[(\htustrmt(\w^t\cdot\x) - u^*(\w^*\cdot\x))^2] \leq \E_{\x\sim\D_\x}[(u^{*t}(\w^t\cdot\x) - u^*(\w^*\cdot\x))^2] + \eps\;,
\end{equation*}
and, furthermore,
\begin{equation*}
    \Ex[(\htustrmt(\w^t\cdot\x) - u^{*t}(\w^t\cdot\x))^2]\leq \eps.
\end{equation*}
\end{restatable}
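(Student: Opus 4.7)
The plan is to reduce the statement to the uniform convergence result \Cref{lem:kakade-lem-8}, which is stated for non-decreasing $1$-Lipschitz activations valued in $[0,1]$ over bounded-support data. Since $\U$ consists of $(a,b)$-unbounded functions and $\D_\x$ has only sub-exponential tails, I first truncate and rescale. Concretely, choose a radius $M = \Theta((W/L)\log(bW/(L\eps)))$ large enough that the sub-exponential tail bound from $(L,R)$-well-behavedness gives $\pr[|\w\cdot\x|>M]\leq \eps'$ for every $\w\in\B(W)$ with $\eps'=\mathrm{poly}(L\eps/(bW))$. On the event $\mathcal{A}=\{|\w^t\cdot\x|\leq M\}\cap\{|\w^*\cdot\x|\leq M\}$, every $u\in\U$ and the label $y^*=u^*(\w^*\cdot\x)$ take values in $[-bM,bM]$ (using $|u(z)|\leq b|z|$), and the rescaling $\tilde u(z):=(u(Mz)+bM)/(2bM)$ maps $[-1,1]\to[0,1]$, is non-decreasing and $1$-Lipschitz, placing the truncated problem within the hypothesis class of \Cref{lem:kakade-lem-8}.

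Applying \Cref{lem:kakade-lem-8} to the rescaled empirical problem and undoing the rescaling in the $L_2^2$ loss yields, uniformly over $\w\in\B(W)$ with probability at least $1-\delta$,
\[
\Ex[(\htustrw(\w\cdot\x)-y^*)^2\1\{\mathcal{A}\}] \;\leq\; \inf_{u\in\U}\Ex[(u(\w\cdot\x)-y^*)^2\1\{\mathcal{A}\}] \;+\; O\bigl((bM)^2\,W\,(d\log(Wm/\delta)/m)^{2/3}\bigr).
\]
The contribution of $\mathcal{A}^c$ to both expectations is controlled by Cauchy--Schwarz using $|u(\w\cdot\x)|\leq b|\w\cdot\x|$ together with fourth-moment bounds from sub-exponentiality: $\Ex[(u(\w\cdot\x)-y^*)^2\1\{\mathcal{A}^c\}] \leq O(b^2W^2/L^2)\sqrt{\eps'}$, which is $o(\eps)$ for our choice of $\eps'$. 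Setting the total slack to $\eps$ and solving for $m$ recovers the claimed $m\gtrsim d\log^4(d/(\eps\delta))(b^2W^3/(L^2\eps))^{3/2}$; the $W^3$ factor inside the parentheses arises as the product $W\cdot(bM)^2\sim W\cdot b^2W^2/L^2$ (up to logs) of the norm bound and the squared range. This establishes the first displayed inequality: $\htustrmt$ is within $\eps$ in population $L_2^2$ loss of $\inf_{u\in\U}\Ex[(u(\w^t\cdot\x)-y^*)^2]=\Ex[(u^{*t}(\w^t\cdot\x)-y^*)^2]$.

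For the second inequality, I use that $\F_t=\{u(\w^t\cdot\x):u\in\U\}$ is convex (shown in \Cref{app:claim:(ut-v)(y-ut)geq0}) and that $u^{*t}$ is the $L^2(\D_\x)$-projection of $y^*$ onto $\F_t$. The first-order condition from \Cref{app:claim:(ut-v)(y-ut)geq0}, applied with $v'=\htustrmt\in\U$, gives $\Ex[(u^{*t}-\htustrmt)(y^*-u^{*t})]\geq 0$, equivalently $\Ex[(\htustrmt-u^{*t})(u^{*t}-y^*)]\geq 0$. Expanding the identity $(\htustrmt-y^*)^2=(\htustrmt-u^{*t})^2+(u^{*t}-y^*)^2+2(\htustrmt-u^{*t})(u^{*t}-y^*)$ and taking expectations produces the Pythagorean-type inequality $\Ex[(\htustrmt-u^{*t})^2]\leq \Ex[(\htustrmt-y^*)^2]-\Ex[(u^{*t}-y^*)^2]\leq \eps$ by the first claim, proving the second claim. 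The main obstacle is the reduction step: transporting \Cref{lem:kakade-lem-8}, which is tailored to bounded-support data and $[0,1]$-valued activations, to our unbounded setting while keeping the sample complexity polynomial in $b,W,1/L$. The delicate balance is choosing $M$ (polylogarithmic) against $\eps'$ (polynomial) so that both the truncation error and the rescaling-induced blowup in the Rademacher-type complexity are absorbed into a single $\eps$; the extra $\log^4$ factor in the sample complexity reflects precisely this bookkeeping.
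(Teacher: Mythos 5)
Your proposal is correct and follows essentially the same route as the paper: truncate using the sub-exponential tail, rescale $\U$ into a $[0,1]$-valued $1$-Lipschitz class so that \Cref{lem:kakade-lem-8} applies, control the tail contribution by Cauchy--Schwarz with fourth moments, and derive the second inequality from the first via the first-order optimality (projection) condition of \Cref{app:claim:(ut-v)(y-ut)geq0}. The only bookkeeping point to make precise is that the truncation must hold at all $m$ sample points (via a union bound, with radius growing like $\log m$), not just for a fresh population draw, so that the rescaled $\htustrmt$ is genuinely the empirical minimizer over the bounded class before invoking \Cref{lem:kakade-lem-8} — exactly as the paper does.
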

\begin{proof}
     Our goal is to show that with high probability, the sample-optimal activation $\htustrmt\in\U$ and the population optimal activation $u^{*t}\in\U$ can be scaled to 1-Lipschitz functions mapping $\R$ to $[0,1]$, then, \Cref{lem:kakade-lem-8} can be applied.
     
     Since $\x$ possesses a sub-exponential tail, for any $\w\in\B(W)$ we have $\pr[|\w\cdot\x|\geq \|\w\|_2 r]\leq \frac{2}{L^2}\exp(-Lr)$. Therefore, with probability at least $1 - (\delta_1/m)^2$ it holds $|\w\cdot\x|\leq\frac{2W}{L}\log(m/(L\delta_1))$. Since we have $m$ samples, a union bound on these $m$ samples yields that with probability at least $1 - \delta_1^2/m$ it holds $|\w\cdot\x\ith|\leq \frac{2W}{L}\log(m/(L\delta_1))$, for any given $\w\in\B(W)$. Let $r = \frac{2W}{L}\log(m/(L\delta_1))$. In the remainder of the proof, we assume that $\w^t\cdot\x\ith\leq r$ holds for every $\x\ith$ in the dataset $S^*$, which happens with probability at least $1 - \delta_1^2/m\geq 1 - \delta_1$.

     Let $\mathcal{V}$ be the set of non-decreasing 1-Lipschitz functions $v:\R\to[0,1]$ such that $v(0) = 1/2$, and $v(z_1) - v(z_2)\geq (a/(2br))(z_1 - z_2)$ for all $z_1\geq z_2\geq 0$. We observe that restricted on the interval $|z|\leq r$, $(\htustrmt(z)/(2br)+ 1/2)|_{|z|\leq r}$ is 1-Lipschitz, non-decreasing and bounded in the interval $[0,1]$. Thus, $ (\htustrmt(z)/(2br)+ 1/2)|_{|z|\leq r} = \hat{v}^{*t}(z)|_{|z|\leq r}$, for some $\hat{v}^{*t}\in\mathcal{V}$. Furthermore, under the condition that $|\w^t\cdot\x\ith|\leq r$, since $ (\htustrmt(z)/(2br)+ 1/2)|_{|z|\leq r} = \hat{v}^{*t}(z)|_{|z|\leq r}$, we observe that $v^{*t}(z)$ is the optimal activation in the function space $\mathcal{V}$, given the dataset $S^*$ and parameter $\w^t$, i.e., 
     \begin{equation*}
         \hat{v}^{*t}\in\argmin_{v\in\mathcal{V}}\frac{1}{m}\sum_{i=1}^m (v(\w^t\cdot\x\ith) - (u^*(\w^*\cdot\x\ith)/(2br) + 1/2))^2.
     \end{equation*}

      In other words, $\htustrmt(z)/(2br) + 1/2$ is the sample-optimal activation in the function class $\mathcal{V}$ when restricted to the interval $|z|\leq r$. Consider $\x\sim\D_\x$. Then $\pr[|\w^t\cdot\x|\geq r]\leq (\delta_1/m)^2$ and for any $\w^t\in\B(W)$, the expectation $\Ex[(\htustrmt(\w^t\cdot\x) - u^*(\w^*\cdot\x))^2]$ can be decomposed into the following terms
     \begin{align}
         \Ex[(\htustrmt(\w^t\cdot\x) - u^*(\w^*\cdot\x))^2] &= \Ex[(\htustrmt(\w^t\cdot\x) - u^*(\w^*\cdot\x))^2\1\{|\w^t\cdot\x|\leq r\}] \notag\\
         &\quad + \Ex[(\htustrmt(\w^t\cdot\x) - u^*(\w^*\cdot\x))^2\1\{|\w^t\cdot\x|> r\}]\notag\\
         &\leq \Ex[(\htustrmt(\w^t\cdot\x) - u^*(\w^*\cdot\x))^2\1\{|\w^t\cdot\x|\leq r\}] \notag\\
         &\quad + 2\Ex[(\htustrmt(\w^t\cdot\x))^2 + (u^*(\w^*\cdot\x))^2\1\{|\w^t\cdot\x|> r\}] \;.\label{eq:idealized-sample-population}
    \end{align}
    Since both $ \htustrmt$ and $u^*$ are $(a,b)$-unbounded functions such that $\htustrmt(0) = u^*(0) = 0$, we have $(\htustrmt(\w^t\cdot\x))^2\leq b^2W^2(({\w^t}/{\|\w^t\|_2})\cdot\x)^2$ and similarly, $(u^*(\w^*\cdot\x))^2\leq b^2W^2((\w^*/\|\w^*\|_2)\cdot\x)^2$. 
    Furthermore, since for any unit vector $\ba$, the random variable $\ba\cdot\x$ follows a $(1/L)$-sub-exponential distribution as $\D_\x$ is $(L,R)$-well behaved, thus, it holds that $\Ex[(\ba\cdot\x)^4]\leq c/L^4$ for some absolute constant $c$.
    Therefore, after applying Cauchy-Schwarz inequality to $\E[(\htustrmt(\w^t\cdot\x))^2\1\{|\w^t\cdot\x|\geq r\}]$, we get
    \begin{align}\label{eq:bound-(u(wx))^2*1[wx>r]}
        \E[(\htustrmt(\w^t\cdot\x))^2\1\{|\w^t\cdot\x|\geq r\}]&\leq b^2W^2\sqrt{\Ex[(({\w^t}/{\|\w^t\|_2})\cdot\x)^4]\pr[|\w^t\cdot\x|\geq r]}\nonumber\\
        &\leq cb^2W^2\delta_1/(L^2m),
    \end{align}
    and similarly, $\E[(u^*(\w^*\cdot\x))^2\1\{|\w^t\cdot\x|\geq r\}]\leq cb^2W^2\delta_1/(L^2m)$. Thus, plugging these inequalities back into \eqref{eq:idealized-sample-population}, we get
    \begin{equation}\label{eq:idealized-sample-pop-2}
        \begin{aligned}
        \Ex[(\htustrmt(\w^t\cdot\x) - u^*(\w^*\cdot\x))^2]&\leq \Ex[(\htustrmt(\w^t\cdot\x) - u^*(\w^*\cdot\x))^2\1\{|\w^t\cdot\x|\leq r\}] \nonumber\\
        & \quad + 2c b^2W^2 \delta_1/(L^2m).
    \end{aligned}
    \end{equation}

     We are now ready to apply \Cref{lem:kakade-lem-8} (note that $\mathcal{V}$ is a smaller function class compared to the class of 1-Lipschitz functions described \Cref{lem:kakade-lem-8}, hence  \Cref{lem:kakade-lem-8} applies). Denote $A = \{\x:|\w^t\cdot\x|\leq r\}$. Let $y' = y^*/(2br) + 1/2$, $y^* = u^*(\w^*\cdot\x)$. Since conditioning on  $A$, $\htustrmt(z)/(2br) + 1/2$ is the sample-optimal  activation, applying \Cref{lem:kakade-lem-8} we get that with probability at least $1  -\delta_2$:
     \begin{align*}
     &\quad \Ex[(\htustrmt(\w^t\cdot\x)/(2br) + 1/2 - (u^*(\w^*\cdot\x)/(2br) + 1/2))^2 | A]\\
     &=
         \Ex[(\hat{v}^{*t}(\w^t\cdot\x) - y')^2|A]\\
         &\leq \inf_{v\in\mathcal{V}}\E_{\x\sim\D_\x}[(v(\w^t\cdot\x) - y')^2|A] + \tilde{O}(W(d\log(m/\delta_2)/m)^{2/3}).
     \end{align*}
     Let $\mathcal{V}|_{|z|\leq r}$ and $\U|_{|z|\leq r}$ be the functions from $\mathcal{V}$ and $\U$ restricted on the interval $|z|\leq r$, respectively. It is not hard to see that by the definition of $\U$ and $\mathcal{V}$, $(\U|_{|z|\leq r})/(2br) + 1/2\subset \mathcal{V}|_{|z|\leq r}$. Therefore, 
     \begin{align*}
         \inf_{v\in\mathcal{V}}\E_{\x\sim\D_\x}[(v(\w^t\cdot\x) - y')^2|A]&\leq \inf_{u\in\mathcal{U}}\E_{\x\sim\D_\x}[(u(\w^t\cdot\x)/(2br) + 1/2 - y')^2|A]\\
         &\leq \frac{1}{4b^2r^2}\inf_{u\in\mathcal{U}}\E_{\x\sim\D_\x}[(u(\w^t\cdot\x) - y^*)^2|A].
     \end{align*}
     Hence, with probability at least $1  -\delta_2$,
     \begin{align*}
         &\quad \Ex[(\htustrmt(\w^t\cdot\x) - u^*(\w^*\cdot\x))^2\1\{A\}]\\
         &= 4b^2r^2\Ex[(\hat{v}^{*t}(\w^t\cdot\x) - y')^2|A]\pr[A]\\
         &\leq 4b^2r^2\inf_{v\in\mathcal{V}}\E_{\x\sim\D_\x}[(v(\w^t\cdot\x) - y')^2|A]\pr[A] + \tilde{O}(b^2r^2 W(d\log(m/\delta_2)/m)^{2/3})\pr[A]\\
         &\leq \inf_{u\in\mathcal{U}}\E_{\x\sim\D_\x}[(u(\w^t\cdot\x) - u^*(\w^*\cdot\x))^2\1\{A\}] + \tilde{O}(b^2r^2 W(d\log(m/\delta_2)/m)^{2/3})\\& \leq \inf_{u\in\mathcal{U}}\E_{\x\sim\D_\x}[(u(\w^t\cdot\x) - u^*(\w^*\cdot\x))^2] + \tilde{O}(b^2r^2 W(d\log(m/\delta_2)/m)^{2/3}) \;.
     \end{align*}    
     Setting $\delta_1 = \delta_2 = \delta/2$ and plugging everything  back into \eqref{eq:idealized-sample-pop-2}, we finally get that with probability at least $1 - \delta$,\begin{align*}
         &\quad \Ex[(\htustrmt(\w^t\cdot\x) - u^*(\w^*\cdot\x))^2]\\
         &\leq \inf_{u\in\mathcal{U}}\E_{\x\sim\D_\x}[(u(\w^t\cdot\x) - u^*(\w^*\cdot\x))^2] + {O}\bigg(\frac{b^2W^3}{L^2}\log^2\bigg(\frac{m}{L\delta}\bigg) \bigg(\frac{d\log(m/\delta)}{m}\bigg)^{2/3}\bigg).
     \end{align*}
     To complete the first part of the claim, it remains to choose $m$ as the following value
     $$m = \Theta\bigg(d\log^4(d/(\eps\delta))\bigg(\frac{b^2W^3}{L^2\eps}\bigg)^{3/2}\bigg).$$ 

     For the second part of the claim, note that $\U$ is a closed convex set of functions, and that the infimum $\inf_{u\in\mathcal{U}}\E_{\x\sim\D_\x}[(u(\w^t\cdot\x) - u^*(\w^*\cdot\x))^2]$ is attained by $u^{*t}(z)$. Observe that we have shown that with the sample size $m$ specified above, with probability at least $1 - \delta$, it holds
     \begin{align*}
         \eps&\geq \Ex[(\htustrmt(\w^t\cdot\x) - u^*(\w^*\cdot\x))^2 - (u^{*t}(\w^t\cdot\x) - u^*(\w^*\cdot\x))^2]\\
         & = \Ex[(\htustrmt(\w^t\cdot\x) - u^{*t}(\w^t\cdot\x))(\htustrmt(\w^t\cdot\x) + u^{*t}(\w^t\cdot\x) - 2u^*(\w^*\cdot\x))]\\
         & = \Ex[(\htustrmt(\w^t\cdot\x) - u^{*t}(\w^t\cdot\x))^2] + 2\Ex[(\htustrmt(\w^t\cdot\x) - u^{*t}(\w^t\cdot\x))(u^{*t}(\w^t\cdot\x) - u^*(\w^*\cdot\x))].
     \end{align*}
     Since $\htustrmt(z)\in\U$, applying the second part of \Cref{app:claim:(ut-v)(y-ut)geq0} with $v' = \htustrmt$ we get
     \begin{equation*}
         \Ex[(u^{*t}(\w^t\cdot\x) - \htustrmt(\w^t\cdot\x))(u^*(\w^*\cdot\x) - u^{*t}(\w^t\cdot\x))]\geq 0.
     \end{equation*}
     Thus, we have:
     \begin{equation*}
         \Ex[(\htustrmt(\w^t\cdot\x) - u^{*t}(\w^t\cdot\x))^2]\leq \eps.
     \end{equation*}
     This completes the proof of \Cref{main:lem:E[htustrm(wt.x) - u*t(wt.x))^2]<=eps}.\end{proof}

To prove a similar uniform convergence result for the attainable activations $\htumt,$ we make use of the following fact from prior literature, which shows that we can without loss of generality take the noisy labels to be bounded by $M = O(\frac{bW}{L}\log(bW/\eps))$, due to $\D_\x$ being $(L,R)$-well behaved.

\begin{fact}[Lemma D.8~\citep{WZDD2023}]\label{main:lem:y-bounded-by-M}
    Let $y' = \sgn(y)\min(|y|, M)$ for $M=\frac{bW}{L}\log(\frac{16b^4 W^4}{\eps^2})$. Then:
    \begin{equation*}
        \Exy[(u^*(\w^*\cdot\x) - y')^2] = \opt+\eps.
    \end{equation*}
\end{fact}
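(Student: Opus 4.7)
The plan is to exploit the fact that $y' = \sgn(y)\min(|y|,M)$ is exactly the Euclidean projection of $y$ onto the interval $[-M,M]$, which is a non-expansive map. The proof will split the analysis based on whether $|u^*(\w^*\cdot\x)| \leq M$ or not. On the event $\{|u^*(\w^*\cdot\x)| \leq M\}$, both $u^*(\w^*\cdot\x)$ and $y'$ lie in $[-M,M]$, and by the non-expansiveness of projection we have $|u^*(\w^*\cdot\x) - y'| \leq |u^*(\w^*\cdot\x) - y|$ pointwise. Thus, the contribution from this event is at most $\Exy[(u^*(\w^*\cdot\x) - y)^2] = \opt$.

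On the complementary event $\{|u^*(\w^*\cdot\x)| > M\}$, I will use the crude bound $(u^*(\w^*\cdot\x) - y')^2 \leq 2(u^*(\w^*\cdot\x))^2 + 2M^2$ (valid since $|y'|\leq M$), and then bound each of $\Exy[(u^*(\w^*\cdot\x))^2 \1\{|u^*(\w^*\cdot\x)| > M\}]$ and $M^2 \pr[|u^*(\w^*\cdot\x)| > M]$ by $\eps/4$ using the sub-exponential tail of $\D_\x$. Specifically, since $u^*$ is $b$-Lipschitz with $u^*(0)=0$, $|u^*(\w^*\cdot\x)| \leq b|\w^*\cdot\x|$, and the $(L,R)$-well-behaved property gives $\pr[|\w^*\cdot\x| > r] \leq (2/L^2)\exp(-Lr/W)$ for $\|\w^*\|_2 \leq W$. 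Plugging in $M = (bW/L)\log(16b^4W^4/\eps^2)$ makes $\exp(-LM/(bW)) = \eps^2/(16b^4W^4)$, and a direct computation of the tail integral $\int_{M/b}^\infty 2t\pr[|\w^*\cdot\x| > t]\diff t$ shows both terms decay at the rate $\eps^2 \polylog(1/\eps)$, which is $\leq \eps$ for any reasonable range of $\eps$.

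Combining these two cases yields
\begin{equation*}
\Exy[(u^*(\w^*\cdot\x) - y')^2] \leq \Exy[(u^*(\w^*\cdot\x)-y)^2] + \Exy[(u^*(\w^*\cdot\x)-y')^2 \1\{|u^*(\w^*\cdot\x)| > M\}] \leq \opt + \eps,
\end{equation*}
as desired. The main ``obstacle'' is not conceptual but rather bookkeeping: one must carefully track the constants in the sub-exponential tail integral to ensure that the specific choice $M = (bW/L)\log(16b^4W^4/\eps^2)$ indeed yields a tail correction bounded by $\eps$ and not merely $O(\eps \polylog(1/\eps))$. This is handled by noting that the integrand $t\exp(-Lt/W)$ has an antiderivative of the form $-(W/L)(t + W/L)\exp(-Lt/W)$, so the tail contribution at $t = M/b$ is exactly of order $(M/b)\cdot(W/L)\cdot\exp(-LM/(bW))$, and the logarithmic factor from $M$ is absorbed by the quadratic smallness $\eps^2$ coming from the choice of $M$.
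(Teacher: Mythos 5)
The paper does not prove this statement at all: it is imported verbatim as a Fact from \cite{WZDD2023} (their Lemma D.8), so there is no in-paper argument to compare against. Your proof is a sound, self-contained derivation along the standard lines one would expect for such a truncation lemma: on the event $\{|u^*(\w^*\cdot\x)|\leq M\}$ the clipping map $y\mapsto \sgn(y)\min(|y|,M)$ is the projection onto $[-M,M]$ and fixes $u^*(\w^*\cdot\x)$, so non-expansiveness gives the pointwise domination by $(u^*(\w^*\cdot\x)-y)^2$ and hence the $\opt$ term; on the complement you use $|u^*(\w^*\cdot\x)|\leq b|\w^*\cdot\x|$ and the sub-exponential tail $\pr[|\w^*\cdot\x|\geq r]\leq (2/L^2)e^{-Lr/W}$ of an $(L,R)$-well-behaved marginal, and the choice of $M$ makes $e^{-LM/(bW)}=\eps^2/(16b^4W^4)$ so the tail correction is of order $\eps^2\log^2(1/\eps)$. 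The only caveat is bookkeeping you partially acknowledge: the tail integral also picks up factors of order $1/L^{3}$ (from the $2/L^2$ tail prefactor and the fourth-moment bound $\E[(\hat\w\cdot\x)^4]\lesssim 1/L^4$) and $\log^2(bW)$, so with the literal $M$ stated you obtain $\opt+O_{1/L}(\eps)$ rather than exactly $\opt+\eps$; since the paper treats $L,R$ as universal constants and freely rescales $\eps$ by constants elsewhere, this does not constitute a gap, but a fully rigorous version should either track these factors or note that a constant rescaling of $\eps$ (equivalently, of the constant inside the logarithm defining $M$) absorbs them.
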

In other words,  we can assume $|y|\leq M$ without loss of generality by truncating labels that are larger than $M$. 
Under this assumption, as stated in \Cref{main:lem:E[htutm(wt.x) - ut(wt.x))^2]<=eps} below, we bound the $L_2^2$ distance between $\htumt$ and $u^t$ using similar arguments  as in \Cref{main:lem:E[htustrm(wt.x) - u*t(wt.x))^2]<=eps}.

\begin{restatable}[Approximating Population-Optimal Activation by Sample-Optimal]{lemma}{empPopActivationDistance}\label{main:lem:E[htutm(wt.x) - ut(wt.x))^2]<=eps}
    Let $\w^t\in\B(W)$. Given a distribution $\D$ whose marginal $\D_\x$ is $(L,R)$-well behaved, let $S = \{(\x\ith,y\ith)\}_{i=1}^m$, where $(\x\ith,y\ith)$ for $ i\in [m]$ are i.i.d.\ samples from $\D$. Let $\htumt$ be a sample-optimal activation for the dataset $S$ and parameter vector $\w^t$, as defined in \eqref{def:htumt}.
In addition, let $u^t$ be the corresponding population-optimal activation, as defined in \eqref{def:ut}.
Then, for any $\eps, \delta > 0,$ choosing a sufficiently large 
$$
m \gtrsim d\log^4(d/(\eps\delta))\bigg(\frac{b^2W^3}{L^2\eps}\bigg)^{3/2},
$$ 
we have that for any $\w^t\in\B(W)$, with probability at least $1 - \delta$ {over the dataset $S$}:
\begin{equation*}
    \Exy[(\htumt(\w^t\cdot\x) - y)^2] \leq \Exy[(u^t(\w^t\cdot\x) - y)^2] + \eps\;,
\end{equation*}
and, furthermore,
\begin{equation*}
    \Ex[(\htumt(\w^t\cdot\x) - u^t(\w^t\cdot\x))^2]\leq \eps.
\end{equation*}
\end{restatable}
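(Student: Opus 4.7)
The plan is to mimic the proof of Lemma \ref{main:lem:E[htustrm(wt.x) - u*t(wt.x))^2]<=eps}, with one new preprocessing step to handle the fact that the labels $y$ are now arbitrary rather than idealized. First, I will apply Fact \ref{main:lem:y-bounded-by-M} to truncate the labels: replace $y$ by $y' = \sgn(y)\min(|y|, M)$ with $M = (bW/L)\log(16b^4W^4/\eps^2)$, which changes $\Exy[(u^*(\w^*\cdot\x) - y)^2]$ by at most $\eps$ and, since $\htumt$ and $u^t$ are determined by a quadratic objective, correspondingly changes the sample and population minimizers by an amount absorbable into the $\eps$ slack. Thus, without loss of generality I can assume $|y\ith| \le M$ for every sample and $|y|\le M$ almost surely.

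Next, as in the proof of Lemma \ref{main:lem:E[htustrm(wt.x) - u*t(wt.x))^2]<=eps}, I use the $(1/L)$-sub-exponential tails of $\D_\x$ (from Definition \ref{def:bounds}): setting $r = (2W/L)\log(m/(L\delta))$, a union bound gives $|\w^t\cdot\x\ith|\le r$ for all $i$ with probability $\ge 1 - \delta/2$, and $|\w^t \cdot \x|\le r$ holds for a fresh $\x\sim \D_\x$ with probability at least $1-(\delta/(2m))^2$. On the event $A = \{|\w^t \cdot \x| \le r\}$, every $u \in \U$ satisfies $|u(\w^t\cdot\x)| \le br$, so both the activation output and the label sit in $[-K,K]$ with $K = \max(br, M) = O((bW/L)\log(bWm/(L\delta\eps)))$.

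Now I rescale: for any $u\in\U$ define $\tilde{v}(z) = u(z)/(2K) + 1/2$, and analogously rescale the labels by $\tilde y = y/(2K) + 1/2$, both mapping into $[0,1]$. The rescaled $\tilde{v}$ is $1$-Lipschitz and monotone (because $u$ is $b$-Lipschitz and $K \ge br$), which places $\htumt$ (after rescaling and restricting to $|z|\le r$) into the function class $\mathcal{V}$ of Fact \ref{lem:kakade-lem-8}. The rescaled problem is exactly the sample-optimal $L_2^2$ fit of $\tilde v$ to $\tilde y$, so Fact \ref{lem:kakade-lem-8} applies and yields, with probability $\ge 1-\delta/2$, uniformly over $\w^t\in\B(W)$,
\begin{equation*}
    \Exy\bigl[(\tilde v^t(\w^t\cdot\x) - \tilde y)^2 \mid A\bigr] \;\le\; \inf_{v\in\mathcal{V}} \Exy\bigl[(v(\w^t\cdot\x) - \tilde y)^2 \mid A\bigr] + \tilde{O}\!\bigl(W(d\log(Wm/\delta)/m)^{2/3}\bigr).
\end{equation*}
Undoing the rescaling inflates the error by $4K^2 = \tilde{O}((bW/L)^2)$; the mass outside $A$ contributes $O(K^2 \delta/m)$ by the same Cauchy–Schwarz plus sub-exponential tail argument used in \eqref{eq:bound-(u(wx))^2*1[wx>r]}. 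Collecting terms and setting them equal to $\eps$ gives the required sample complexity $m \gtrsim d\log^4(d/(\eps\delta))(b^2 W^3/(L^2\eps))^{3/2}$, which establishes the first inequality (with $u^t$ being the infimizer in $\U$).

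For the second inequality, I expand
\begin{equation*}
\Exy[(\htumt(\w^t\cdot\x)-y)^2] - \Exy[(u^t(\w^t\cdot\x)-y)^2] = \Ex[(\htumt(\w^t\cdot\x)-u^t(\w^t\cdot\x))^2] + 2\,\Exy[(\htumt(\w^t\cdot\x)-u^t(\w^t\cdot\x))(u^t(\w^t\cdot\x)-y)].
\end{equation*}
The first part of the lemma bounds the left-hand side by $\eps$. Since $\htumt \in \U$, applying Claim \ref{app:claim:(ut-v)(y-ut)geq0} with $v = \htumt$ yields $\Exy[(u^t(\w^t\cdot\x) - \htumt(\w^t\cdot\x))(y - u^t(\w^t\cdot\x))] \ge 0$, i.e., the cross term above is non-negative. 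Hence $\Ex[(\htumt(\w^t\cdot\x)-u^t(\w^t\cdot\x))^2] \le \eps$, completing the proof. The main technical obstacle is the label truncation and the need to scale both activations and labels into a common bounded range before invoking Fact \ref{lem:kakade-lem-8}; in the idealized version this was essentially free because labels were already controlled by $b|\w^*\cdot\x|$, whereas here the generic label magnitude $M$ must be carried through the scaling and reflected in the final sample bound.
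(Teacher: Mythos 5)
Your proposal is correct and follows essentially the same route as the paper's proof: truncate the labels via Fact~\ref{main:lem:y-bounded-by-M}, condition on the sub-exponential event $|\w^t\cdot\x|\le r$, rescale activations and labels into $[0,1]$ to invoke Fact~\ref{lem:kakade-lem-8}, undo the rescaling and absorb the tail contribution into $\eps$, and then derive the second inequality by expanding the difference of squared losses and applying Claim~\ref{app:claim:(ut-v)(y-ut)geq0} to drop the non-negative cross term. The only (cosmetic, and arguably slightly cleaner) deviation is rescaling by $2K$ with $K=\max(br,M)$ rather than by $2br$ as in the paper.
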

\begin{proof}
    As in the proof of \Cref{main:lem:E[htustrm(wt.x) - u*t(wt.x))^2]<=eps}, we choose $r = \frac{2cW}{L}\log(m/(L\delta_1))$ so that $|\w^t\cdot\x\ith|\leq r$ for all $\x\ith$'s from the dataset with probability at least $1 - \delta_1^2/m\geq 1 - \delta_1$. We now condition on the event that $|\w^t\cdot\x\ith|\leq r$ for all $i=1,\dots,m$. Let $\mathcal{V}$ be the set of non-decreasing 1-Lipschitz functions such that $\forall v\in\mathcal{V}$, $v(0) = 1/2$, and $v(z_1) - v(z_2)\geq (a/(2br))(z_1 - z_2)$ for all $z_1\geq z_2\geq 0$. Then, conditioned on this event, we similarly have that $(\htumt(z)/(2br) + 1/2)|_{|z|\leq r} = \hat{v}^t(z)\in\mathcal{V}$, and $\hat{v}^t(z)$ satisfies:
    \begin{equation*}
        \hat{v}^t(z)\in\argmin_{v\in\mathcal{V}} \frac{1}{m}\sum_{i=1}^m (v(\w^t\cdot\x\ith) - y\ith)^2.
    \end{equation*}
    
    Again, studying the $L_2^2$ distance between $\htumt(z)$ and $u^t(z)$,  we have:
    \begin{align*}
        \Exy[(\htumt(\w^t\cdot\x) - y)^2]&=\Exy[(\htumt(\w^t\cdot\x) - y)^2\1\{|\w^t\cdot\x|\leq r\}]\\
        &\quad + \Exy[(\htumt(\w^t\cdot\x) - y)^2\1\{|\w^t\cdot\x|>r\}].
    \end{align*}
    The probability of $|\w^t\cdot\x|>r$ is small due to the fact that $\D_\x$ possesses sub-exponential tail: $\pr[|\w^t\cdot\x|>r]\leq (\delta_1/m)^2$. Now note that $|y|\leq M$ and $\Ex[((\w^t/\|\w^t\|_2)\cdot\x)^4]\leq c/L^4$ by the sub-exponential property of $\D_\x$, we thus have:
    \begin{align*}
        &\quad \Exy[(\htumt(\w^t\cdot\x) - y)^2\1\{|\w^t\cdot\x|>r\}]\\
        &\leq 2\Exy[((\htumt(\w^t\cdot\x))^2 + y^2)\1\{|\w^t\cdot\x|>r\}]\\
        &\leq 2\Ex[b^2W^2((\w^t/\|\w^t\|_2)\cdot\x)^2\1\{|\w^t\cdot\x|>r\}] + 2M^2\pr[|\w^t\cdot\x|>r]\\
        &\leq 2b^2W^2\sqrt{\Ex[((\w^t/\|\w^t\|_2)\cdot\x)^4]\pr[|\w^t\cdot\x|>r]} + 2M^2\pr[|\w^t\cdot\x|>r]\\
        &\leq 2cb^2W^2\delta_1/(L^2 m) + 2M^2(\delta_1/m)^2,
    \end{align*}
    where in the second inequality we used the fact that $\htumt$ is $b$-Lipschitz and $\w^t\in\B(W)$, and in the third inequality we applied Cauchy-Schwarz. Since $M = \frac{bW}{L}\log(\frac{16b^4 W^4}{\eps^2})$, we have $M^2(\delta_1/m)\lesssim cb^2W^2/L^2$ for $m\gtrsim \log(bW/\eps)$, thus,  we get
    \begin{equation}\label{eq:(htumt(wtx)-y)^2(1 - A)-upbd}
         \Exy[(\htumt(\w^t\cdot\x) - y)^2\1\{|\w^t\cdot\x|>r\}]\leq 4c(bW/L)^2\delta_1/m,
    \end{equation}
    for some absolute constant $c$.

    The rest remains the same as in the proof of \Cref{main:lem:E[htustrm(wt.x) - u*t(wt.x))^2]<=eps}. Let $A = \{\x: |\w^t\cdot\x|\leq r\}$. Let $y' = y/(2br) + 1/2$. As $\hat{v}^t(z) = \htumt(z)/(2br) + 1/2$ is the sample-optimal activation in $\mathcal{V}$ given $\w^t$ (conditioned on $A$), applying \Cref{lem:kakade-lem-8} we have that with probability at least $1 - \delta$:
    \begin{align*}
        \Exy[((\htumt(\w^t\cdot\x)/(2br) + 1/2) - y')^2|A]& = \Exy[(\hat{v}^t(\w^t\cdot\x) - y')^2|A]\\
        &\leq \inf_{v\in\mathcal{V}}\Exy[(v(\w^t\cdot\x) - y')^2|A] + \tilde{O}(W(d\log(m/\delta_2)/m)^{2/3}).
    \end{align*}
    Since $\U|_{|z|\leq r}/(2br) + 1/2\subset \mathcal{V}|_{|z|\leq r}$, we further have
    \begin{align*}
        \inf_{v\in\mathcal{V}}\Exy[(v(\w^t\cdot\x) - y')^2|A]&\leq \inf_{u\in\U}\Exy[(u(\w^t\cdot\x)/(2br) + 1/2 - y')^2|A]\\
        &\leq \frac{1}{4b^2r^2}\inf_{u\in\U}\Exy[(u(\w^t\cdot\x) - y)^2|A].
    \end{align*}
    Therefore, $\Exy[(\htumt(\w^t\cdot\x) - y)^2\1\{A\}]$ can be bounded from above by
    \begin{align*}
        &\quad \Exy[(\htumt(\w^t\cdot\x) - y)^2\1\{A\}]\\
        &=4b^2r^2\Exy[(\hat{v}^t(\w^t\cdot\x) - y')^2|A]\pr[A]\\
        &\leq 4b^2r^2\inf_{v\in\mathcal{V}}\Exy[(v(\w^t\cdot\x) - y')^2|A]\pr[A] + \tilde{O}(b^2r^2W(d\log(m/\delta_2)/m)^{2/3}) \\
        &\leq \inf_{u\in\U}\Exy[(u(\w^t\cdot\x) - y)^2\1\{A\}] + \tilde{O}(b^2r^2W(d\log(m/\delta_2)/m)^{2/3})\\
        &\leq \inf_{u\in\U}\Exy[(u(\w^t\cdot\x) - y)^2] + \tilde{O}(b^2r^2W(d\log(m/\delta_2)/m)^{2/3}).
    \end{align*}
    Thus, combining with \eqref{eq:(htumt(wtx)-y)^2(1 - A)-upbd}, we get that with probability at least $1 - \delta_1 - \delta_2$,
    \begin{equation*}
        \Exy[(\htumt(\w^t\cdot\x) - y)^2]\leq \Exy[(u^t(\w^t\cdot\x) - y)^2] + \tilde{O}\bigg(Wb^2r^2\bigg(\frac{d\log(m/\delta_2)}{m}\bigg)^{2/3}\bigg) + \bigg(\frac{bW}{L}\bigg)^2\frac{\delta_1}{m}.
    \end{equation*}
    Choosing the size of the sample set to be:
    $$m = \Theta\bigg(d\log^4(d/(\eps\delta))\bigg(\frac{b^2W^3}{L^2\eps}\bigg)^{3/2}\bigg),$$
    and recalling that $r = \frac{2cW}{L}\log(m/(L\delta_1))$, we finally have
    \begin{equation*}
        \Exy[(\htumt(\w^t\cdot\x) - y)^2]\leq \Exy[(u^t(\w^t\cdot\x) - y)^2] + \eps,
    \end{equation*}
    with probability at least $1 - \delta$, after choosing $\delta_1 = \delta_2 = \delta/2$.

    To prove the final claim of the lemma, we follow the same argument as in the proof of  \Cref{main:lem:E[htustrm(wt.x) - u*t(wt.x))^2]<=eps}. Since we have just shown that with probability at least $1 - \delta$, it holds 
    \begin{align*}
        \eps&\geq \Ex[(\htumt(\w^t\cdot\x) - y)^2 - (u^t(\w^t\cdot\x) - y)^2]\\
         & = \Ex[(\htumt(\w^t\cdot\x) - u^t(\w^t\cdot\x))^2] + 2\Ex[(\htumt(\w^t\cdot\x) - u^t(\w^t\cdot\x))(u^t(\w^t\cdot\x) - y)], 
    \end{align*}
    applying the first statement in \Cref{app:claim:(ut-v)(y-ut)geq0} completes the proof.
\end{proof}

\end{document}